\documentclass{article}

\usepackage{microtype}
\usepackage{graphicx}
\usepackage{subcaption}
\usepackage{booktabs}

\usepackage{hyperref}

\usepackage[accepted]{icml2026}
\makeatletter
\renewcommand{\Notice@String}{This work has been accepted to ICML 2026.}
\makeatother

\icmlsetsymbol{core}{*}
\icmlsetsymbol{correspond}{†}

\usepackage{xcolor}
\usepackage{url}
\usepackage{amsmath,amssymb,amsthm,mathtools,bm}
\usepackage[most]{tcolorbox}

\usepackage{xspace}
\usepackage{setspace}

\newcommand{\adarope}{AdaRoPE\xspace}
\newcommand{\adafreq}{AdaFreq\xspace}
\newcommand{\adascale}{AdaScale\xspace}
\newcommand{\yarn}{YaRN\xspace}

\newtcolorbox[auto counter, number within=section]{takeaway}[1][]{
  colback=yellow!12,
  colframe=yellow!40!black,
  boxrule=0.6pt,
  arc=1.8pt,
  left=6pt,right=6pt,top=6pt,bottom=6pt,
  title={Takeaway~\thetcbcounter},
  #1
}

\usepackage{thmtools}
\usepackage{thm-restate}
\usepackage{cancel}
\usepackage{extarrows}
\usepackage{dsfont}
\usepackage[mathscr]{eucal} %
\usepackage[shortlabels,inline]{enumitem}
\usepackage[noabbrev,capitalize]{cleveref} %

\theoremstyle{plain} %
\newtheorem{theorem}{Theorem} %
\newtheorem{lemma}[theorem]{Lemma}

\theoremstyle{definition}
\newtheorem{definition}{Definition}

\newtheorem{remark}{Remark} %

\theoremstyle{remark}
\newtheorem{fact}{Fact}

\renewcommand*{\proofname}{\normalfont\bfseries Proof}

\crefname{equation}{Eq.}{Eqs.}
\crefname{assumption}{Assumption}{Assumptions}
\newcommand{\crefrangeconjunction}{--}

\makeatletter
\renewenvironment{proof}[1][\proofname]{\par
  \pushQED{\qed}%
  \normalfont \topsep6\p@\@plus6\p@\relax
  \trivlist
  \item[\hskip\labelsep
        \normalfont\bfseries %
    #1\@addpunct{.}]\ignorespaces
}{
  \popQED\endtrivlist\@endpefalse
}
\makeatother

\crefalias{enumi}{pureitem} %
\crefalias{enumii}{pureitem}
\crefalias{enumiii}{pureitem}
\crefalias{enumiv}{pureitem}
\crefformat{pureitem}{#2\textcolor{black}{#1}#3}
\crefrangeformat{pureitem}{#2\textcolor{black}{#1}#3\crefrangeconjunction#2\textcolor{black}{#1}#3}
\crefmultiformat{pureitem}{#2\textcolor{black}{#1}#3}%
  {, #2\textcolor{black}{#1}#3}%
  {, #2\textcolor{black}{#1}#3}%
  {, #2\textcolor{black}{#1}#3}

\newcommand{\vx}{\bm{x}}

\newcommand{\vq}{\bm{q}}
\newcommand{\vk}{\bm{k}}
\newcommand{\vv}{\bm{v}}
\newcommand{\vo}{\bm{o}}
\newcommand{\vz}{\bm{z}}
\newcommand{\vs}{\bm{s}}
\newcommand{\vw}{\bm{w}}
\newcommand{\vu}{\bm{u}}
\newcommand{\vp}{\bm{p}}

\newcommand{\vb}{\bm{b}}
\newcommand{\vc}{\bm{c}}
\newcommand{\vg}{\bm{g}}

\newcommand{\vmu}{\bm{\mu}}
\newcommand{\valpha}{\bm{\alpha}}
\newcommand{\vpi}{\bm{\pi}}

\newcommand{\mX}{\bm{X}}
\newcommand{\mW}{\bm{W}}
\newcommand{\mQ}{\bm{Q}}
\newcommand{\mK}{\bm{K}}
\newcommand{\mV}{\bm{V}}
\newcommand{\mO}{\bm{O}}
\newcommand{\mR}{\bm{R}}
\newcommand{\mM}{\bm{M}}
\newcommand{\mI}{\bm{I}}
\newcommand{\mG}{\bm{G}}
\newcommand{\mSigma}{\bm{\Sigma}}

\newcommand{\RR}{\mathbb{R}}
\newcommand{\ZZ}{\mathbb{Z}}
\newcommand{\NN}{\mathbb{N}}
\newcommand{\CC}{\mathbb{C}}
\newcommand{\ii}{\mathrm{i}}

\DeclareMathOperator{\PP}{\mathbb{P}}
\DeclareMathOperator{\EE}{\mathbb{E}}
\DeclareMathOperator{\Var}{\mathrm{Var}}
\DeclareMathOperator{\Cov}{\mathrm{Cov}}

\DeclareMathOperator{\1}{\mathds{1}}

\DeclareMathOperator{\softmax}{Softmax}
\DeclareMathOperator{\sigmoid}{Sigmoid}
\DeclareMathOperator{\diag}{diag}

\DeclareMathOperator{\Ci}{Ci}
\DeclareMathOperator{\Tr}{Tr}
\DeclareMathOperator{\real}{Re}

\newcommand{\set}[1]{\left\{#1\right\}}
\newcommand{\floor}[1]{\left\lfloor#1\right\rfloor}
\newcommand{\ceil}[1]{\left\lceil#1\right\rceil}

\newcommand{\transpose}{{\mkern-1.0mu\mathsf{T}}}

\newcommand{\dif}{\mathop{}\!\mathrm{d}}
\newcommand{\given}{\;\middle|\;}
\newcommand{\bigO}{O}
\newcommand{\littleo}{o}
\newcommand{\bigOmega}{\Omega}
\newcommand{\littleomega}{\omega}
\newcommand{\bigTheta}{\Theta}
\newcommand{\iid}{\mathrel{\overset{\textrm{iid}}{\sim}}}
\newcommand{\asto}{\overset{\mathrm{a.s.}}{\longrightarrow}}
\newcommand{\pto}{\overset{\mathbb{P}}{\longrightarrow}}
\newcommand{\dto}{\overset{\mathcal{D}}{\longrightarrow}}

\newcommand{\entropy}{H}
\newcommand{\diverge}{D}
\newcommand{\ESS}{\mathcal{E}}
\newcommand{\relESS}{\Delta}
\newcommand{\scale}{\lambda}

\newcommand{\mass}{\mathscr{M}} %
\newcommand{\width}{W} %
\newcommand{\buffer}{r} %
\newcommand{\Asum}{A_{\star}} %
\newcommand{\Fourier}{T} %
\newcommand{\Avg}{\mathcal{A}} %
\newcommand{\Dir}{\mathcal{D}} %

\newcommand{\essratio}{\kappa} %
\newcommand{\agg}{p} %
\newcommand{\tgt}{\pi} %
\newcommand{\vtgt}{\vpi} %
\newcommand{\resratio}{\zeta} %

\newcommand{\crit}{\Lambda} %
\newcommand{\growth}{\hat{\ESS}} %
\newcommand{\corr}{\rho} %
\newcommand{\mumax}{\|\vmu\|_{\infty}} %
\newcommand{\Lref}{L_{\textnormal{ref}}} %

\newcommand{\proofparagraph}[1]{\par\medskip\noindent\textbf{#1}\enspace}

\newcommand{\learned}[1]{\textcolor{blue}{\mathit{#1}}}

\usepackage[colorinlistoftodos]{todonotes}

\icmltitlerunning{\adarope: Not All Attention Heads Should Rotate and Scale Equally}

\begin{document}

\twocolumn[
  \icmltitle{\adarope: Not All Attention Heads Should Rotate and Scale Equally}
  \begin{icmlauthorlist}
    \icmlauthor{Shaowen Wang}{thuph,tencent,xiongan,core}
    \icmlauthor{Yuke Zheng}{thuph,core}
    \icmlauthor{Tansheng Zhu}{thuph,xiongan,core}
    \icmlauthor{Shuang Chen}{tencent,core}
    \icmlauthor{Shaofan Liu}{fudan}
    \icmlauthor{Suncong Zheng}{tencent}
    \icmlauthor{Jian Li}{thuph,correspond}
  \end{icmlauthorlist}
  \icmlaffiliation{thuph}{Tsinghua University}
  \icmlaffiliation{fudan}{Fudan University}
  \icmlaffiliation{tencent}{Hunyuan Team, Tencent}
  \icmlaffiliation{xiongan}{Xiongan AI Institute}
  \icmlcorrespondingauthor{Shaowen Wang}{wangsw23@mails.tsinghua.edu.cn}
  \icmlcorrespondingauthor{Suncong Zheng}{congzheng@tencent.com}
  \icmlcorrespondingauthor{Jian Li}{\mbox{lijian83@mail.tsinghua.edu.cn}}
  \icmlkeywords{Transformers, Rotary Position Embedding, Length Extrapolation}
  \vskip 0.3in
]

\printAffiliationsAndNotice{* Core Contributor; † Corresponding author. Work done during Shaowen Wang's internship at Tencent.}  %

\begin{abstract}
Rotary Position Embedding (RoPE) is widely adopted in Transformers to encode positional information, yet standard implementations enforce a uniform frequency schedule and scaling across all attention heads. 
Using simplified retrieval tasks and length generalization scenarios, we show—both empirically and theoretically—that heads with different functional roles require distinct frequency ranges and attention scaling factors to operate effectively. Ignoring this structure leads to suboptimal utilization of embedding dimensions and degraded performance, particularly under long-context settings.
To address these limitations, we propose \adarope, which equips each attention head with learnable rotation frequencies and attention scaling factors.
Pretrained LLMs with \adarope consistently outperform existing RoPE variants, including partial RoPE and NoPE baselines. For context extension, we further show that uniform frequency and attention scaling, used in methods such as \yarn{}, are suboptimal. By applying head-specific scaling, \adarope enables better context extension while better preserving short-context performance in both the extrapolation setting and the long-context continued pretraining setting. These results highlight the importance of optimizing rotary position embedding at the level of individual attention heads.
\end{abstract}

\section{Introduction}

Large models based on  Transformer~\citep{Vaswani2017AttentionIA} architectures have achieved remarkable success. For the self-attention module, explicit positional embedding is critical for enhancing model performance~\citep{Shaw2018SelfAttentionWR,dai2019transformer,su2021roformer}.

Rotary Position Embedding (RoPE)~\citep{su2021roformer}
has become the de facto positional embedding for large models~\citep{grattafiori2024llama,yang2025qwen3,team2025gemma}. It works by applying a rotation matrix to each token’s keys and queries, where the rotation angle depends on the token’s absolute position in the sequence. When two tokens interact through the attention inner product, their rotated representations naturally encode their relative distance.

Current standard RoPE and its popular extensions such as 
\yarn{}~\citep{peng2023yarn}, Log Scaling in \citet{Bai2023QwenTR}, and Llama3 Scheme~\citep{grattafiori2024llama}, typically use a shared frequency schedule and attention scaling factor across all heads. However, this uniform assignment presents two significant limitations:

First, the uniform coupling of rotation frequencies to attention dimensions across all heads in RoPE can lead to underutilization of the model's embedding dimensions. 
Because each head shares the same frequency schedule, heads that rely on specific frequency bands may leave other dimensions underutilized, resulting in a waste of attention representational capacity~\citep{chiang2025rotary}. 
For instance, semantic heads, which only encode non-positional information, primarily utilize the slowest frequencies~\citep{barbero2025round}, whereas slash heads~\citep{cheng2026demystifying}, which consistently attend to tokens at an approximately constant lag, utilize high frequencies.

Second, in the length extrapolation stage, prevalent methods such as \yarn{}~\citep{peng2023yarn} and ABF~\citep{xiong2024effective} apply uniform adjustments: 
they scale RoPE frequencies identically across all attention heads. 
This approach overlooks the conflicting geometric needs of different head types. 
For instance, while global heads that aggregate long-range context require positional scaling to extend their range~\citep{chen2023positional}, applying the same scaling to off-by-one heads~\citep{cheng2026demystifying} distorts the precise rotational alignment required for short-range dependencies, causing them to drift focus from the immediate previous token to positions further back in the sequence. 

Furthermore, self-attention in long contexts suffers from attention dilution, where the weights for relevant tokens are drowned out by many irrelevant ones in long sequences~\citep{zhang2024selective,Chiang2022OvercomingAT,Li2025InformationEI}. Previous works to mitigate this, such as adjusting the attention temperature with a global, length-dependent factor~\citep{Bai2023QwenTR, anson2025scale}, also employ a uniform strategy. However, because heads prioritize distinct RoPE frequencies, they exhibit inherently varying degrees of recency bias. Consequently, a single scaling factor cannot optimally compensate for this heterogeneous attention dilution across the model.

To address these limitations, we propose \textbf{\adarope}, a method that jointly learns head-wise RoPE rotation frequencies and length-dependent attention scaling. Our main contributions are summarized as follows:

\begin{itemize}
    \item \textbf{Theoretical Analysis of Head Heterogeneity.} We construct three simplified yet representative scenarios covering retrieval tasks and length extrapolation, and conduct theoretical analyses on these cases to demonstrate that different attention heads require distinct rotation frequencies and attention scaling factors.
    \item \textbf{\adarope.} We propose \adarope, a simple yet effective lightweight extension of RoPE that equips each attention head with learnable rotation frequencies and attention scaling. 
    \adarope is designed to be plug-and-play, easily integrable into existing architectures, and can be directly applied to RoPE-pretrained models for long-context extrapolation.    
    \item \textbf{Superior Pre-training Performance.} Through extensive experiments on models scaling up to 2.7B parameters trained on 100B FineWeb tokens, we demonstrate that \adarope consistently outperforms prevalent positional encodings.
    \item \textbf{Effective Context Extension.} We demonstrate that \adarope effectively extends pre-trained RoPE models (e.g., Llama-8B) to longer contexts. Compared to \yarn{}, \adarope achieves superior performance in both zero-shot extrapolation and long-context continued pre-training settings without compromising short context capability.
\end{itemize}

\section{Preliminaries and Related Work}\label{sec:preliminaries}

\subsection{Rotary Position Embedding (RoPE)}
RoPE~\citep{su2021roformer} encodes position information by rotating the query and key vectors in $d/2$ independent 2D subspaces.
Specifically, we partition the $d$-dimensional space into $d/2$ pairs of elements. For each subspace index $f \in \set{0, \dots, d/2-1}$, we define a frequency $\theta_f = b^{-2f/d}$ with a constant $b > 0$ (typically $10,000$).

For a token at position $i$, the rotation matrix $\mR_i \in \RR^{d \times d}$ is a block-diagonal matrix, formed by $d/2$ rotation sub-matrices:
\begin{equation}
    \begin{aligned}
        \mR_i &= \diag\left( \mG_i(\theta_0), \dots, \mG_i(\theta_{d/2-1}) \right), \\
        \text{where } & \mG_i(\theta_f) = \begin{pmatrix}
            \cos(i \theta_f) & -\sin(i \theta_f) \\
            \sin(i \theta_f) & \cos(i \theta_f)
        \end{pmatrix}.
    \end{aligned}
    \label{eq:rotationmatrix}
\end{equation}
Applying this rotation to the query $\vq_i$ and key $\vk_j$ at positions $i$ and $j$, their inner product (attention logit) becomes:
\begin{equation}
    (\mR_i \vq_i)^\transpose (\mR_j \vk_j) 
    = \vq_i^\transpose (\mR_i^\transpose \mR_j) \vk_j 
    = \vq_i^\transpose \mR_{j-i} \vk_j.
\end{equation}
The attention logit is then defined as $s_{i, j} = \frac{1}{\sqrt{d}} \vq_i^\transpose \mR_{j-i} \vk_j$.
The key property $\mR_i^\transpose \mR_j = \mR_{j-i}$ ensures that the attention score depends solely on the relative distance $(j-i)$ rather than absolute positions~\citep{su2021roformer}.
Notably, if $\theta_f = 0$ for all $f$, $\mR_i$ becomes the identity matrix, equivalent to NoPE~\citep{Haviv2022TransformerLM}; whereas setting $\theta_f = 0$ for a subset of frequencies results in Partial RoPE~\citep{barbero2025round}.

\subsection{Attention Scaling}

A fundamental limitation of standard softmax attention under increasing context length is the \emph{attention dilution} phenomenon: as the context length $L$ increases, the probability mass assigned to any single token inevitably decreases, leading to an overly ``flat" attention distribution. 

Existing approaches address this issue by uniformly scaling the raw attention scores with a length-dependent factor $\scale(L)$ (e.g., $\scale(L)=\log L$) before applying softmax, where $L$ denotes the context length, to keep the attention distribution’s variance or entropy stable as $L$ grows~\citep{Han2023LMInfiniteZE,Nakanishi2025ScalableSoftmaxIS,anson2025scale,Bai2023QwenTR,Li2025InformationEI}. However, as we show in \cref{sec:necessity of adascale} , such uniform scaling is fundamentally insufficient to accommodate the heterogeneous requirements of attention heads.

\section{\adarope}

In this section, we motivate and introduce \adarope.
We first establish the necessity of head-wise rotation frequencies, 
followed by an analysis showing why head-wise attention scaling is required for length extrapolation.
We then present the design of \adarope, which integrates both components into a unified and practical approach.
\begin{figure*}[t]
    \centering
    \begin{subfigure}[t]{0.32\textwidth}
        \centering
        \includegraphics[width=\linewidth]{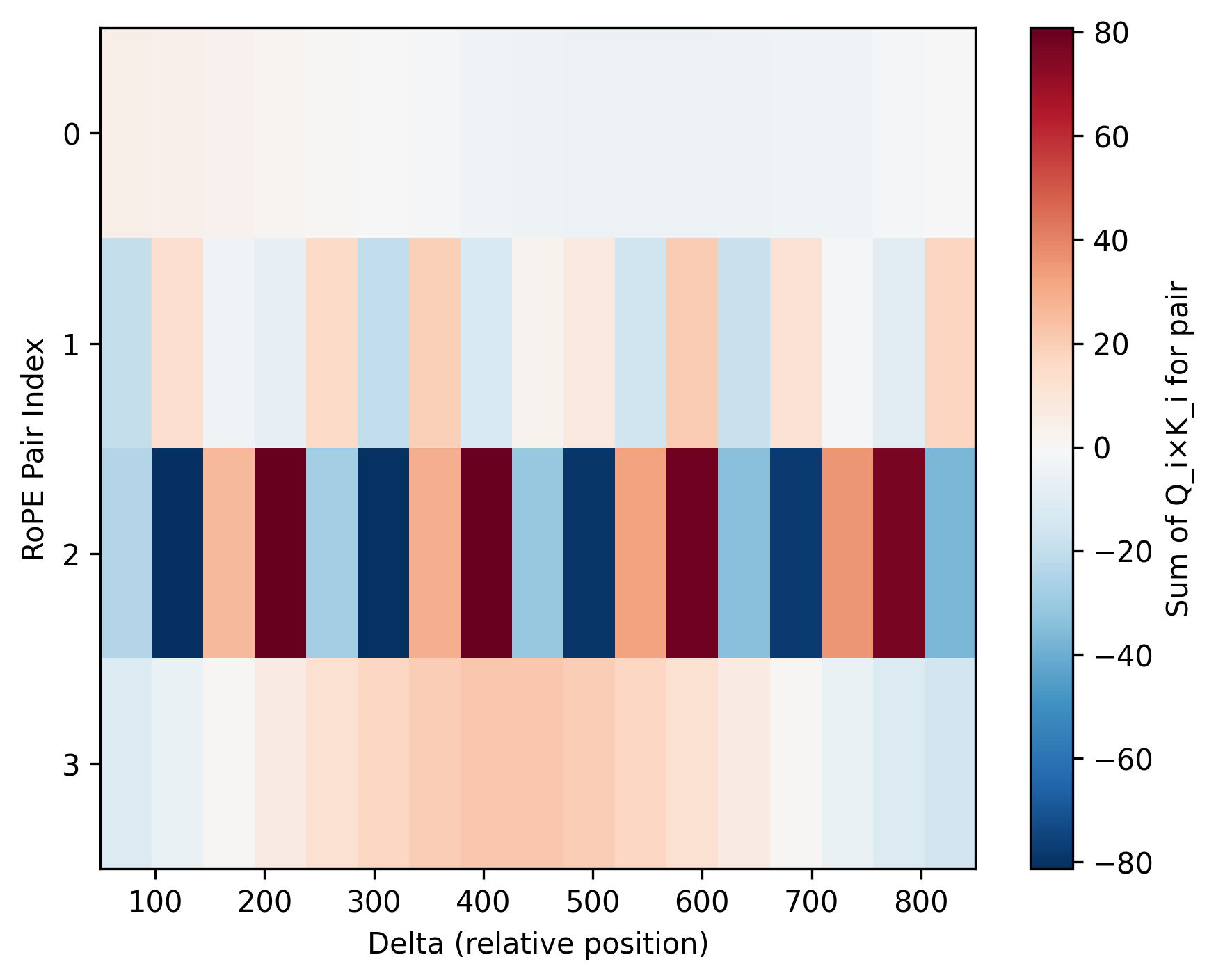}
        \label{fig:standard}
    \end{subfigure}\hfill
    \begin{subfigure}[t]{0.32\textwidth}
        \centering
        \includegraphics[width=\linewidth]{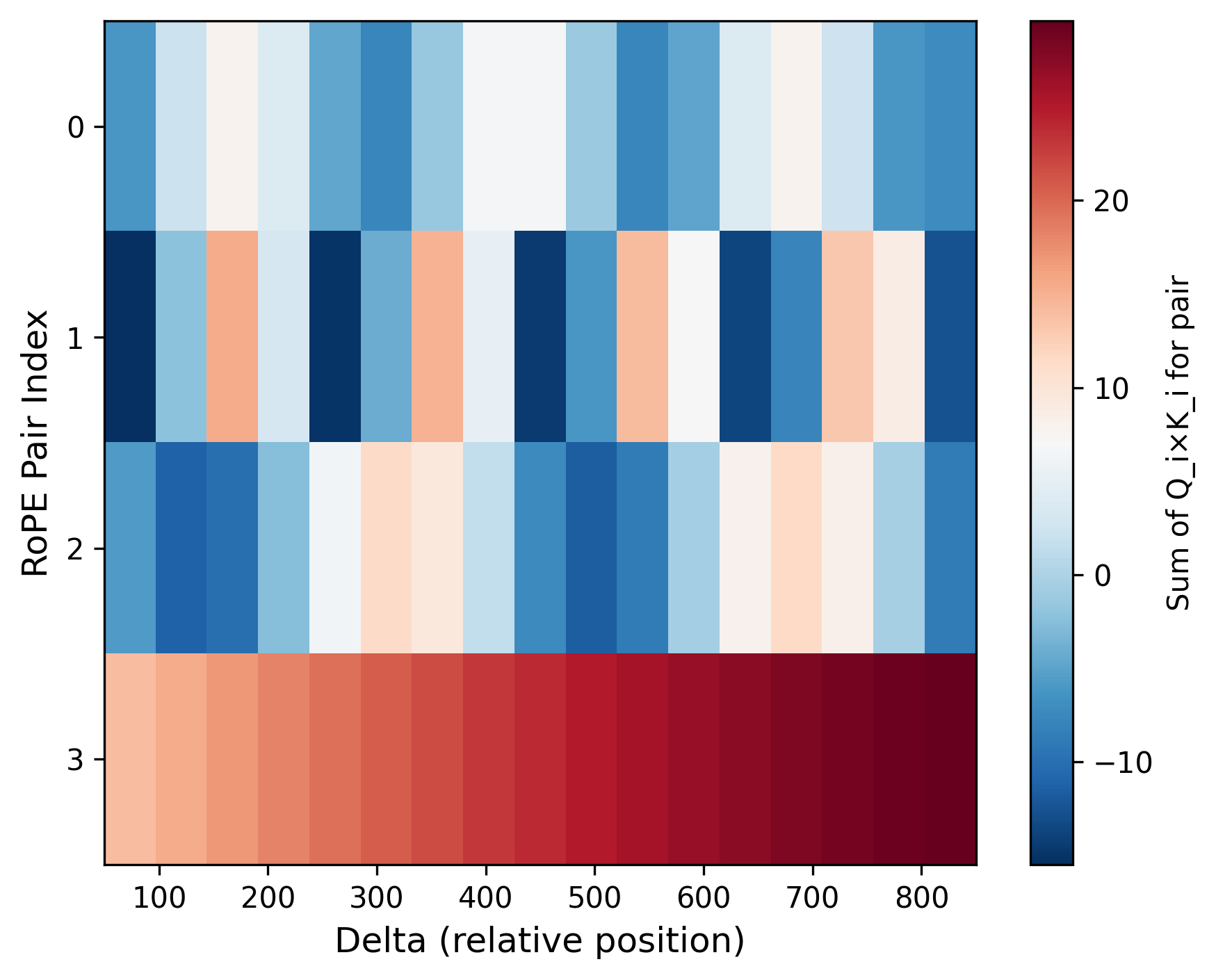}
        \label{fig:learned}
    \end{subfigure}\hfill
    \begin{subfigure}[t]{0.32\textwidth}
        \centering
        \includegraphics[width=\linewidth]{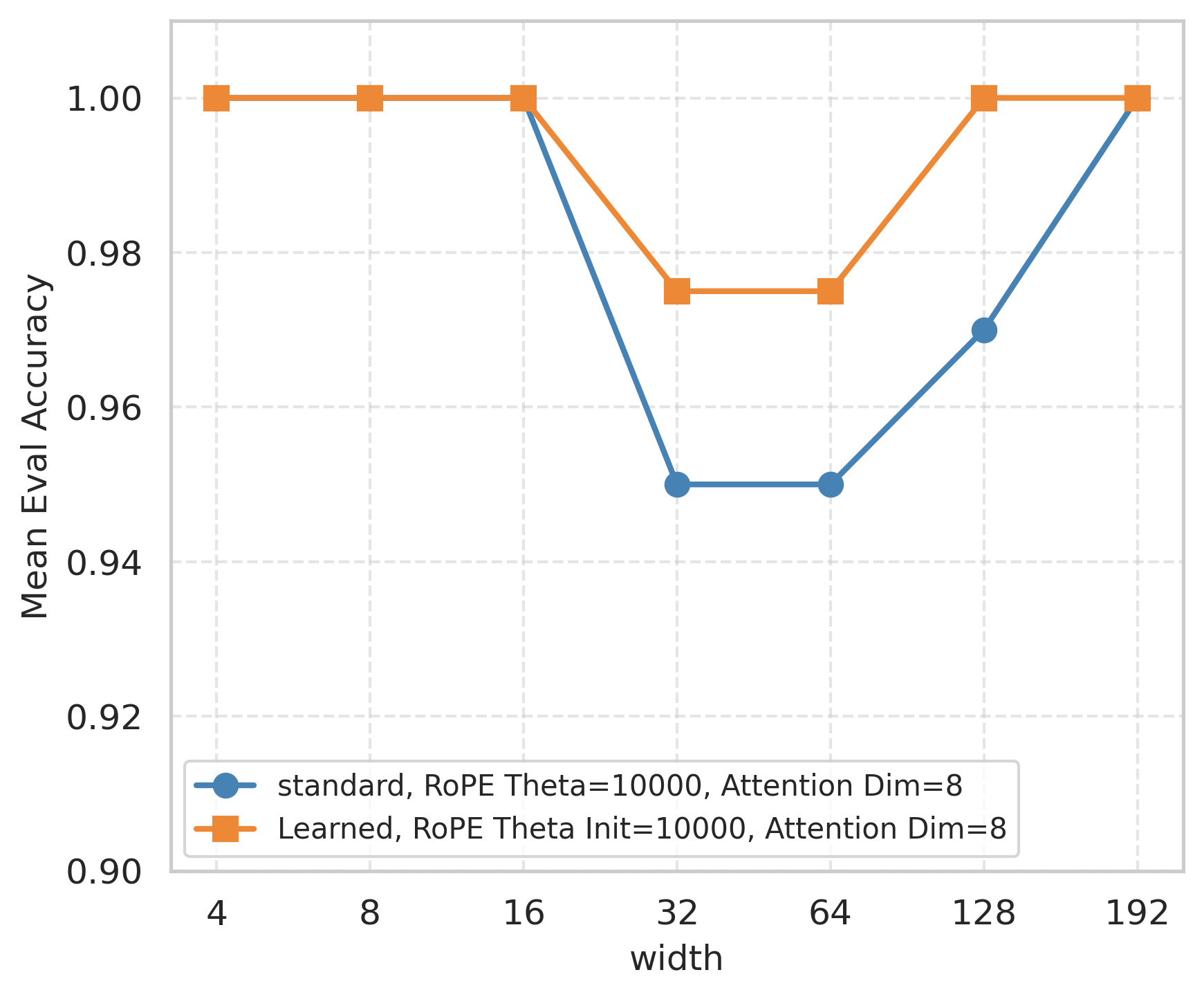}
        \label{fig:accuracy}
    \end{subfigure}

    \caption{\textbf{Window-dependent sparse utilization of RoPE frequencies.} We analysis per-dimension-pair contributions to the attention logit between a query $q$ and a key $k$ at different relative distances $\Delta$, where color indicates $\sum_i q_i k_i$ for each RoPE dimension pair. Darker red or blue denotes larger-magnitude logit contributions. It clearly exhibits distinct periodic patterns across different relative distances. We plot the case of $\text{Dim}=8, c=512$ and $\width=32$.
    \textbf{Left:} 
    Standard RoPE mainly uses the second RoPE Pair, leaving other dimensions almost unused.
    \textbf{Middle:} 
    RoPE with learned frequencies (\adafreq) effectively utilizes all attention dimensions by adjusting frequency, as evidenced by a larger number of cells with large absolute values.
    \textbf{Right:} Mean evaluation accuracy versus window width $\width$. The fixed RoPE baseline exhibits significantly lower accuracy compared to the learned approach, due to its suboptimal utilization of frequencies.
    }

    \label{fig:synthetic}

    \vspace{-0.2cm}
\end{figure*}

\subsection{Necessity of Head-wise Rotation Frequencies}
\label{sec:necessity of adafreq}

In this subsection, we introduce a simplified retrieval problem and show, both empirically and theoretically, that the effective frequencies utilized by RoPE depend critically on the target window size. These results provide strong motivation for head-wise frequency specialization. The retrieval
task is defined as follows.

\paragraph{Needle Retrieval from a Window (NRW).}
The objective of NRW is to determine whether a designated \texttt{NEEDLE} token lies within a target window.
The window is parameterized by its {\em center} $c$ and 
{\em width} $\width$, defining the interval $[c - \width/2,\, c + \width/2]$.
If the \texttt{NEEDLE} token falls inside this interval, the model outputs \texttt{YES}; otherwise, it outputs \texttt{NO}.

\paragraph{Empirical Observations.}
We train a single-layer Transformer with a single attention head with a vocab embedding layer and a language modeling head on fixed-length sequences over synthetic data.
The input format of each synthetic sample is:
\(
    [\dots, \texttt{BG}, \dots, \texttt{NEEDLE}, \dots, \texttt{BG}, \dots, \texttt{QUERY}] \rightarrow \texttt{YES}/\texttt{NO}.
\)
where $\texttt{BG}$ is the background token.

\cref{fig:synthetic} (Left) reveals a highly \emph{sparse} utilization of RoPE dimensions, characterized by the activation of only a confined frequency range rather than the full spectrum. This active band is not fixed but is instead governed by the window width $\width$; specifically, an increase in $\width$ triggers a systematic downward shift in the selected frequency range, as further evidenced in \cref{fig:synthetic_heatmaps}.

\begin{figure*}[t]
    \centering
    \begin{subfigure}[t]{0.49\textwidth}
        \centering
        \includegraphics[width=0.495\linewidth]{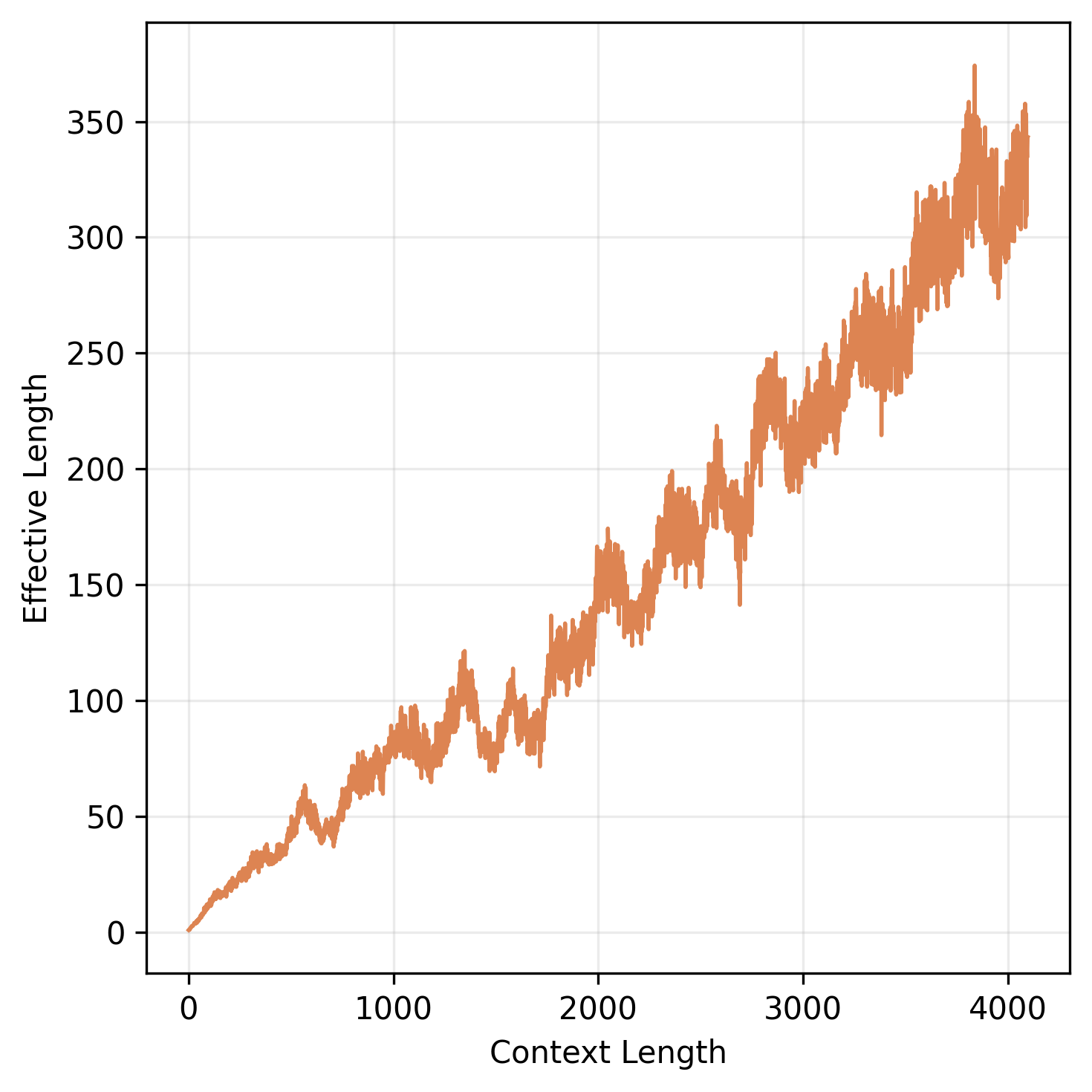}\hfill
        \includegraphics[width=0.495\linewidth]{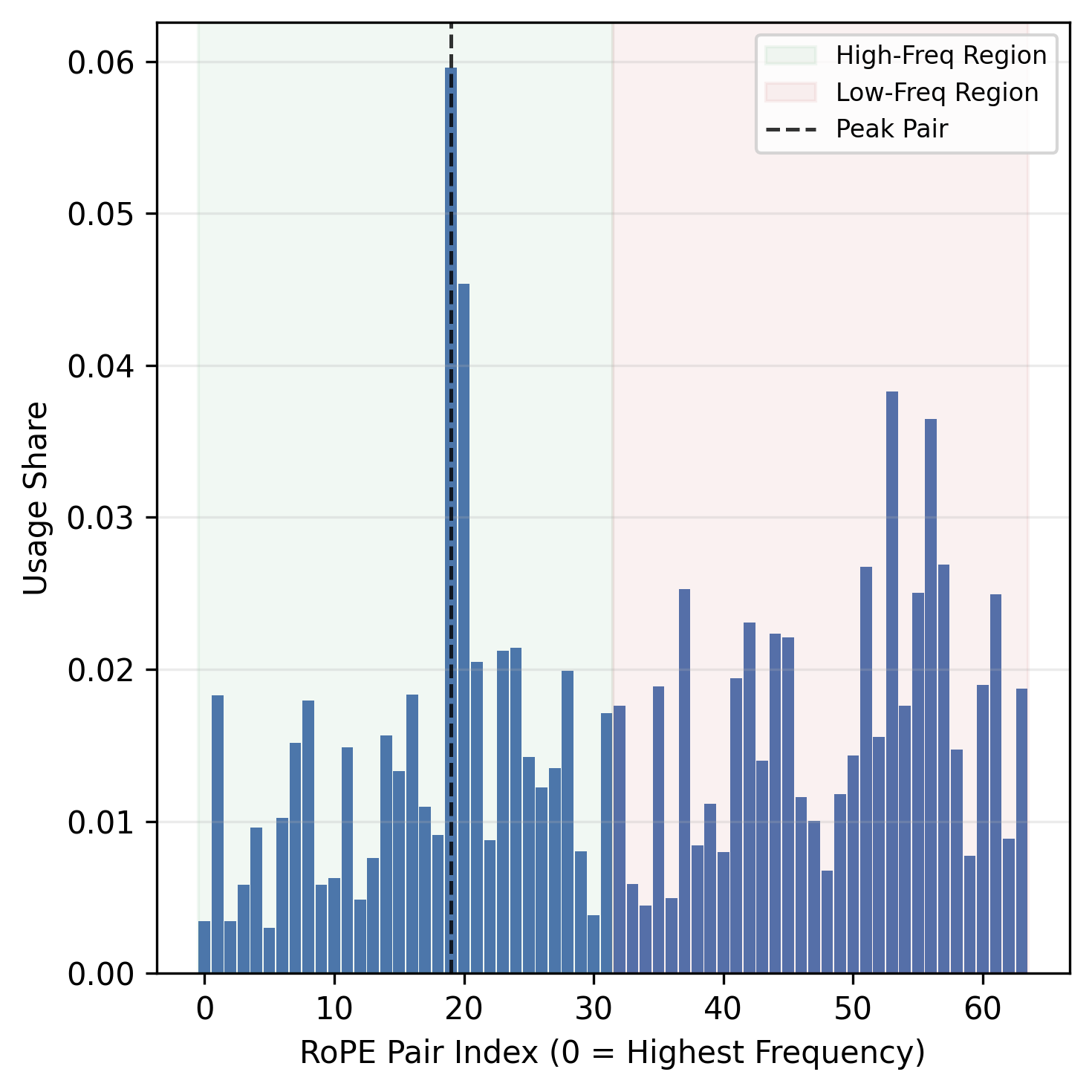}
        \subcaption{Layer 0 Head 13}
        \label{fig:l0h13}
    \end{subfigure}\hfill
    \begin{subfigure}[t]{0.49\textwidth}
        \centering
        \includegraphics[width=0.495\linewidth]{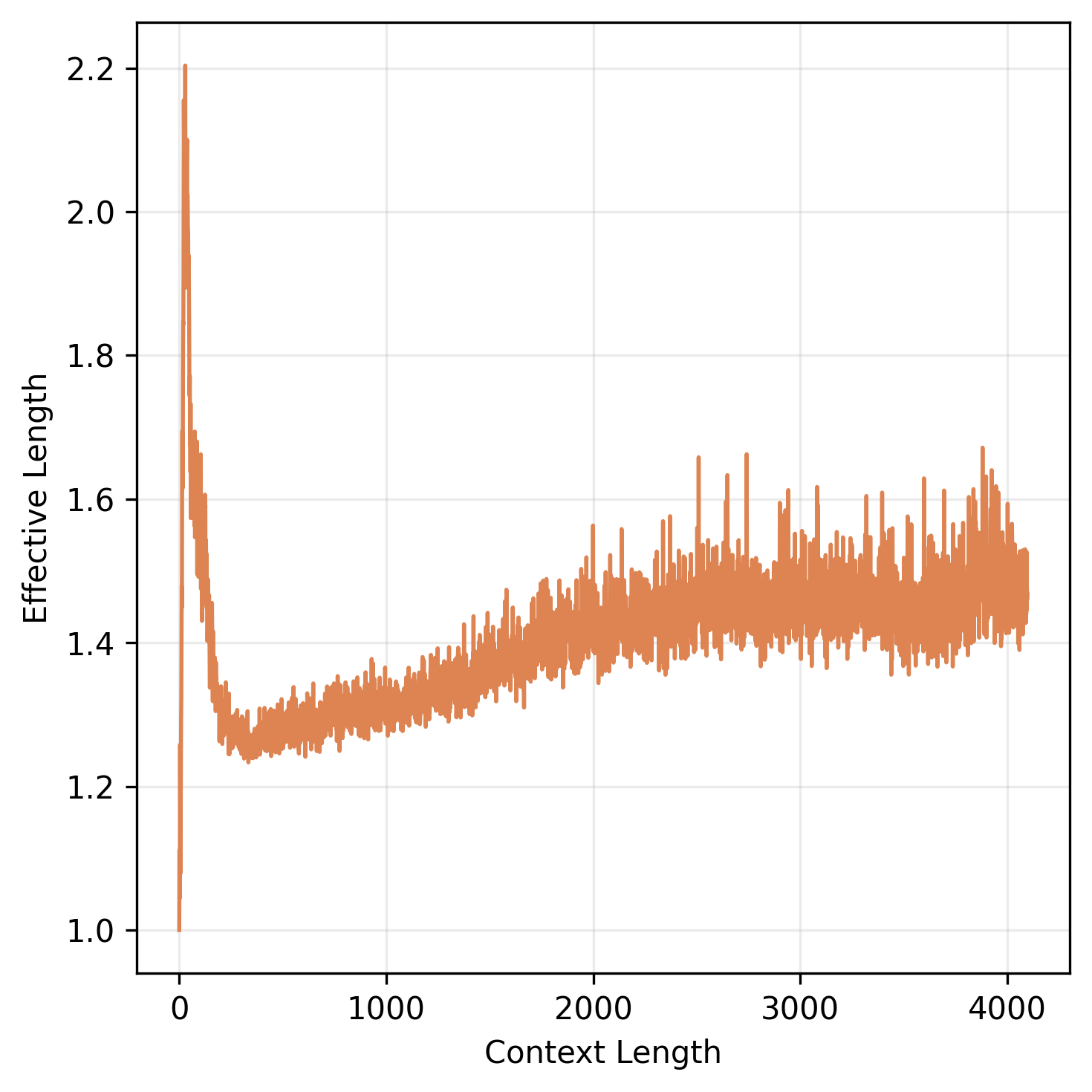}\hfill
        \includegraphics[width=0.495\linewidth]{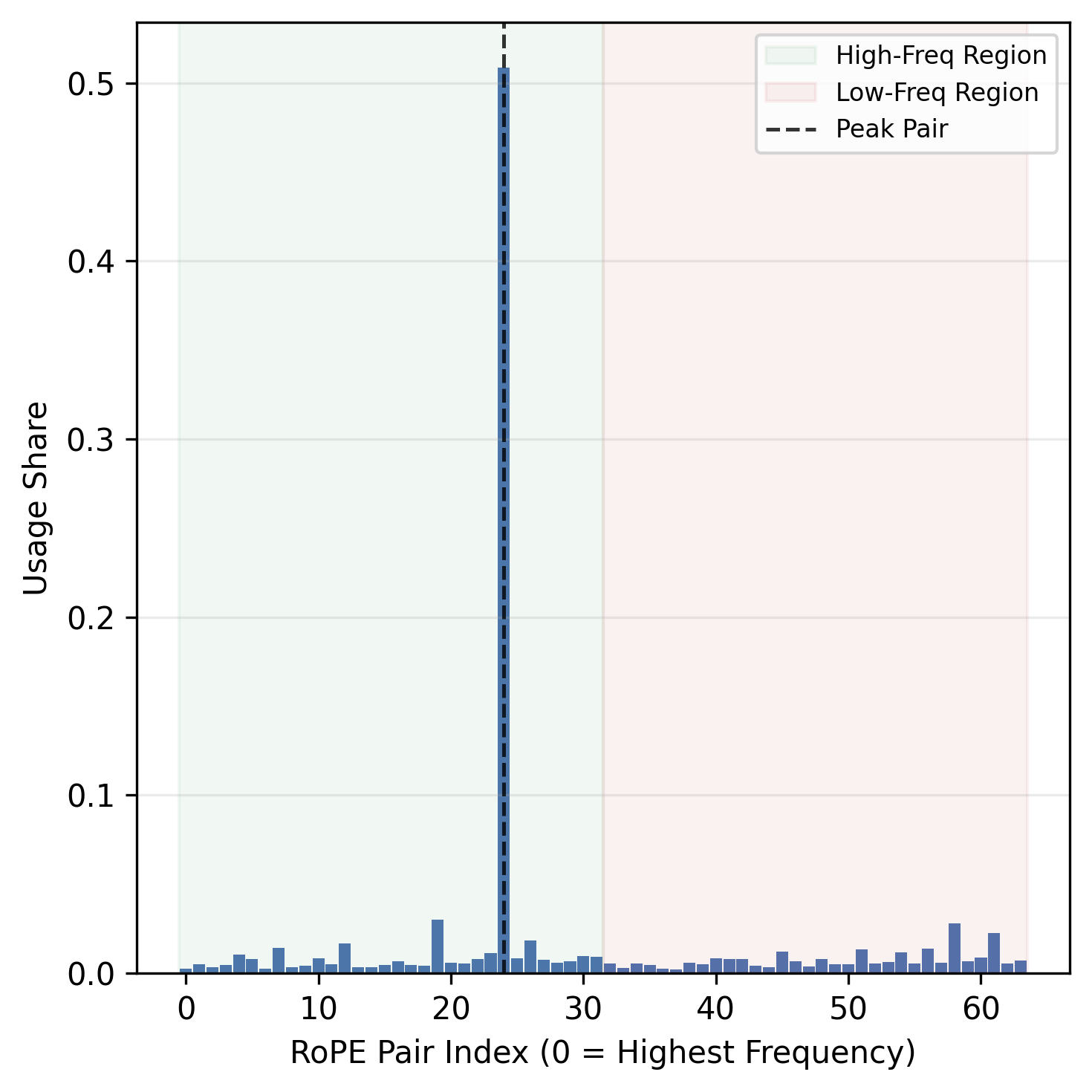}
        \subcaption{Layer 29 Head 17}
        \label{fig:l29h17}
    \end{subfigure}

    \caption{
    {\bf Head-wise RoPE utilization and effective length in LLaMA-3-8B.}
    Each subfigure shows one representative head with \emph{effective length} ($\ESS_2$; left panel) and RoPE dimension-pair utilization (right panel), measured on 1,024 FineWeb-Edu samples.
    \textbf{a:} $\ESS_2$ grows with context length and the frequency usage is relatively uniform.
    \textbf{b:} $\ESS_2$ stays nearly constant across context lengths and usage is highly sparse/spiky.
    Together, these results show strong head-level heterogeneity in both effective attention span and spectral utilization.
    }
     \vspace{-0.4cm}
    \label{fig:llama_sparse_heads}
\end{figure*}

\noindent
\textbf{Theoretical Analysis.}
In fact, this empirical behavior observed above is consistent with classical Fourier analysis, where broader spatial windows correspond to lower-frequency components~\citep{oppenheim1997signals}.
In \cref{sec:rotate}, we formalize this intuition using tools from Fourier analysis and show that the frequency mass—largely determined by the rotation frequencies employed by the model—must be concentrated within a range that explicitly depends on the window length $\width$ (see \cref{thm:rotate}).

It is well known that different attention heads in modern Large Language Models (LLMs) specialize in diverse functions~\citep{Clark2019WhatDB,Olsson2022IncontextLA,Wu2024RetrievalHM}. 
Some heads operate locally, focusing solely on the preceding token (effectively $\width \approx 1$), while others attend globally to the entire sequence ($\width \approx L$).
We also observed heads in LLM with sparse dimension usage, as shown in \cref{fig:l29h17}.
Hence, both the empirical evidence and theoretical analysis provide strong motivation of allowing frequency configurations to adapt on a per-head basis.

\subsection{Necessity of Head-wise Scaling Factor}
\label{sec:necessity of adascale}

For length generalization and extrapolation, prior methods apply a uniform scaling factor to stabilize the variance or entropy of the attention logit distribution. In this subsection, we argue that heads with different functionalities impose fundamentally different requirements on their attention distributions during context length extension. To formalize this intuition, we introduce the notion of \emph{effective sequence length}, denoted by $\ESS$:
\[
    \ESS_{\beta}(\valpha) = \left( \sum_{i=1}^{L} \alpha_i^{\beta} \right)^{\frac{1}{1 - \beta}},
    \quad
    \beta > 1,
\]
where $\valpha$ represents the attention weights after softmax, and $\ESS_{\beta}(\valpha)$ measures the effective number of tokens the model attends to.
This metric stems from the generalized effective sample size designed for importance sampling \citep{huggins2019approximate,martino2017effective}.

Intuitively, the metric $\ESS_{\beta}(\valpha)$ quantifies the effective number of tokens that receive non-negligible attention. We refer to this quantity as the \textit{effective length} . For example, in the case of uniform attention ($\alpha_i = 1/L$), the effective length is maximized, yielding $\ESS_{2}(\valpha) = L$, which corresponds to a fully distributed focus over the sequence. In contrast, when attention collapses to a single token (i.e., a one-hot distribution), $\ESS_{\beta}(\valpha) = 1$ for any $\beta$, reflecting a highly concentrated focus.

Empirically, the effective length varies dramatically across attention heads, consistent with the sparse-head behavior in \cref{fig:llama_sparse_heads}. For instance, in Llama3-8B, head 13 in layer 0 (\cref{fig:l0h13}) maintains an effective length on the order of a few hundred tokens and grows approximately linearly with the input context length, whereas the head 17 in layer 29 (\cref{fig:l29h17}) remains highly localized, with effective length staying near $1$--$2$ across the entire range. To formally characterize how different attention heads impose distinct requirements on the attention weight distribution $\valpha$ and the effective sequence length $\ESS(\valpha)$, we consider two representative attention head types with contrasting roles.

\noindent
\textbf{Retrieval Heads.}
These heads specialize in precise key–value associations, such as identifying a specific ``needle'' token within an increasingly long ``haystack''.
In particular, during context length extension, it is crucial that the logit corresponding to the target token remains dominant even as additional background tokens are appended.
In \cref{thm:ess retrieval} (\cref{sec:retrieval}), we show that maintaining such sharp focus on the needle token as the sequence length $L$ grows requires a \emph{low and constant} $\ESS$. This, in turn, necessitates an aggressive scaling factor to counteract the natural dilution of attention weights induced by longer contexts.

\noindent
\textbf{Global Aggregation Heads.}
In contrast, these heads are responsible for extracting global statistics or aggregating information across the entire context, which requires attending broadly to all tokens.
For such heads, \cref{thm:ess aggregation} (\cref{sec:aggregation}) shows that a \emph{high and increasing} $\ESS$ is essential as $L$ grows, allowing the attention distribution to flatten and cover the expanding sequence.

\begin{figure*}[ht]
  \centering
  \begin{minipage}[t]{0.32\linewidth}
    \centering
    \includegraphics[width=\linewidth]{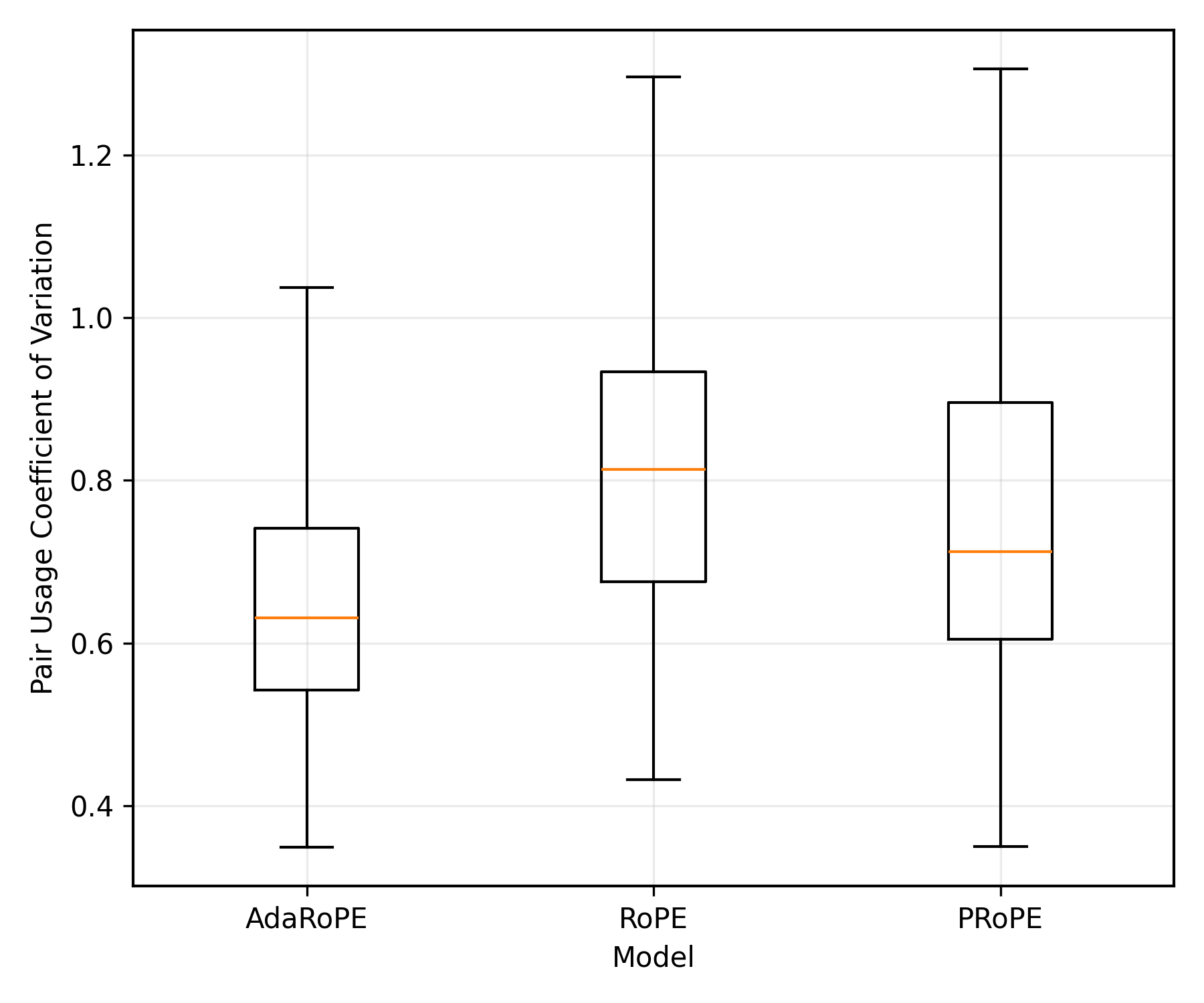}
  \end{minipage}
  \begin{minipage}[t]{0.32\linewidth}
    \centering
    \includegraphics[width=\linewidth]{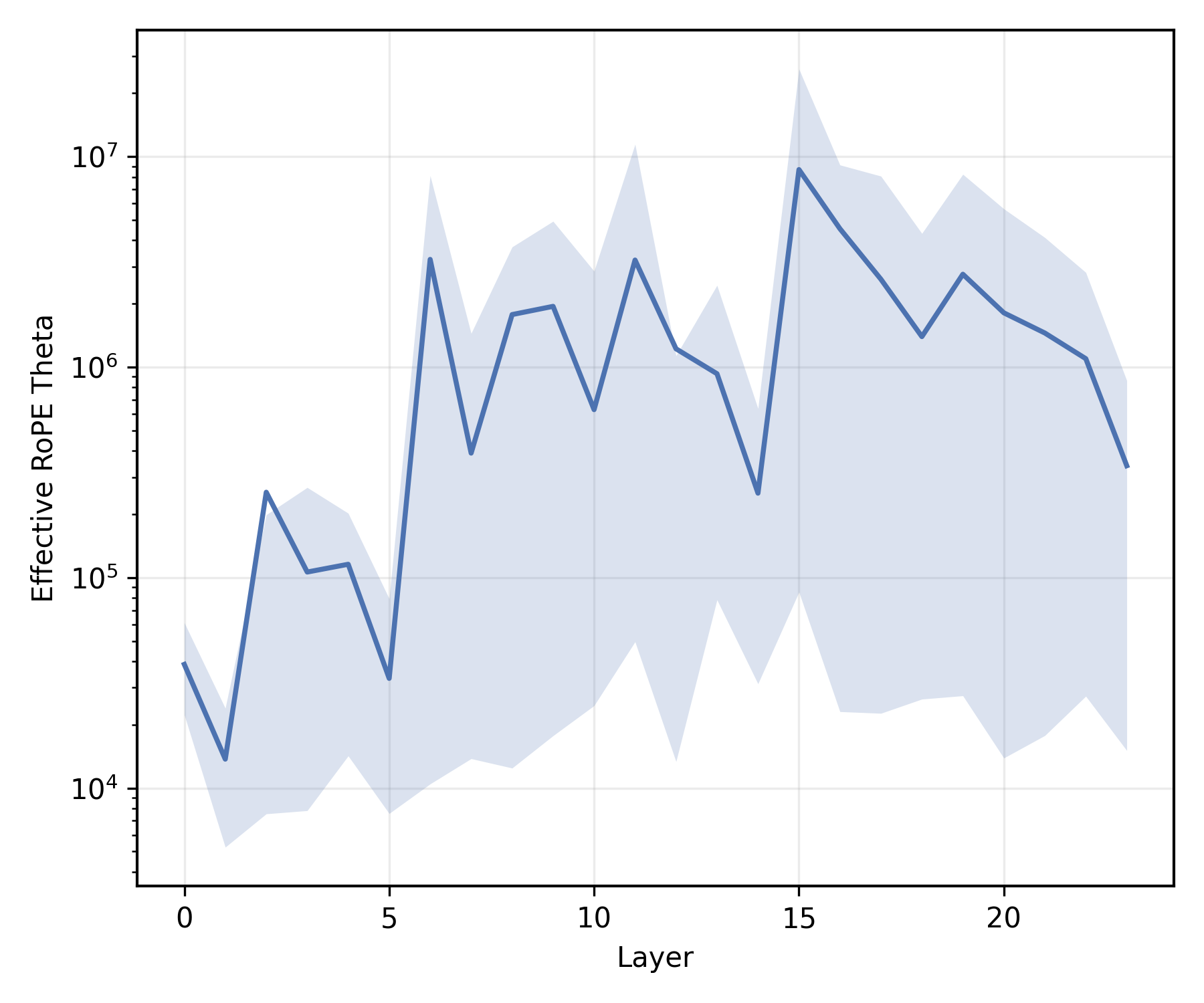}
  \end{minipage}\hfill
  \begin{minipage}[t]{0.32\linewidth}
    \centering
    \includegraphics[width=\linewidth]{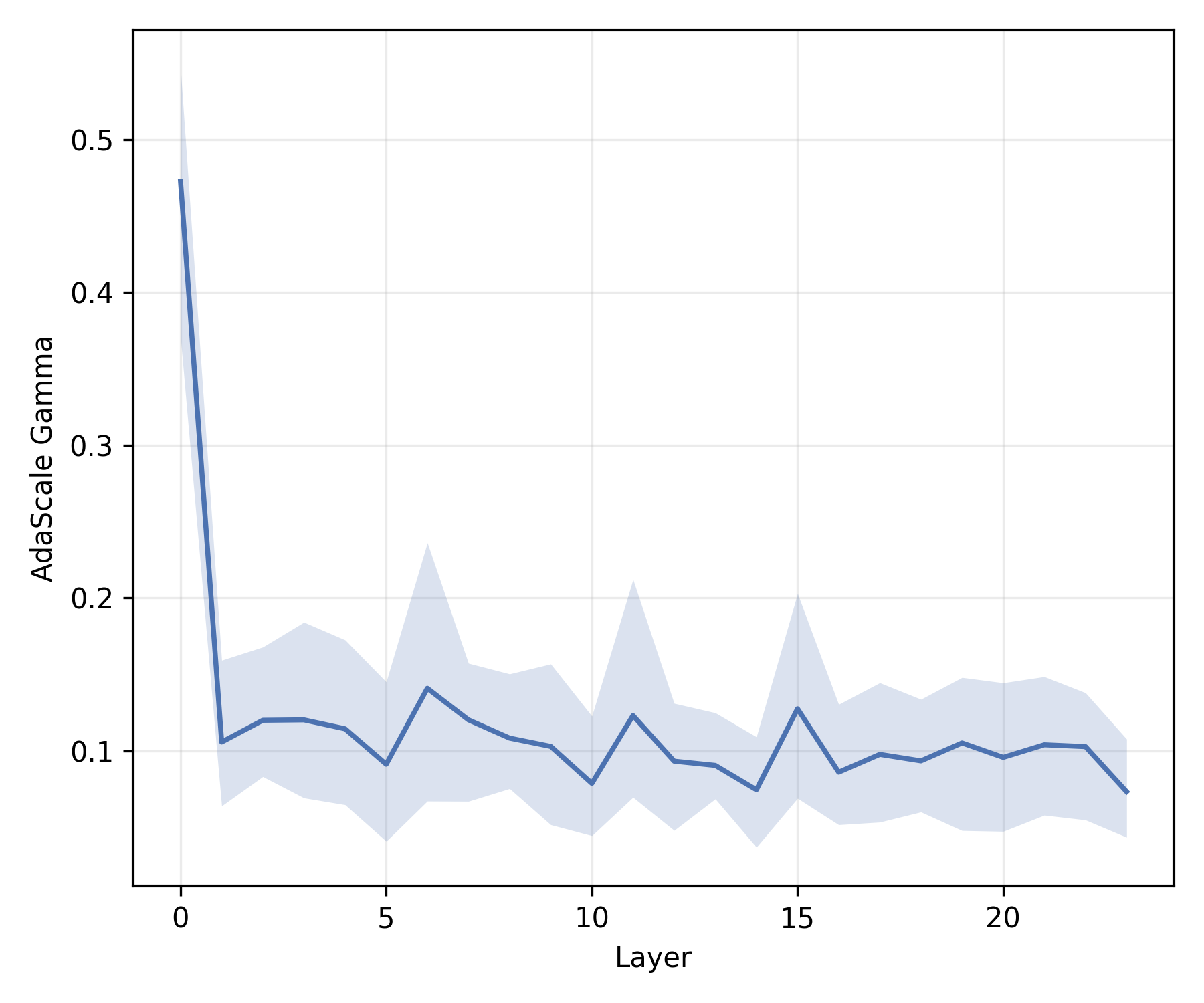}
  \end{minipage}
\caption{
\textbf{Left:} AdaRoPE yields a lower coefficient of variation (CoV) across dimensions compared to RoPE and PRoPE, indicating a more balanced utilization of frequencies. The box plot represents the min, max, 25th and 75th percentiles, and the mean across all heads.
\textbf{Middle:} The effective RoPE base $\Theta$ exhibits an increasing trend with network depth, suggesting that deeper layers prioritize learning lower frequencies.
\textbf{Right:} The AdaScale parameter $\gamma$ peaks in the initial layer and rapidly stabilizes around 0.1 in subsequent layers.
In the Middle and Right plots, we plot the mean value across heads in the same layer, and shaded regions denote the 0.1--0.9 quantile range.
}
\label{fig:cov}
\vspace{-0.2cm}
\end{figure*}

\subsection{Design of \adarope}
We introduce \textbf{\adarope}, a drop-in extension of RoPE that equips transformers with (i) \emph{dimension-wise, head-specific} rotary frequencies (\textbf{\adafreq}) and (ii) \emph{head-specific, length-aware} attention temperatures (\textbf{\adascale}). 
Here we use $h \in \set{1, \dots, H}$ to represent different attention heads.

\paragraph{\adafreq: Head-Specific Rotary Frequencies.}
\label{sec:adafreq}
\adafreq replaces the fixed geometric schedule with \emph{learnable}, per-head, per-block angular speeds. For numerical stability, we learn the \emph{log-frequency} parameter $\learned{\xi_f^{(h)}} \in \mathbb{R}$
for head $h$ and map it to a positive frequency via
\[
    \theta_f^{(h)} = \exp\left( \learned{\xi_f^{(h)}} \right).
\]
So the head-specific rotation matrix becomes:
\[
    \mR^{(h)}_i = \diag\left( \mG_i\left( \theta_0^{(h)} \right), \dots, \mG_i\left( \theta_{d/2-1}^{(h)} \right) \right),
\]
where $\mG_i$ is defined in \cref{eq:rotationmatrix}.

\paragraph{\adascale: Per-Head, Length-Aware Attention Temperatures.}
\label{sec:adascale}
To counteract attention dilution as the context length $L$ grows, \adascale multiplies each head’s logits by a \emph{head-specific inverse temperature} $\scale(L)$: the 
logit (for query $\vq^{(h)}_i$ and key 
$\vk^{(h)}_j$ in head $h$) is computed as
\[
    s^{(h)}_{i, j} = \frac{ \scale^{(h)}(L) }{\sqrt{d}} ( \vq^{(h)}_i )^\transpose \mR^{(h)}_{j-i} \vk^{(h)}_j,
\]

We use a smooth, positive schedule that is monotone in $L$ and grows when $\gamma^{(h)}>0$ :
\[
    \scale^{(h)}(L) 
    \coloneqq
    \underbrace{ \frac{1}{\learned{\tau^{(h)}}} }_{\text{global scale}}
    \cdot
    \underbrace{ \left[ \ln \left(1 + \frac{\max\set{L, \Lref}}{\Lref} \right) \right]^{\learned{\gamma^{(h)}}} }_{\text{length growth based on recency strength}}
\]
where $\Lref$ is a hyperparameter. 
The choice of temperature is motivated by \cref{thm:scale nope,thm:scale rope} (\cref{sec:asymptotic}).
Larger $\learned{\gamma^{(h)}}$ yields sharper growth with $L$ (stronger recency), while smaller $\learned{\tau^{(h)}}$ increases the overall inverse temperature. This design allows heads with different recency profiles to \emph{adaptively} learn distinct $\learned{\gamma^{(h)}}$ and $\learned{\tau^{(h)}}$. In practice, we apply $\scale^{(h)}(L)$ as a per-head scalar to the query vector only ($\vq_i^{(h)} \gets \scale^{(h)}(L) \vq_i^{(h)}$), leaving the key/value cache and the attention kernel unchanged.

\begin{table*}[t]
\centering
\small
\caption{\textbf{NLU and LM evaluation.} We report accuracy on seven NLU benchmarks (ARC-C/E, BoolQ, HellaSwag, Lambada, PIQA, WinoGrande); \textbf{Avg} denotes the mean accuracy across the seven tasks. For ARC, HellaSwag, and PIQA we report length-normalized accuracy, while the remaining NLU tasks use accuracy. We additionally report Loss (averaged pretraining loss over the last 200 steps; lower is better) and Wiki (WikiText-103 perplexity; lower is better). Best per benchmark within each backbone is boldfaced. \adarope achieves the best average NLU accuracy across all evaluated backbones and improves pretraining loss on average.}

\label{tab:nlu_combined}
\begin{tabular}{lcccccccc|cc}
\toprule
Method & \textbf{ARC-C} & \textbf{ARC-E} & \textbf{BoolQ} & \textbf{HellaS} & \textbf{Lambada} & \textbf{PIQA} &\textbf{ WinoG} & \textbf{Avg} & \textbf{Loss} & \textbf{Wikitext} \\
\midrule
\multicolumn{9}{l}{\textbf{LLaMA 430M}} \\
\adarope & \textbf{33.28} & \textbf{61.78} & 59.45 & \textbf{45.02} & 37.16 & \textbf{68.82} & 53.12 & \textbf{51.23} & \textbf{2.663} & \textbf{22.31} \\
RoPE & 31.66 & 60.82 & 57.16 & 44.94 & 37.94 & 67.95 & 54.06 & 50.65 & 2.692 & 22.92\\
PRoPE & 32.76 & 60.31 & 60.06 & 44.75 & \textbf{38.35} & 67.79 & 54.14 & 51.17 &2.686 & 22.70\\
ALiBi & 30.72 & 59.93 & 59.66 & 44.22 & 37.45 & 68.39 & \textbf{55.09} & 50.78 & 2.671 &22.46\\
NoPE & 32.08 & 59.05 & \textbf{60.58} & 41.25 & 36.79 & 66.38 & 52.33 & 49.78 & 2.740 &23.73\\
\addlinespace[0.25em]
\multicolumn{9}{l}{\textbf{OLMoE-2.7B (0.4B active)}} \\
\adarope & \textbf{40.27} & \textbf{70.24} & 60.98 & \textbf{53.47} & 43.72 & 71.49 & \textbf{56.43} & \textbf{56.66} & \textbf{2.717} &17.54\\
RoPE & 38.99 & 67.59 & 60.55 & 52.45 & \textbf{43.95} & 71.33 & 55.49 & 55.76 & 2.725 &18.00\\
PRoPE & 35.07 & 68.31 & \textbf{62.26} & 52.77 & 43.86 & \textbf{71.71} & 54.46 & 55.49 & 2.718 &\textbf{17.49}\\
ALiBi & 37.12 & 66.71 & 61.89 & 50.17 & 40.98 & 70.76 & 55.29 & 54.70  &2.735 & 18.31\\
NoPE & 33.19 & 64.52 & 60.06 & 48.00 & 38.87 & 69.48 & 53.91 & 52.58 &2.798 &19.14\\
\addlinespace[0.25em]
\multicolumn{9}{l}{\textbf{LLaMA 1.3B}} \\
\adarope & \textbf{42.41} & \textbf{70.92} & 60.21 & \textbf{54.80} & 45.33 & 72.58 & \textbf{59.35} & \textbf{57.94} & \textbf{2.442} &\textbf{16.95}\\
RoPE & 38.57 & 69.65 & 59.97 & 54.13 & 44.71 & \textbf{72.85} & 57.06 & 56.71 & 2.452 &17.45\\
PRoPE & 39.16 & 68.64 & 58.26 & 54.69 & \textbf{45.53} & 72.25 & 56.99 & 56.50 & 2.444&\textbf{16.95}\\
ALiBi & 40.02 & 69.19 & 61.01 & 53.79 & 45.41 & 71.49 & 56.83 & 56.82 &2.451 &17.06\\
NoPE & 36.77 & 67.59 & \textbf{61.56} & 51.52 & 42.71 & 71.82 & 55.33 & 55.33 &2.491 &17.80\\
\addlinespace[0.25em]
\multicolumn{9}{l}{\textbf{Qwen3 Gated Attention 430M}} \\
\adarope & 32.25 & \textbf{61.49} & \textbf{60.83} & 45.10 & 37.78 & \textbf{69.42} & 52.64 & \textbf{51.36} & \textbf{2.656} &\textbf{20.08}\\
RoPE & 31.91 & 59.64 & 58.87 & 44.64 & 37.36 & 67.30 & 53.20 & 50.42 & 2.671 & 26.20\\
PRoPE & 29.52 & 59.68 & 59.85 & \textbf{46.00} & 37.51 & 68.77 & 53.12 & 50.64 & 2.659 &21.15\\
ALiBi & \textbf{32.34} & 60.77 & 59.20 & 44.70 & \textbf{38.87} & 68.01 & \textbf{54.78} & 51.24 &2.679 &32.30\\
NoPE & 29.18 & 58.50 & 59.08 & 41.97 & 34.58 & 66.59 & 53.75 & 49.09 & 2.724 &34.57\\
\midrule
\multicolumn{9}{l}{\textbf{LLaMA 430M Ablations}} \\
\adarope & \textbf{33.28} & 61.78 & 59.45 & \textbf{45.02} & 37.16 & \textbf{68.82} & 53.12 & \textbf{51.23}& \textbf{2.663} & \textbf{22.31}\\
Share Freq. & 30.20 & 61.49 & \textbf{60.73} & 44.64 & 36.97 & 67.25 & \textbf{55.01} & 50.90 & 2.685 & 22.45\\
No \adascale & 33.02 & 61.53 & 56.73 & 44.97 & \textbf{38.83} & 68.50 & 51.78 & 50.77 & 2.683&22.40\\
Fixed Log & 31.66 & \textbf{62.54} & 56.76 & 44.71 & 37.45 & 68.01 & 52.96 & 50.58 & 2.687 &23.14\\
Learned Base & 31.48 & 61.45 & 58.17 & 44.89 & 37.28 & 68.28 & 53.83 & 50.77 & 2.685& 22.76\\

\bottomrule
\end{tabular}
 \vspace{-0.2cm}
\end{table*}
 \subsection{Implementation and Overhead}
\paragraph{Implementation.}
\adarope is implemented as a drop-in replacement for RoPE: it preserves tensor shapes and interfaces, and remains fully compatible with FlashAttention-style fused attention kernels and standard KV caching~\citep{dao2022flashattention}.
For grouped-query attention (GQA)~\citep{ainslie2023gqa}, we learn \adafreq at the \emph{group} level (shared across heads within each KV group), while \adascale is learned \emph{per head}. For numerical stability, we store \adarope's parameter in fp32.

\paragraph{\adarope in Context Extension.}
Any pre-trained model utilizing RoPE can be seamlessly adapted to \adarope for continued pretraining. This flexibility allows for diverse optimization strategies, ranging from tuning \adarope parameters in isolation to joint training with the backbone.
For context extension, we adopt a parameter-efficient strategy: we \emph{freeze} the backbone and fine-tune only the frequency $\{\xi_f^{(h)}\}$ and scaling parameters $\{\gamma^{(h)}, \tau^{(h)}\}$. 
To accelerate convergence, we set the frequencies and base temperatures to start from a \yarn{}-style schedule~\citep{peng2023yarn} targeting the new context length. Crucially, we initialize the length-dependent growth term as $\gamma^{(h)} = 0$, allowing the model to adaptively learn the appropriate temperature growth dynamics from a flat baseline during training.
In addition, we set $L_{ref}$ to the model's original context length to preserve the short context ability.

\paragraph{Overhead.}
The runtime overhead of \adarope is nearly imperceptible in practice.
\adafreq replaces a fixed frequency table with learned frequencies but follows the same compute pattern (elementwise sinusoidal rotations), and \adascale introduces only a per-head scalar multiplication on the query, leaving the key/value cache layout and the attention kernel unchanged.
Parameter overhead is also negligible: \adarope adds only on the order of $10^{-5}$ of the total parameters (typically $\sim$10--50K parameters), and thus does not meaningfully affect model capacity or memory footprint.

\section{Pretraining Experiment}
To validate the effectiveness of \adarope during pretraining, we conduct experiments across models of different scales and architectures, and find that \adarope consistently outperforms strong baselines; we then perform ablations over its individual components and finally analyze the frequency and scaling learned by \adarope.

\subsection{Experiment Setting}

\paragraph{Data and Model.}We pretrain four model variants on FineWeb-Edu-100B~\citep{penedo2024fineweb}: LLaMA-430M (Multi-Head Attention; MHA)~\citep{grattafiori2024llama}, LLaMA-1.3B (Grouped-Query Attention; GQA)~\citep{grattafiori2024llama}, OLMoE-2.7B (0.4B activated)~\citep{muennighoff2024olmoe}, and Qwen3 Gated Attention-430M~\citep{qiu2025gated}; full hyperparameter details are provided in \cref{sec:pretrain_setting}.

\paragraph{Evaluation.}
\label{sec:eval}
We evaluate on seven standard benchmarks (ARC-C~\citep{Clark2018ThinkYH}, ARC-E~\citep{Clark2018ThinkYH}, BoolQ~\citep{Clark2019BoolQET}, HellaSwag~\citep{Zellers2019HellaSwagCA}, Lambada~\citep{Paperno2016TheLD}, PIQA~\citep{Bisk2019PIQARA}, and WinoGrande~\citep{Sakaguchi2019AnAW}) and report accuracy. We additionally report the mean accuracy across these tasks to reduce the evaluation noise~\citep{Madaan2024QuantifyingVI}. Following common evaluation practice, we report length-normalized accuracy for ARC, HellaSwag, and PIQA, and accuracy for the remaining NLU tasks. We use 5-shot evaluation for all NLU tasks except Lambada using 0-shot. In addition, we include two language-modeling metrics: the pretraining loss and WikiText perplexity~\citep{Merity2018AnAO}.

\paragraph{Baselines.}
We compare \adarope against widely used positional embedding baselines while keeping others fixed, changing \emph{only} the positional embedding mechanism.
Specifically, we include: RoPE~\citep{su2021roformer}, Partial RoPE (PRoPE)~\citep{barbero2025round}, ALiBi~\citep{press2021train} and NoPE~\citep{Chi2023LatentPI}.

\subsection{Results}
\paragraph{Main Results.}
\cref{tab:nlu_combined} shows that \adarope consistently outperforms RoPE and other positional embedding baselines across models and scales. 
Compared to RoPE, \adarope improves mean accuracy over the seven NLU tasks by 0.91 points on average and reduces pretraining loss by 0.016 on average.
\paragraph{Ablation on Head-wise Frequency Selection.}
We investigate the necessity of head-wise frequency learning by forcing all heads to share a single learnable frequency schedule (\textit{Share Freq.}).
This restriction consistently leads to performance degradation, suggesting that distinct attention heads require unique frequency bands to function effectively.

\begin{figure*}[!t]
    \centering
    \begin{minipage}[t]{0.32\linewidth}
        \centering
        \includegraphics[width=\linewidth]{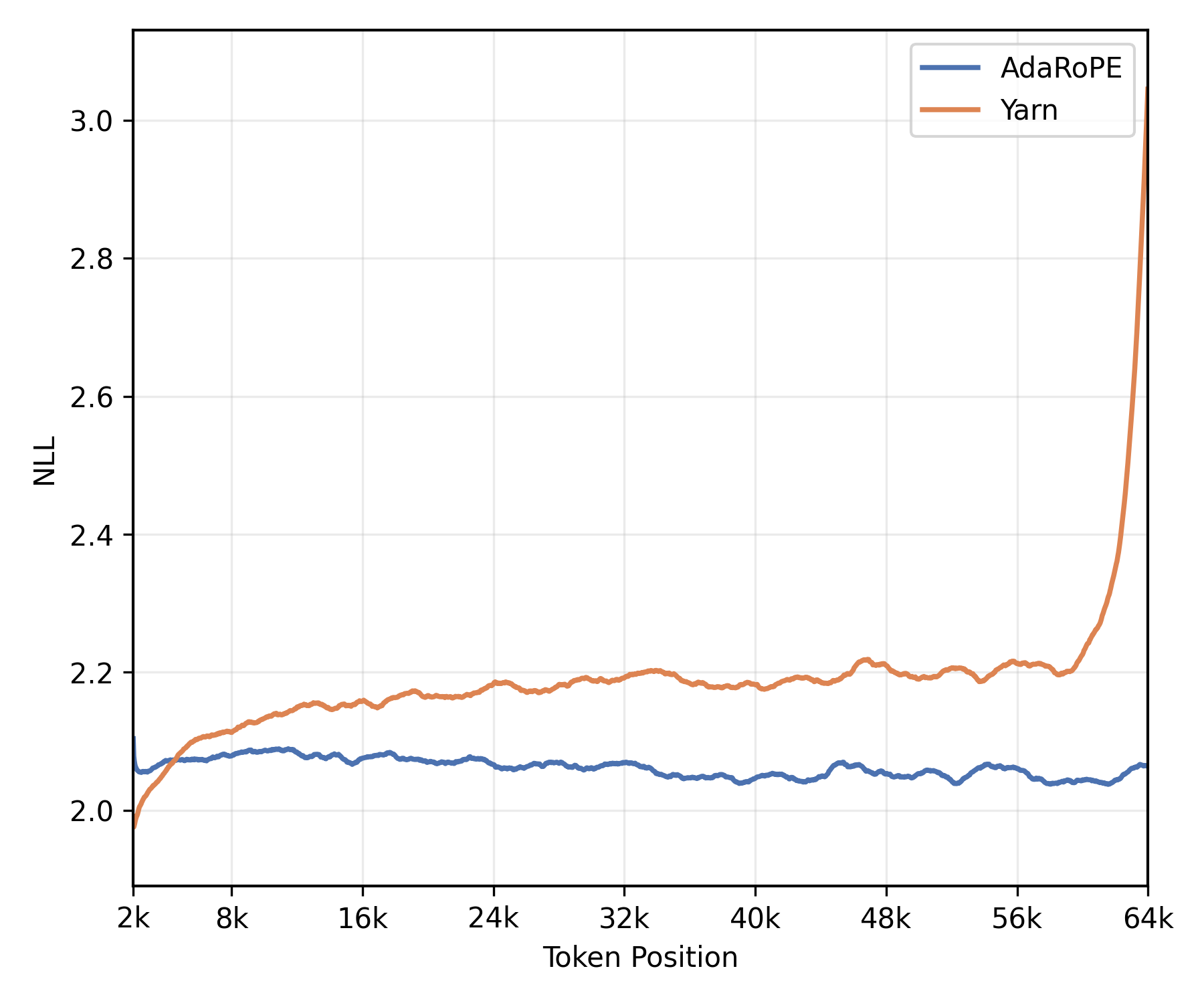}
    \end{minipage}\hfill
    \begin{minipage}[t]{0.32\linewidth}
        \centering
        \includegraphics[width=\linewidth]{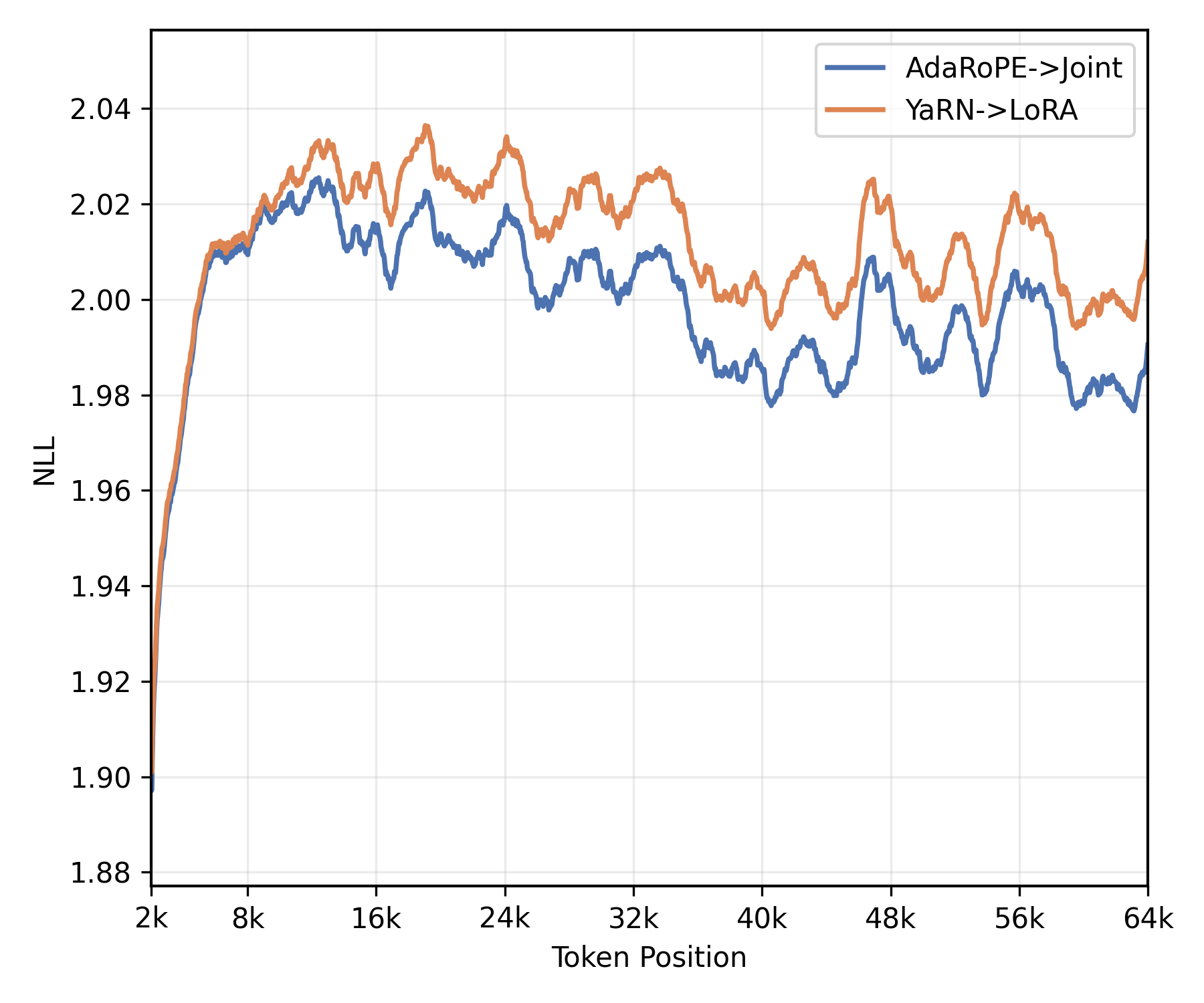}
    \end{minipage}\hfill
    \begin{minipage}[t]{0.32\linewidth}
        \centering
        \includegraphics[width=\linewidth]{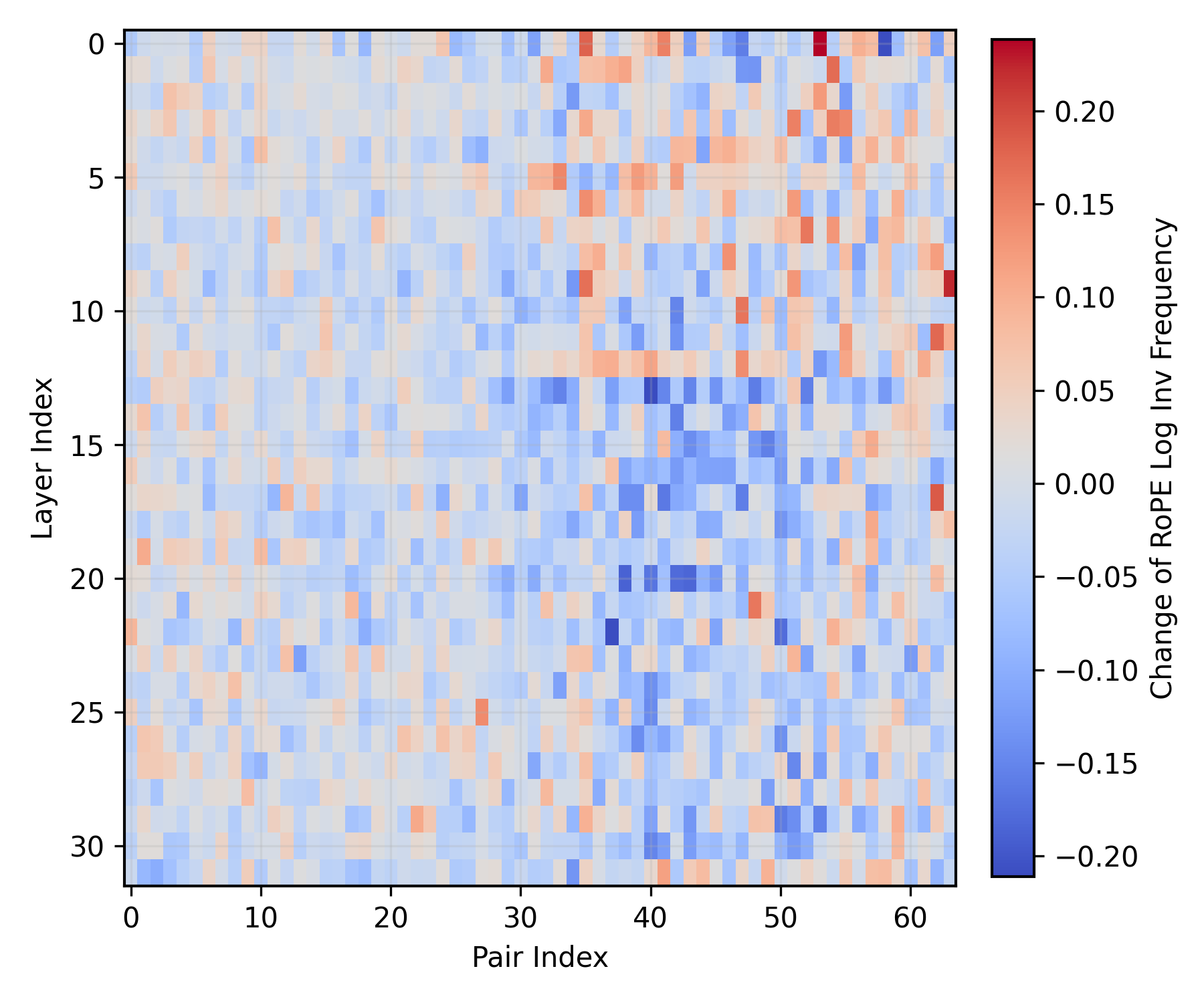}
    \end{minipage}
\caption{Analysis of extrapolation performance and learned frequency patterns. \textbf{Left:} NLL (negative log-likelihood) versus token position comparing \adarope and \yarn{} during the extrapolation stage. \textbf{Middle:} NLL comparison during the long-context continued pretraining stage (\adarope vs. \yarn{}+LoRA). \textbf{Right:} Visualization of log-frequency modifications (difference between the YaRN and learned log inverse frequency) across layers and dimension pairs. We observe that the frequencies of early layers tend to shift towards lower frequencies, indicated by postive change of inverse frequency, while deeper layers shift towards higher frequencies. Notably, modifications across all layers largely occur in the low-frequency components.}
    \label{fig:analysis_extrapolation}

    \vspace{-0.2cm}
\end{figure*}

\paragraph{Ablation on Head-wise Scaling.}
We evaluate the impact of \adascale by testing two variants: removing the scaling factor entirely (\textit{No AdaScale}) and replacing it with a fixed log-style scaling rule (\textit{Fixed Log}).
Both configurations result in inferior performance, with log-style scaling failing to recover the accuracy loss caused by removing the learned scaling mechanism.

\paragraph{Ablation on Frequency Parameterization.}
We compare our proposed parameterization against an alternative approach that directly learns the base $b$ (\textit{Learned Base}), where frequencies are subsequently recovered via the log-uniform formula $\theta_f = b^{-2f/d}$.
This alternative yields worse performance, indicating that our method of learning head-wise frequency selection is more effective than deriving frequencies from an optimized base parameter.

\subsection{Analysis}

\paragraph{More Sufficient Frequency Utilization.}
To quantify how evenly different RoPE frequencies are utilized, we employ the dimension-wise Coefficient of Variation (CoV) of the pair-usage statistics.
Formally, the CoV is defined as $C_v = \frac{\sigma}{\mu}$, where $\sigma$ and $\mu$ represent the standard deviation and the mean of the usage statistics across dimensions, respectively. 
A lower CoV indicates a more balanced distribution with less dispersion.
As shown in \cref{fig:cov} (left), \adarope attains a noticeably lower CoV than both RoPE and PRoPE, demonstrating that our method promotes a more sufficient utilization of frequency bands rather than concentrating on a small subset of dimensions.

\paragraph{Learned Frequencies and Scaling Vary Systematically with Layer Depth.}
We next examine the learned depth-wise parameters that govern effective length specialization. \cref{fig:cov} (Middle) shows that the effective RoPE base $\Theta=\text{exp}\left( -\frac{4}{d-2} \sum_{i=0}^{d/2-1} \xi_i^{(h)} \right)$ increases with layer depth, implying that deeper layers operate with slower effective frequencies (i.e., larger effective wavelengths), consistent with a shift toward longer-range positional behavior in later layers. In parallel, the AdaScale parameter $\gamma$ exhibits a sharp layer dependence (\cref{fig:cov}, Right): it is largest in the first layer and rapidly decays to around $0.1$ for most subsequent layers, remaining relatively stable thereafter.

\section{Experiment on Context Extension}

In this section, we evaluate the effectiveness of \adarope in two distinct settings: extrapolation with limited data and long-context continued pretraining. 

First, we assess the extrapolation capability of \adarope in a data-constrained regime. By fine-tuning only the frequency and scaling parameters on as few as 100 samples (while keeping the backbone frozen), we observe that \adarope achieves significantly better performance on long-context tasks compared to \yarn{}~\citep{peng2023yarn}. 

Second, we explore the potential of \adarope to enhance long-context continued pretraining~\citep{Chen2023LongLoRAEF}.
We investigate different training strategies and find that a two-stage optimization approach—initially optimizing \adarope parameters while freezing the model, followed by jointly training LoRA and \adarope parameters—yields the best improvements.

\subsection{Overall Experiment Setting}
We conduct our experiments using SmolLM-2-1.7B~\citep{Allal2025SmolLM2WS} and Llama-3-8B~\citep{grattafiori2024llama} as the backbone models. Both of these models have a native context length of 8k, and we extend it to 64k in our experiment. We use the PG19 dataset~\citep{Rae2019CompressiveTF} for long context finetuning. For general language capabilities, we evaluate the models on the Natural Language benchmarks described in \cref{sec:eval}, following the identical setup and reporting the average scores. To assess long-context performance, we utilize the RULER~\citep{Hsieh2024RULERWT} benchmark and report the mean results across different sequence lengths.

\subsection{Extrapolation}

For context extrapolation, we \emph{freeze} the backbone and fine-tune only \adarope parameters on the first 100 training samples of PG19~\citep{Rae2019CompressiveTF}.

\paragraph{Main Results.}
\cref{tab:meta_extrapolation} presents the performance of LLaMA 8B and SmolLM 1.7B on NLU and RULER benchmarks. A common challenge in context extrapolation is the degradation of short-context capabilities, as evidenced by the performance drops in NLU and Ruler-8k metrics. \adarope partially mitigates the short-context degradation caused by global scaling (e.g., \yarn{}), especially on LLaMA-8B; on SmolLM, short-context RULER scores still drop, but \adarope remains consistently better than \yarn{} at all evaluated lengths.

In terms of long-context capabilities (Ruler-16k to Ruler-64k), \adarope also achieves better performance. Notably, on LLaMA 8B, while the \yarn{} fails almost completely at 64k, \adarope maintains a high retrieval accuracy of 51.87 at 64k length. This implies that context extrapolation can be substantially improved solely by optimizing the RoPE rotation frequencies and attention scaling factors, without modifying model weights.

\paragraph{Ablation Studies.}
We further analyze the contribution of each component using SmolLM 1.7B (bottom section of \cref{tab:meta_extrapolation}). We consider three variants:
(1) \textit{w/o \adafreq}: Removing the adaptive frequency learning (reverting to fixed \yarn{} angles);
(2) \textit{w/o \adascale}: Removing the adaptive scaling (reverting to a fixed \yarn{}-style global scaling schedule);
(3) \textit{Share Freq.}: Constraining the learned frequencies to be shared across all attention heads.

The results indicate that the full \adarope configuration is essential for robust long-context performance. 
Removing either adaptive frequency or scaling leads to a significant decline in Ruler scores. 
Interestingly, we observe a trade-off: ablated versions (e.g., \textit{w/o \adascale}) yield slightly better short-context scores (NLU). 
We hypothesize that \adarope utilizes its higher expressivity to better fit the long-context training data, but since we do not mix short texts during fine-tuning, this adaptation incurs a marginal cost on short-context performance.

\begin{table}[ht]
\centering
\small
\setlength{\tabcolsep}{4pt}
\renewcommand{\arraystretch}{1.05}
\caption{Extrapolation with \adarope (NLU and RULER). ‘–’ indicates not applicable due to the backbone’s native context limit.}
\label{tab:meta_extrapolation}
\begin{tabular}{lccccc}
\toprule
Method & NLU & R-8k & R-16k & R-32k & R-64k \\
\midrule
\multicolumn{5}{l}{\textbf{LLaMA 8B}} \\
\adarope & 63.05 & 82.07 & \textbf{77.86} & \textbf{70.55} & \textbf{51.87} \\
Backbone & \textbf{64.79} & \textbf{82.37} & -- & -- & -- \\
Backbone+\yarn{} & 62.53 & 75.76 & 71.10 & 66.84 & 12.80 \\
\addlinespace[0.25em]
\multicolumn{5}{l}{\textbf{SmolLM 1.7B}} \\
\adarope & 62.73 & 30.07 & \textbf{24.16} & \textbf{20.21} & \textbf{15.83} \\
Backbone & \textbf{64.94} & \textbf{48.50} & -- & -- & -- \\
Backbone+\yarn{} & 62.56 & 24.73 & 18.41 & 14.64 & 11.49 \\
\addlinespace[0.25em]
\midrule
\multicolumn{5}{l}{\textbf{SmolLM 1.7B Ablation}} \\
\adarope & 62.73 & \textbf{30.07} & \textbf{24.16} & \textbf{20.21} & \textbf{15.83} \\
w/o \adafreq & 63.15 & 27.87 & 20.38 & 15.32 & 13.13 \\
w/o \adascale & \textbf{64.01} & 18.27 & 16.48 & 18.26 & 9.97 \\
Share Freq. & 63.96 & 25.75 & 16.79 & 16.87 & 10.70 \\
\addlinespace[0.25em]
\bottomrule
\end{tabular}
\vspace{-0.4cm}
\end{table}

\subsection{Long Context Continued Pretraining}

We further scale up our investigation to continued pretraining. We employ LoRA~\citep{Hu2021LoRALA,Chen2023LongLoRAEF} for efficient fine-tuning on a 600M-token subsample of PG19.

\paragraph{Training Strategies.}
To identify the optimal approach for combining parameter-efficient fine-tuning with adaptive rotary embeddings, we compare four strategies:
(1) \textit{YaRN$\rightarrow$LoRA:} Directly training LoRA parameters using fixed \yarn{} embeddings.
(2) \textit{\adarope LoRA Joint:} Joint training both \adarope and LoRA parameters.
(3) \textit{\adarope $\rightarrow$ LoRA:} A two-stage strategy where we first warm-up \adarope (using the 100 samples from the extrapolation stage), freeze it, and then train LoRA.
(4) \textit{\adarope $\rightarrow$ Joint:} The proposed strategy, which initializes with the warmed-up \adarope, then jointly trains both \adarope and LoRA.
For fair comparison, we control the total number of training tokens to be identical across all single-stage and two-stage methods.

\paragraph{Main Results.}
\cref{tab:meta_train_strategy} summarizes the performance across different strategies. The \textit{\adarope $\rightarrow$ Joint} strategy consistently achieves the best performance on both LLaMA 8B and SmolLM 1.7B, particularly in long-context scenarios. We further visualize the perplexity comparison and the evolution of \adafreq during the training process in \cref{fig:analysis_extrapolation}. We provide further analysis in \cref{sec:extrapolation_ana}.

We interpret this result through the lens of optimization landscapes. The fixed positional representation (i.e., \yarn{}) used in \textit{YaRN$\rightarrow$LoRA} may not be optimal for extremely long contexts. Directly updating model parameters (LoRA) on top of this suboptimal basis can cause the model to converge into a suboptimal local minimum. By treating the first stage as a ``positional alignment" phase, \adarope learns a more suitable frequency spectrum for long contexts. This warmed-up initialization provides a better starting point for the subsequent joint training, allowing the model to refine both its weights and positional view effectively.
\begin{table}[ht]
\centering
\small
\setlength{\tabcolsep}{4pt}
\renewcommand{\arraystretch}{1.05}
\caption{Comparison of context extension strategies. Joint: joint training of \adarope and LoRA; $\rightarrow$: two-stage warm-start. `--' indicates not applicable due to the backbone's native context limit.}
\label{tab:meta_train_strategy}
\begin{tabular}{lccccc}
\toprule
Method & NLU & R-8k & R-16k & R-32k & R-64k \\
\midrule
\multicolumn{5}{l}{\textbf{LLaMA 8B}} \\
\adarope → Joint & 63.55 & \textbf{84.03} & \textbf{82.91} & \textbf{76.03} & \textbf{69.09} \\
\adarope LoRA Joint & 63.94 & 81.53 & 82.32 & 75.49 & 61.62 \\
\adarope → LoRA & 64.07 & 83.58 & 80.40 & 72.91 & 63.27 \\
YaRN$\rightarrow$LoRA (\yarn{}) & 63.99 & 83.34 & 80.23 & 73.84 & 61.60 \\
Backbone & \textbf{64.79} & 82.37 & -- & -- & -- \\
\addlinespace[0.25em]
\multicolumn{5}{l}{\textbf{SmolLM 1.7B}} \\
\adarope → Joint & 63.16 & \textbf{51.92} & \textbf{35.25} & 32.18 & \textbf{31.07} \\
\adarope LoRA Joint & 63.03 & 42.15 & 31.07 & \textbf{32.73} & 28.93 \\
\adarope → LoRA & 63.25 & 39.98 & 30.18 & 28.47 & 28.69 \\
YaRN$\rightarrow$LoRA (\yarn{}) & 62.90 & 33.76 & 28.14 & 30.06 & 25.06 \\
Backbone & \textbf{64.94} & 48.50 & -- & -- & -- \\
\addlinespace[0.25em]
\bottomrule
\end{tabular}
\vspace{-0.4cm}
\end{table}

\begin{table}[t]
\centering
\small
\setlength{\tabcolsep}{4pt}
\renewcommand{\arraystretch}{1.05}
\caption{Ablations and baselines under the \adarope $\rightarrow$ Joint protocol (SmolLM 1.7B).}
\label{tab:meta_ablation}
\begin{tabular}{lccccc}
\toprule
Method & NLU & R-8k & R-16k & R-32k & R-64k \\
\midrule
\adarope & 63.16 & \textbf{51.92} & \textbf{35.25} & \textbf{32.18} & \textbf{31.07} \\
w/o \adafreq & 63.31 & 41.11 & 30.46 & 26.61 & 28.07 \\
w/o \adascale & \textbf{64.43} & 41.99 & 32.38 & 30.57 & 28.35 \\
Share Freq. & 64.37 & 46.85 & 31.63 & 28.12 & 25.49 \\
YaRN$\rightarrow$LoRA (\yarn{}) & 62.90 & 33.76 & 28.14 & 30.06 & 25.06 \\
\bottomrule
\end{tabular}
\vspace{-0.4cm}
\end{table}

\paragraph{Ablation Studies.}
We also conduct ablation studies under the optimal \adarope $\rightarrow$ Joint setting on SmolLM 1.7B, as shown in \cref{tab:meta_ablation}.
Consistent with our extrapolation findings, the full \adarope configuration outperforms variants without adaptive frequency \textit{(w/o \adafreq)}, without adaptive scaling \textit{(w/o \adascale)}, or with shared frequencies\textit{ (Share Freq.)}. 
This confirms that even when model parameters (LoRA) are trainable, the flexibility provided by independent, head-wise frequency and scaling adaptation remains crucial for maximizing long-context retrieval capabilities.

\section{Conclusion}

Rotary Position Embedding (RoPE) is widely adopted in modern Transformers, yet their standard formulation uses uniform rotation frequencies and attention scaling across all heads. We show such a design choice is fundamentally misaligned with head heterogeneity, through simplified yet representative theoretical settings. To address this limitation, we propose \adarope, a drop-in extension that learns head-wise, dimension-wise frequencies (\adafreq) and length-aware attention scaling (\adascale). Our experiments show that AdaRoPE consistently outperforms strong baselines across a range of model scales, yielding significant gains in both general modeling quality and long-context extrapolation. By optimizing positional embedding at the granularity of individual attention heads, AdaRoPE provides a simple yet powerful and computationally efficient mechanism for robust long-context modeling, while remaining fully compatible with standard hardware and training pipelines.

\section*{Acknowledgements}
This work was supported by the Xiongan AI Institute and Tencent.
The authors are also supported by the National Natural Science Foundation of China Grant 04130200126.
We extend our gratitude to Kaifeng Lyu, Guancheng Du, Xinnuo Chen, Kaiyue Wen, Zhixuan Pan, and Binghui Li for insightful discussions and meticulous proofreading. We also thank the anonymous reviewers for their constructive comments, which helped improve this paper.

\section*{Impact Statement}
This work advances the efficiency and capability of Transformer-based language models, specifically regarding long-context generalization. By enabling more effective context extension with reduced computational overhead, \adarope facilitates improvements in high-value applications such as document understanding, long-form summarization, and codebase analysis. Furthermore, because this method achieves strong extrapolation with limited additional training, it may lower the resource barriers for adapting models to long-context tasks, thereby democratizing access to capable long-context processing.

However, enhanced capabilities in processing extensive contexts could amplify risks regarding the large-scale generation of misinformation or the inadvertent leakage of private information found in lengthy sensitive documents. While our method modifies architectural parameters rather than introducing new datasets, the resulting increase in model capability necessitates responsible deployment. We encourage practitioners to pair long-context extensions with robust data governance, privacy-aware training practices, and rigorous evaluation protocols to mitigate potential misuse and privacy risks in long-context regimes. 

\bibliographystyle{icml2026}

\appendix

\clearpage
\onecolumn

\section{Additional Related Work}

\paragraph{Positional embedding.}
Early Transformer language models commonly use \emph{absolute} positional embedding, typically learned vectors added to token embeddings, as in GPT-style decoders ~\citep{Radford2018ImprovingLU}.
Later work popularized \emph{relative} schemes that modify attention scores as a function of token offsets.
T5 introduces a \emph{bucketed relative position bias}, where relative distances are discretized into buckets and a learned scalar bias is added to attention logits ~\citep{Raffel2019ExploringTL}.
ALiBi instead adds a \emph{linear distance penalty} to attention logits, with different slopes per head, enabling ``train short, test long'' extrapolation without adding positional vectors ~\citep{press2021train}.
PaTH replaces fixed position transforms with \emph{data-dependent} (input-conditioned) Householder-like transformations accumulated across positions, improving expressivity while retaining efficient implementations ~\citep{Yang2025PaTHAP}.

\paragraph{RoPE and RoPE variants.}
RoPE ~\citep{su2021roformer} encodes positions by rotating queries and keys with position-dependent 2D block rotations, so the attention interaction depends on \emph{relative offsets} through composed rotations.
Recent analyses of RoPE’s mechanics study how different frequencies are exploited and how RoPE can induce uneven usage patterns across dimensions/heads (e.g., Partial RoPE) ~\citep{barbero2025round}.
Several works modify how RoPE is applied across dimensions or modalities.
RoPE-Mixed adapts RoPE to higher-dimensional settings (e.g., vision), combining axes to better handle resolution changes and serve as a relative position mechanism in 2D ~\citep{Heo2024RotaryPE}.
LieRE generalizes RoPE using Lie group constructions to support $n$-dimensional inputs, replacing RoPE’s canonical 2D plane rotations with rotations derived from learned Lie algebra generators ~\citep{Ostmeier2024LieRELR}.
GRAPE provides a unifying group-action framework that subsumes multiplicative (RoPE-like) rotations and additive (ALiBi/FoX-like) logit biases, and extends both via learned subspaces/mixtures ~\citep{Zhang2025GroupRP}.
A more directly comparable head-wise RoPE line is HARoPE, which introduces per-head learnable transformations before rotary mapping (via SVD parameterization) for fine-grained image generation ~\citep{Li2025HeadwiseAR}.

\paragraph{Scale-invariant attention and entropy-based scaling.}
A separate direction focuses on stabilizing attention behavior as context length grows by modifying \emph{logit transformations/scaling}.
Scale-invariant attention proposes desiderata such as scale-invariant total attention and sparsity and derives a position-dependent logit transformation that improves zero-shot generalization from short training contexts to longer validation contexts ~\citep{anson2025scale}.
Complementary theoretical work analyzes length-related failures of self-attention on simple formal languages and shows that normalization and simple length-dependent logit scaling (e.g., scaling attention logits by $\log n$) can mitigate confidence degradation and improve length generalization ~\citep{Chiang2022OvercomingAT}.
Information-entropy invariance methods derive training-free scaling rules (e.g., InfoScale) motivated by preserving entropy properties during length extrapolation ~\citep{Li2025InformationEI}.

\paragraph{Context extrapolation for RoPE-based LMs.}
Many approaches extend usable context beyond pretraining by modifying RoPE scaling or position indices.
Position Interpolation rescales position indices so long sequences map back into the original trained range, enabling window extension with limited fine-tuning while preserving short-context quality ~\citep{chen2023positional}.
\yarn{} is a compute-efficient context extension recipe that combines RoPE-specific heuristics to enable much longer windows with relatively little additional training ~\citep{peng2023yarn}.
LongRoPE pushes window extension to the million-token regime via progressive extension and non-uniform RoPE rescaling strategies ~\citep{Shang2025LongRoPE2NL}.
Ms-PoE proposes a plug-and-play multi-scale positional encoding that assigns different scaling ratios across heads to mitigate ``lost-in-the-middle,'' improving retrieval of mid-context information without fine-tuning ~\citep{Zhang2024FoundIT}.
MoICE introduces routers within heads that dynamically select among multiple RoPE angles (``in-context experts''), training only lightweight routers while freezing the backbone, to enhance long-context awareness ~\citep{Lin2024MixtureOI}.

\paragraph{Closest connections to our work.}
Our method explicitly turns RoPE into a \emph{head-wise, learnable frequency mechanism} (\adafreq) and introduces \emph{head-wise, length-aware logit temperature schedules} (\adascale) as a drop-in replacement while preserving cache/kernel compatibility.
This differs from analysis-only works that characterize RoPE usage without providing a unified head-wise learnable frequency plus length-aware scaling module ~\citep{barbero2025round}, and from modality-driven generalizations (RoPE-Mixed, LieRE) that primarily target multi-dimensional positional structure rather than head-wise long-context specialization in language models ~\citep{Heo2024RotaryPE,Ostmeier2024LieRELR}.

Among context-extrapolation methods, Ms-PoE and MoICE are closest in spirit because they introduce explicit \emph{head-dependent} mechanisms for long-context behavior ~\citep{Zhang2024FoundIT, Lin2024MixtureOI}.
Our approach instead learns a \emph{continuous, head-wise frequency spectrum} (rather than selecting among discrete angle experts or using fixed per-head rescaling ratios) and couples this with \emph{head-wise length-aware temperature scaling} inside the standard RoPE attention computation. %
\section{Additional Results}
\subsection{Analysis on Synthetic Tasks}
\label{sec:synthetic_ana}
\cref{fig:synthetic_heatmaps} displays the attention logit contributions from different RoPE frequency pairs as the target window size $\width$ varies.

\begin{figure*}[!htbp]
    \centering
    \begin{minipage}[t]{0.49\linewidth}
        \centering
        \includegraphics[width=\linewidth]{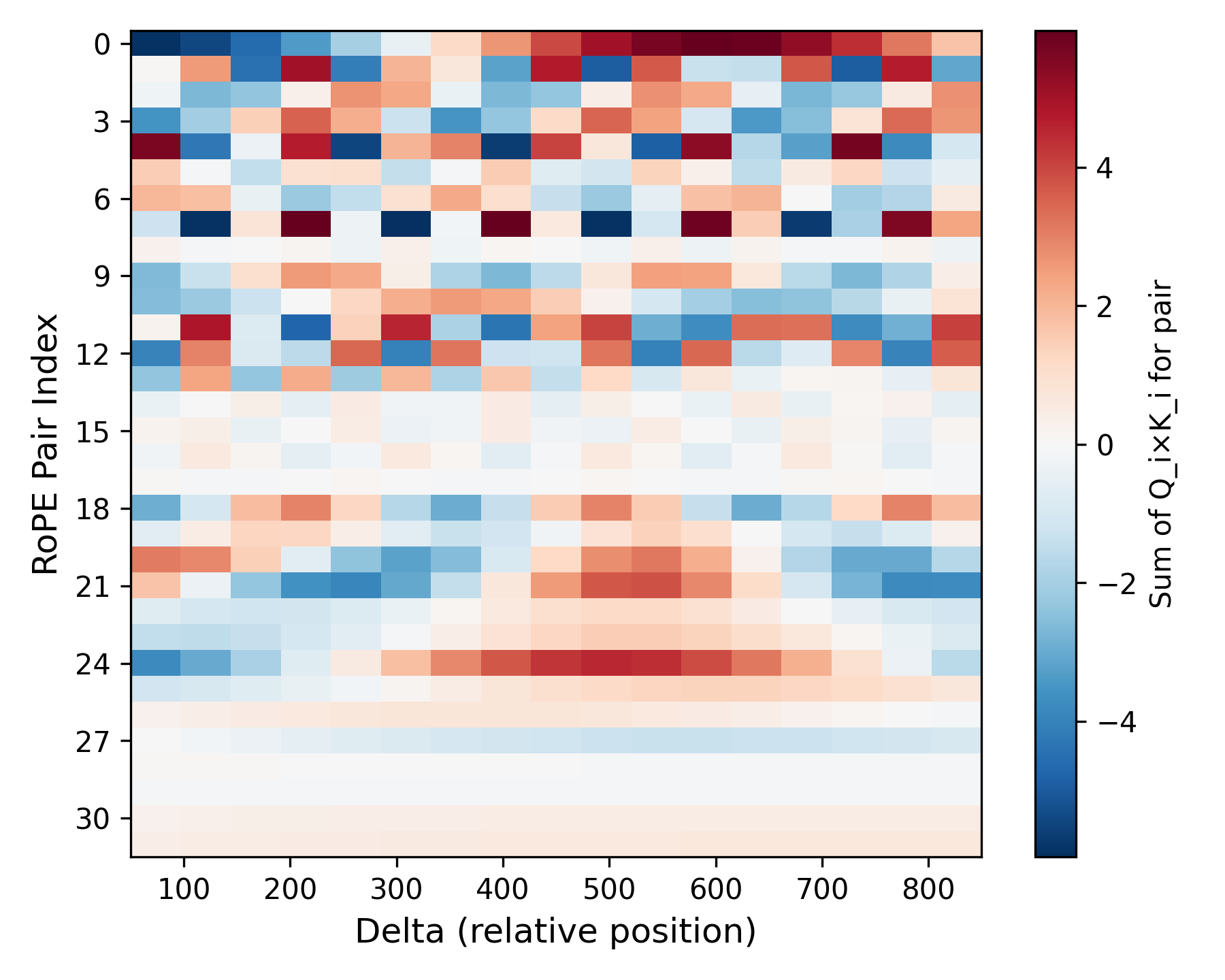}
        \subcaption{$\width=1$}
    \end{minipage}\hfill
    \begin{minipage}[t]{0.49\linewidth}
        \centering
        \includegraphics[width=\linewidth]{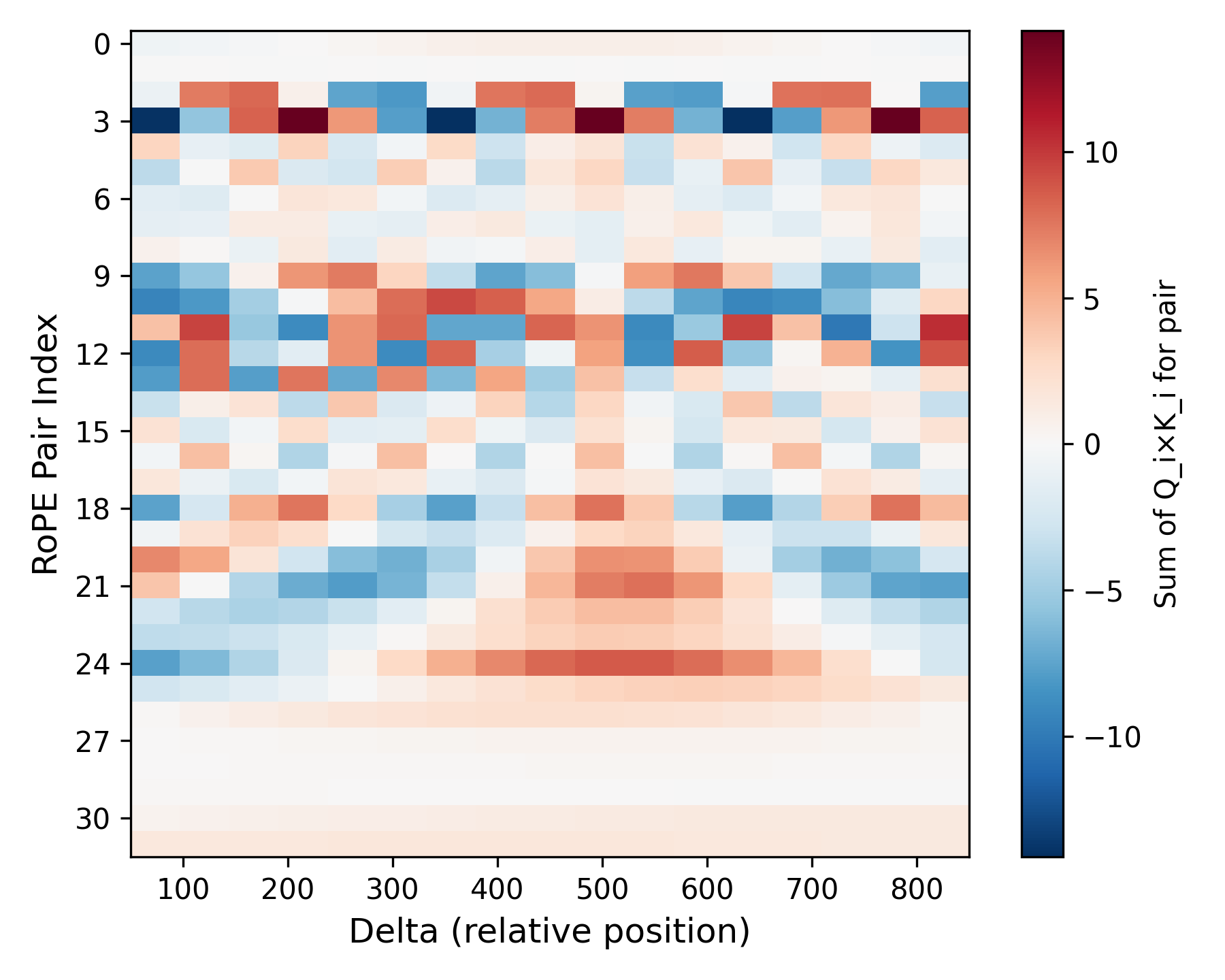}
        \subcaption{$\width=32$}
    \end{minipage}

    \vspace{0.2cm}

    \begin{minipage}[t]{0.49\linewidth}
        \centering
        \includegraphics[width=\linewidth]{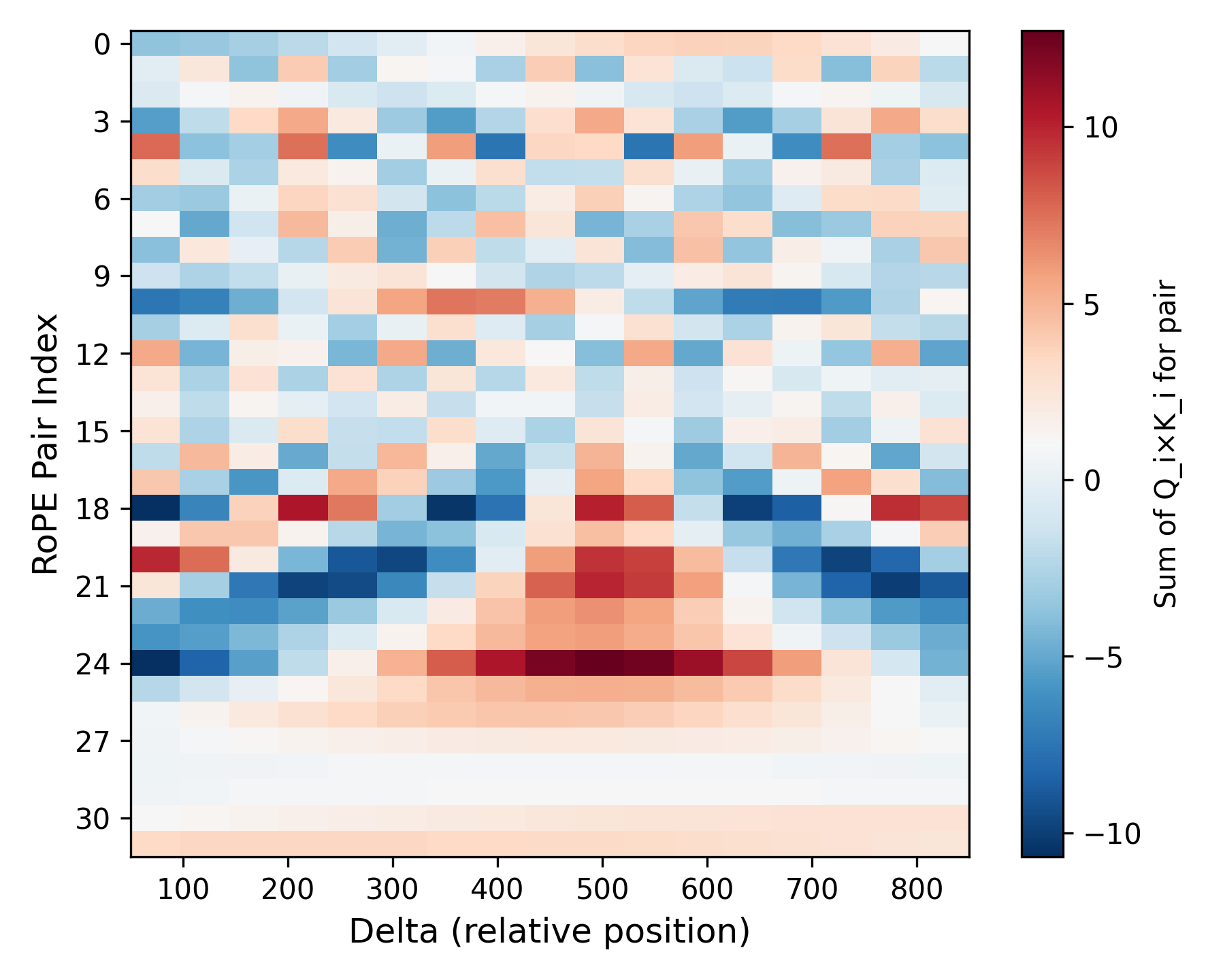}
        \subcaption{$\width=128$}
    \end{minipage}\hfill
    \begin{minipage}[t]{0.49\linewidth}
        \centering
        \includegraphics[width=\linewidth]{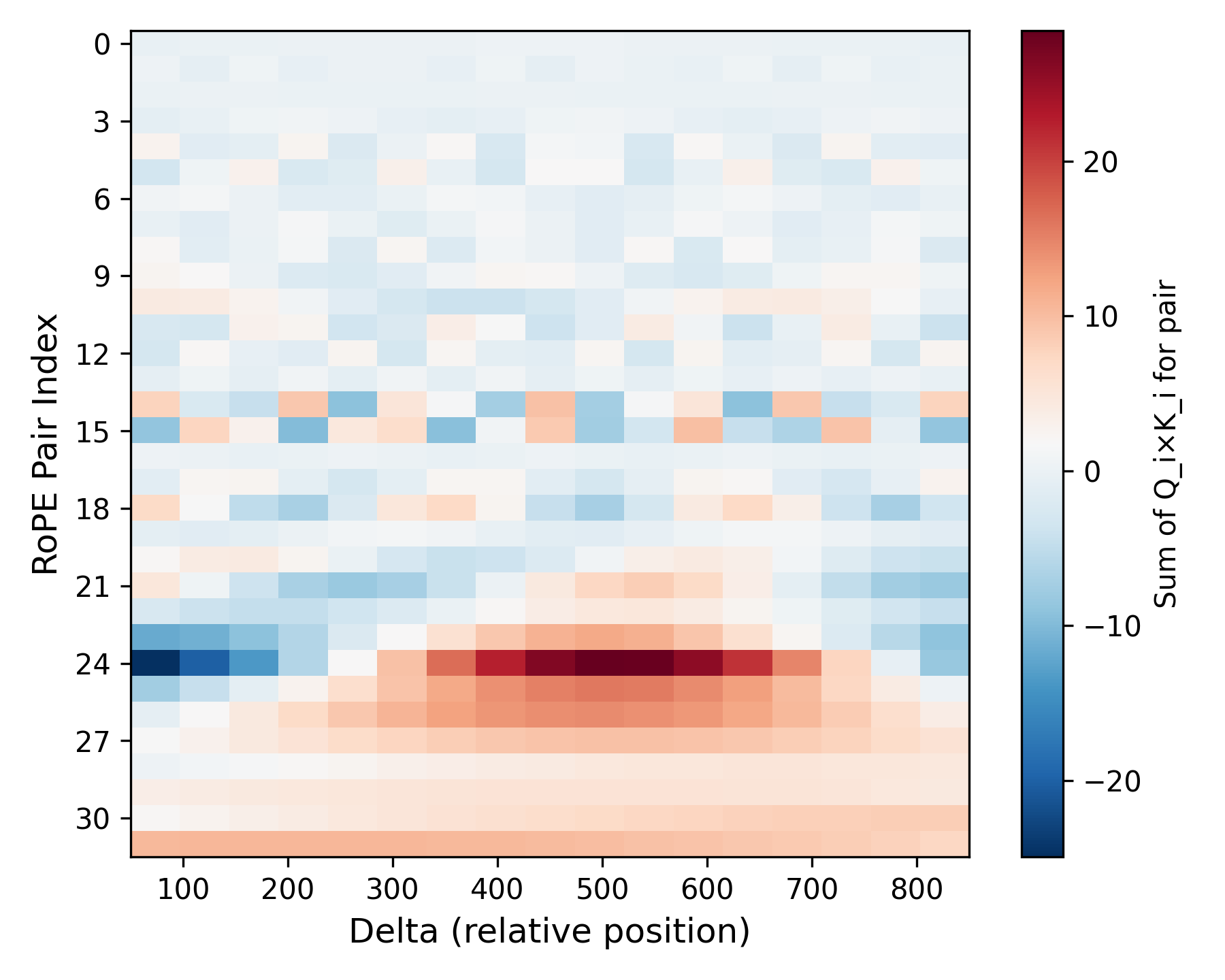}
        \subcaption{$\width=512$}
    \end{minipage}
    \caption{\textbf{Frequency utilization shifts with different window width $\width$ in standard RoPE.} Heatmaps show the contribution of each RoPE dimension pair (y-axis) to the attention score at various relative distances (x-axis) for a trained model. Subfigures correspond to window widths $\width\in\{1, 32, 128, 512\}$. As $\width$ increases, the contribution of high-frequency pairs (small RoPE pair indices, near the top of each panel) progressively weakens, as indicated by their colors fading toward white. Consequently, the dominant contributions concentrate on lower-frequency bands.}
    \label{fig:synthetic_heatmaps}
\end{figure*}

As hypothesized, when $\width=1$ (leftmost), the model utilizes a specific high-frequency band. As $\width$ grows to 32, 128, and finally 512, the active frequency band (indicated by darker regions) systematically shifts. This confirms that to optimally perform retrieval at different scales, the attention mechanism must select appropriate frequency components, providing strong empirical support for the head-wise frequency adaptation in \adarope.

\begin{figure*}[!htbp]
  \centering
  \begin{minipage}[t]{0.32\linewidth}
    \centering
    \includegraphics[width=\linewidth]{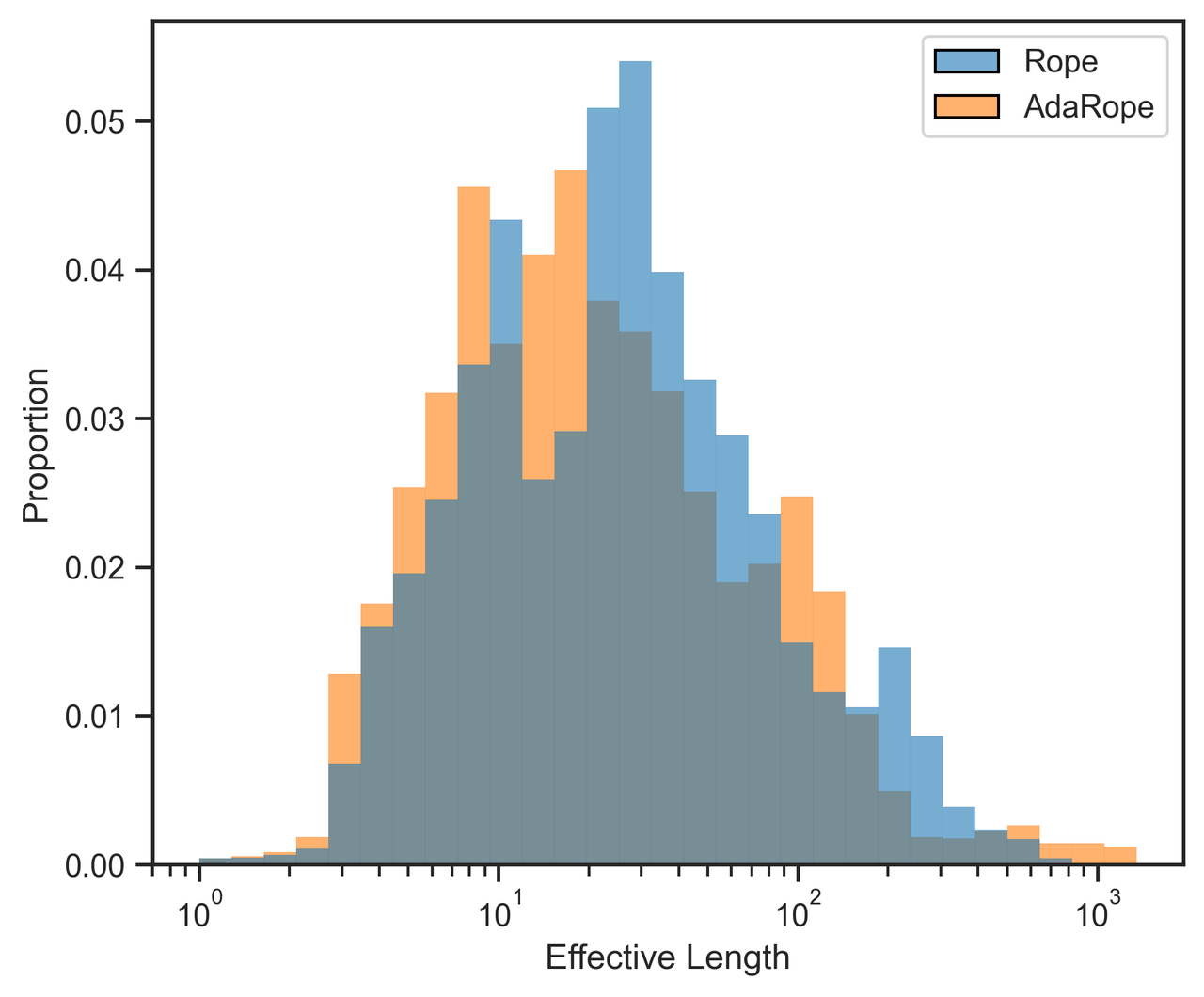}
  \end{minipage}
  \begin{minipage}[t]{0.32\linewidth}
    \centering
    \includegraphics[width=\linewidth]{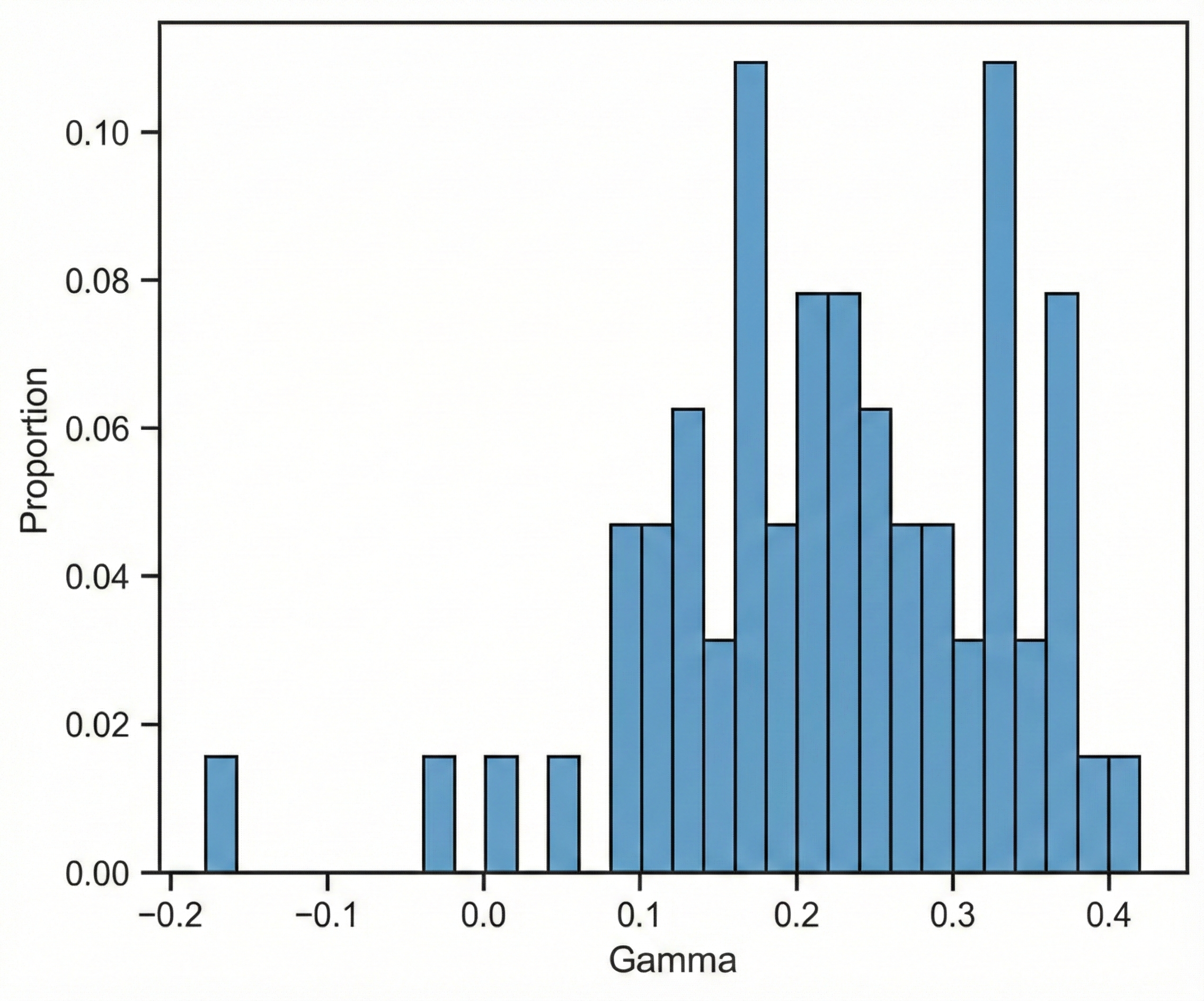}
  \end{minipage}\hfill
  \begin{minipage}[t]{0.32\linewidth}
    \centering
    \includegraphics[width=\linewidth]{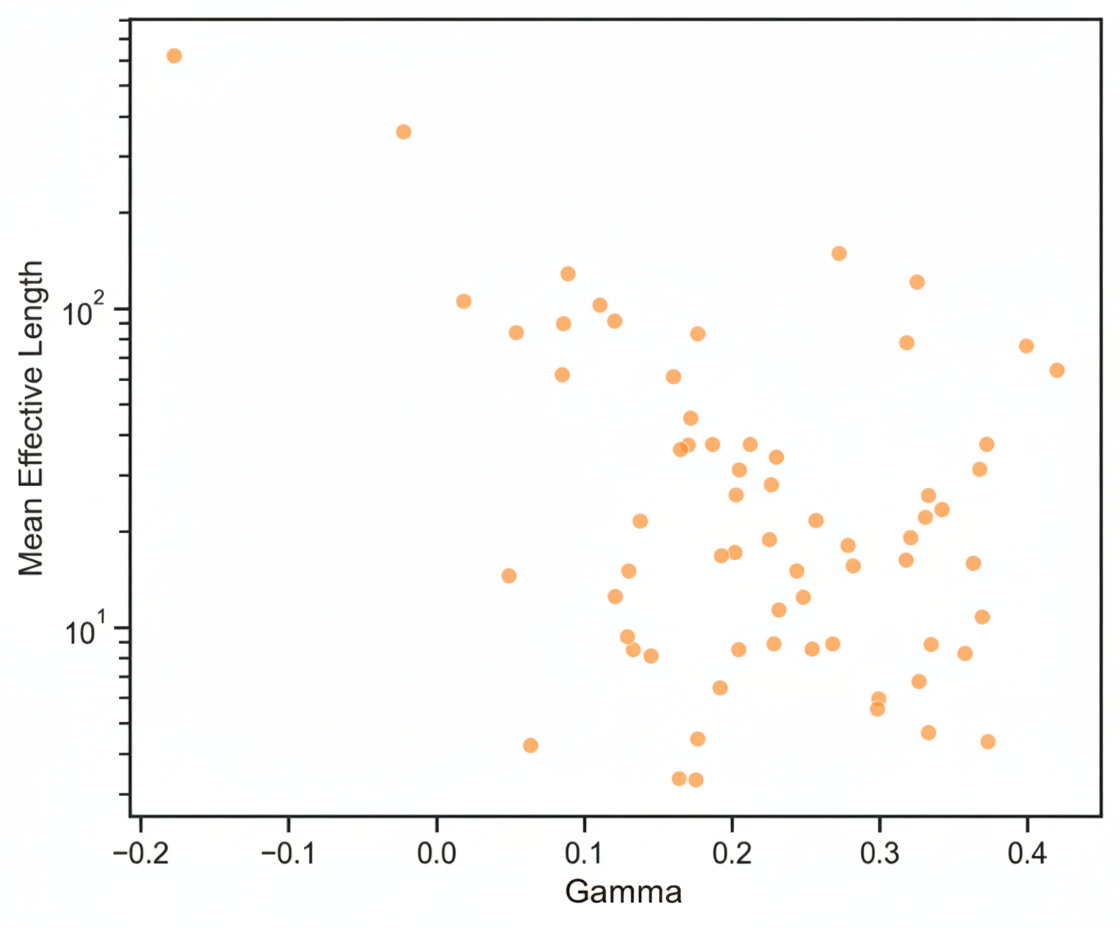}
  \end{minipage}
  \caption{\textbf{Effective Length Specialization. Left:} \adarope enables a wider distribution of effective lengths across heads compared to RoPE. \textbf{Middle:} Distribution of learned $\gamma^{(h)}$ values, which control the $\ESS_2$ scaling. \textbf{Right:} Heads that learn a larger $\gamma^{(h)}$ value maintain a smaller mean effective length.} 
  \label{fig:eff_length_dist}
\end{figure*}

\begin{figure}[ht]
  \centering
  \begin{minipage}[t]{0.48\linewidth}
    \centering
    \includegraphics[width=\linewidth]{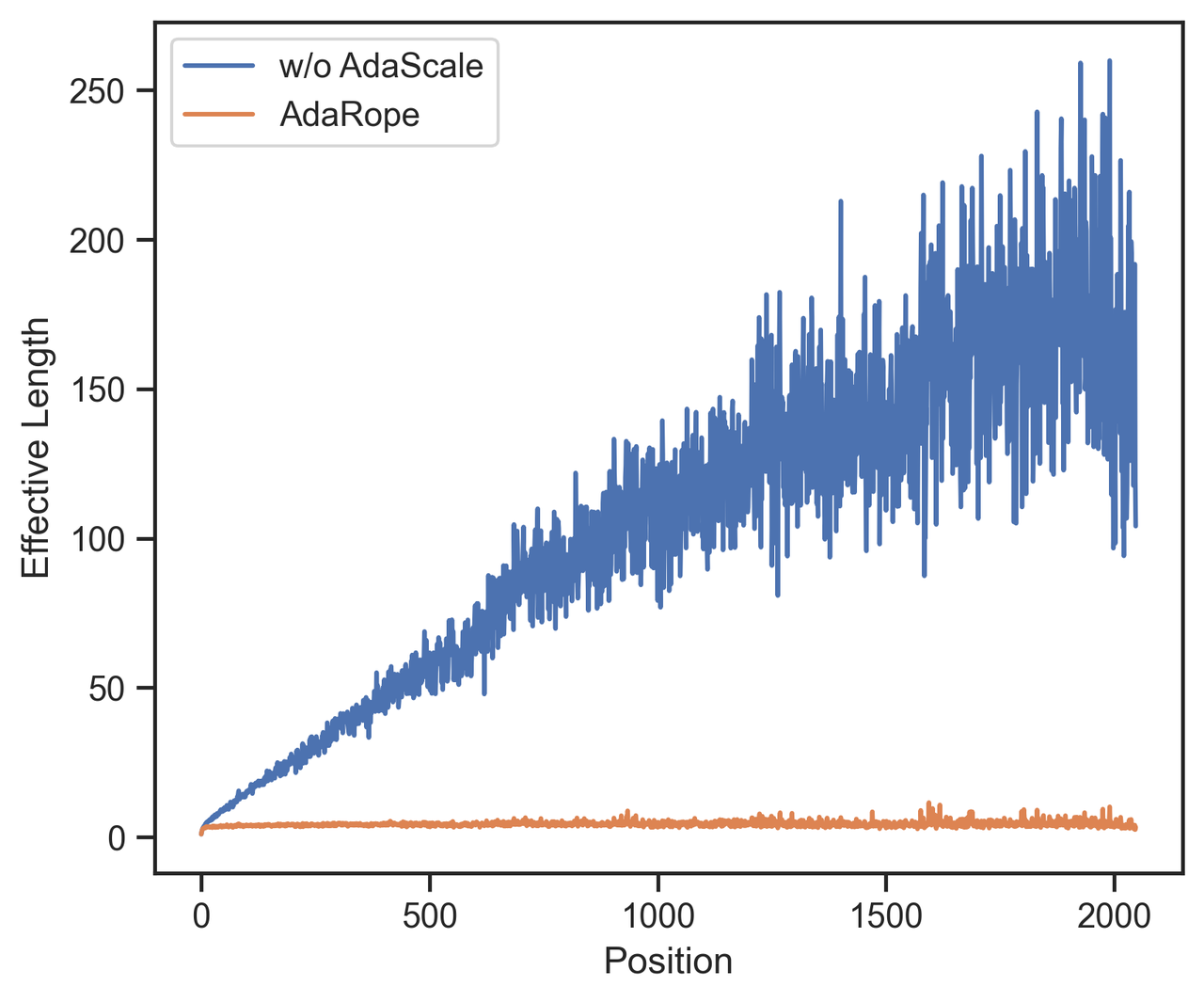}
  \end{minipage}\hfill
  \begin{minipage}[t]{0.48\linewidth}
    \centering
    \includegraphics[width=\linewidth]{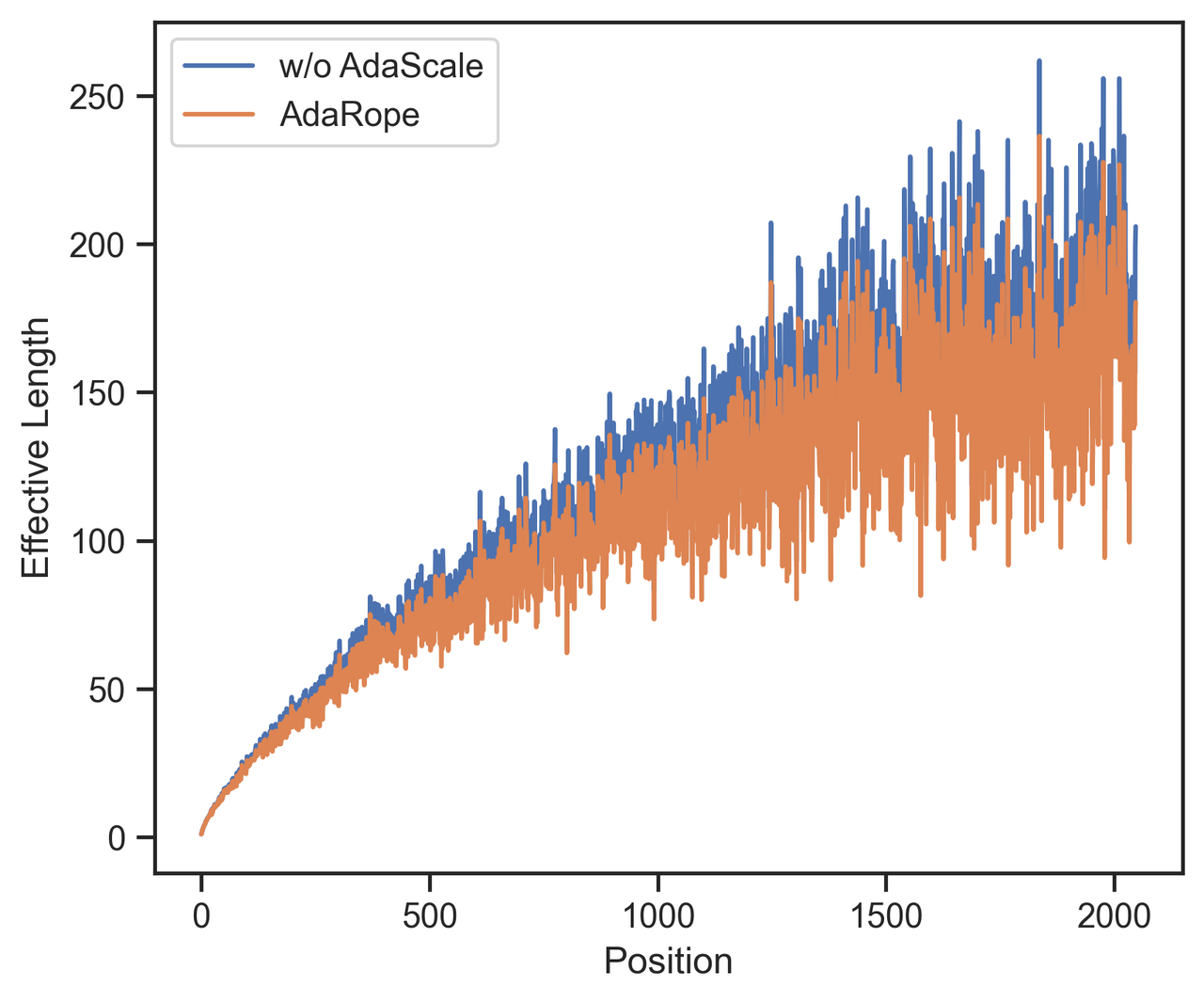}
  \end{minipage}
  \caption{\textbf{Per-head effective length scaling. Left:} A head with a large learned $\gamma^{(h)} = 0.371$ (Layer 7, Head 1) maintains a small and constant $\ESS_2$. \textbf{Right:} A head with a small learned $\gamma^{(h)} = 0.019$ (Layer 3, Head 7) allows its $\ESS_2$ to grow with position, acting as a global head.} 
  \label{fig:eff_length_examples}
\end{figure}

\subsection{Analysis on Pretraining}
\paragraph{\adascale enables diverse attention spans.}
As context length grows, attention can dilute, as the same query attends to more keys. However, not all heads need to resist this dilution; heads with a global receptive field remain beneficial. 
We quantify this using $\ESS_2$, which is the original form of effective sample size. A small $\ESS_2$ indicates concentrated attention, while a large $\ESS_2$ indicates distributed attention.

The \adascale component $\scale^{(h)}(L) = \frac{1}{\tau^{(h)}} \left[ \ln \left(1 + \frac{\max\set{L, \Lref}}{\Lref} \right) \right]^{\gamma^{(h)}}$ is designed to manage this, and our results confirm that this $\operatorname{polylog}$-style scaling improves long-context performance.

We find that \adarope allows for greater head specialization in terms of effective length (\cref{fig:eff_length_dist}), supporting a wider range of $\ESS_2$ values than standard RoPE. Some heads remain localized (small $\ESS_2$) while others become global (large $\ESS_2$). This differentiation is enabled by the learned parameter $\gamma^{(h)}$, which spans a wide range of values. We observe a strong negative correlation between the learned $\gamma^{(h)}$ and the mean effective length: heads that learn a large $\gamma^{(h)}$ actively suppress attention dilution and maintain a small, focused effective length.

This behavior is clearly visible in individual heads (\cref{fig:eff_length_examples}). A head that learns a large $\gamma^{(h)} = 0.371$ (Layer 7, Head 1) maintains a consistently small $\ESS_2$ regardless of context length. Conversely, a head that learns a small $\gamma^{(h)} = 0.019$ (Layer 3, Head 7) allows its $\ESS_2$ to grow with the context, effectively behaving as a global-summary head.

\begin{figure}[ht]
  \centering
  \begin{minipage}[t]{0.48\linewidth}
    \centering
    \includegraphics[width=\linewidth]{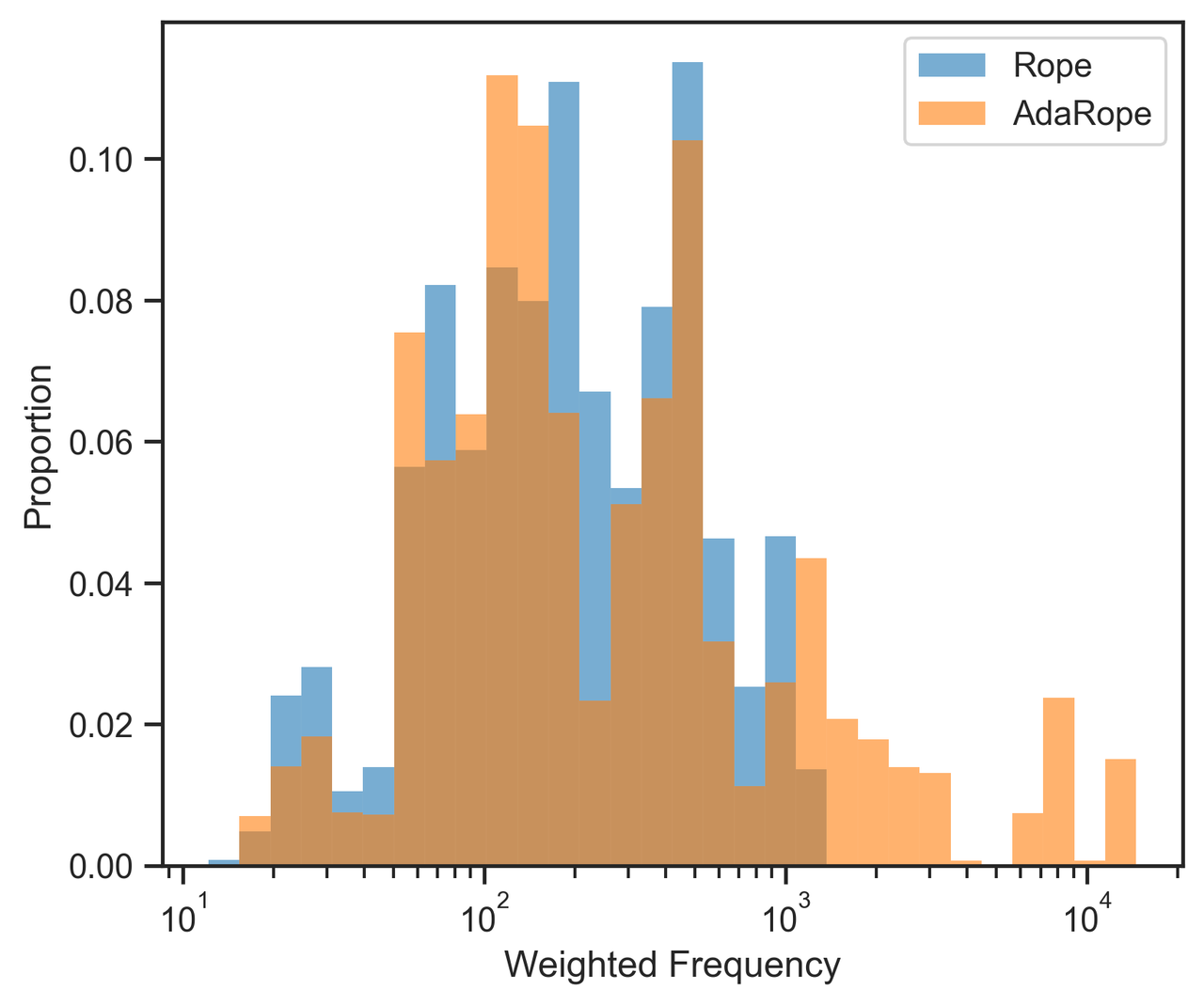}
  \end{minipage}\hfill
  \begin{minipage}[t]{0.48\linewidth}
    \centering
    \includegraphics[width=\linewidth]{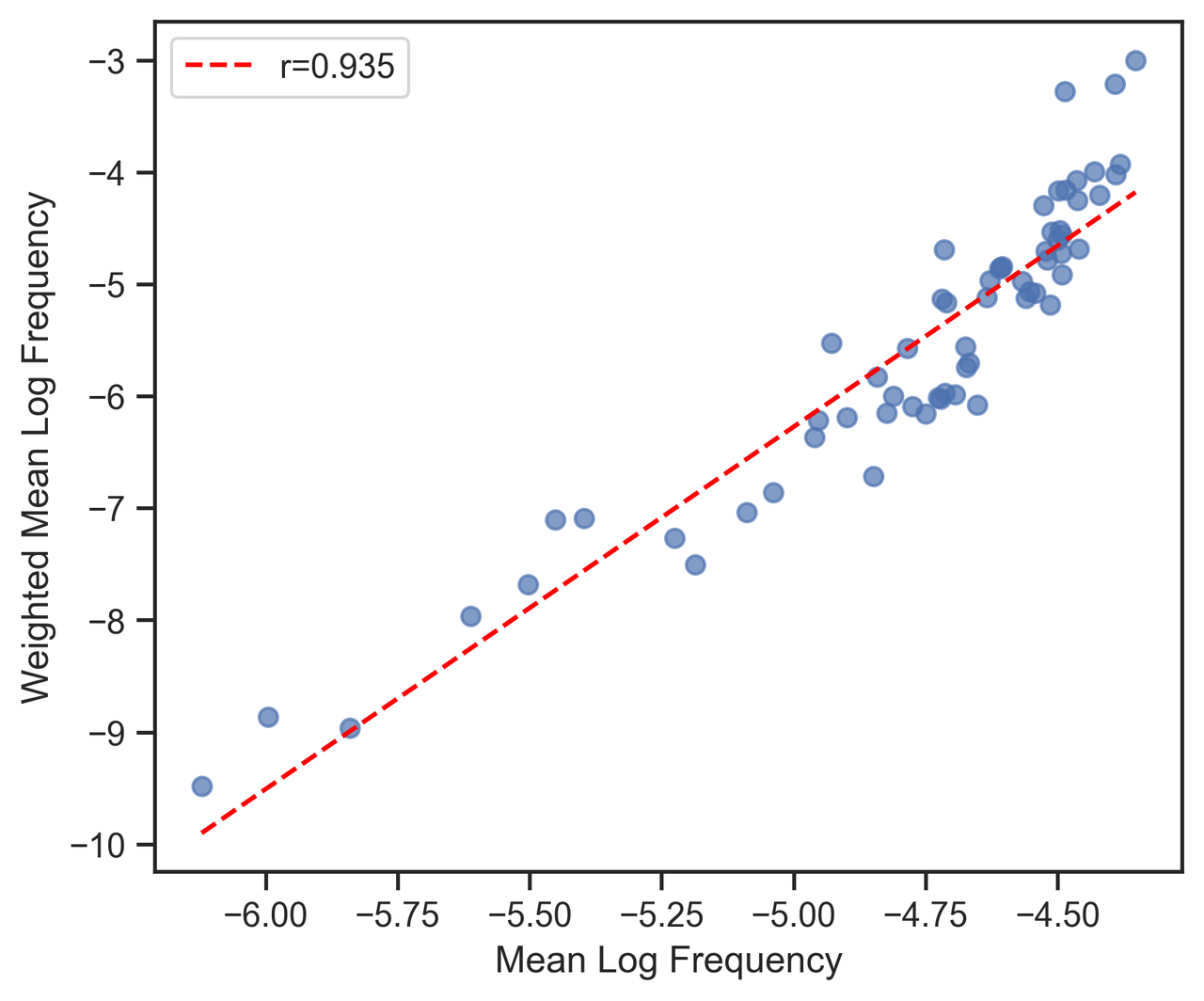}
  \end{minipage}
  \caption{\textbf{\adarope enables frequency specialization. Left:} The distribution of weighted-average frequencies across heads is broader for \adarope than RoPE. \textbf{Right:} High correlation (r=0.935) between the learned mean log-frequency and the utilized weighted mean log-frequency in \adarope.}
  \label{fig:freq_specialization}
\end{figure}

\begin{figure}[ht]
  \centering
  \begin{minipage}[t]{0.48\linewidth}
    \centering
    \includegraphics[width=\linewidth]{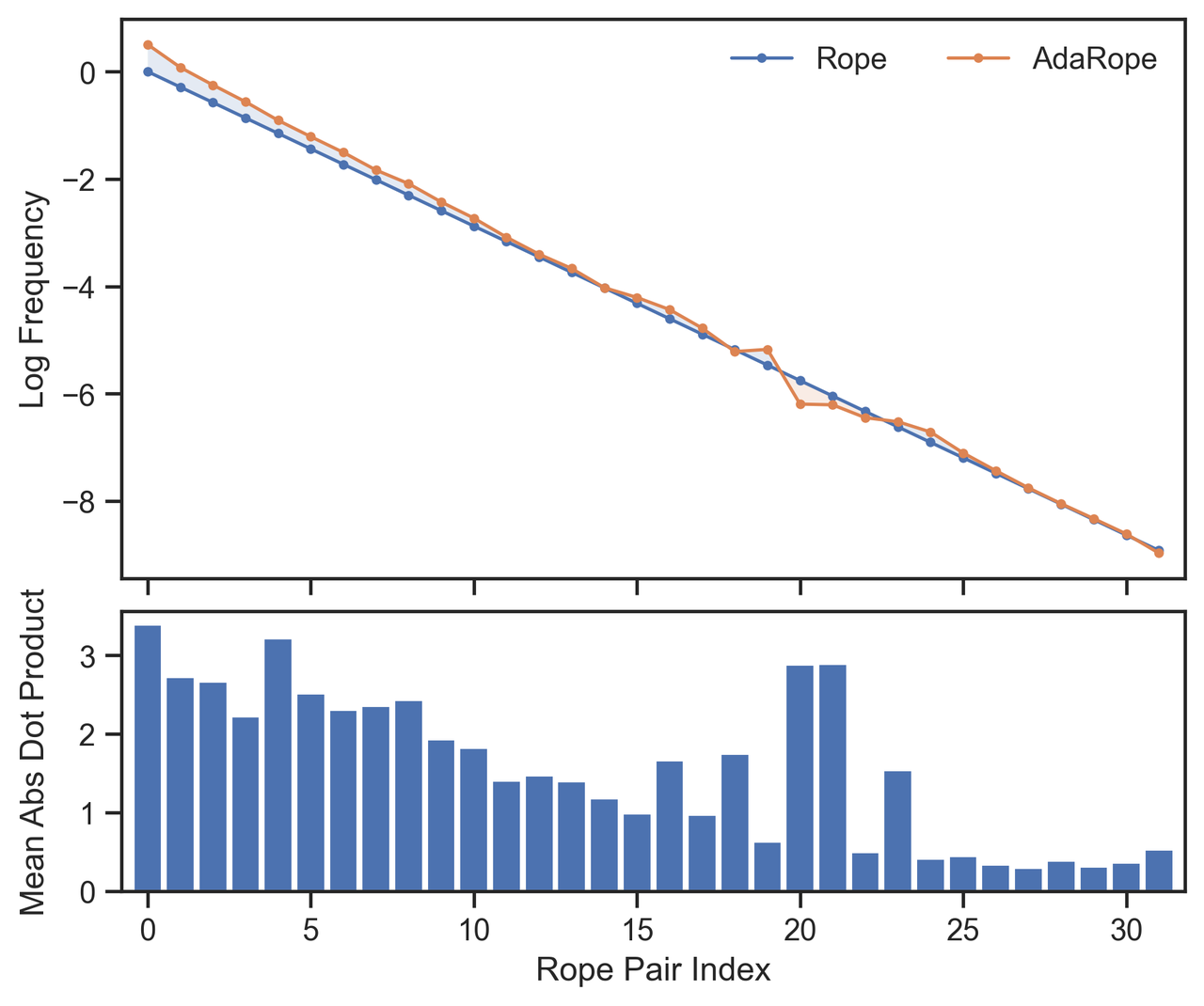}
  \end{minipage}\hfill
  \begin{minipage}[t]{0.48\linewidth}
    \centering
    \includegraphics[width=\linewidth]{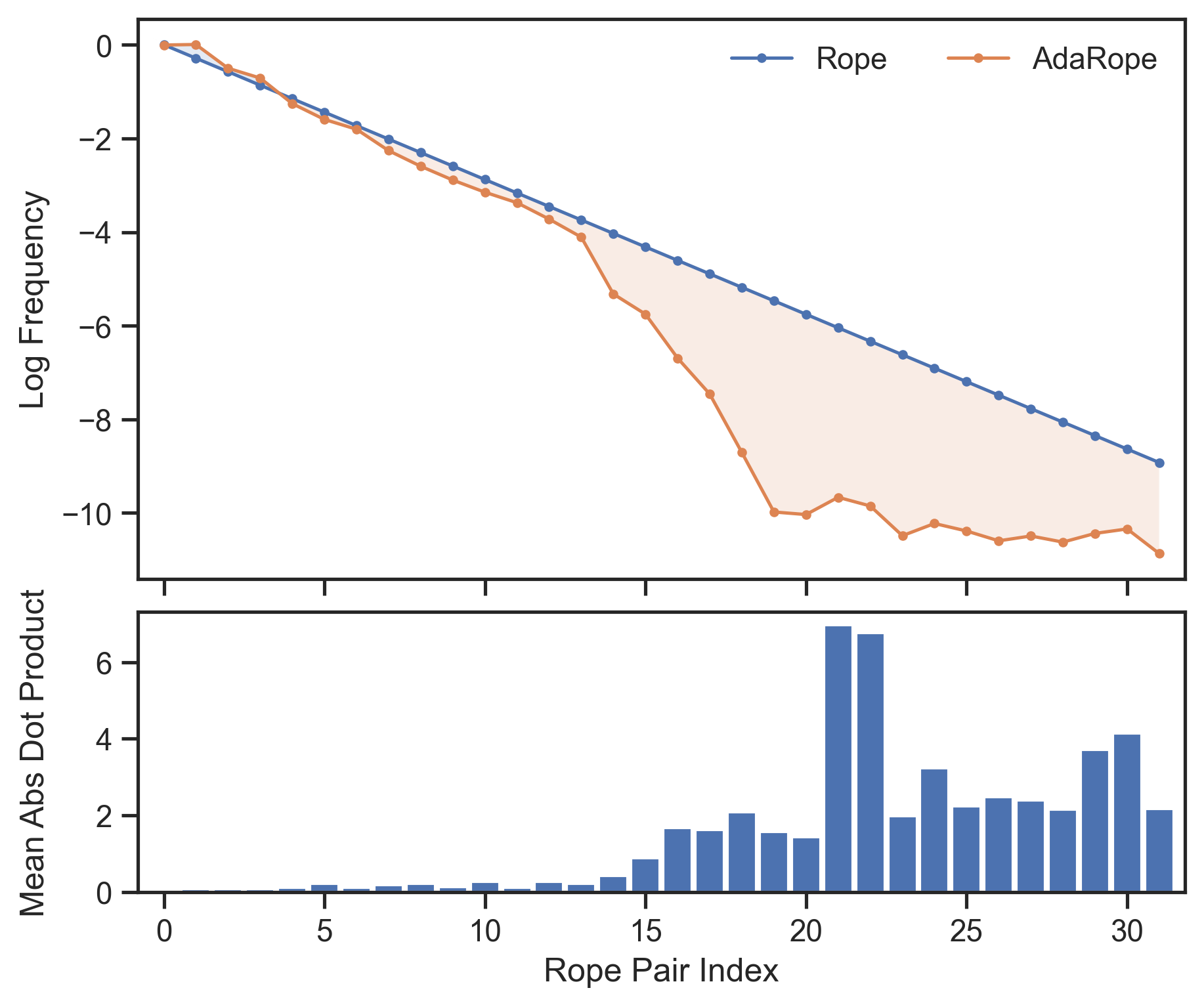}
  \end{minipage}
  \caption{\textbf{Learned spectral adaptation in \adarope.} Top plots show log-frequency; bottom plots show mean absolute dot product. \textbf{Left:} A high-frequency head (Layer 2, Head 2) learns to use even higher frequencies (orange) than standard RoPE (blue). \textbf{Right:} A low-frequency head (Layer 4, Head 3) learns to use even lower frequencies.}
  \label{fig:spectral_adaptation}
\end{figure}

\subsection{Analysis on Extrapolation}
\label{sec:extrapolation_ana}

\paragraph{Setting.}
We conduct a head-level analysis on three SmolLM2-135M variants: the original backbone, \yarn{}, and \adarope. 
we utilize the same long-context retrieval corpus used by ~\citet{Wu2024RetrievalHM} to estimate each head's effective attention length $\ESS_2$. 

To identify specific retrieval heads, we also adopt the methodology proposed by ~\citet{Wu2024RetrievalHM}. 
Specifically, we evaluate the models on Needle-in-a-Haystack tasks across a grid of 20 context lengths (up to 10k) and 10 relative depths. 
Following their approach, we calculate a \emph{retrieval score} for each head, defined as the frequency with which the head's peak attention aligns with the correct needle token matching the generated output during greedy decoding. 
The head with the highest aggregate score is designated as the \emph{top retrieval head}.

\begin{figure*}[!htbp]
    \centering
    \begin{minipage}[t]{0.49\linewidth}
        \centering
        \includegraphics[width=\linewidth]{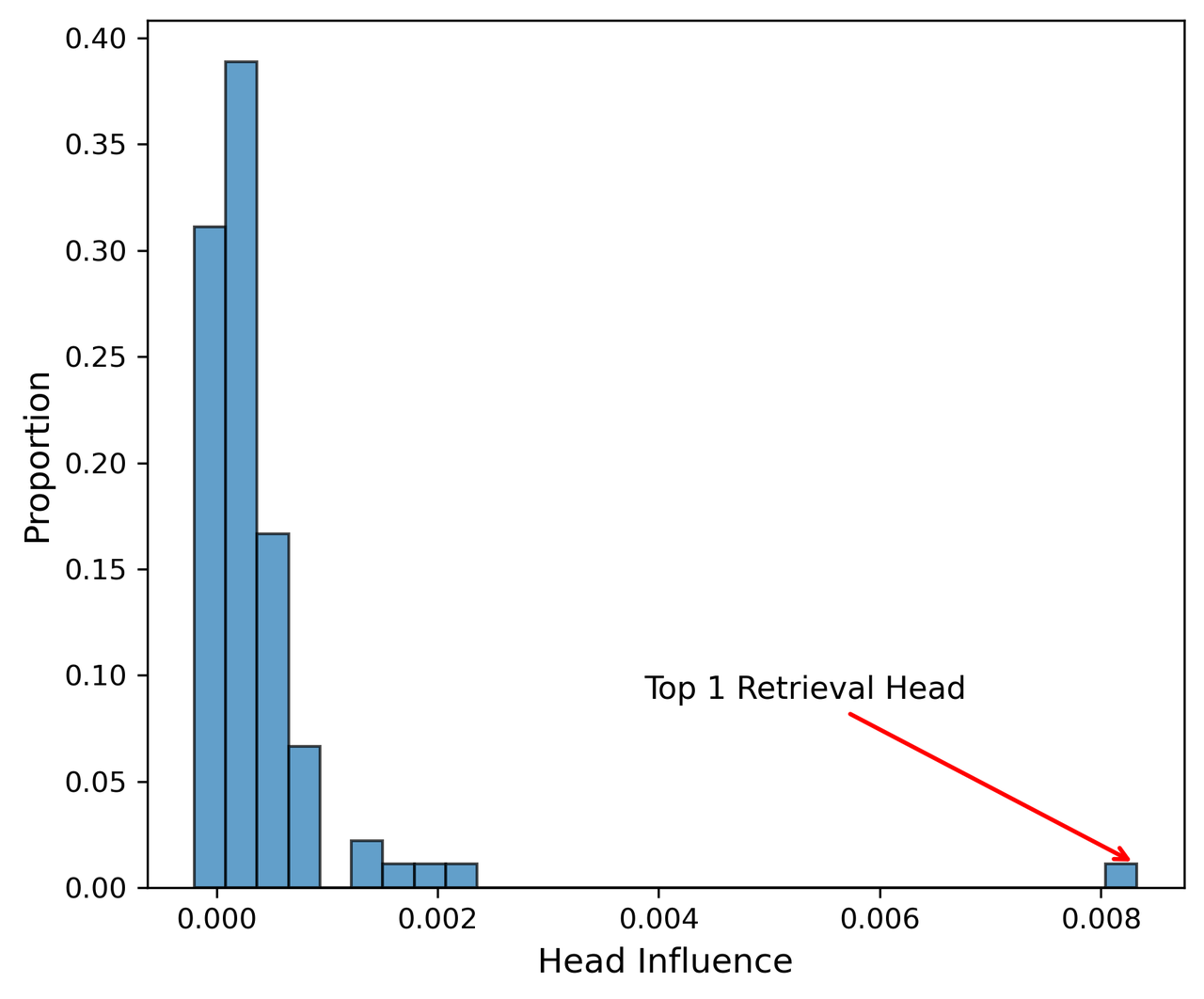}
    \end{minipage}\hfill
    \begin{minipage}[t]{0.49\linewidth}
        \centering
        \includegraphics[width=\linewidth]{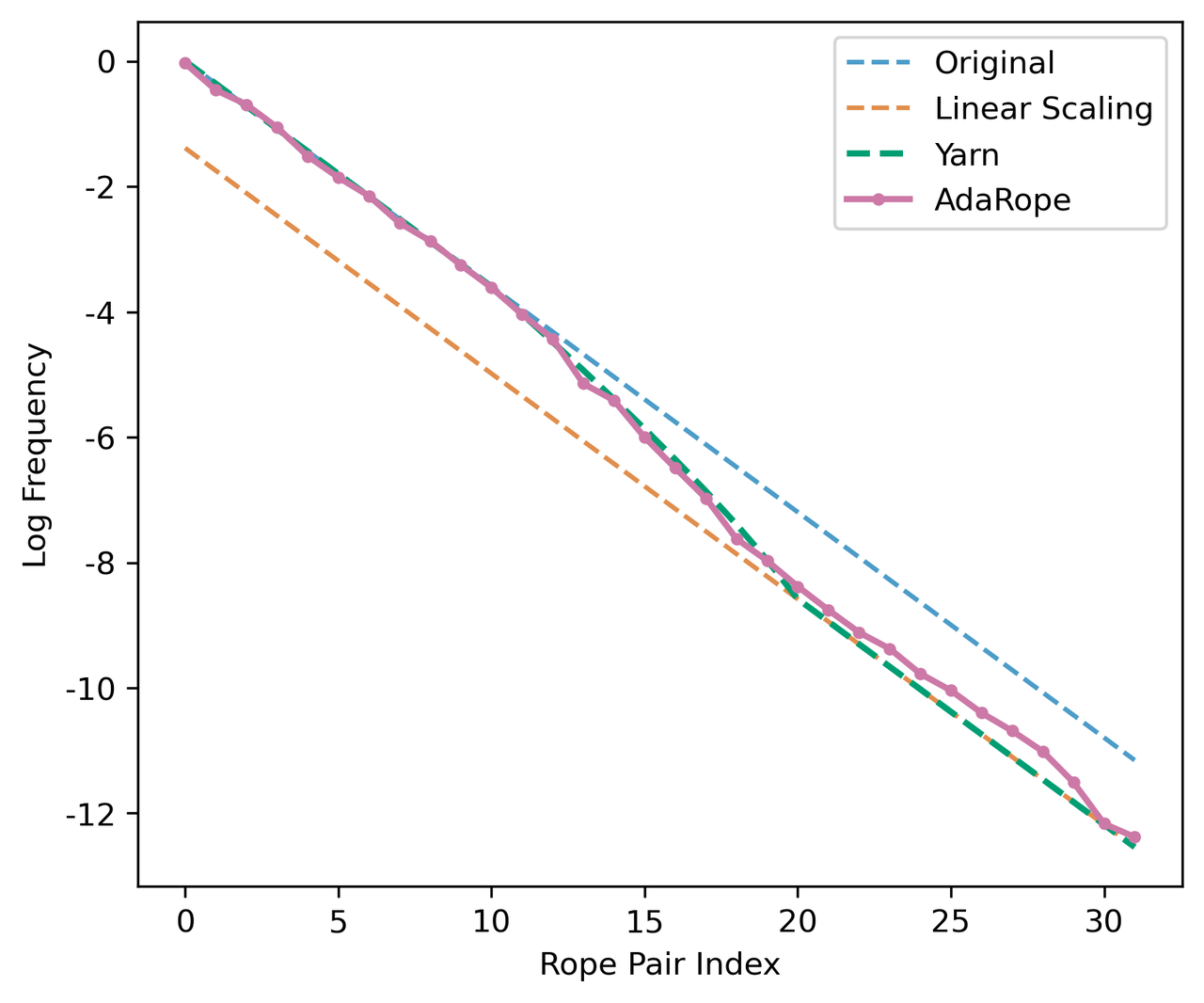}
    \end{minipage}
    \caption{\textbf{\adafreq ablations. Left:} Histogram of head influence $\Delta_h$ when reverting \adafreq parameters one head at a time; most heads have negligible effect, but a small set forms a heavy tail that includes the top retrieval head. \textbf{Right:} Learned RoPE spectrum for the most influential head. \adarope reshapes its frequencies non-uniformly, emphasizing mid--high-frequency bands beyond what global schemes like \yarn{} prescribe.}
    \label{fig:extrapolation_nll}
    \label{fig:adafreq_ablation}
\end{figure*}

\paragraph{Sparse but critical heads.}
After fine-tuning \adarope for extrapolation, we quantify the importance of each attention head by a \emph{head influence} score~$\Delta_h$: for head~$h$, we reset only its \adarope parameters to the pre-finetuning initialization and re-evaluate long-context perplexity, defining~$\Delta_h$ as the resulting increase in loss. A histogram of~$\Delta_h$ (\cref{fig:adafreq_ablation,fig:adascale_ablation}) shows that most heads have negligible influence, but a small minority form a clear heavy tail. In particular, the head previously identified as a strong position-aware retrieval head lies at the extreme of this distribution, confirming that only a handful of “retrieval-like’’ heads are indispensable for long-context behavior. At the same time, even the largest $\Delta_h$ is modest compared to the total gap between \adarope and the \yarn{} baselines, indicating that long-context gains arise from coordinated adjustments across many heads rather than from any single “silver-bullet’’ head.

\paragraph{\adafreq: Non-uniform spectral reshaping.}
To isolate the effect of learned rotary frequencies, we ablate \adafreq by reverting the log-frequencies $\{\xi_f^{(h)}\}$ of one head at a time while keeping \adascale fixed. The resulting influence distribution (\cref{fig:adafreq_ablation}) again exhibits a sparse set of high-impact heads, with the retrieval head being the most sensitive: undoing its learned spectrum noticeably worsens extrapolation perplexity, while perturbing most other heads has almost no effect. Inspecting the learned spectrum of this head reveals that \adarope does not simply apply a uniform rescaling; instead, it selectively amplifies certain mid–high-frequency bands and slightly relaxes others relative to both the original and \yarn{}-style schedules. This head-specific reshaping of the RoPE spectrum is consistent with our theoretical requirement that different tasks (e.g., sharp local retrieval versus broad global summarization) demand different effective frequency bands per head, and explains why a global frequency rule like \yarn{} cannot match \adarope’s extrapolation performance.

\paragraph{\adascale: Head-wise temperature adaptation.}
We perform an analogous ablation for \adascale by resetting the temperature parameters $(\gamma^{(h)},\tau^{(h)})$ of each head to their initialization while keeping \adafreq fixed. The influence histogram (\cref{fig:adascale_ablation}) again shows a sparse set of critical heads whose temperature schedules are essential for long-context stability. The learned distributions of $\tau^{(h)}$ and $\gamma^{(h)}$  concentrate around $\tau^{(h)} \approx 1$ with a mild right tail and predominantly positive $\gamma^{(h)}$. This pattern matches the theoretical prediction that most heads benefit from a slightly super-logarithmic sharpening of attention as the context grows, with a few heads learning more aggressive scaling to maintain very small effective attention length, while others keep $\gamma^{(h)}$ near zero and act as global-summary heads whose receptive field intentionally expands with $L$. Together, these ablations support the view that \adascale’s per-head, length-aware temperatures are not merely a cosmetic modification of the logits: they enable a small set of specialized heads to counteract attention dilution at extreme lengths while allowing the rest of the network to preserve its original global behavior.

\begin{figure*}[!htbp]
    \centering
    \begin{minipage}{0.32\linewidth}
        \centering
        \includegraphics[width=\linewidth]{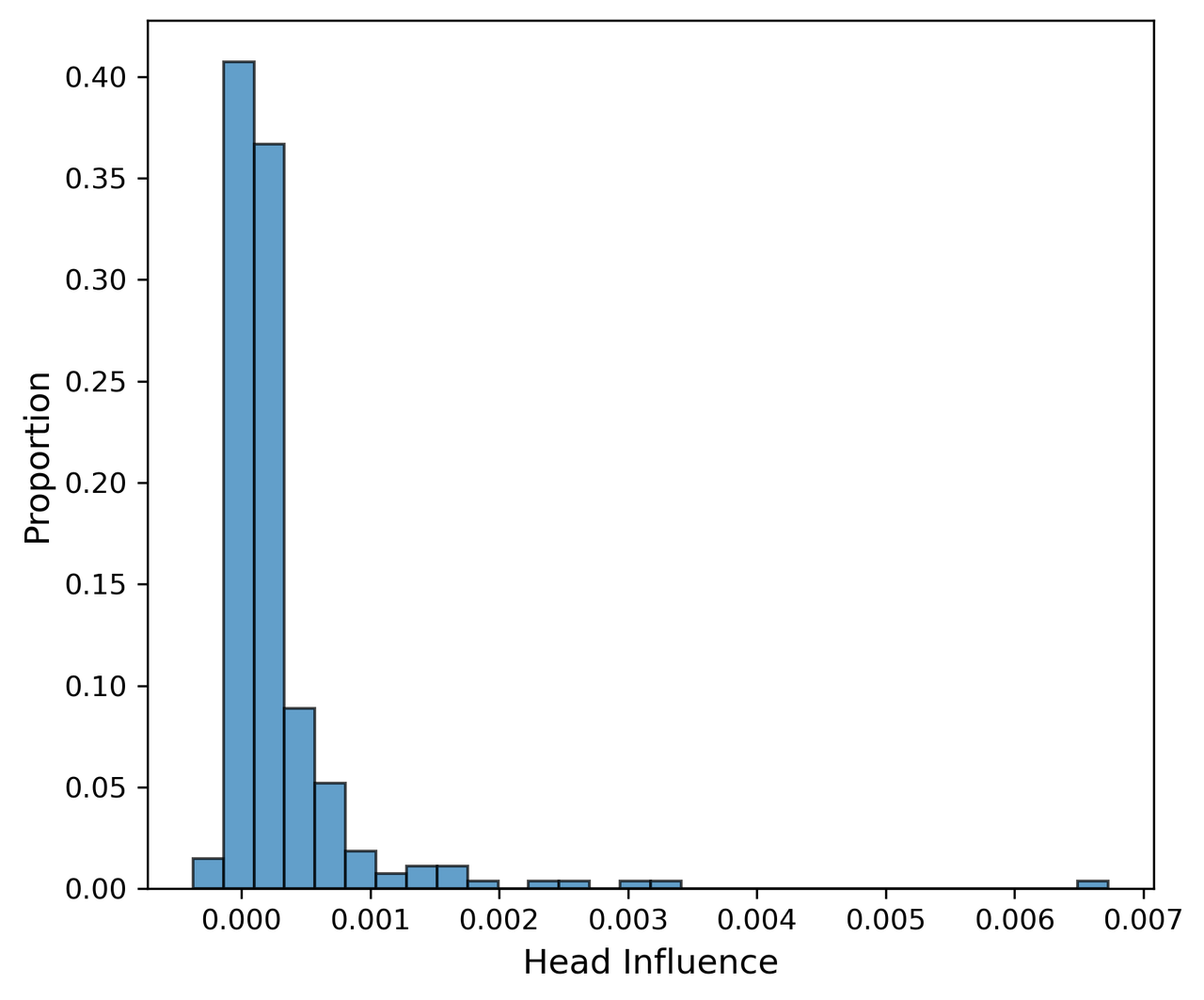}
    \end{minipage}\hfill
    \begin{minipage}{0.32\linewidth}
        \centering
        \includegraphics[width=\linewidth]{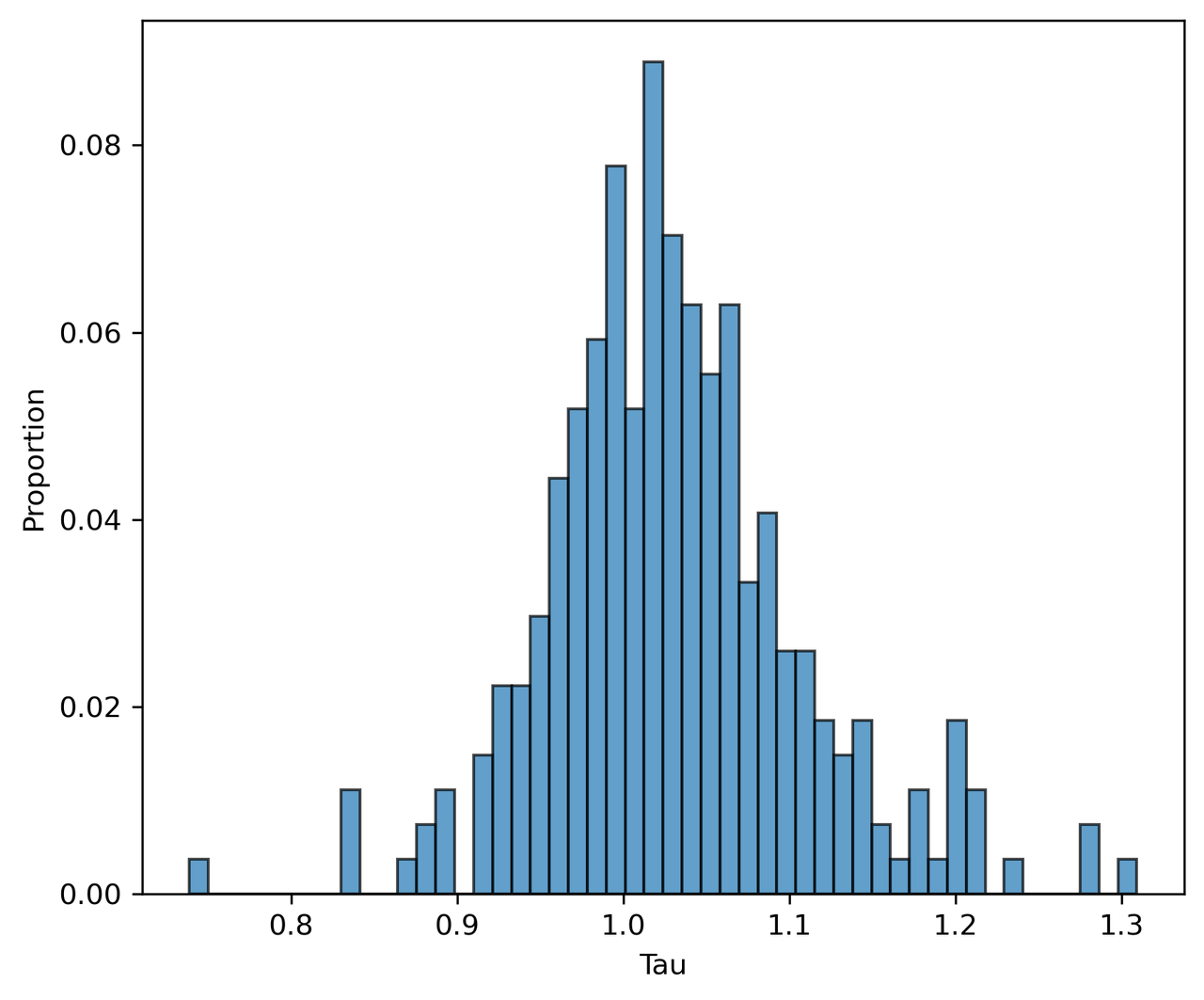}
    \end{minipage}\hfill
    \begin{minipage}{0.32\linewidth}
        \centering
        \includegraphics[width=\linewidth]{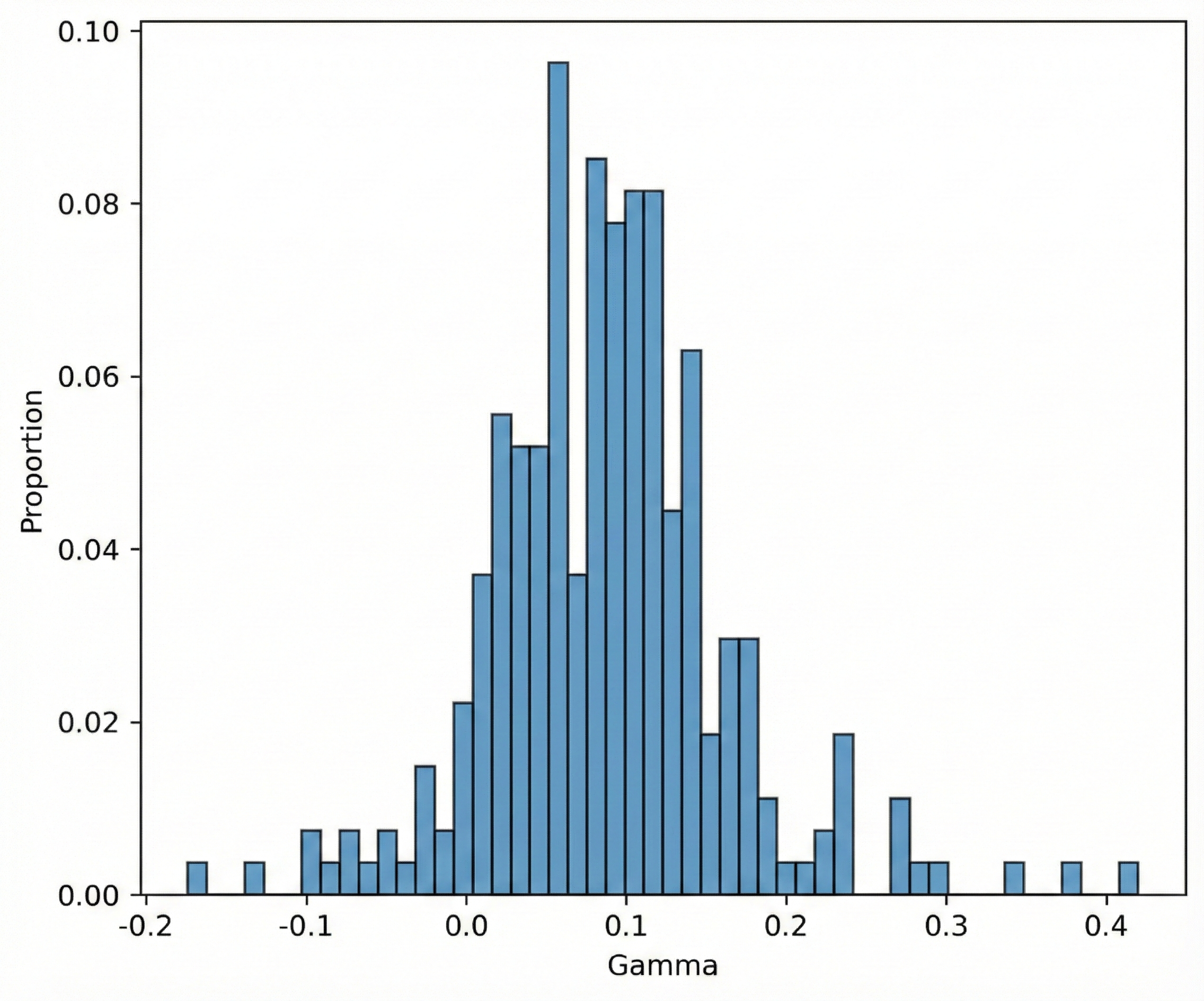}
    \end{minipage}
    \caption{\textbf{\adascale ablations.}
    \textbf{Left:} Histogram of head influence $\Delta_h$ when reverting \adascale parameters $(\gamma^{(h)},\tau^{(h)})$ for one head at a time, revealing a sparse set of critical temperature heads.
    \textbf{Middle:} Distribution of learned base temperatures $\tau^{(h)}$, centered near~$1$ with a right tail corresponding to heads that sharpen more aggressively with context length.
    \textbf{Right:} Distribution of learned growth coefficients $\gamma^{(h)}$, which are mostly positive, indicating that many heads benefit from slightly increasing sharpness as the sequence length grows.}
    \label{fig:adascale_ablation}
\end{figure*}

\paragraph{Head-wise effective attention length.}
To better understand how different context-extension schemes redistribute attention across heads, we measure the \emph{effective attention length} $\ESS_2$ of each head before and after modification. For every head, we compute $\ESS_2$ under the base model and under a given method, and plot $\log \ESS_2^{\text{method}}$ against $\log \ESS_2^{\text{base}}$ (\cref{fig:effective_length_scatter}). Ideally, a method that preserves short-context behavior while extending long-range capacity should keep heads with intrinsically small $\ESS_2$ close to the diagonal ($y=x$), but \emph{increase} $\ESS_2$ for heads that already operate at large ranges. \yarn{} without scaling (left) better matches the long-range regime at the cost of noticeably distorting many short-range heads, whereas globally scaled \yarn{} (middle) closely follows the diagonal for short-range heads but fails to sufficiently extend large-$\ESS_2$ heads. In contrast, \adarope (right) stays near the diagonal for small $\ESS_2$ while selectively moving large-$\ESS_2$ heads upward, indicating that it preserves the effective length of local heads and scales up only those heads that need longer reach. This head-wise adaptive behavior is precisely what allows \adarope to handle both short and extreme-length contexts without sacrificing either regime.

\begin{figure*}[!htbp]
    \centering
    \begin{minipage}{0.32\linewidth}
        \centering
        \includegraphics[width=\linewidth]{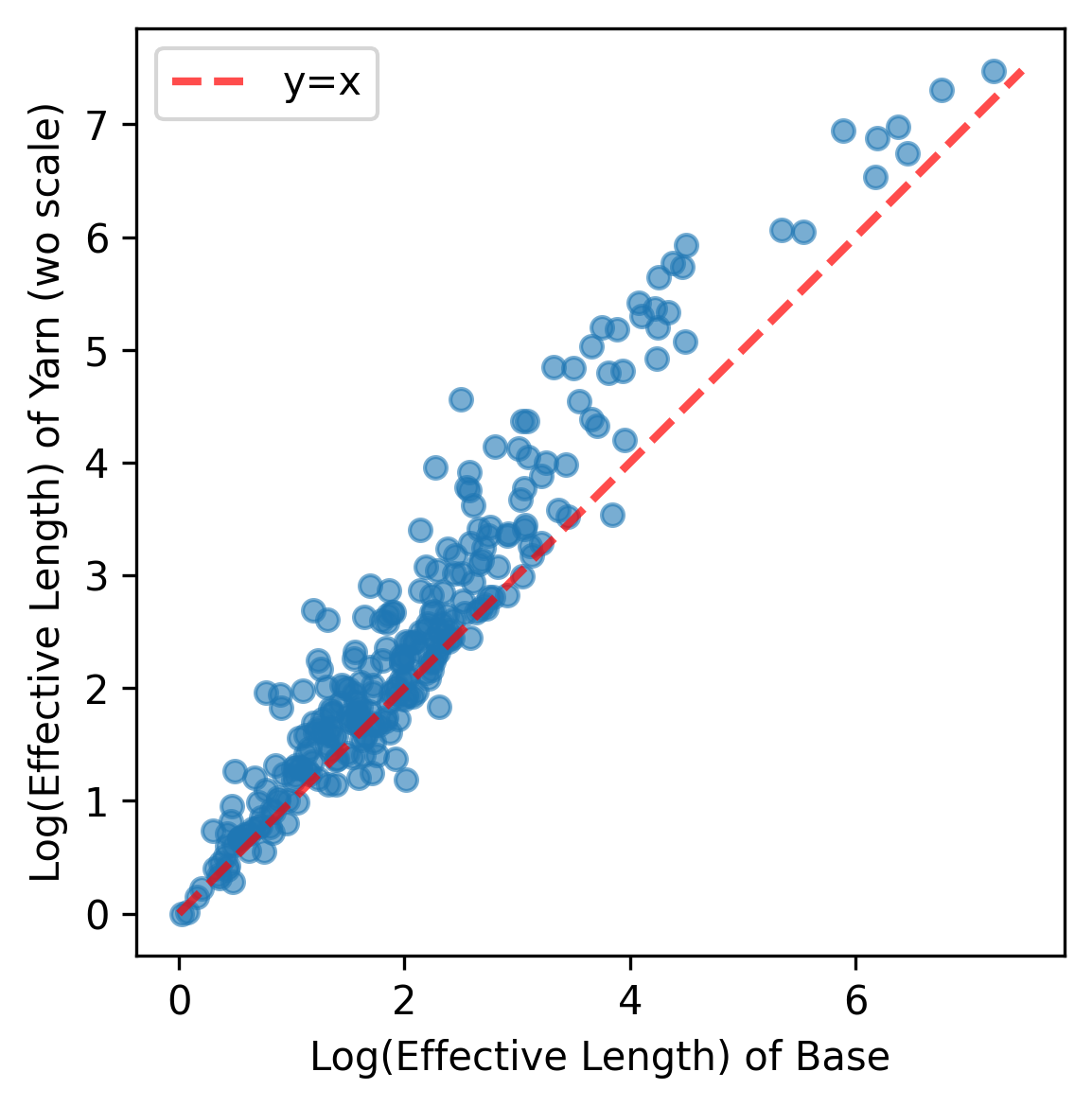}
    \end{minipage}\hfill
    \begin{minipage}{0.32\linewidth}
        \centering
        \includegraphics[width=\linewidth]{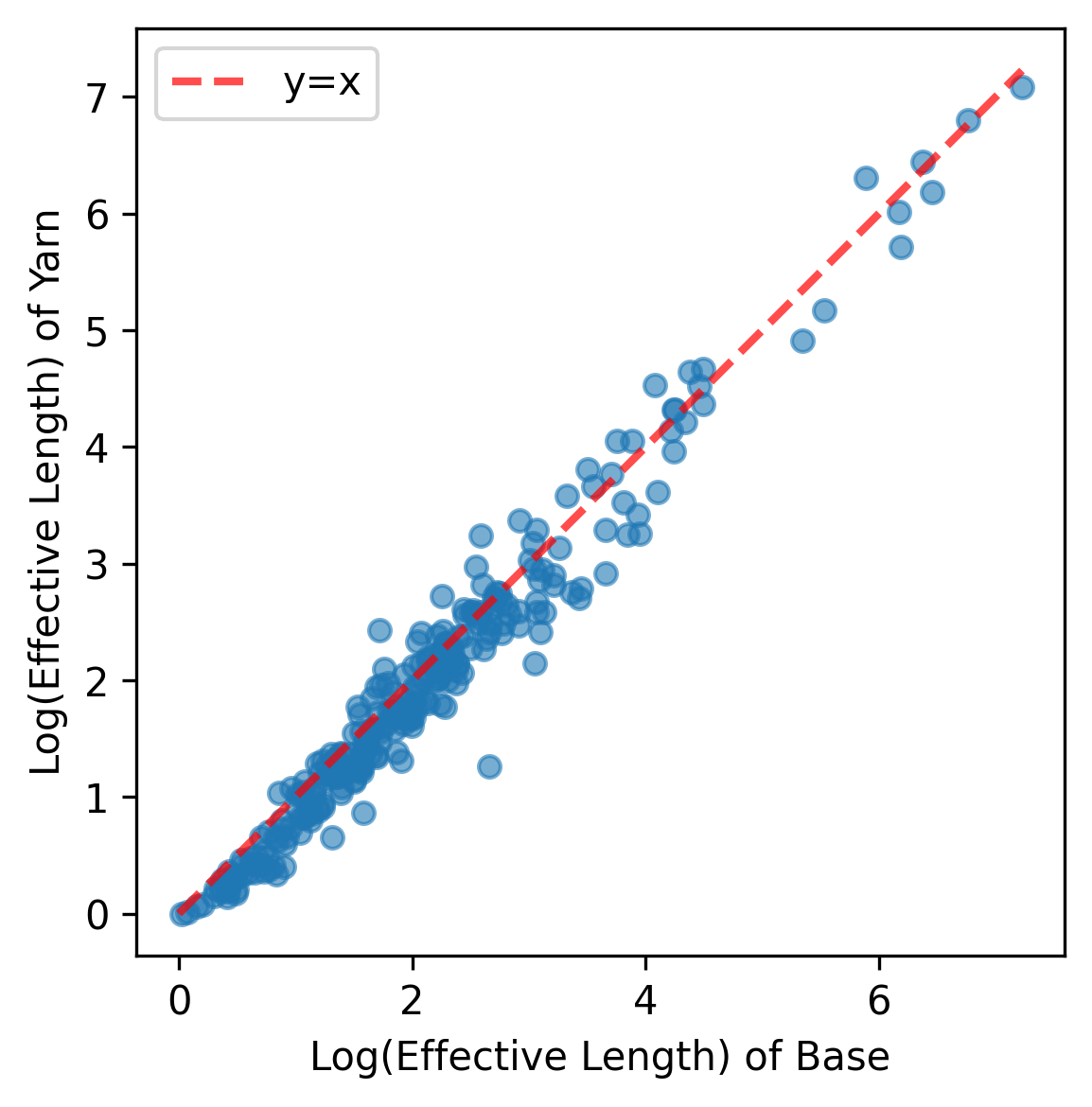}
    \end{minipage}\hfill
    \begin{minipage}{0.32\linewidth}
        \centering
        \includegraphics[width=\linewidth]{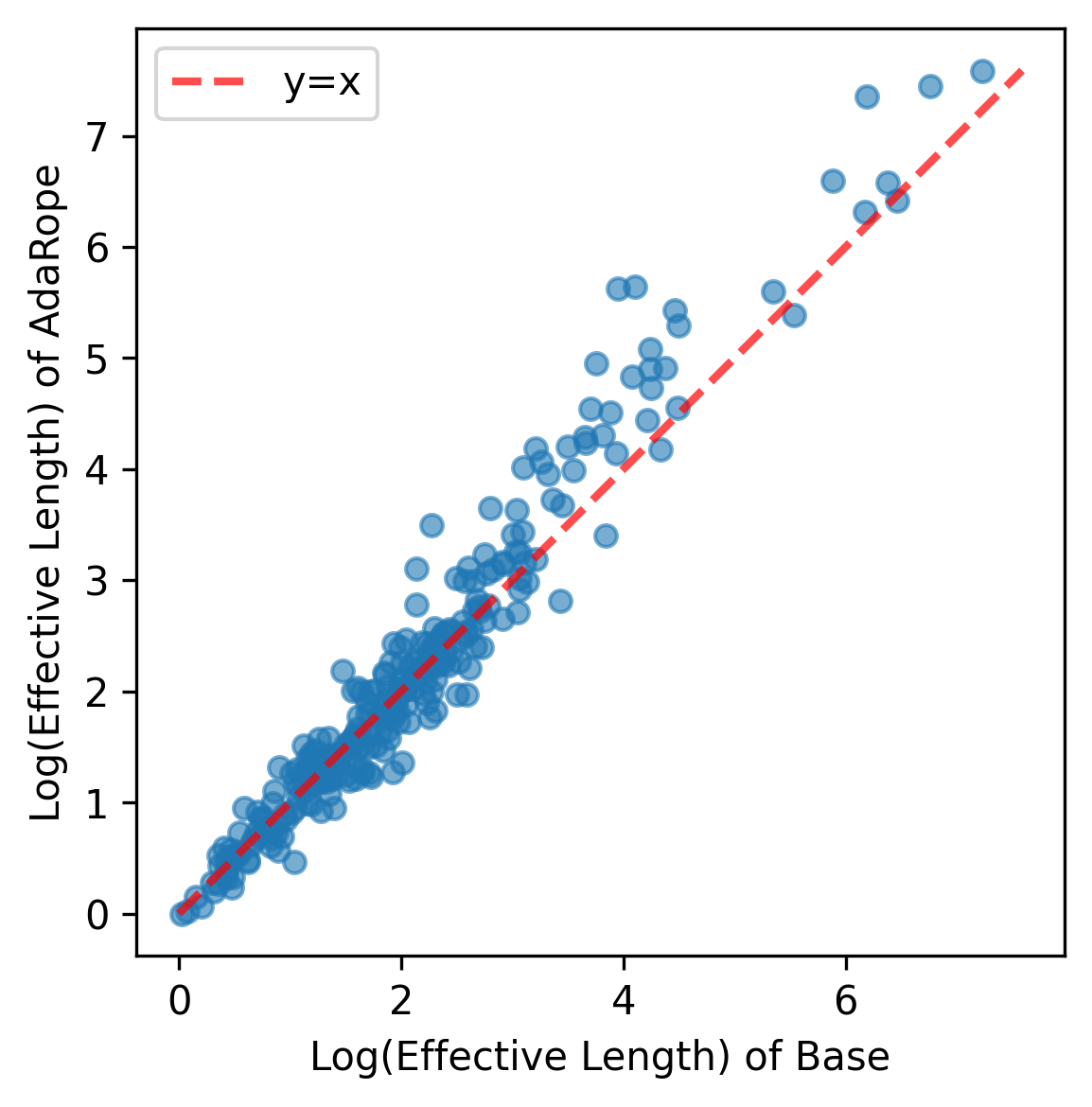}
    \end{minipage}
    \caption{\textbf{Effective attention length per head.}
    Each point is an attention head; the horizontal axis is $\log(\ESS_2^{\text{base}})$ and the vertical axis is $\log(\ESS_2^{\text{method}})$, with the red dashed line indicating $y=x$.
    \textbf{Left:} \yarn{} without scaling; short-range heads are preserved but long-range heads are not sufficiently extended.
    \textbf{Middle:} \yarn{} with a global scale; long-range heads are better matched but many short-range heads are distorted.
    \textbf{Right:} \adarope; small-$\ESS_2$ heads remain close to the diagonal while large-$\ESS_2$ heads are systematically pushed above it, showing that \adarope maintains effective length for local heads and scales it up only when needed.}
    \label{fig:effective_length_scatter}
\end{figure*}
 \clearpage %

\newpage
\section{Notation}\label{sec:notation}

\paragraph{General Notation.}
Let $\RR$, $\CC$, $\ZZ$, and $\NN = \set{0, 1, \dots}$ denote the sets of real numbers, complex numbers, integers, and non-negative integers, respectively.
We denote by $\RR^d$ the $d$-dimensional Euclidean space, and use $\| \cdot \|_p$ for the $p$-norm, $1 \le p \le \infty$ (with $p = 2$ when unspecified).
Let $\bm{0}_d, \bm{1}_d \in \RR^d$ denote the vectors of all zeros and all ones, respectively.
Let $\mI_d$ denote the $d \times d$ identity matrix.
We use $\bigO(\cdot)$, $\littleo(\cdot)$, $\bigOmega(\cdot)$, $\littleomega(\cdot)$, and $\bigTheta(\cdot)$ for standard asymptotic notation.
Almost sure convergence, convergence in probability, and convergence in distribution are denoted by \raisebox{-0.5ex}{$\asto$}, \raisebox{-0.5ex}{$\pto$}, and \raisebox{-0.5ex}{$\dto$}, respectively.
We denote the multivariate normal distribution with mean vector $\vmu$ and covariance matrix $\mSigma$ by $\mathcal{N}(\vmu, \mSigma)$.
The indicator function is denoted by $\1{\set{\cdot}}$.

Now we briefly review the essential details of multi-head attention and rotary position embeddings, which form the basis of our analysis.

\paragraph{Multi-Head Attention.}
Let $\vx_i \in \RR^D$ be the $D$-dimensional embedding of the $i$-th token, where $i \in \set{1, \dots, L}$.
Consider the input embedding sequence $\mX = (\vx_1, \dots, \vx_L)^\transpose \in \RR^{L \times D}$.
Suppose the model uses $H$ attention heads. 
For each head $h \in \set{1, \dots, H}$, let $\mW_Q^{(h)}, \mW_K^{(h)}, \mW_V^{(h)} \in \RR^{D \times d}$ be the projection matrices, where $d = D / H \in \NN$ is the hidden dimension of each head. 
The query, key, and value matrices for the $h$-th head are computed as
\begin{equation}
    \mQ^{(h)} = \mX \mW_Q^{(h)}, 
    \qquad \mK^{(h)} = \mX \mW_K^{(h)}, 
    \qquad \mV^{(h)} = \mX \mW_V^{(h)}.
\end{equation}
The attention output of head $h$, denoted by $\mO^{(h)} \in \RR^{L \times d}$, is defined as
\begin{equation}
    \mO^{(h)} = \softmax \left(\frac{1}{\sqrt{d}} \mQ^{(h)} (\mK^{(h)})^\transpose + \mM\right) \mV^{(h)},
\end{equation}
where $\mM \in \RR^{L \times L}$ is the causal mask matrix. 
Let $\vq_i^{(h)}, \vk_i^{(h)}, \vv_i^{(h)} \in \RR^{d}$ denote the $i$-th rows of $\mQ^{(h)}, \mK^{(h)}$, and $\mV^{(h)}$, respectively.
For token $t \in \set{1, \dots, L}$, the output vector $\vo^{(h)}_t \in \RR^{d}$ is
\begin{equation}
    \vo^{(h)}_t = \sum_{i=1}^{t} \frac{\exp\left( (\vq_t^{(h)})^\transpose \vk^{(h)}_i / \sqrt{d} \right)}{\sum_{j=1}^{t} \exp\left( (\vq^{(h)}_t)^\transpose \vk^{(h)}_j / \sqrt{d} \right)} \vv^{(h)}_i.
\end{equation}
We denote the attention logit between query $\vq^{(h)}_i$ and key $\vk^{(h)}_j$ by $s^{(h)}_{i, j} = \frac{1}{\sqrt{d}} (\vq^{(h)}_i)^\transpose \vk^{(h)}_j$, and define $\alpha^{(h)}_{t, i} = \softmax_i ( s^{(h)}_{t, 1}, \dots, s^{(h)}_{t, t} )$ as the attention weight of token $t$ attending to token $i$.
Thus, $\vo^{(h)}_t = \sum_{i=1}^{t} \alpha^{(h)}_{t, i} \vv^{(h)}_i$.

Finally, the head outputs are concatenated along the feature dimension and linearly projected to produce the multi-head output $\mO \in \RR^{L \times D}$, which is
\begin{equation}
    \mO = \left[ \mO^{(1)}, \mO^{(2)}, \dots, \mO^{(H)} \right] \mW_O,
\end{equation}
where $\mW_O \in \RR^{D \times D}$ is the output projection matrix.

\paragraph{Rotary Positional Embedding (RoPE).}
Here we focus on a single attention head, and omit the superscript $\bullet^{(h)}$ for simplicity. 
Assuming $d \in 2 \NN$, we partition the $d$-dimensional space into $d/2$ pairs of elements. 
For each subspace index $f \in \set{0, \dots, d/2-1}$, we define a frequency $\theta_f = b^{-2f/d}$ with a constant $b > 1$ (typically $10,000$).
We first define the family of rotation matrices $\mR_m \in \RR^{d \times d}$ for rotation step $m \in \ZZ$, which is block-diagonal and formed by $d/2$ rotation sub-matrices
\begin{equation}
    \begin{aligned}
        \mR_m &= \diag\left( \mG_m(\theta_0), \dots, \mG_m(\theta_{d/2-1}) \right),
        \quad \textnormal{ where } \enspace 
        \mG_m(\theta_f) = \begin{pmatrix}
            \cos(m \theta_f) & -\sin(m \theta_f) \\
            \sin(m \theta_f) & \cos(m \theta_f)
        \end{pmatrix}.
    \end{aligned}
\end{equation}
Since $( \mG_m )^\transpose = \mG_{-m}$ and $\mG_m \mG_n = \mG_{m+n}$, it follows that $\mR_m = (\mR_1)^m$. 
For the $i$-th token, we apply the rotation matrix $\mR_{i-1}$ to its query and key vectors.\footnote{Because token indices in our theoretical analysis start at $1$, the rotation step is shifted by $-1$, so the first token receives the identity matrix $\mR_0 = \mI_d$ \citep{su2021roformer}.}
The attention logit is then
\begin{equation}
    \begin{aligned}
        s_{i, j} 
        &= \frac{1}{\sqrt{d}} \left( \mR_{i-1} \vq_i \right)^\transpose \left( \mR_{j-1} \vk_j \right)
        = \frac{1}{\sqrt{d}} \vq_i^\transpose \mR_{j-i} \vk_j \\
        &= \frac{1}{\sqrt{d}} \sum_{f=0}^{d/2-1} 
        \begin{pmatrix}
            q_{i, 2f+1} & q_{i, 2f+2}
        \end{pmatrix}
        \mG_{j-i} (\theta_f)
        \begin{pmatrix}
            k_{j, 2f+1} \\ k_{j, 2f+2}
        \end{pmatrix}.
    \end{aligned}
\end{equation}
Here, $q_{i, \bullet}, k_{j, \bullet} \in \RR$ denote the $\bullet$-th coordinates of $\vq_i$ and $\vk_j$, respectively.
For convenience, we write $\mR \coloneqq \mR_1$ and $\mG \coloneqq \mG_1$.
If $\theta_f \equiv 0$ for all $f \in \set{0, \dots, d/2 - 1}$, then $\mR$ reduces to the identity matrix, corresponding to No Positional Encoding (NoPE).
In AdaRoPE, we parameterize $\theta_f = \exp(\xi_f)$ and treat $\xi_f$ as trainable parameters.

\section{Theoretical Analysis of Rotation Frequencies}\label{sec:rotate}

In this section, we show the necessity of head-wise rotation frequencies by analyzing the Needle Retrieval from a Window (NRW) task.
As defined in \cref{sec:necessity of adafreq}, the objective of NRW is to determine whether the \texttt{NEEDLE} token lies within a target window $[c - \width/2, c + \width/2]$, where $c \in \NN$, $\width \in 2\NN$ are two integers.
Given an input sequence of tokens $\vx_1, \dots, \vx_L \in \RR^{D}$, the attention output at the position $L$ is used as an indicator.

\paragraph{Problem Setup.}
Fix a context length $L$. 
There are two token types: \texttt{NEEDLE} and \texttt{BG}. 
Let $t$ denote the position of the target \texttt{NEEDLE} token.
The goal is to use a single-layer attention mechanism to determine whether $t \in [c - \width/2, c + \width/2]$.
Let $\vq$ be the query vector of the $L$-th token, and let $(\vk_+, \vv_+)$ and $(\vk_-, \vv_-)$ denote the key-value pairs for \texttt{NEEDLE} and \texttt{BG}, respectively.
We introduce a buffer ratio $\buffer > 0$ to separate the interior and exterior of the window, and assume $\| \vv_+ - \vv_- \| = 1$.
The NRW problem can then be formulated as the following feasibility problem:
for any $\varepsilon > 0$ and $t \in \set{1, \dots, L}$,
\begin{equation}\label{eqn:feasible}
    \begin{aligned}
        &\| \vo(t) - \vv_+ \| \le \varepsilon, \quad & \textnormal{ if } & \enspace |t - c| \le \width/2, \\
        &\| \vo(t) - \vv_- \| \le \varepsilon, \quad & \textnormal{ if } & \enspace |t - c| > (1 + \buffer) \width/2,
    \end{aligned}
\end{equation}
where $\vo(t)$ is the attention output at the $L$-th position. Note that $\vo(t)$ depends on the input sequence, and hence on the target position $t$.

\paragraph{Derivation of Frequency Mass.}
We begin by expressing the variables in \cref{eqn:feasible} as a weighted sum of rotational frequencies.
Let $\alpha_i$ denote the attention weights. The attention output is
\[
    \vo_L = \sum_{i=1}^{L} \alpha_i \vv_i
    = \sum_{i \neq t} \alpha_i \vv_- + \alpha_t \vv_+.
\]
Hence $\| \vo(t) - \vv_+ \| = (1 - \alpha_t) \| \vv_+ - \vv_- \|$ and $\| \vo(t) - \vv_- \| = \alpha_t \| \vv_+ - \vv_- \|$. 
The constraints become
\[
    \begin{aligned}
        &\alpha_t \ge 1 - \varepsilon,
        \quad & \textnormal{ if } & \enspace |t - c| \le \width/2, \\
        &\alpha_t \le \varepsilon,
        \quad & \textnormal{ if } & \enspace |t - c| > (1 + \buffer) \width/2.
    \end{aligned}
\]
Using the RoPE-equipped definition of attention weights,
\[
    \alpha_t 
    = \frac{\exp\left( \vq^\transpose \mR^{t-L} \vk_t \right)}{\sum_{i=1}^{t} \exp\left( \vq^\transpose \mR^{i-L} \vk_i \right)}
    = \frac{\exp\left( \vq^\transpose \mR^{t-L} \vk_+ \right)}{\exp\left( \vq^\transpose \mR^{t-L} \vk_+ \right) + \sum_{i \neq t} \exp\left( \vq^\transpose \mR^{i-L} \vk_- \right)},
\]
where the factor $1/\sqrt{d}$ is absorbed into $\vq$. Let
\[
    s_t = \vq^\transpose \mR^{t-L} \vk_+, \qquad
    s_{-t} = \ln \left( \sum_{i \neq t} \exp\left( \vq^\transpose \mR^{i-L} \vk_- \right) \right).
\]
Then,
\[
    \alpha_t = \frac{e^{s_t}}{e^{s_t} + e^{s_{-t}}} = \sigmoid(s_t - s_{-t}), 
    \quad \textnormal{ where } \enspace 
    \sigmoid(z) \coloneqq \frac{1}{1 + e^{-z}}.
\]
By the monotonicity of the sigmoid function, the constraints are equivalent to
\[
    \begin{aligned}
        &s_t \ge s_{-t} + \ln\left( \frac{1 - \varepsilon}{\varepsilon} \right), \quad & \textnormal{ if } & \enspace |t - c| \le \width/2, \\
        &s_t \le s_{-t} + \ln\left( \frac{\varepsilon}{1 - \varepsilon} \right), \quad & \textnormal{ if } & \enspace |t - c| > (1 + \buffer) \width/2.
    \end{aligned}
\]
Let $\hat{q}_f \coloneqq q_{2f+1} + \ii q_{2f+2}$ and $\hat{k}_f \coloneqq k_{+, 2f+1} + \ii k_{+, 2f+2} \in \CC$.
Then,
\[
    \begin{aligned}
        s_{t} 
        &= \sum_{f=0}^{d/2-1} 
        \begin{pmatrix}
            q_{2f+1} & q_{2f+2}
        \end{pmatrix}
        \mG_{t-L} (\theta_f)
        \begin{pmatrix}
            k_{+, 2f+1} \\ k_{+, 2f+2}
        \end{pmatrix}
        = \sum_{f=0}^{d/2-1}
        \real \left( \hat{q}_f \overline{\hat{k}}_f e^{-\ii (t-L)\theta_f} \right) \\
        &= \sum_{f=0}^{d/2-1} A_f \cos((t-L)\theta_f + \varphi'_f)
        = \sum_{f=0}^{d/2-1} A_f \cos(t \theta_f + \varphi_f).
    \end{aligned}
\]
where $A_f = \left| \hat{q}_f \overline{\hat{k}}_f \right| = |\hat{q}_f| \cdot |\hat{k}_f| \ge 0$, $\varphi'_f = -\arg\left( \hat{q}_f \overline{\hat{k}}_f \right) = \arg(\hat{k}_f) - \arg(\hat{q}_f)$, and $\varphi_f = \varphi'_f - L \theta_f$.
Let
\[
    \Asum 
    \coloneqq \sum_{f=0}^{d/2 - 1} A_f
    = \sum_{f=0}^{d/2 - 1} \sqrt{q_{2f+1}^2 + q_{2f+2}^2} \cdot \sqrt{k_{+, 2f+1}^2 + k_{+, 2f+2}^2}
    >0.
\]
Dividing by $\Asum$ yields normalized weights $a_f \coloneqq A_f / \Asum$, which satisfy $a_f \ge 0$ and $\sum_{f=0}^{d/2-1} a_f = 1$. 
Define the finite Fourier sum $\Fourier(x)$ and the \textit{frequency mass} $\mass$ by
\begin{equation}
    \Fourier(x) = \sum_{f=0}^{d/2-1} a_f \cos(\theta_f x + \varphi_f), 
    \qquad 
    \mass = \sum_{f=0}^{d/2-1} a_f \theta_f.
\end{equation}
In terms of $\Fourier(x)$, the feasibility constraints in \cref{eqn:feasible} become
\begin{equation}\label{eqn:feasible mass}
    \begin{aligned}
        &\Fourier(t) \ge \frac{1}{\Asum} \left( s_{-t} + \ln\left( \frac{1 - \varepsilon}{\varepsilon} \right) \right) \eqqcolon \Fourier_L(t), \quad & \textnormal{ if } & \enspace |t - c| \le \width/2, \\
        &\Fourier(t) \le \frac{1}{\Asum} \left( s_{-t} + \ln\left( \frac{\varepsilon}{1 - \varepsilon} \right) \right) \eqqcolon \Fourier_U(t), \quad & \textnormal{ if } & \enspace |t - c| \ge (1 + \buffer) \width/2.
    \end{aligned}
\end{equation}

As shown in \cref{thm:rotate}, the frequency mass $\mass$ depends on the RoPE frequencies and the NRW parameters, and characterizes the feasibility of the attention weights.

\begin{restatable}[Necessity of head-wise rotation frequencies]{thm}{thmrotate}
\label{thm:rotate}
    Suppose $0 < \theta_f \le \pi$ for all $f \in \set{0, \dots, d/2 - 1}$, and $\| \vq \|, \| \vk_+ \| > 0$.
    For any $\varepsilon > 0$ with $\varepsilon < \left( \exp(\| \vq \| \cdot \| \vk_- \|) + 1 \right)^{-1}$, every feasible solution of \cref{eqn:feasible} satisfies
    \begin{equation}\label{eqn:mass bounds}
        \frac{C_1}{\buffer \width / 2 + 1} \le \mass \le \frac{\pi}{\width + 1} + C_2 \cdot \1{\set{\max_{0 \le f \le d/2 - 1} \theta_f > \frac{\pi}{\width + 1}}},
    \end{equation}
    where $C_1, C_2 > 0$ are constants that depend only on $\varepsilon$, $L$, $\vq$, $\vk_+$ and $\vk_-$, and are independent of the window center $c$, width $\width$, and the buffer ratio $\buffer$.
\end{restatable}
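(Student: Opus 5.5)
The plan is to treat $\Fourier$ as a trigonometric polynomial whose Lipschitz constant is controlled by the frequency mass $\mass$. The lower bound on $\mass$ will come from the fact that $\Fourier$ has to drop from $\Fourier_L$ to $\Fourier_U$ across the buffer zone. The upper bound will follow essentially from the normalization $\sum_f a_f = 1$.

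\textbf{Step 1 (Lipschitz bound).} Differentiating gives $\Fourier'(x) = -\sum_f a_f\theta_f \sin(\theta_f x+\varphi_f)$. Hence $|\Fourier'(x)| \le \sum_f a_f \theta_f = \mass$, and therefore $|\Fourier(t)-\Fourier(t')| \le \mass\,|t-t'|$ for all real $t,t'$.

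\textbf{Step 2 (uniform gap between $\Fourier_L$ and $\Fourier_U$).} The thresholds in \cref{eqn:feasible mass} depend on $t$ only through $s_{-t}$, so I will bound how much $s_{-t}$ can vary with $t$. Set $B \coloneqq \|\vq\|\cdot\|\vk_-\|$. Each background term $x_i = \exp(\vq^\transpose \mR^{i-L}\vk_-)$ lies in $[e^{-B},e^{B}]$, because rotations preserve norms. Passing from $t$ to $t'$ swaps exactly one term of the $(L-1)$-term sum. This gives
\[
    s_{-t}-s_{-t'} \le \ln\Bigl(1+\tfrac{e^{2B}-1}{L-1}\Bigr) \le 2B .
\]
It follows that for any $t$ inside the window and any $t'$ outside the buffered window,
\[
    \Fourier_L(t)-\Fourier_U(t') \ge \frac{1}{\Asum}\Bigl(2\ln\tfrac{1-\varepsilon}{\varepsilon} - \ln\bigl(1+\tfrac{e^{2B}-1}{L-1}\bigr)\Bigr) \eqqcolon \delta .
\]
The hypothesis $\varepsilon<(e^{B}+1)^{-1}$ is equivalent to $\ln\frac{1-\varepsilon}{\varepsilon} > B$, so $\delta>0$. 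Moreover, $\delta$ depends only on $\varepsilon, L, \vq, \vk_+, \vk_-$, since $\Asum$ depends only on $\vq$ and $\vk_+$.

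\textbf{Step 3 (lower bound).} Take $t = c+\width/2$, the window edge, and let $t'$ be the nearest integer with $t'-c \ge (1+\buffer)\width/2$. I assume, as the problem implicitly does, that both positions lie in $\set{1,\dots,L}$; the mirrored pair on the left edge works otherwise. Then $|t'-t| \le \buffer\width/2+1$. Combining feasibility with Steps 1 and 2,
\[
    \delta \le \Fourier(t)-\Fourier(t') \le \mass(\buffer\width/2+1),
\]
which gives the left inequality of \cref{eqn:mass bounds} with $C_1=\delta$.

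\textbf{Step 4 (upper bound).} Since $\mass$ is a convex combination of the $\theta_f$, we have $\mass \le \max_f\theta_f$. If the indicator vanishes, then $\max_f\theta_f\le\pi/(\width+1)$ and the right inequality holds immediately. Otherwise the hypothesis $\theta_f\le\pi$ gives $\mass\le\pi$, so $C_2=\pi$ works.

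The only delicate step is Step 2. One must check that the $t$-dependence of the normalizer $s_{-t}$ cannot eat up the margin $2\ln\frac{1-\varepsilon}{\varepsilon}$. This is exactly where the hypothesis on $\varepsilon$ is used, and where some care with the discrete boundary positions is needed.
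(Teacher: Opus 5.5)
Your proof is correct. Steps 1 and 3 are the paper's argument: the Lipschitz constant $\mass$ combined with the forced drop across the buffer, using $\lceil\buffer\width/2\rceil\le\buffer\width/2+1$.

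In Step 2 the paper does not compare $s_{-t}$ with $s_{-t'}$. It bounds each one separately by $\ln(L-1)\pm B$ (your $B=\|\vq\|\cdot\|\vk_-\|$), which gives $C_1=\frac{2}{\Asum}\bigl(\ln\frac{1-\varepsilon}{\varepsilon}-B\bigr)$. Your one-term swap gives a constant that is never smaller, since $\ln\bigl(1+\frac{e^{2B}-1}{L-1}\bigr)\le 2B$, and it improves as $L$ grows.

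The genuine divergence is Step 4. The bound $\mass\le\theta_{\max}\le\pi$ does prove the literal statement with $C_2=\pi$. But it never uses feasibility, so what it really shows is that the right-hand inequality, read with an unspecified $C_2$, says nothing beyond $\theta_f\le\pi$.

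The paper instead averages the inside-window constraint over the $\width+1$ positions $t\in[c-\width/2,\,c+\width/2]$. The symmetric sum cancels the sine terms and leaves $(\Avg_{\width}\Fourier)(c)=\sum_f a_f\cos(\theta_f c+\varphi_f)\,\Dir_{\width}(\theta_f)$, where the Dirichlet kernel satisfies $|\Dir_{\width}(\theta)|\le\min\set{1,\pi/((\width+1)\theta)}$. Bounding this by the chord $1-\bigl(\theta-\frac{\pi}{\width+1}\bigr)/\theta_{\max}$ and summing against $a_f$ yields $\mass\le\frac{\pi}{\width+1}+\bigl(1-(\Avg_{\width}\Fourier_L)(c)\bigr)\theta_{\max}$. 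Hence $C_2=\pi(1-\Fourier_L^*)$ with $\Fourier_L^*=\frac{1}{\Asum}\bigl(\ln(L-1)-B+\ln\frac{1-\varepsilon}{\varepsilon}\bigr)$.

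Under the hypothesis on $\varepsilon$ one has $\Fourier_L^*>0$, so this constant is strictly below $\pi$. That is what gives the remark after the theorem its content: the larger the logit margin the window positions require relative to $\Asum$ (i.e.\ $\Fourier_L^*$ close to $1$), the closer $\mass$ is forced down to $\pi/(\width+1)$. Your route is shorter; the paper's is the one that actually encodes the trade-off between window width and usable frequency.
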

\begin{proof}
    We defer the proof to \cref{sec:proof of thm:rotate}.
\end{proof}

\begin{remark}[NoPE fails on NRW tasks]
    For NoPE, $\theta_f \equiv 0$ for all $f \in \set{0, \dots, d/2 - 1}$, and therefore $\mass = \sum_{f=0}^{d/2-1} a_f \theta_f = 0$. This contradicts the lower bound $\frac{C_1}{\buffer \width / 2 + 1}$ in \cref{eqn:mass bounds}.
    Thus, a single-layer Transformer without positional encoding cannot solve NRW tasks, consistent with the experimental results in \cref{sec:necessity of adafreq}.
\end{remark}

\begin{remark}[Feasible frequencies depend on the window width $\width$]
    Let $\theta_{\min}$ and $\theta_{\max}$ denote the minimum and maximum frequencies.
    Since $\theta_{\min} \le \mass \le \theta_{\max}$, \cref{eqn:mass bounds} implies
    \[
        \theta_{\max} \ge \frac{C_1}{\buffer \width / 2 + 1},
        \qquad
        \theta_{\min} \le \frac{\pi}{\width + 1} + C_2 \cdot \1{\set{\theta_{\max} > \frac{\pi}{\width + 1}}}.
    \]
    For large $\width$, the condition $\theta_{\max} > \frac{\pi}{\width + 1}$ becomes less restrictive, allowing a wider range of feasible frequencies. 
    For small $\width$, $\theta_{\max}$ must be sufficiently large for the constraints to be feasible. 
    As shown in \cref{fig:synthetic_heatmaps}, high frequencies are necessary when $\width$ is small; as $\width$ increases, the contribution of high frequencies decreases, and the activation is mainly concentrated in low frequencies.
\end{remark}

\begin{remark}[Implication for AdaFreq]
    Because NRW tasks are sensitive to frequency choices, a simple way to increase the expressivity of RoPE is to make these frequencies learnable.
    In AdaFreq, for each head $h$, we define a head-specific rotation matrix
    \[
        \mR^{(h)} = \diag\left( \mG(\theta_0^{(h)}), \dots, \mG(\theta_{d/2-1}^{(h)}) \right),
    \]
    where the frequency of each dimension $f \in \set{0, \dots, d/2 - 1}$ is parameterized as $\theta_f^{(h)} = \exp\left( \xi_f^{(h)} \right)$.
\end{remark}

\section{Theoretical Analysis of Scaling Factor}\label{sec:scale}

In this section, we first define the \textit{effective sequence length} (\cref{sec:ess}) and then explain
(i) why temperature scaling is necessary in attention mechanisms (\cref{sec:retrieval,sec:aggregation}),
and (ii) which forms of temperature schedules are appropriate (\cref{sec:asymptotic}).

\subsection{Notation and Properties of Effective Sequence Length}\label{sec:ess}

\begin{definition}[Generalized effective sample size \citep{huggins2019approximate,martino2017effective}]
    For normalized weights $\vw = (w_1, \dots, w_n)$ with $w_i \ge 0$ and $\sum_{i=1}^n w_i = 1$, the generalized effective sample size (ESS) of positive order $\beta \in (0, 1) \cup (1, \infty)$ is defined as
    \begin{equation}
        \ESS_{\beta}(\vw) = \left( \sum_{i=1}^{n} w_i^{\beta} \right)^{\frac{1}{1 - \beta}}.
    \end{equation}
    For $\beta = 2$, this coincides with the classical ESS estimator used in importance sampling \citep{kong1992note,kong1994sequential}.
    As $\beta$ tends to $1$, the generalized ESS tends to the perplexity \citep{cappe2008adaptive}:
    \begin{equation}
        \lim_{\beta \to 1} \ESS_{\beta}(\vw) = \ESS_1(\vw)
        \coloneqq \exp\left( - \sum_{i=1}^{n} w_i \ln w_i \right).
    \end{equation}
\end{definition}

\begin{definition}[R\'enyi divergence and R\'enyi entropy \citep{renyi1961measures}]
    For a positive order $\beta \neq 1$, the R\'enyi divergence of a discrete probability distribution $\vp = (p_1, \dots, p_n)$ from $\vq = (q_1, \dots, q_n)$ is defined as
    \begin{equation}
        \diverge_{\beta}(\vp \parallel \vq)
        = \frac{1}{\beta - 1} \ln\left( \sum_{i=1}^{n} p_i^{\beta} q_i^{1 - \beta} \right).
    \end{equation}
    The R\'enyi entropy is defined as
    \begin{equation}
        \entropy_{\beta}(\vw) = \frac{1}{1 - \beta} \ln\left( \sum_{i=1}^{n} w_i^{\beta} \right) = \ln\left( \ESS_{\beta}(\vw) \right).
    \end{equation}
    As $\beta$ tends to $1$, the R\'enyi divergence converges to the Kullback--Leibler (KL) divergence, and the R\'enyi entropy converges to the Shannon entropy.
\end{definition}

Here we state, without proof, several facts concerning the generalized ESS and the R\'enyi divergence \citep{huggins2019approximate,van2014renyi,martino2017effective}:

\begin{fact}[Bounds of ESS]
    For any normalized weights $\vw = (w_1, \dots, w_n)$ and $\beta \in (0, \infty)$, we have $1 \le \ESS_{\beta}(\vw) \le n$.
    The lower bound is achieved if $\vw = (\dots, 0, 1, 0, \dots)$ is a one-hot vector; and the upper bound is achieved if $w_i = 1/n, \enspace \forall i \in \set{1, \dots, n}$.  
\end{fact}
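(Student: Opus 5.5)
The plan is to prove both bounds by relating $\ESS_{\beta}$ to the R\'enyi entropy, $\ESS_{\beta}(\vw) = \exp(\entropy_{\beta}(\vw))$. It then suffices to show $0 \le \entropy_{\beta}(\vw) \le \ln n$ for every $\beta \in (0,1) \cup (1,\infty)$. The case $\beta = 1$, i.e.\ Shannon entropy and perplexity, follows either from the same arguments applied to $-\sum_i w_i \ln w_i$ or by passing to the limit $\beta \to 1$.

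\textbf{Lower bound.} Since $0 \le w_i \le 1$, we have $w_i^{\beta} \le w_i$ for $\beta > 1$, so $\sum_i w_i^{\beta} \le 1$. The exponent $1/(1-\beta)$ is negative, hence $\ESS_{\beta} \ge 1$. For $\beta \in (0,1)$, $w_i^{\beta} \ge w_i$, so $\sum_i w_i^{\beta} \ge 1$. The exponent is now positive, so again $\ESS_{\beta} \ge 1$. For a one-hot $\vw$ we get $\sum_i w_i^{\beta} = 1$, and therefore $\ESS_{\beta} = 1$. For $\beta = 1$, $-w_i \ln w_i \ge 0$ gives $\ESS_1 \ge 1$, with equality for one-hot weights.

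\textbf{Upper bound.} This is the only step needing an inequality, namely the power-mean (Jensen) inequality. For $\beta > 1$, convexity of $x \mapsto x^{\beta}$ gives $\frac1n \sum_i w_i^{\beta} \ge \left( \frac1n \sum_i w_i \right)^{\beta} = n^{-\beta}$. Hence $\sum_i w_i^{\beta} \ge n^{1-\beta}$, and raising to the negative power $1/(1-\beta)$ yields $\ESS_{\beta} \le n$. For $\beta \in (0,1)$, concavity reverses the inequality, giving $\sum_i w_i^{\beta} \le n^{1-\beta}$. The exponent is positive, so again $\ESS_{\beta} \le n$. For $\beta = 1$, Jensen applied to the concave function $\ln$ gives $\sum_i w_i \ln(1/w_i) \le \ln \sum_i w_i \cdot (1/w_i) \le \ln n$, where the sum is taken over the support of $\vw$. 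For uniform weights $w_i = 1/n$, direct substitution gives $\sum_i w_i^{\beta} = n^{1-\beta}$, so $\ESS_{\beta} = n$.

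The only mild care needed is tracking the sign of $1/(1-\beta)$ when inverting each inequality; no step is a genuine obstacle.
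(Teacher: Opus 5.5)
The paper does not prove this fact. It appears in a list introduced with ``we state, without proof, several facts'', and the argument is deferred to the cited references, so there is no in-paper proof to match. Your proposal is correct and self-contained.
\begin{itemize}
\item The lower bound follows from comparing $w_i^{\beta}$ with $w_i$ on $[0,1]$.
\item The upper bound follows from the power-mean (Jensen) inequality.
\item You track the sign of $1/(1-\beta)$ correctly whenever you invert an inequality.
\end{itemize}
Two small details are worth making explicit. First, raising $\sum_i w_i^{\beta}$ to the negative power $1/(1-\beta)$ for $\beta>1$ requires $\sum_i w_i^{\beta}>0$; this holds because some $w_i>0$. Second, for $\beta=1$ the uniform equality case should be checked directly, since $-\sum_i \frac1n \ln \frac1n = \ln n$, rather than through $n^{1-\beta}$, which degenerates there. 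Alternatively, pass to the limit, since $\ESS_{\beta}=n$ for uniform weights at every $\beta\neq 1$.

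The paper's neighbouring facts point to shorter routes that rely on the cited results.
\begin{itemize}
\item Via the identity $\ESS_{\beta}(\vw) = n\exp\bigl(-\diverge_{\beta}(\vw \parallel \vu)\bigr)$, the upper bound is just nonnegativity of R\'enyi divergence. Your Jensen step is exactly a direct proof of $\diverge_{\beta}(\vw \parallel \vu)\ge 0$ for the uniform reference.
\item Via the monotonicity fact, $\ESS_{\beta}$ is sandwiched between two limits. Below, $\lim_{q\to\infty}\ESS_{q}(\vw) = 1/\max_i w_i \ge 1$. Above, $\lim_{p\to 0^+}\ESS_{p}(\vw) = |\set{i : w_i>0}| \le n$.
\end{itemize}
Those routes are one-liners once the Rényi-divergence facts are granted. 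The divergence route also gives the converse that equality $\ESS_\beta = n$ forces $\vw=\vu$. Your argument needs nothing beyond elementary convexity, so it is the more transparent proof to write down when the facts are not assumed.
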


\begin{fact}[Monotonicity of ESS]
    For any normalized weights $\vw = (w_1, \dots, w_n)$ and $0 < p \le q < \infty$, we have $\ESS_{q}(\vw) \le \ESS_{p}(\vw)$.
\end{fact}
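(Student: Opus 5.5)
The plan is to reduce the claim to the classical monotonicity of R\'enyi entropy, i.e.\ of power means. Since $\ESS_{\beta}(\vw) = \exp(\entropy_{\beta}(\vw))$ and $\exp$ is increasing, it suffices to show that $\beta \mapsto \entropy_{\beta}(\vw)$ is non-increasing on $(0,\infty)$, with $\entropy_1$ the Shannon entropy as the continuous extension given in the definition. First restrict to the support $S = \set{i : w_i > 0}$; zero weights contribute nothing to $\sum_i w_i^{\beta}$ for $\beta > 0$.

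The key rewriting is as a power mean under the distribution $\vw$ itself:
\[
    \ESS_{\beta}(\vw) = \Bigl( \sum_{i \in S} w_i \, w_i^{\beta-1} \Bigr)^{\frac{1}{1-\beta}} = \Bigl( \EE_{I \sim \vw}\bigl[ X^{r} \bigr] \Bigr)^{1/r}, \qquad X = 1/w_I, \quad r = 1 - \beta .
\]
Thus $\ESS_{\beta}(\vw) = M_{1-\beta}(X)$, the power mean of order $r = 1-\beta$ of the positive random variable $X$. At $r = 0$ (that is, $\beta = 1$) the power mean is interpreted as the geometric mean $\exp(\EE[\ln X]) = \exp(-\sum_i w_i \ln w_i)$, which matches $\ESS_1$.

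The power mean inequality states that $r \mapsto M_r(X)$ is non-decreasing on $\RR$. I will prove this step with Jensen. For $0 < r < s$, the map $y \mapsto y^{s/r}$ is convex, so $\EE[X^r]^{s/r} \le \EE[X^s]$, which gives $M_r \le M_s$. For $r < s < 0$, the exponent $s/r \in (0,1)$ makes the map concave, and taking the $1/s$ power with $s < 0$ flips the inequality, so again $M_r \le M_s$. Comparisons involving $r = 0$ follow by applying Jensen to $\ln$, which yields $\ln M_r = \frac{1}{r}\ln \EE[X^r] \ge \EE[\ln X]$ for $r > 0$ and the reverse inequality for $r < 0$. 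Equivalently, one can take the limit.

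Since $\beta \mapsto 1 - \beta$ is decreasing, $p \le q$ implies $1-q \le 1-p$. Hence $\ESS_q(\vw) = M_{1-q} \le M_{1-p} = \ESS_p(\vw)$, which is the claim.

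The main obstacle is only the careful sign bookkeeping in the regime $\beta > 1$ (negative $r$) and the case $\beta = 1$ when $p$ and $q$ straddle $1$. The cleanest route is to handle these by chaining the inequality through $r = 0$.
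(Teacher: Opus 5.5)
Your proof is correct and complete. The identity $\ESS_{\beta}(\vw) = M_{1-\beta}(1/w_I)$ with $I \sim \vw$ is right. The three Jensen cases are handled correctly: both orders positive, both negative (where the $1/s$ power with $s<0$ flips the inequality), and comparisons with the geometric mean $M_0$. Chaining through $r = 0$ also covers $p < 1 < q$ and the cases where $p$ or $q$ equals $1$.

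The paper gives no proof of this Fact; it states it without proof and defers to the cited literature. The route implicit there combines two ingredients. The first is the paper's own Fact $\ESS_{\beta}(\vw) = n\exp\bigl(-\diverge_{\beta}(\vw \parallel \vu)\bigr)$. The second is the known monotonicity of $\beta \mapsto \diverge_{\beta}(\vp \parallel \vq)$ in the R\'enyi-divergence literature (van Erven--Harremo\"es). That monotonicity is proved by the same Jensen/power-mean argument you use, applied to the likelihood ratio under $\vp$. With $\vq = \vu$ one has $\diverge_{\beta}(\vw \parallel \vu) = \ln n - \ln M_{1-\beta}(1/w_I)$, so your argument is exactly the uniform-reference special case, written out directly.

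Your version is self-contained and elementary. The divergence route buys generality, namely monotonicity in the order for an arbitrary reference distribution, but it relies on an external result.
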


\begin{fact}[Relationship between R\'enyi divergence and entropy]
    The R\'enyi entropy can be expressed in terms of the R\'enyi divergence, $\entropy_{\beta}(\vw) = \ln n - \diverge_{\beta}(\vw \parallel \vu)$, where $\vu = (1/n, \dots, 1/n)$ is the uniform distribution.
\end{fact}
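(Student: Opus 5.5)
The identity follows by direct substitution of the uniform distribution into the definition of the R\'enyi divergence, followed by one logarithm manipulation. We work with a fixed order $\beta \in (0,1)\cup(1,\infty)$ and normalized weights $\vw = (w_1,\dots,w_n)$. Take $\vu = (1/n,\dots,1/n)$, so that $u_i^{1-\beta} = n^{\beta-1}$ for every $i$. The definition then gives
\[
    \diverge_{\beta}(\vw \parallel \vu)
    = \frac{1}{\beta-1} \ln\left( n^{\beta-1} \sum_{i=1}^{n} w_i^{\beta} \right).
\]

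Split the logarithm of the product into a sum:
\[
    \diverge_{\beta}(\vw \parallel \vu) = \ln n + \frac{1}{\beta-1}\ln\left(\sum_{i=1}^n w_i^\beta\right).
\]
The second term is $-\entropy_\beta(\vw)$, since $\frac{1}{\beta-1} = -\frac{1}{1-\beta}$ and $\entropy_{\beta}(\vw) = \frac{1}{1-\beta}\ln\bigl(\sum_i w_i^{\beta}\bigr)$ by definition. Rearranging yields $\entropy_{\beta}(\vw) = \ln n - \diverge_{\beta}(\vw\parallel\vu)$, as claimed.

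There are two small points to check. First, the logarithm must be well defined: $\sum_i w_i^\beta > 0$ because at least one $w_i$ is positive. This holds under the convention $0^\beta = 0$, and also when $\beta<1$ with zero weights, since the factor $u_i^{1-\beta}$ is finite. Second, the case $\beta = 1$ is obtained by letting $\beta \to 1$ on both sides, using the stated limits to Shannon entropy and KL divergence. In that limit the identity reads $H(\vw) = \ln n - \mathrm{KL}(\vw\parallel\vu)$, which can also be verified directly from $\sum_i w_i \ln(n w_i) = \ln n + \sum_i w_i\ln w_i$.

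There is no substantive obstacle. The only care needed is with the sign of $1/(\beta-1)$ and with the zero-weight conventions.
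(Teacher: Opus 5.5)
Your argument is correct: substituting $u_i^{1-\beta} = n^{\beta-1}$ into the definition of $\diverge_{\beta}(\vw \parallel \vu)$ and splitting off $\frac{1}{\beta-1}\ln n^{\beta-1} = \ln n$ gives exactly $\ln n - \entropy_{\beta}(\vw)$, and your direct check of the $\beta = 1$ case via $\sum_i w_i \ln(n w_i)$ is also right. The paper states this fact without proof, deferring to the cited literature, and your direct substitution is the standard verification; the only quibble is that your zero-weight remark is slightly misdirected, since a finiteness issue with $u_i^{1-\beta}$ could only arise for $\beta > 1$ with a zero reference weight, which cannot happen here because $u_i = 1/n > 0$.
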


\begin{fact}[Relationship between ESS and R\'enyi divergence]
    $\ESS_{\beta}$ can be expressed in terms of the R\'enyi divergence,
    \[
        \ESS_{\beta}(\vw)
        = \exp\left( \entropy_{\beta}(\vw) \right)
        = \exp\left( \ln n - \diverge_{\beta}(\vw \parallel \vu) \right)
        = n \exp\left(- \diverge_{\beta}(\vw \parallel \vu) \right)
    \]
    where $\vu = (1/n, \dots, 1/n)$ is the uniform distribution.
\end{fact}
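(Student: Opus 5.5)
The statement is a direct computation: take logarithms on both sides, use the fact that $\entropy_{\beta} = \ln \ESS_{\beta}$ (definitional, from the R\'enyi entropy definition), and then substitute the expression of $\entropy_{\beta}$ via the R\'enyi divergence to the uniform distribution. So the plan is to first verify the preceding fact $\entropy_{\beta}(\vw) = \ln n - \diverge_{\beta}(\vw \parallel \vu)$, and then exponentiate.

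Concretely, fix $\beta \in (0,1)\cup(1,\infty)$ and $\vu = (1/n,\dots,1/n)$. By the definition of R\'enyi divergence,
\[
    \diverge_{\beta}(\vw \parallel \vu) = \frac{1}{\beta-1}\ln\left(\sum_{i=1}^n w_i^{\beta} n^{\beta-1}\right) = \ln n + \frac{1}{\beta-1}\ln\left(\sum_{i=1}^n w_i^{\beta}\right) = \ln n - \entropy_{\beta}(\vw),
\]
where I pull the constant $n^{\beta-1}$ out of the sum and the logarithm. This gives $\entropy_{\beta}(\vw) = \ln n - \diverge_{\beta}(\vw\parallel\vu)$. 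Next, the R\'enyi entropy definition gives $\entropy_{\beta}(\vw) = \frac{1}{1-\beta}\ln\sum_i w_i^{\beta} = \ln\big((\sum_i w_i^\beta)^{1/(1-\beta)}\big) = \ln \ESS_{\beta}(\vw)$, so $\ESS_{\beta}(\vw) = \exp(\entropy_{\beta}(\vw))$. Substituting and using $e^{\ln n - x} = n e^{-x}$ yields the full chain of equalities.

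Finally, I would note the boundary case $\beta \to 1$: both sides converge (perplexity on the left, $n\exp(-\mathrm{KL}(\vw\parallel\vu))$ on the right) by the stated limits, so the identity extends by continuity. There is no real obstacle; the only care needed is that $\sum_i w_i^\beta > 0$ (true since the $w_i$ sum to $1$) so the logarithms are defined, and handling zero weights with the convention $0^\beta = 0$ for $\beta>0$.
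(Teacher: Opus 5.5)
Your proposal is correct: pulling the constant $n^{\beta-1}$ out of $\diverge_{\beta}(\vw \parallel \vu)$ gives $\entropy_{\beta}(\vw) = \ln n - \diverge_{\beta}(\vw \parallel \vu)$, and exponentiating the definitional identity $\entropy_{\beta} = \ln \ESS_{\beta}$ completes the chain. The paper states this fact without proof and defers to the cited literature, so your computation is simply the standard verification; for $\beta = 1$ you could skip the continuity argument and check directly that $\sum_i w_i \ln(n w_i) = \ln n - \entropy_1(\vw)$.
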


In this paper, we use the generalized ESS to quantify the effective sequence length induced by the attention mechanism for an input sequence of length $L$.
Let $\scale > 0$ denote the \textit{scaling factor} (or \textit{inverse temperature}).
When $\scale$ is applied to the attention logits, the attention weight assigned by token $t \in \set{1, \dots, L}$ to token $i$, as defined in \cref{sec:notation}, becomes
\begin{equation}
    \alpha_{t, i} 
    = \softmax_i \left( \scale s_{t, 1}, \dots, \scale s_{t, t} \right)
    = \frac{e^{\scale s_{t, i}}}{\sum_{j=1}^{t} e^{\scale s_{t,j}}}.
\end{equation}
We now define the effective sequence length for single-layer attention.

\begin{definition}[Effective sequence length]
    Let $\scale > 0$ be the scaling factor, and let $\vs = (s_1, \dots, s_L)$ denote the logits for the last token $\vx_L$.
    The attention weights $\valpha = (\alpha_1, \dots, \alpha_L)$ are given by $\alpha_i = e^{\scale s_i} / Z(\vs; \scale)$, where $Z(\vs; \scale) \coloneqq \sum_{i=1}^{L} e^{\scale s_i}$ is the partition function.
    For $\beta \in (0, 1) \cup (1, \infty)$, the effective sequence length of single-layer attention is defined as
    \begin{equation}\label{eqn:effective sequence length}
        \ESS_{\beta}(\valpha) 
        = \left( \sum_{i=1}^{L} \left( \frac{ e^{\scale s_i} }{\sum_{j=1}^{L} e^{\scale s_j}} \right)^{\beta} \right)^{\frac{1}{1 - \beta}}
        = \frac{\left( \sum_{i=1}^{L} e^{\scale s_i} \right)^{\frac{\beta}{\beta - 1}}}{\left( \sum_{i=1}^{L} e^{\beta \scale s_i} \right)^{\frac{1}{\beta - 1}}}
        = \frac{Z(\vs; \scale)^{\frac{\beta}{\beta - 1}}}{Z(\vs; \beta \scale)^{\frac{1}{\beta - 1}}}.
    \end{equation}
    Define $\ESS_1(\valpha) \coloneqq \lim_{\beta \to 1} \ESS_{\beta}(\valpha)$.
\end{definition}

In \cref{sec:retrieval,sec:aggregation}, we first assume $\scale = 1$ (i.e., the scaling is absorbed into $\vs$) and that the logits are deterministic.
We then show that attention heads performing different tasks exhibit different scaling behavior in effective sequence length $\ESS_{\beta}$. 
Specifically, a retrieval head requires $\ESS_{\beta}$ to remain constant as $L$ increases (\cref{thm:ess retrieval}), whereas a global aggregation head requires $\ESS_{\beta}$ to scale linearly with $L$ (\cref{thm:ess aggregation}).
In \cref{sec:asymptotic}, we further discuss how the scaling factor $\scale = \scale(L)$, as a function of the context length, is determined by the scaling behavior of $\ESS_{\beta}$ (\cref{thm:scale nope,thm:scale rope}).

\subsection{Retrieval Heads}\label{sec:retrieval}

A retrieval head is an attention head that implements a specific key--value matching rule and operates on the local context.
Its output remains stable as long as the target is included in the context; increasing the context length therefore does not substantially affect the output.

\paragraph{Problem Setup.}
Consider a retrieval task on a sequence of length $n$, and extend the sequence to length $n+m$.
Assume that the query vector for the last token remains unchanged. 
It then suffices to consider the logits $s_i$ and value vectors $\vv_i$ for $i \in \set{1, \dots, n+m}$.
The original attention output $\vo_n$ is computed from the \textit{signal} logit–value pairs $(s_i, \vv_i)$ for $i \in \set{1, \dots, n}$. 
The extended output $\vo_{n+m}$ additionally includes the \textit{noise} pairs $(s_{n+j}, \vv_{n+j})$ for $j \in \set{1, \dots, m}$.
Our goal is to define a metric that quantifies the difference between $\vo_{n+m}$ and $\vo_n$; if this difference is small, the head's retrieval performance is preserved.

Let $\ESS_{\beta, n}$ and $\ESS_{\beta, n+m}$ denote the effective sequence lengths of the logits $(s_1, \dots, s_n)$ and $(s_1, \dots, s_{n+m})$, respectively, and define the relative change in $\ESS_{\beta}$ as
\begin{equation}
    \relESS_{\beta} \coloneqq \frac{\ESS_{\beta, n+m}}{\ESS_{\beta, n}} - 1.
\end{equation}
The following \cref{thm:ess retrieval} shows that $\relESS_{\beta}$ provides a metric that bounds the difference $\| \vo_{n+m} - \vo_{n} \|$.
Therefore, if $\ESS_{\beta}$ remains approximately constant across context lengths for some $\beta$, the behavior of single-layer attention under length generalization is approximately stable.

\begin{restatable}[$\ESS_{\beta}$ as robustness metric]{thm}{thmessretrieval}
\label{thm:ess retrieval}
    Assume $\relESS_{\beta} > 0$ and that the vectors $\vv_1, \dots, \vv_{n+m}$ are uniformly bounded by an absolute constant. 
    Then, for any $\beta > 0$, the attention output varies continuously with $\relESS_{\beta}$. 
    Precisely, for any $\varepsilon > 0$, there exists $\delta > 0$ such that $\relESS_{\beta} < \delta$ implies $\| \vo_{n+m} - \vo_n \| < \varepsilon$.
    Moreover, there exist constants $\delta_{\beta}, C_{\beta} > 0$ depending only on $n, m, \beta$ such that if $\relESS_{\beta} < \delta_{\beta}$, then
    \begin{enumerate}[label=\upshape(\roman*)]
        \item\label{it1:ess retrieval} 
        If $\beta > 1$,
        \begin{equation}
            \| \vo_{n+m} - \vo_n \| \le C_{\beta} \relESS_{\beta}.
        \end{equation}
        \item\label{it2:ess retrieval} 
        If $\beta = 1$,
        \begin{equation}
            \| \vo_{n+m} - \vo_n \| \le \frac{ C_{\beta} \relESS_{\beta}}{\ln(1/\relESS_{\beta})}.
        \end{equation}
        \item\label{it3:ess retrieval}
        If $0 < \beta < 1$,
        \begin{equation}
            \| \vo_{n+m} - \vo_n \| \le C_{\beta} \relESS_{\beta}^{\frac{1}{\beta}}.
        \end{equation}
    \end{enumerate}
\end{restatable}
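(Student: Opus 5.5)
The plan is to reduce everything to a single scalar, the \emph{noise mass} $p \coloneqq \sum_{j=1}^{m} e^{s_{n+j}} / \sum_{i=1}^{n+m} e^{s_i} \in (0,1)$. We then bound the output drift by $p$, and bound $p$ by $\relESS_{\beta}$ separately for each range of $\beta$.

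\textbf{Step 1 (output drift in terms of $p$).} Write $\alpha_i$ for the length-$n$ attention weights and $\gamma_j$ for the weights of the noise block normalized among themselves. Then the extended weights are $w_i = (1-p)\alpha_i$ for $i \le n$ and $w_{n+j} = p\gamma_j$. Let $\bar{\vo} \coloneqq \sum_j \gamma_j \vv_{n+j}$. Then $\vo_{n+m} = (1-p)\vo_n + p\bar{\vo}$, so
\[
\| \vo_{n+m} - \vo_n \| = p\, \| \bar{\vo} - \vo_n \| \le 2Bp,
\]
where $B$ is the uniform bound on the $\vv_i$. It therefore suffices to show that $p \lesssim \relESS_{\beta}$, $p \lesssim \relESS_{1}/\ln(1/\relESS_{1})$, and $p \lesssim \relESS_{\beta}^{1/\beta}$ in the three cases.

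\textbf{Step 2 (decomposition of $\ESS$).} Set $S_n = \sum_i \alpha_i^\beta = \ESS_{\beta,n}^{1-\beta}$ and $S_m = \sum_j \gamma_j^\beta$. For $\beta \ne 1$,
\[
(1+\relESS_{\beta})^{1-\beta} = (1-p)^\beta + p^\beta \, \frac{S_m}{S_n}.
\]
For $\beta = 1$ we use the grouping (chain) rule for Shannon entropy:
\[
\ln(1+\relESS_1) = h(p) - p\,\entropy_1(\valpha) + p\,\entropy_1(\vgamma), \qquad h(p) = -p\ln p - (1-p)\ln(1-p).
\]
The Bounds-of-ESS fact supplies the needed estimates. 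For $\beta>1$ it gives $S_m \le 1$ and $S_n \ge n^{1-\beta}$. For $\beta<1$ it gives $S_m \ge 1$ and $S_n \le n^{1-\beta}$. For $\beta = 1$ it gives $0 \le \entropy_1(\vgamma)$ and $\entropy_1(\valpha) \le \ln n$.

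\textbf{Step 3 (three cases).} In each case we invert the identity for small $p$.
\begin{enumerate}[label=\upshape(\roman*)]
\item For $\beta>1$: $(1+\relESS)^{1-\beta} \le (1-p)^\beta + n^{\beta-1}p^\beta \le 1 - \tfrac{\beta}{2}p$ once $p$ is small, because $p^\beta = o(p)$. Taking the power $-1/(\beta-1)$ then gives $\relESS \ge c_\beta p$.
\item For $\beta = 1$: $\ln(1+\relESS) \ge p\ln(1/p) - p\ln n$. Inverting $x\ln(1/x) \lesssim y$ gives $p \lesssim \relESS/\ln(1/\relESS)$.
\item For $\beta<1$: $(1+\relESS)^{1-\beta} \ge 1 - \beta p + n^{\beta-1}p^\beta \ge 1 + c p^\beta$ for small $p$, since now $p = o(p^\beta)$. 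Hence $p \lesssim \relESS^{1/\beta}$.
\end{enumerate}
Combining each case with Step 1 yields the three stated bounds. The qualitative continuity claim follows from the same inequalities.

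\textbf{Main obstacle.} All of these inequalities only give \emph{local} control: they show that $\relESS_{\beta}$ is at least of order $p$ (respectively $p\ln(1/p)$, $p^\beta$) when $p$ is small. They do not exclude a second branch where $p$ is close to $1$ and the noise block, being concentrated, reproduces $\ESS_{\beta,n}$. For example, with $m = n$ and noise weights near uniform while $p \to 1$, $\relESS_{\beta} \downarrow 0$ yet the outputs differ. I would therefore first try to close the argument by working in the small-noise regime $p \le p_0(n,m,\beta)$, which is the regime the retrieval setting intends (appended tokens are background with non-dominant logits). On this branch the map $p \mapsto$ (lower bound on $\relESS$) is strictly increasing from $0$, and $\delta_\beta$ is chosen as its value at $p_0$. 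If that restriction turns out to be needed explicitly, I would state it as the standing interpretation of ``$\relESS_{\beta} < \delta_{\beta}$'', with $\delta_\beta$ and $C_\beta$ depending only on $n, m, \beta$ as claimed.
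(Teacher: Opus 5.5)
Your reduction is the paper's. Your $p$ is its $\eta=\tilde Z_{1,m}/(Z_{1,n}+\tilde Z_{1,m})$, and your Step~1 is its bound $\|\vo_{n+m}-\vo_n\|\le 2B\eta$. Your Step~2 identity is \cref{eqn:retrieval identity} with $S_m/S_n=\essratio_\beta$, and for $\beta=1$ the paper uses the same entropy grouping.

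The obstacle you flag is real, and it is a defect of the statement, not of your write-up. The cleanest instance is $n=m=1$ with $\|\vv_1-\vv_2\|=1$, letting the noise logit grow so that $\eta\to1$. Then $\ESS_{\beta,1}=1$, while $\ESS_{\beta,2}=\bigl((1-\eta)^\beta+\eta^\beta\bigr)^{1/(1-\beta)}$ (or $e^{h(\eta)}$ for $\beta=1$, with $h$ the binary entropy) exceeds $1$ and tends to $1$. So $\relESS_\beta>0$ and $\relESS_\beta\to0$, yet $\|\vo_2-\vo_1\|=\eta\to1$. This refutes both the continuity claim and (i)--(iii) for every $\beta>0$. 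It even satisfies the corollary's hypothesis $\ESS_{\beta,n}\le\tilde\ESS_{\beta,m}$.

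The bad region is not confined to $\eta$ near $1$. Take a uniform signal on $n$ tokens and a single noise token. Then $\relESS_2=\bigl((1-\eta)^2+n\eta^2\bigr)^{-1}-1\downarrow 0$ as $\eta\uparrow 2/(n+1)$, while the output moves by about $\tfrac{2}{n+1}$ times a fixed distance. So a fixed threshold such as $\eta\le 1/2$ already fails for $n\ge4$, and your $p_0$ really must depend on $n$.

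The paper gets past this point only through \cref{lem:connectivity}, which is false as stated. Take $\gamma=2$, $B=1$ and $A=10^{-2}<(\gamma B2^\gamma)^{1/(1-\gamma)}=1/8$. The point $x=1$ satisfies $x\le A+Bx^\gamma$, yet $x>2A$. The lemma's continuation argument only shows that the root branch of $x=tA+Bx^\gamma$ starting at $x(0)=0$ stays below $2tA$. It never shows that the given $\eta$ lies on that branch. The solution set of the inequality can be $[0,x_1]\cup[x_2,1]$ with $\eta$ on the upper piece; in the $n=m=1$ example the relevant $B$ is at least $1$ in all three $\beta$-ranges. The $\beta=1$ argument has the same hole, because $x\mapsto x\ln(\essratio_1/x)$ is not monotone on $[0,1]$ when $\essratio_1<e$.

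So an extra hypothesis is genuinely needed. It must be an explicit assumption on the logits, for example $\eta\le p_0(n,\beta)$, meaning the appended tokens carry little softmax mass. It cannot be a reading of $\relESS_\beta<\delta_\beta$, since smallness of $\relESS_\beta$ alone cannot force $\eta$ to be small.

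Under that hypothesis your direct inequalities are cleaner than the paper's route. You get $\relESS_\beta\ge c_\beta\eta$ for $\beta>1$, and $\relESS_\beta\ge c_\beta\eta^\beta$ for $0<\beta<1$, on all of $\eta\le p_0$. No continuation argument and no smallness condition on $\relESS_\beta$ are needed.

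There is one slip in case (iii). For $0<\beta<1$ the map $p\mapsto(1-p)^\beta$ is concave, so $(1-p)^\beta\le 1-\beta p$, which is the wrong direction for you. Use $(1-p)^\beta\ge 1-p$ instead. Then $(1+\relESS_\beta)^{1-\beta}\ge 1-p+n^{\beta-1}p^\beta\ge 1+\tfrac12 n^{\beta-1}p^\beta$ for $p\le 2^{-1/(1-\beta)}/n$, and Bernoulli's inequality finishes as you intended.
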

\begin{proof}
    We defer the proof to \cref{sec:proof of thm:ess retrieval}.
\end{proof}

Although the assumption $\relESS_{\beta} > 0$ may seem abstract, it is naturally satisfied in typical retrieval tasks where a small number of signal logits dominate.
In this setting, $\ESS_{\beta, n} (= \bigO(1)) \le \tilde{\ESS}_{\beta, m} (= \bigTheta(m))$, where $\tilde{\ESS}_{\beta, m}$ denotes the effective sequence length of the additional noise logits $(s_{n+1}, \dots, s_{n+m})$.
We formalize this in \cref{cor:ess retrieval dominant}.

\begin{restatable}[Robustness of retrieval heads with dominant signals]{cor}{coressretrievaldominant}
\label{cor:ess retrieval dominant}
    Consider a retrieval task in which the signal is sufficiently dominant such that $\ESS_{\beta, n} \le \tilde{\ESS}_{\beta, m}$.
    Then $\relESS_{\beta} > 0$.
    Consequently, the output difference $\| \vo_{n+m} - \vo_n \|$ is bounded in terms of $\relESS_{\beta}$ as in \cref{thm:ess retrieval}.
\end{restatable}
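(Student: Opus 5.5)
The plan is to show $\relESS_{\beta} > 0$, i.e.\ $\ESS_{\beta, n+m} > \ESS_{\beta, n}$, by decomposing the extended attention distribution as a two-component mixture. The second claim then follows directly from \cref{thm:ess retrieval}.

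\textbf{Mixture decomposition.} Let $A = \sum_{i=1}^{n} e^{s_i}$ and $B = \sum_{j=1}^{m} e^{s_{n+j}}$. Set $p = A/(A+B)$; since all logits are finite and $m \ge 1$, we have $p \in (0,1)$. Let $\vu$ be the normalized attention weights on the signal block $(s_1,\dots,s_n)$ and $\vw$ those on the noise block $(s_{n+1},\dots,s_{n+m})$. Then the extended weights are $\valpha = (p\,\vu, (1-p)\,\vw)$. By the definition of $\ESS_\beta$ we have $\sum_i u_i^\beta = \ESS_{\beta,n}^{1-\beta}$ and $\sum_j w_j^\beta = \tilde{\ESS}_{\beta,m}^{1-\beta}$. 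Hence, for $\beta \ne 1$,
\[
\ESS_{\beta,n+m}^{1-\beta} = p^{\beta}\,\ESS_{\beta,n}^{1-\beta} + (1-p)^{\beta}\,\tilde{\ESS}_{\beta,m}^{1-\beta}.
\]

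\textbf{Case split on $\beta$.}
\begin{enumerate}[label=\upshape(\roman*)]
\item $\beta > 1$. The exponent $1-\beta$ is negative, so $\tilde{\ESS}_{\beta,m} \ge \ESS_{\beta,n}$ gives $\tilde{\ESS}_{\beta,m}^{1-\beta} \le \ESS_{\beta,n}^{1-\beta}$. Therefore the right-hand side is at most $(p^\beta + (1-p)^\beta)\,\ESS_{\beta,n}^{1-\beta}$. For $p \in (0,1)$ and $\beta > 1$ we have $p^\beta + (1-p)^\beta < 1$, so $\ESS_{\beta,n+m}^{1-\beta} < \ESS_{\beta,n}^{1-\beta}$. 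Raising both sides to the negative power $1/(1-\beta)$ reverses the inequality and yields $\ESS_{\beta,n+m} > \ESS_{\beta,n}$.
\item $0 < \beta < 1$. The exponent $1-\beta$ is positive, so $\tilde{\ESS}_{\beta,m}^{1-\beta} \ge \ESS_{\beta,n}^{1-\beta}$. For $p \in (0,1)$ we now have $p^\beta + (1-p)^\beta > 1$, so $\ESS_{\beta,n+m}^{1-\beta} > \ESS_{\beta,n}^{1-\beta}$. Taking the positive power $1/(1-\beta)$ preserves the direction and gives the same conclusion.
\item $\beta = 1$. Use the grouping rule for Shannon entropy:
\[
\entropy_1(\valpha) = h(p) + p\,\entropy_1(\vu) + (1-p)\,\entropy_1(\vw),
\]
where $h(p) = -p\ln p - (1-p)\ln(1-p)$ is the binary entropy. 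The hypothesis gives $\entropy_1(\vw) \ge \entropy_1(\vu)$, so $\entropy_1(\valpha) \ge h(p) + \entropy_1(\vu)$. Since $h(p) > 0$ for $p \in (0,1)$, this gives $\entropy_1(\valpha) > \entropy_1(\vu)$. Exponentiating yields $\ESS_{1,n+m} > \ESS_{1,n}$. Alternatively, this case follows by letting $\beta \to 1$ in the mixture identity, but that limit gives only a non-strict inequality, so the direct entropy argument is preferable.
\end{enumerate}

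In every case $\relESS_{\beta} > 0$. The value vectors are bounded by assumption, so \cref{thm:ess retrieval} applies and supplies the stated bounds on $\|\vo_{n+m} - \vo_n\|$.

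\textbf{Main obstacle.} The only delicate points are the sign bookkeeping of the exponent $1-\beta$ across the two regimes $\beta > 1$ and $\beta < 1$, and establishing strictness. Strictness comes entirely from $p \in (0,1)$. If one allowed the degenerate case $p \in \{0,1\}$, i.e.\ no noise mass or no signal mass, the inequality could become an equality and $\relESS_{\beta} > 0$ would fail.
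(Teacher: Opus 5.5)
Your proof is correct and is essentially the paper's argument. Your mixture identity is the paper's relation $(1+\relESS_{\beta})^{\beta-1} = (1+R_1)^{\beta}/(1+R_{\beta})$ rewritten with $p = 1/(1+R_1)$, and your key inequality $p^{\beta} + (1-p)^{\beta} < 1$ is the paper's step $(1+R_1)^{\beta} > 1 + R_1^{\beta}$. You also write out the cases $0<\beta<1$ and $\beta=1$ (the latter via the entropy grouping rule), which the paper only asserts follow analogously.
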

\begin{proof}
    We defer the proof to \cref{sec:proof of thm:ess retrieval}.
\end{proof}

\subsection{Global Aggregation Heads}\label{sec:aggregation}

An aggregation head is an attention head that extracts global statistics or aggregates information across the full context.
It must therefore attend broadly to all tokens and adapt to changes in context length.

\paragraph{Problem Setup.}
Consider a value-aggregation task on a sequence of length $n$.
The aggregation rule depends on the positions of the values and is specified by a strictly positive probability density $\agg : [0, 1] \to (0, \infty)$ (e.g., $\agg \equiv 1$ corresponds to uniform averaging).
The target attention output is the scale-invariant weighted average of the value vectors $\vv_1, \dots, \vv_n$:
\begin{equation}
    \vo_n^*
    = \sum_{i=1}^{n} \vv_i \hat{\tgt}_{n, i}
    \approx \int_{0}^{1} \vv_{\ceil{n x}} \agg(x) \dif x,
\end{equation}
where $\hat{\tgt}_{n, i} = \agg(i / n) / \sum_{j=1}^{n} \agg(j / n)$ are the weights of the discrete distribution induced by $\agg$.
For each $n$, our goal is to approximate $\vo_n^*$ using the single-layer attention output at the last token, $\vo_n$.
Equivalently, the attention weights should approximate the target weight vector $(\hat{\tgt}_{n, 1}, \dots, \hat{\tgt}_{n, n})$.

The following \cref{thm:ess aggregation} shows that, to approximate global aggregation tasks within an $\varepsilon$ tolerance, $\ESS_{\beta}(\valpha)$ must grow linearly with the sequence length.

\begin{restatable}[$\ESS_{\beta}$ scales in aggregation tasks]{thm}{thmessaggregation}
\label{thm:ess aggregation}
    Suppose the target density function $\agg: [0, 1] \to (0, \infty)$ is Riemann integrable.
    Let $\vtgt = (\tgt_1, \dots, \tgt_n)$ be the target distribution induced by $\agg$, and let $\valpha = (\alpha_1, \dots, \alpha_n)$ be the attention weight distribution for sequence length $n$. 
    For any $\beta > 0$ and $\varepsilon > 0$, if $\diverge_{\beta}(\valpha \parallel \vtgt) \le \varepsilon$, then $\ESS_{\beta}(\valpha) = \bigTheta(n)$.
    Moreover, there exists a constant $C > 0$ depending only on $\agg$ such that
    \begin{equation}
        C e^{-\varepsilon} n \le \ESS_{\beta}(\valpha) \le n.
    \end{equation}
\end{restatable}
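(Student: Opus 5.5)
The plan is to reduce the statement to a comparison between the Rényi divergence from the target $\vtgt$ and the Rényi divergence from the uniform distribution $\vu$. The bridge is the identity $\ESS_{\beta}(\valpha) = n \exp(-\diverge_{\beta}(\valpha \parallel \vu))$, equivalently $\entropy_\beta(\valpha) = \ln n - \diverge_\beta(\valpha\parallel\vu)$ (the Facts in \cref{sec:ess}). The upper bound $\ESS_{\beta}(\valpha) \le n$ is just the Fact on bounds of ESS. So all the work is in the lower bound, and that bound only needs the target weights to be uniformly at most a constant multiple of $1/n$.

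\textbf{Step 1: uniform upper bound on the target weights.} A Riemann integrable function is bounded, so $\agg \le M$ on $[0,1]$. The Riemann sums $\frac1n\sum_{j=1}^n \agg(j/n)$ converge to $I \coloneqq \int_0^1 \agg > 0$, since $\agg > 0$. Hence there is $N_0$, depending only on $\agg$, with $\frac1n\sum_j \agg(j/n) \ge I/2$ for all $n \ge N_0$. For such $n$,
\[
\tgt_i \le \frac{M'}{n}, \qquad M' \coloneqq \frac{2M}{I}, \qquad \text{for all } i.
\]
No lower bound on $\agg$ is needed, which matters because a positive Riemann integrable function may have infimum $0$.

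\textbf{Step 2: compare the divergences, for $\beta \neq 1$.} Write $S = \sum_i \alpha_i^\beta \tgt_i^{1-\beta}$.
\begin{itemize}
\item If $\beta > 1$, the exponent $1-\beta$ is negative, so $\tgt_i \le M'/n$ gives $\tgt_i^{1-\beta} \ge (M'/n)^{1-\beta}$ and hence $S \ge (M'/n)^{1-\beta}\sum_i\alpha_i^\beta$. Taking logarithms and dividing by $\beta - 1 > 0$ preserves the inequality.
\item If $0<\beta<1$, the exponent is positive, so $S \le (M'/n)^{1-\beta}\sum_i\alpha_i^\beta$. Dividing the logarithm by $\beta-1<0$ flips the inequality.
\end{itemize}
In both cases one obtains the same bound:
\[
\diverge_\beta(\valpha\parallel\vtgt) \ge \ln n - \ln M' - \entropy_\beta(\valpha).
\]
Therefore $\entropy_\beta(\valpha) \ge \ln n - \ln M' - \varepsilon$, and exponentiating gives $\ESS_\beta(\valpha) \ge e^{-\varepsilon} n / M'$.

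\textbf{Step 3: $\beta = 1$ and small $n$.} For $\beta=1$ the same pointwise bound $-\ln \tgt_i \ge \ln n - \ln M'$ applies directly inside the KL divergence. Alternatively, one can pass to the limit $\beta\to1$.

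For $n < N_0$, use the trivial bound $\ESS_\beta(\valpha) \ge 1 \ge e^{-\varepsilon} n/N_0$.

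Setting $C = \min\{I/(2M),\, 1/N_0\}$ then gives the claim for all $n$, and in particular $\ESS_\beta(\valpha) = \bigTheta(n)$.

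The main obstacle is Step 1: it is the only place where the regularity of $\agg$ enters. Care is needed to make $N_0$ depend only on $\agg$, and to verify that only the upper bound on $\tgt_i$ is used for every $\beta$, including the sign flip in the case $\beta<1$.
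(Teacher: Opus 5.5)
Your proposal is correct and follows essentially the same route as the paper. Both arguments first derive the delocalization bound $\max_i \tgt_i \le C/n$ from the boundedness of $\agg$ and the convergence of its Riemann sums, then insert it termwise into $\sum_i \alpha_i^{\beta}\tgt_i^{1-\beta}$ with the direction of the inequality set by the sign of $1-\beta$, and handle $\beta=1$ directly through the KL divergence. The differences are cosmetic: you merge the two $\beta\neq 1$ cases into the single bound $\diverge_\beta(\valpha\parallel\vtgt)\ge \ln n-\ln M'-\entropy_\beta(\valpha)$ and handle small $n$ via $\ESS_\beta(\valpha)\ge 1$, whereas the paper absorbs small $n$ into the constant $K=\min\{R_1,\dots,R_N,1/2\}$.
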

\begin{proof}
    We defer the proof to \cref{sec:proof of thm:ess aggregation}.
\end{proof}

\subsection{Theoretical Foundations of AdaScale}\label{sec:asymptotic}
 
Motivated by \cref{thm:ess retrieval,thm:ess aggregation}, we assume that each attention head performs a specific task (e.g., retrieval or aggregation) and is characterized by a head-specific $\beta > 0$.
The task determines the desired behavior of $\ESS_{\beta}(\valpha)$ as the context length $L$ increases: $\ESS_{\beta}(\valpha)$ should remain approximately constant for a retrieval head (\cref{thm:ess retrieval}) and grow as $\bigTheta(L)$ for a global aggregation head (\cref{thm:ess aggregation}).
By \cref{eqn:effective sequence length}, different scaling factors $\scale$ can induce different asymptotic behaviors of $\ESS_{\beta}(\valpha)$.
Consider $\beta = 2$ as an example.
Suppose that $\vs = (s_1, \dots, s_L)$ is a one-hot vector.
Then
\[
    \begin{aligned}
        \ESS_2(\valpha) 
        = \frac{\left( e^{\scale} + L - 1 \right)^2}{e^{2 \scale} + L - 1}
        = \frac{1 + 2 (L - 1) e^{-\scale} + (L - 1)^2 e^{-2 \scale}}{1 + (L - 1) e^{-2 \scale}}
        \le 1 + 2 L e^{-\scale} + L^2 e^{-2 \scale}.
    \end{aligned}
\]
If $e^{\scale} = \bigOmega(L)$, then $\ESS_2(\valpha) = \bigO(1)$; in contrast, if $e^{\scale} = \bigO(1)$, then $\ESS_2(\valpha) = \bigTheta(L)$.
In this subsection, we consider a more general setting in which $\vs$ is a random vector.
Under this setting, we identify scaling factors $\scale(L)$ for which $\ESS_{\beta}(\valpha)$ exhibits the prescribed asymptotic behavior (\cref{thm:scale nope,thm:scale rope}). 
This analysis leads to the per-head AdaScale inverse temperature used in \cref{sec:adascale}.

\paragraph{Problem Setup.}
We consider a simplified model in which query and key vectors are correlated Gaussian distributed.
Fix a context length $L$. 
Let $\vq = (q_1, \dots, q_d)$ denote the query vector at position $L$, and assume $\vq \sim \mathcal{N}(\bm{0}_d, \mI_d)$. 
For each $i \in \set{1, \dots, L}$, the key vectors $\vk_i = (k_{i, 1}, \dots, k_{i, d})$ are conditionally i.i.d. given $\vq$, with $\vk_i = \frac{1}{\sqrt{d}} \corr \vq + \sigma \vz_i$, where $\corr, \sigma > 0$ are constants and $\vz_i \iid \mathcal{N}(\bm{0}_d, \mI_d)$.

In the following analysis, we investigate the relationship between the scaling factor $\scale(L)$ and the effective sequence length $\ESS_{\beta}(\valpha)$. 
However, by \cref{eqn:effective sequence length}, $\ESS_{\beta}(\valpha)$ is a random variable (since the query and key vectors are random) and it would be more convenient to study its limiting behavior (as $d$ and $L$ approach infinity).

Note that our correlated Gaussian query–key setup corresponds to the Gaussian logits assumption in \citep{anson2025scale} as $d \to \infty$, and generalizes the independent query–key model that yields zero-mean logits \citep{barbero2025round}, thereby capturing effects beyond the initialization regime.
Under this model, we first show in \cref{pro:clt logits} that the attention logits jointly converge to a multivariate normal distribution with diagonal covariance. 
In the NoPE case the logits are asymptotically i.i.d.; in the RoPE case the limiting mean depends on the token position.

\begin{restatable}[Infinite-width limit of logits]{pro}{procltlogits}
\label{pro:clt logits}
    Let $\theta_f$, $f \in \set{0, \dots, d/2-1}$ denote the rotation frequencies. Then
    \[
        (s_1, \dots, s_L) \dto \mathcal{N}\left( (\mu_1, \dots, \mu_L), \sigma^2 \mI_L \right),
        \quad \textnormal{ as } d \to \infty.
    \]
    where 
    \begin{equation}
        \mu_i = \corr \lim_{d \to \infty} \frac{2}{d} \sum_{f=0}^{d/2 - 1} \cos\left( (i-L) \theta_f \right),
        \quad \forall i \in \set{1, \dots, L}.
    \end{equation}
    Moreover, in the NoPE case, where $\theta_f = 0$ for all $f \in \set{0, \dots, d/2-1}$,
    \begin{equation}\label{eqn:clt nope}
        \mu_i = \corr,
        \quad \forall i \in \set{1, \dots, L}.
    \end{equation}
    In the RoPE case, where $\theta_f = b^{-2f/d}$ for $f \in \set{0, \dots, d/2-1}$,
    \begin{equation}\label{eqn:clt rope}
        \mu_i = \corr \int_{0}^{1} \cos\left( (i-L) b^{-x} \right) \dif x,
        \quad \forall i \in \set{1, \dots, L}.
    \end{equation}
\end{restatable}
\begin{proof}
    We defer the proof to \cref{sec:proof of pro:clt logits}.
\end{proof}

On the other hand, assuming the logits are independently Gaussian-distributed, \cref{pro:lln ess} characterizes the conditions under which the law of large numbers holds as the sequence length $L \to \infty$.
The effect of the rotation frequencies appears in the limiting behavior of $\ESS_{\beta}(\valpha)$.
For convenience, we introduce the following notation.
Let $\mathring{\valpha} = (\mathring{\alpha}_1, \dots, \mathring{\alpha}_L)$ denote the limit of $\valpha$ as $\sigma \to 0$, where
\begin{equation}
    \mathring{\alpha}_i = \frac{e^{\scale \mu_i}}{\sum_{j=1}^{L} e^{\scale \mu_j}} = \frac{e^{\scale \mu_i}}{Z(\vmu; \scale)}.
\end{equation}
Then, from \cref{eqn:effective sequence length}, we have
\begin{equation}\label{eqn:ess without variance}
    \ESS_{\beta}(\mathring{\valpha}) 
    = \left( \sum_{i=1}^{L} \left( \frac{ e^{\scale \mu_i} }{\sum_{j=1}^{L} e^{\scale \mu_j}} \right)^{\beta} \right)^{\frac{1}{1 - \beta}}
    = \frac{\left( \sum_{i=1}^{L} e^{\scale \mu_i} \right)^{\frac{\beta}{\beta - 1}}}{\left( \sum_{i=1}^{L} e^{\beta \scale \mu_i} \right)^{\frac{1}{\beta - 1}}}
    = \frac{Z(\vmu; \scale)^{\frac{\beta}{\beta - 1}}}{Z(\vmu; \beta \scale)^{\frac{1}{\beta - 1}}}.
\end{equation}
Accordingly, we define a law-of-large-numbers proxy for the effective sequence length $\ESS_{\beta}(\valpha)$ by
\begin{equation}\label{eqn:ess growth}
    \begin{aligned}
        \growth_{\beta}(L)
        &\coloneqq \frac{\left( \EE Z(\vs; \scale) \right)^{\frac{\beta}{\beta - 1}}}{\left( \EE Z(\vs; \beta \scale) \right)^{\frac{1}{\beta - 1}}}
        = \frac{\left( \sum_{i=1}^{L} \EE e^{\scale \mu_i} \right)^{\frac{\beta}{\beta - 1}}}{\left( \sum_{i=1}^{L} \EE e^{\beta \scale \mu_i} \right)^{\frac{1}{\beta - 1}}}
        = \frac{\left( \sum_{i=1}^{L} e^{\scale \mu_i + \frac{1}{2} \scale^2 \sigma^2} \right)^{\frac{\beta}{\beta - 1}}}{\left( \sum_{i=1}^{L} e^{\beta \scale \mu_i + \frac{1}{2} \beta^2 \scale^2 \sigma^2} \right)^{\frac{1}{\beta - 1}}} \\
        &= e^{- \frac{\beta}{2} \scale^2 \sigma^2} \frac{Z(\vmu; \scale)^{\frac{\beta}{\beta - 1}}}{Z(\vmu; \beta \scale)^{\frac{1}{\beta - 1}}}
        = e^{- \frac{\beta}{2} \scale^2 \sigma^2} \ESS_{\beta}(\mathring{\valpha}),
    \end{aligned}
\end{equation}

\begin{restatable}[Infinite-length limit of $\ESS_{\beta}$]{pro}{prollness}
\label{pro:lln ess}
    Let the logits $s_1, \dots, s_L$ be independent Gaussian random variables with common variance $\sigma^2$, i.e., $s_i \sim \mathcal{N}(\mu_i, \sigma^2)$. 
    Write $\vmu \coloneqq (\mu_1, \dots, \mu_L)$ and assume $\| \vmu \|_{\infty} \coloneqq \max_{1 \le i \le L} | \mu_i | < \infty$.
    Let the inverse temperature $\scale = \scale(L)$ be a deterministic function of $L$.
    Define the scaling parameter
    \begin{equation}
        \crit \coloneqq \limsup_{L \to \infty} \frac{\scale(L) \sigma}{\sqrt{\ln L}}.
    \end{equation}
    Then the following hold:
    \begin{enumerate}[label=\upshape(\roman*)]
        \item\label{it1:lln ess}
        If $\crit < \sqrt{2} \min\set{1/\beta, 1}$, then
        \begin{equation}
            \frac{\ESS_{\beta}(\valpha)}{\growth_{\beta}(L)} \pto 1 \quad \textnormal{ as } L \to \infty.
        \end{equation}
        \item\label{it2:lln ess}
        If $\crit = \sqrt{2} \min\set{1/\beta, 1}$ and $\ESS_{2}(\mathring{\valpha}) = \littleomega(L / \sqrt{\ln L})$, then
        \begin{equation}
            \frac{\ESS_{\beta}(\valpha)}{\growth_{\beta}(L)} \pto 2^{\frac{1}{\max\set{\beta, \beta^{-1}} - 1}} \quad \textnormal{ as } L \to \infty.
        \end{equation}
        \item\label{it3:lln ess}
        If $\crit > \sqrt{2} / \beta$, then
        \begin{equation}
            \liminf_{L \to \infty} \growth_{\beta}(L) = 0.
        \end{equation}
        Moreover, if $\crit = \infty$, then
        \begin{equation}
            \ESS_{\beta}(\valpha) \pto 1 \quad \textnormal{ as } L \to \infty.
        \end{equation}
    \end{enumerate}
\end{restatable}
\begin{proof}
    We defer the proof to \cref{sec:proof of pro:lln ess}.
\end{proof}

Define $\ESS_{\beta}^*(L)$ as the asymptotic limit of $\ESS_{\beta}(\valpha)$ in \cref{pro:lln ess} under the law of large numbers, so that $\ESS_{\beta}(\valpha) / \ESS_{\beta}^*(L) \pto 1$ as $L \to \infty$.\footnote{If the law of large numbers does not hold, the limit of $\ESS_{\beta}(\valpha)$ may remain random and therefore cannot be used in the AdaScale criterion, which requires a deterministic limit of $\ESS_{\beta}$.}
Moreover, if the asymptotic forms of $Z(\vmu; \scale)$ and $Z(\vmu; \beta \scale)$ are known, the scaling factor $\scale(L)$ can be recovered from \cref{pro:lln ess}. 
We apply this approach to NoPE and RoPE in \cref{thm:scale nope,thm:scale rope}, respectively.

\begin{restatable}[Scaling factor of NoPE]{thm}{thmscalenope}
\label{thm:scale nope}
    Suppose $\theta_f = 0$ for all $f \in \set{0, \dots, d/2-1}$, and let $\valpha = (\alpha_1, \dots, \alpha_L)$ denote the attention weights.
    Let $\ESS_{\beta}^*(L)$ denote the head-specific asymptotic limit of $\ESS_{\beta}(\valpha)$ under the law of large numbers.
    If one of the following conditions holds
    \begin{enumerate}[label=\upshape(\roman*)]
        \item\label{it1:scale nope} 
        $\displaystyle\liminf_{L \to \infty} \frac{\ln \ESS_{\beta}^*(L)}{\ln L} > 1 - \min\set{\beta, \beta^{-1}}$ and
        \begin{equation}\label{eq1:scale nope}
            \scale(L) = \sqrt{\frac{2}{\beta \sigma^2} \ln \left( \frac{L}{\ESS_{\beta}^*(L)} \right)};
        \end{equation}
        \item\label{it2:scale nope} 
        $\displaystyle\lim_{L \to \infty} \frac{\ln \ESS_{\beta}^*(L)}{\ln L} = 1 - \min\set{\beta, \beta^{-1}}$ and
        \begin{equation}\label{eq2:scale nope}
            \scale(L) = \sqrt{\frac{2}{\beta \sigma^2} \ln \left( \frac{2^{\frac{1}{\max\set{\beta, \beta^{-1}} - 1}} L}{\ESS_{\beta}^*(L)} \right)};
        \end{equation}
        \item\label{it3:scale nope} 
        $\displaystyle\lim_{L \to \infty} \ESS_{\beta}^*(L) = 1$ and
        \begin{equation}\label{eq3:scale nope}
            \scale(L) = \littleomega(\sqrt{\ln L}) \quad \textnormal{ as } L \to \infty;
        \end{equation}
    \end{enumerate}
    then
    \[
        \frac{\ESS_{\beta}(\valpha)}{\ESS_{\beta}^*(L)} \pto 1
        \quad \textnormal{ as } d \to \infty \textnormal{ then } L \to \infty.
    \]
\end{restatable}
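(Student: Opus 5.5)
The plan is to reduce \cref{thm:scale nope} to the two limit results already established, \cref{pro:clt logits} for the limit $d \to \infty$ and \cref{pro:lln ess} for the limit $L \to \infty$. The theorem then follows by checking that each prescribed $\scale(L)$ lands in the matching regime of \cref{pro:lln ess}, and that the corresponding deterministic proxy equals $\ESS_{\beta}^*(L)$.

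\textbf{Step 1: the limit $d \to \infty$.} For fixed $L$, \cref{pro:clt logits} with $\theta_f \equiv 0$ gives, by \cref{eqn:clt nope}, $(s_1,\dots,s_L) \dto \mathcal{N}(\corr \bm{1}_L, \sigma^2 \mI_L)$. The map $\vs \mapsto \ESS_{\beta}(\valpha)$ in \cref{eqn:effective sequence length} is continuous in $\vs$ for fixed $\scale$. By the continuous mapping theorem, the $d$-limit of $\ESS_{\beta}(\valpha)$ is the same functional evaluated at i.i.d. logits $s_i \sim \mathcal{N}(\corr, \sigma^2)$. This is exactly the setting of \cref{pro:lln ess}, with $\mu_i \equiv \corr$ and $\mumax = |\corr| < \infty$.

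\textbf{Step 2: compute the proxy.} Since $\vmu$ is constant, $\mathring{\valpha}$ is uniform, so $\ESS_{\beta}(\mathring{\valpha}) = L$ and $\ESS_2(\mathring{\valpha}) = L = \littleomega(L/\sqrt{\ln L})$. By \cref{eqn:ess growth},
\[
    \growth_{\beta}(L) = L\, e^{-\frac{\beta}{2}\scale^2\sigma^2}.
\]
Setting this equal to $\ESS_{\beta}^*(L)$, or to $\ESS_{\beta}^*(L)$ divided by the constant $2^{1/(\max\set{\beta,\beta^{-1}}-1)}$ in the critical case, and solving for $\scale$ gives exactly \cref{eq1:scale nope} and \cref{eq2:scale nope}. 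It remains to verify which regime of \cref{pro:lln ess} each choice falls into.

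\textbf{Step 3: regime verification.}
\begin{itemize}
    \item \emph{Case \ref{it1:scale nope}.} Here
    \[
        \frac{\scale^2\sigma^2}{\ln L} = \frac{2}{\beta}\left(1 - \frac{\ln \ESS_{\beta}^*(L)}{\ln L}\right).
    \]
    Hence $\crit^2 = \frac{2}{\beta}\bigl(1 - \liminf_{L \to \infty} \frac{\ln \ESS_{\beta}^*(L)}{\ln L}\bigr) < \frac{2}{\beta}\min\set{\beta,\beta^{-1}} = 2\min\set{1,\beta^{-2}}$. So $\crit < \sqrt{2}\min\set{1/\beta,1}$, and part \ref{it1:lln ess} gives $\ESS_{\beta}(\valpha)/\growth_{\beta}(L) \pto 1$ with $\growth_{\beta}(L) = \ESS_{\beta}^*(L)$.
    \item \emph{Case \ref{it2:scale nope}.} The same computation, now using a genuine limit, gives $\crit = \sqrt{2}\min\set{1/\beta,1}$; the extra $\ln 2^{(\cdot)}$ term is $\littleo(\ln L)$ and does not affect $\crit$. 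The hypothesis $\ESS_2(\mathring{\valpha}) = L = \littleomega(L/\sqrt{\ln L})$ holds automatically, so part \ref{it2:lln ess} applies, and its factor $2^{1/(\max\set{\beta,\beta^{-1}}-1)}$ is exactly cancelled by the constant built into \cref{eq2:scale nope}.
    \item \emph{Case \ref{it3:scale nope}.} Since $\scale = \littleomega(\sqrt{\ln L})$, we have $\crit = \infty$, so part \ref{it3:lln ess} gives $\ESS_{\beta}(\valpha) \pto 1 = \lim_{L \to \infty} \ESS_{\beta}^*(L)$.
\end{itemize}

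\textbf{Main obstacle.} I expect the hard part to be rigor rather than computation. First, the iterated limit ``$d \to \infty$ then $L \to \infty$'' has to be stated carefully. For each $L$, Step 1 replaces the law of $\ESS_{\beta}(\valpha)$ by its i.i.d. Gaussian limit, and \cref{pro:lln ess} is then applied to that limiting family; this is why the conclusion holds in the iterated sense only. Second, in case \ref{it2:scale nope} the critical-case statement of \cref{pro:lln ess} is phrased via $\limsup$, so I need the exact limit hypothesis to pin $\crit$ at the boundary value. Third, the degenerate order $\beta = 1$ (where $\max\set{\beta,\beta^{-1}} - 1 = 0$) would be handled by the convention $\ESS_1 = \lim_{\beta \to 1} \ESS_{\beta}$ or excluded from case \ref{it2:scale nope}. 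I would first try to dispatch it through the same continuity-in-$\beta$ argument used to define $\ESS_1$.
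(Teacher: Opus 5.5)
Your proposal is correct and follows the paper's proof essentially step for step. The common route is: \cref{pro:clt logits} with $\mu_i \equiv \corr$ plus the continuous mapping theorem for the limit $d \to \infty$; then the proxy $\growth_{\beta}(L) = L e^{-\frac{\beta}{2}\scale^2\sigma^2}$ solved for $\scale$; then a regime-by-regime check of $\crit$ against \cref{pro:lln ess}. The step you leave implicit in the iterated limit is handled in the paper by the Portmanteau theorem on the closed set $\set{|x-1| \ge \varepsilon}$, which gives $\limsup_{d} \PP(|\ESS_{\beta}(\valpha)/\ESS_{\beta}^*(L) - 1| \ge \varepsilon) \le \PP(|\ESS_{\beta}(\vg)/\ESS_{\beta}^*(L) - 1| \ge \varepsilon)$ for the Gaussian limit $\vg$; \cref{pro:lln ess} then sends the right-hand side to $0$ as $L \to \infty$.
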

\begin{proof}
    We defer the proof to \cref{sec:proof of thm:scale nope}.
\end{proof}

\begin{remark}
    By \cref{eqn:ess growth}, the square-root dependence in \cref{eq1:scale nope,eq2:scale nope,eq3:scale nope} arises from the logarithm of the moment generating function (MGF) of a Gaussian distribution.
    Thus, the derivation can be extended to other assumptions on the logit distribution, such as Laplace or L\'evy $\alpha$-stable distributions with $\alpha \in [1, 2)$, which would yield different forms of $\scale(L)$.
\end{remark}

\begin{restatable}[Scaling factor of RoPE]{thm}{thmscalerope}
\label{thm:scale rope}
    Suppose $\theta_f = b^{-2f/d}$ for all $f \in \set{0, \dots, d/2-1}$, and let $\valpha = (\alpha_1, \dots, \alpha_L)$ denote the attention weights.
    Let $\ESS_{\beta}^*(L)$ denote the head-specific asymptotic limit of $\ESS_{\beta}(\valpha)$ under the law of large numbers.
    If $\scale(L) \to \infty$ and $\scale(L) \ln L = \littleo(e^{\corr \min\set{1, \beta} \scale(L)})$ as $L \to \infty$, and one of the following conditions holds
    \begin{enumerate}[label=\upshape(\roman*)]
        \item\label{it1:scale rope}
        $\displaystyle\liminf_{L \to \infty} \frac{\ln \ESS_{\beta}^*(L)}{\ln L} > 1 - \min\set{\beta, \beta^{-1}}$ and 
        \begin{equation}
            \begin{aligned}
                \scale(L) 
                &= \sqrt{\frac{2}{\beta \sigma^2} \ln \left( \frac{1}{\ESS_{\beta}^*(L)} \left( L + \bigO\left( e^{\corr \max\set{1, \beta} \scale(L)} \right) \right) \right)} \quad \textnormal{ as } L \to \infty;
            \end{aligned}
        \end{equation}
        \item\label{it2:scale rope} 
        $\displaystyle\lim_{L \to \infty} \frac{\ln \ESS_{\beta}^*(L)}{\ln L} = 1 - \min\set{\beta, \beta^{-1}}$ and
        \begin{equation}
            \begin{aligned}
                \scale(L) 
                &= \sqrt{\frac{2}{\beta \sigma^2} \ln \left( \frac{2^{\frac{1}{\max\set{\beta, \beta^{-1}} - 1}}} {\ESS_{\beta}^*(L)} \left( L + \bigO\left( e^{\corr \max\set{1, \beta} \scale(L)} \right) \right) \right)} \quad \textnormal{ as } L \to \infty;
            \end{aligned}
        \end{equation}
        \item\label{it3:scale rope} 
        $\displaystyle\lim_{L \to \infty} \ESS_{\beta}^*(L) = 1$ and
        \begin{equation}
            \scale(L) = \littleomega(\sqrt{\ln L}) \quad \textnormal{ as } L \to \infty;
        \end{equation}
    \end{enumerate}
    then
    \[
        \frac{\ESS_{\beta}(\valpha)}{\ESS_{\beta}^*(L)} \pto 1
        \quad \textnormal{ as } d \to \infty \textnormal{ then } L \to \infty.
    \]
\end{restatable}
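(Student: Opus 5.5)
The plan mirrors the NoPE argument (\cref{thm:scale nope}). The only new ingredient is the asymptotics of the partition functions $Z(\vmu;\scale)$ and $Z(\vmu;\beta\scale)$ when $\vmu$ is the RoPE mean profile of \cref{eqn:clt rope}.

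First take $d\to\infty$ using \cref{pro:clt logits}. For fixed $L$, the logits converge in distribution to independent $\mathcal{N}(\mu_i,\sigma^2)$ variables with $\mu_i=\corr\int_0^1\cos((i-L)b^{-x})\dif x$. Since $\ESS_\beta(\valpha)$ is a continuous function of $\vs$, the continuous mapping theorem reduces the problem to the independent Gaussian model of \cref{pro:lln ess}, with $\mumax\le\corr<\infty$.

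By \cref{eqn:ess growth}, $\growth_\beta(L)=e^{-\frac{\beta}{2}\scale^2\sigma^2}Z(\vmu;\scale)^{\frac{\beta}{\beta-1}}Z(\vmu;\beta\scale)^{-\frac{1}{\beta-1}}$. The key lemma I would prove is
\[
Z(\vmu;\kappa\scale)=L+\bigO\bigl(e^{\corr\kappa\scale}\bigr)\quad\text{for }\kappa\in\set{1,\beta},
\]
with the error term small relative to $L$ under the hypothesis $\scale\ln L=\littleo(e^{\corr\min\set{1,\beta}\scale})$. To prove it, write $m=L-i\ge0$ and $g(m)=\int_0^1\cos(mb^{-x})\dif x$. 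Substituting $u=mb^{-x}$ gives $g(m)=\frac{1}{\ln b}\int_{m/b}^{m}\frac{\cos u}{u}\dif u=\frac{1}{\ln b}\bigl(\Ci(m)-\Ci(m/b)\bigr)$. This yields two regimes. For large $m$, $|g(m)|=\bigO(1/m)$. For small $m$, $g(m)\le 1$, with $g(m)=1-\bigO(m^2)$ near $0$.

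Now split $\sum_i e^{\kappa\scale\mu_i}$ at a cutoff $M$. The head terms with $m\le M$ contribute at most $(M+1)e^{\kappa\corr\scale}$. For the tail terms with $m>M$, write $e^{\kappa\scale\mu}=1+\bigO(\kappa\scale\corr/m)$ provided $\scale/M\to0$. Summing over $m$ then gives $L+\bigO(\scale\ln L)$, up to the head part.

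The main obstacle is choosing $M$ and keeping the middle range controlled. The head bound should really be $\bigO(e^{\kappa\corr\scale})$ times a quantity that stays bounded, not $M$. I would handle this by arguing that $\sum_{m}e^{\kappa\scale\corr g(m)}$ over $m\le \scale$ is dominated by the few $m$ with $g(m)$ near $1$. This works because $g$ decays from $1$ at $m=0$ at a fixed rate, so $\sum_m e^{\kappa\scale\corr(g(m)-1)}=\bigO(1)$ plus a contribution from $m$ where $g(m)\le 1-c$, which is at most $\scale e^{\kappa\corr\scale(1-c)}=\littleo(e^{\kappa\corr\scale})$. The tail error $\scale\ln L$ is absorbed by the stated hypothesis.

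With the lemma in hand, $\ln\growth_\beta(L)=\ln L-\frac{\beta}{2}\scale^2\sigma^2+\ln\bigl(1+\bigO(e^{\corr\max\set{1,\beta}\scale}/L)\bigr)$, where the exponent combination gives the $\max$. Setting this equal to $\ln\ESS_\beta^*(L)$ and solving for $\scale$ produces the formula in case (i). The condition $\liminf\ln\ESS_\beta^*/\ln L>1-\min\set{\beta,\beta^{-1}}$ translates to $\crit<\sqrt2\min\set{1/\beta,1}$, so \cref{pro:lln ess}(i) gives $\ESS_\beta(\valpha)/\growth_\beta(L)\pto1$.

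Case (ii) is identical except that it sits at the boundary $\crit=\sqrt2\min\set{1/\beta,1}$. There \cref{pro:lln ess}(ii) contributes the factor $2^{1/(\max\set{\beta,\beta^{-1}}-1)}$, which is inserted into the logarithm. For that step I need to check $\ESS_2(\mathring\valpha)=\littleomega(L/\sqrt{\ln L})$, which also follows from the lemma since $\ESS_2(\mathring\valpha)\approx L$.

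Case (iii) follows directly: $\scale=\littleomega(\sqrt{\ln L})$ means $\crit=\infty$, so \cref{pro:lln ess}(iii) gives $\ESS_\beta(\valpha)\pto1$.

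Finally, I would justify the iterated limit ($d\to\infty$ then $L\to\infty$) by noting that the $d$-limit is taken at each fixed $L$, exactly as in the NoPE case.
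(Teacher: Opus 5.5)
Your proposal is correct and follows the paper's proof essentially step for step. It uses the same $d\to\infty$ reduction to independent Gaussian logits with $|\mu_i|\le\corr$ and the same identity $\int_0^1\cos(kb^{-x})\dif x=(\Ci(k)-\Ci(k/b))/\ln b$, which gives $\bigO(1/k)$ decay. It also uses the same split at $k\approx\scale$: the head is $\bigO(e^{\kappa\corr\scale})$ because the integral equals $1$ at $k=0$ and stays below a fixed $c<1$ for integers $k\ge1$, and the $\bigO(\scale\ln L)$ tail is absorbed by $\scale\ln L=\littleo(e^{\corr\min\set{1,\beta}\scale})$.

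One correction. You say this hypothesis keeps the error small relative to $L$, but it bounds $e^{\corr\scale}$ from below, so it cannot make $e^{\corr\max\set{1,\beta}\scale}$ small relative to $L$. You need that smallness when expanding $\ln\growth_\beta(L)$ and when checking $\ESS_2(\mathring{\valpha})\approx L$ in case (ii). It comes instead from cases (i)--(ii): there the prescribed formula for $\scale$ together with $\scale\to\infty$ forces $\scale=\bigO(\sqrt{\ln L})$, hence $e^{\corr\max\set{1,\beta}\scale}=L^{\littleo(1)}$.
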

\begin{proof}
    We defer the proof to \cref{sec:proof of thm:scale rope}.
\end{proof}

\begin{remark}
    Unlike the NoPE result in \cref{thm:scale nope}, we do not derive an explicit expression for $\scale(L)$ under RoPE.
    Instead, we obtain an asymptotic relation analogous to the NoPE case, in which $\scale(L)$ appears on both sides.
    For suitable choices of $\scale(L)$, this relation can be viewed as a small correction to the NoPE expression; for example, if $\scale(L)$ is chosen such that $e^{\corr \max\set{1, \beta} \scale(L)} = \littleo(L)$, then the correction term is negligible.
\end{remark}

\begin{remark}[Implication for AdaScale]
    Because the scaling factor $\scale(L)$ depends on the head-specific $\ESS_{\beta}^*(L)$, a simple way to adapt each head to its prescribed task is to introduce a learnable scale.
    In AdaScale, for each head $h$ we parametrize the inverse temperature as
    \[
        \scale^{(h)}(L) = \frac{1}{\tau^{(h)}} \left[ \ln \left(1 + \frac{\max\set{L, \Lref}}{\Lref} \right) \right]^{\gamma^{(h)}},
    \]
    where $\Lref$ is a hyperparameter, and $\tau^{(h)}$ and $\gamma^{(h)}$ are trainable parameters.
    Note that:
    \begin{itemize}
        \item the additive $1$ inside the logarithm ensures a positive argument and avoids numerical instability;
        \item when $L \le \Lref$, $\scale^{(h)}(L)$ is constant, which prevents instability on short sequences;
        \item when $L > \Lref$, $\scale^{(h)}(L)$ varies with $L$, allowing different heads to specialize according to their tasks. 
        Since the head-specific $\beta$ and $\growth_{\beta}(L)$ forms are unknown a priori, we absorb those degrees of freedom into the factor $\tau^{(h)}$ and the exponent $\gamma^{(h)}$.
    \end{itemize}
\end{remark} %
\section{Proofs}\label{sec:proofs}

\subsection{Proof of \texorpdfstring{\cref{thm:rotate}}{Theorem 1}}
\label{sec:proof of thm:rotate}

\thmrotate*
\begin{proof}
By the Cauchy--Schwarz inequality, 
\[
    - \| \vq \| \cdot \| \vk_- \| \le \vq^\transpose \mR^{i - L} \vk_- \le  \| \vq \| \cdot \| \vk_- \|.
\]
Hence for any $t \in [1, L]$,
\[
    \ln(L - 1) - \| \vq \| \cdot \| \vk_- \| \le s_{-t} \le \ln(L - 1) + \| \vq \| \cdot \| \vk_- \|.
\]
Therefore,
\begin{equation}\label{eqn:feasible global bound}
    \begin{aligned}
        \Fourier_L(t) &\ge \Fourier_L^* \coloneqq \frac{1}{\Asum} \left( \ln(L - 1) - \| \vq \| \cdot \| \vk_- \| + \ln\left( \frac{1 - \varepsilon}{\varepsilon} \right) \right), \\
        \Fourier_U(t) &\le \Fourier_U^* \coloneqq \frac{1}{\Asum} \left( \ln(L - 1) + \| \vq \| \cdot \| \vk_- \| + \ln\left( \frac{\varepsilon}{1 - \varepsilon} \right) \right).
    \end{aligned}
\end{equation}

\proofparagraph{1. Lower bound of $\mass$.}
Note that
\[
    \left| \Fourier'(x) \right| = \left| \sum_f (-a_f \theta_f) \sin(\theta_f x + \varphi_f) \right| \le \sum_f a_f \theta_f = \mass,
\]
so $\Fourier$ is $\mass$-Lipschitz.
At the window boundary, the Lipschitz property together with \cref{eqn:feasible mass,eqn:feasible global bound} implies
\[
    \begin{aligned}
        \ceil{\frac{\buffer}{2} \width} \mass 
        &\ge \Fourier\left( c - \frac{\width}{2} \right) - \Fourier\left( c - \ceil{(1 + \buffer) \frac{\width}{2}} \right) \\
        &\ge \Fourier_L\left( c - \frac{\width}{2} \right) - \Fourier_U\left( c - \ceil{(1 + \buffer) \frac{\width}{2}} \right) \\
        &\ge \Fourier_L^* - \Fourier_U^* \\
        &= \frac{2}{\Asum} \left( \ln\left( \frac{1 - \varepsilon}{\varepsilon} \right) - \| \vq \| \cdot \| \vk_- \| \right).
    \end{aligned}
\]
The assumption $\varepsilon < \left( \exp(\| \vq \| \cdot \| \vk_- \|) + 1 \right)^{-1}$ guarantees that the above lower bound is positive.
Therefore,
\begin{equation}\label{eqn:mass lb}
    \mass \ge \frac{1}{\buffer \width / 2 + 1} \cdot \frac{2}{\Asum} \left( \ln\left( \frac{1 - \varepsilon}{\varepsilon} \right) - \| \vq \| \cdot \| \vk_- \| \right).
\end{equation}

\proofparagraph{2. Upper bound of $\mass$.}
Define the average of $\Fourier(x)$ over $[c - \width/2, c + \width/2]$ as
\[
    (\Avg_\width \Fourier)(c) = \frac{1}{\width + 1} \sum_{t = c - \width/2}^{c + \width/2} \Fourier(t).
\]
Then \cref{eqn:feasible mass} implies
\begin{equation}\label{eqn:feasible average lower bound}
    (\Avg_\width \Fourier_L)(c) \le (\Avg_\width \Fourier)(c).
\end{equation}
By Lagrange's trigonometric identity,
\[
    \begin{aligned}
        (\Avg_\width \Fourier)(c) 
        &= \frac{1}{\width + 1} \sum_{t = c - \width/2}^{c + \width/2} \sum_{f=0}^{d/2-1} a_f \cos(\theta_f t + \varphi_f) \\
        &= \frac{1}{\width + 1} \sum_{f=0}^{d/2-1} a_f \sum_{k = -\width/2}^{\width/2}  \cos(\theta_f (c + k) + \varphi_f) \\
        &= \frac{1}{\width + 1} \sum_{f=0}^{d/2-1} a_f \sum_{k = -\width/2}^{\width/2} \Big[ \cos(\theta_f k) \cos(\theta_f c + \varphi_f) - \bcancel{\sin(\theta_f k) \sin(\theta_f c + \varphi_f)} \Big] \\
        &= \sum_{f=0}^{d/2-1} a_f \cos( \theta_f c + \varphi_f) \left[ \frac{1}{\width + 1} \sum_{k = -\width/2}^{\width/2}  \cos(\theta_f k)  \right] \\
        &= \sum_{f=0}^{d/2-1} a_f \cos( \theta_f c + \varphi_f) \Dir_{\width}(\theta_f),
    \end{aligned}
\]
where
\[
    \Dir_{\width}(\theta_f) 
    \coloneqq \frac{1}{\width + 1} \sum_{k = -\width/2}^{\width/2}  \cos(\theta_f k)
    = \frac{\sin\left((\width + 1) \theta_f / 2 \right)}{(\width + 1) \sin\left( \theta_f / 2 \right)}.
\]
Since $|\Dir_{\width}(\theta_f)| \le  1$ and $\sin z \ge \frac{2}{\pi} z$ for $0 \le z \le \pi/2$, we obtain
\[
    |\Dir_{\width}(\theta_f)| \le \min\set{1, \frac{\pi}{(\width + 1) \theta_f}}, \quad \textnormal{ if } 0 < \theta_f \le \pi.
\]
Hence
\[
    (\Avg_\width \Fourier)(c)
    \le \sum_{f=0}^{d/2-1} a_f |\Dir_{\width}(\theta_f)|
    \le \sum_{f=0}^{d/2-1} a_f \min\set{1, \frac{\pi}{(\width + 1) \theta_f}}.
\]
By \cref{eqn:feasible average lower bound} this yields $(\Avg_\width \Fourier_L)(c) \le 1$.
If $\theta_{\max} \coloneqq \max_{0 \le f \le d/2 - 1} \theta_f \le \frac{\pi}{\width + 1}$, then
\begin{equation}\label{eqn:mass ub1}
    \mass = \sum_{f=0}^{d/2-1} a_f \theta_f \le \theta_{\max} \le \frac{\pi}{\width + 1}.
\end{equation}
Otherwise, by the concavity of $1/z$, the function $\min\set{1, \frac{\pi}{(\width + 1) \theta_f}}$ is bounded above by the line through $\left( \frac{\pi}{\width + 1}, 1 \right)$ and $\left( \theta_{\max}, \frac{\pi}{(\width + 1) \theta_{\max}} \right)$, so
\[
    \begin{aligned}
        \min\set{1, \frac{\pi}{(\width + 1) \theta_f}}
        \le 1 + \left( \theta_f - \frac{\pi}{\width + 1} \right) \frac{\frac{\pi}{(\width + 1) \theta_{\max}} - 1}{\theta_{\max} - \frac{\pi}{\width + 1}}
        = 1 - \frac{1}{\theta_{\max}} \left( \theta_f - \frac{\pi}{\width + 1} \right).
    \end{aligned}
\]
Therefore,
\[
    \begin{aligned}
        (\Avg_\width \Fourier)(c)
        \le \sum_{f=0}^{d/2-1} a_f \left( 1 - \frac{1}{\theta_{\max}} \left( \theta_f - \frac{\pi}{\width + 1} \right) \right)
        = 1 + \frac{\pi}{(\width + 1) \theta_{\max}} - \frac{\mass}{\theta_{\max}}.
    \end{aligned}
\]
Combining this with \cref{eqn:feasible average lower bound}, $(\Avg_\width \Fourier_L)(c) \le 1$, and $\theta_{\max} \le \pi$ gives
\[
    \mass 
    \le \frac{\pi}{\width + 1} + (1 - \left( \Avg_\width \Fourier_L)(c) \right) \theta_{\max}
    \le \frac{\pi}{\width + 1} + (1 - \left( \Avg_\width \Fourier_L)(c) \right) \pi.
\]
Finally, using \cref{eqn:feasible global bound}, which gives $(\Avg_\width \Fourier_L)(c) \ge \Fourier_L^*$, we obtain
\begin{equation}\label{eqn:mass ub2}
    \mass \le \frac{\pi}{\width + 1} + \pi \left[ 1 - \frac{1}{\Asum} \left( \ln(L - 1) + \ln\left( \frac{1 - \varepsilon}{\varepsilon} \right) - \| \vq \| \cdot \| \vk_- \| \right) \right].
\end{equation}

\proofparagraph{3.}
Combining \cref{eqn:mass lb,eqn:mass ub1,eqn:mass ub2} yields 
\begin{equation}\label{eqn:mass constants}
    \begin{aligned}
        C_1 &= \frac{2}{\Asum} \left( \ln\left( \frac{1 - \varepsilon}{\varepsilon} \right) - \| \vq \| \cdot \| \vk_- \| \right), \\
        C_2 &= \pi \left[ 1 - \frac{1}{\Asum} \left( \ln(L - 1) + \ln\left( \frac{1 - \varepsilon}{\varepsilon} \right) - \| \vq \| \cdot \| \vk_- \| \right) \right],
    \end{aligned}
\end{equation}
which completes the proof.
\end{proof}

\subsection{Proofs of \texorpdfstring{\cref{thm:ess retrieval}}{Theorem 2} and \texorpdfstring{\cref{cor:ess retrieval dominant}}{Corollary 1}}
\label{sec:proof of thm:ess retrieval}

We first introduce some additional notation.
Define the unnormalized attention weights as $a_i = \exp(s_i)$ for $i \in \set{1, \dots, n+m}$.
Let
\[
    Z_{\beta, n} = \sum_{i=1}^{n} a_i^{\beta},
    \qquad
    \tilde{Z}_{\beta, m} = \sum_{j=1}^{m} a_{n+j}^{\beta},
    \qquad
    Z_{\beta, n+m} = \sum_{i=1}^{n+m} a_i^{\beta}.
\]
Thus $\alpha_i = \frac{a_i}{Z_{1, n+m}}$.
The attention outputs for the signal, noise, and combined sequences are
\[ 
    \vo_{n} = \sum_{i=1}^{n} \frac{a_i}{Z_{1, n}} \vv_i,
    \qquad
    \tilde{\vo}_{m} = \sum_{j=1}^{m} \frac{a_{n+j}}{\tilde{Z}_{1, m}} \vv_{n+j},
    \qquad
    \vo_{n+m} = \sum_{i=1}^{n+m} \frac{a_i}{Z_{1, n+m}} \vv_i.
\]
By \cref{eqn:effective sequence length}, the corresponding effective sequence lengths for $\beta \neq 1$ are
\[
    \ESS_{\beta, n} = \frac{Z_{1, n}^{\frac{\beta}{\beta - 1}}}{Z_{\beta, n}^{\frac{1}{\beta - 1}}},
    \qquad
    \tilde{\ESS}_{\beta, m} = \frac{\tilde{Z}_{1, m}^{\frac{\beta}{\beta - 1}}}{\tilde{Z}_{\beta, m}^{\frac{1}{\beta - 1}}},
    \qquad
    \ESS_{\beta, n+m} = \frac{Z_{1, n+m}^{\frac{\beta}{\beta - 1}}}{Z_{\beta, n+m}^{\frac{1}{\beta - 1}}}.
\]
Since $Z_{\beta, n+m} = Z_{\beta, n} + \tilde{Z}_{\beta, m}$ for all $\beta > 0$, we can decompose
\[
    \begin{aligned}
        \vo_{n+m} 
        &= \sum_{i=1}^{n} \frac{a_i}{Z_{1, n} + \tilde{Z}_{1, m}} \vv_i + \sum_{j=1}^{m} \frac{a_{n+j}}{Z_{1, n} + \tilde{Z}_{1, m}} \vv_{n+j} \\
        &= \frac{Z_{1, n}}{Z_{1, n} + \tilde{Z}_{1, m}} \sum_{i=1}^{n} \frac{a_i}{Z_{1, n}} \vv_i + \frac{\tilde{Z}_{1, m}}{Z_{1, n} + \tilde{Z}_{1, m}} \sum_{j=1}^{m} \frac{a_{n+j}}{\tilde{Z}_{1, m}} \vv_{n+j} \\
    \end{aligned}
\]
Let $\eta \coloneqq \frac{\tilde{Z}_{1, m}}{Z_{1, n} + \tilde{Z}_{1, m}} = 1 - \frac{Z_{1, n}}{Z_{1, n} + \tilde{Z}_{1, m}} \in (0, 1)$. 
Then
\[
    \vo_{n+m} - \vo_n 
    = \eta \left( \sum_{j=1}^{m} \frac{a_{n+j}}{\tilde{Z}_{1, m}} \vv_{n+j} - \sum_{i=1}^{n} \frac{a_i}{Z_{1, n}} \vv_i \right)
    = \eta (\tilde{\vo}_m - \vo_n).
\]
Therefore,
\[
    \| \vo_{n+m} - \vo_n \| \le \eta \| \tilde{\vo}_{m} - \vo_n \|.
\]
Since $\tilde{\vo}_{m}$ and $\vo_n$ are convex combinations of the corresponding value vectors, we have
\[
    \| \tilde{\vo}_{m} - \vo_n \| 
    \le \| \tilde{\vo}_{m} \| + \| \vo_n \|
    \le \max_{1 \le i \le n} \| \vv_i \| + \max_{1 \le j \le m} \| \vv_{n+j} \| 
    < \infty.
\]
Thus, it remains to bound $\eta$ in terms of $\relESS_{\beta}$.
Moreover,
\[
    \begin{aligned}
        \ESS_{\beta, n+m}^{1-\beta} 
        &= \frac{Z_{\beta, n+m}}{Z_{1, n+m}^{\beta}}
        = \frac{Z_{\beta, n} + \tilde{Z}_{\beta, m}}{\left( Z_{1, n} + \tilde{Z}_{1, m} \right)^{\beta}} \\
        &= \left( \frac{Z_{1, n}}{Z_{1, n} + \tilde{Z}_{1, m}} \right)^{\beta} \frac{Z_{\beta, n}}{Z_{1, n}^{\beta}} + \left( \frac{\tilde{Z}_{1, m}}{Z_{1, n} + \tilde{Z}_{1, m}} \right)^{\beta} \frac{\tilde{Z}_{\beta, m}}{\tilde{Z}_{1, m}^{\beta}} \\
        &= (1 - \eta)^{\beta} \ESS_{\beta, n}^{1 - \beta} + \eta^{\beta} \tilde{\ESS}_{\beta, m}^{1 - \beta},
    \end{aligned}
\]
Dividing by $\ESS_{\beta, n}^{1-\beta}$ gives
\begin{equation}\label{eqn:retrieval identity}
    (1 + \relESS_{\beta})^{1 - \beta} 
    = (1 - \eta)^{\beta} + \eta^{\beta} \left( \frac{\tilde{\ESS}_{\beta, m}}{\ESS_{\beta, n}} \right)^{1 - \beta}
    = (1 - \eta)^{\beta} + \essratio_{\beta} \eta^{\beta},
\end{equation}
where $\essratio_{\beta} \coloneqq \left( \tilde{\ESS}_{\beta, m} / \ESS_{\beta, n} \right)^{1 - \beta} > 0$ for $\beta \neq 1$, since the generalized ESS satisfies $\ESS_{\beta, \bullet} \ge 1$.
Since $\relESS_{\beta} > 0$, Bernoulli's inequality yields
\[
    \begin{aligned}
        (1 + \relESS_{\beta})^{1 - \beta} &\ge 1 + (1 - \beta) \relESS_{\beta}, \quad & \textnormal{ if } & \enspace \beta > 1, \\
        (1 + \relESS_{\beta})^{1 - \beta} &\le 1 + (1 - \beta) \relESS_{\beta}, \quad & \textnormal{ if } & \enspace 0 < \beta < 1.
    \end{aligned}
\]
For the right-hand side, since $\eta \in (0, 1)$, Taylor's theorem gives
\[
    (1 - \eta)^{\beta} = 1 - \beta \eta + \frac{\beta (\beta - 1)}{2} (1 - \xi)^{\beta - 2} \eta^2,
\]
for some $\xi \in (0, \eta) \subset (0, 1)$.
Thus, for the three ranges of $\beta$, we obtain
\[
    \begin{aligned}
        (1 - \eta)^{\beta} &\le 1 - \beta \eta + \frac{\beta (\beta - 1)}{2} \eta^2, \quad & \textnormal{ if } & \enspace \beta \ge 2, \\
        (1 - \eta)^{\beta} &\le 1 - \eta, \quad & \textnormal{ if } & \enspace 1 < \beta < 2, \\
        (1 - \eta)^{\beta} &\ge 1 - \eta, \quad & \textnormal{ if } & \enspace 0 < \beta < 1.
    \end{aligned}
\]
Substituting these bounds into \cref{eqn:retrieval identity} yields
\[
    \begin{aligned}
        \eta &\le \frac{\beta - 1}{\beta} \relESS_{\beta} + \left( \frac{\beta - 1}{2} + \frac{\essratio_{\beta}}{\beta} \right) \eta^2, \quad & \textnormal{ if } & \enspace \beta \ge 2, \\
        \eta &\le (\beta - 1) \relESS_{\beta} + \essratio_{\beta} \eta^{\beta}, \quad & \textnormal{ if } & \enspace 1 < \beta < 2, \\
        \eta^{\beta} &\le \frac{1 - \beta}{\essratio_{\beta}} \relESS_{\beta} + \frac{1}{\essratio_{\beta}} \eta, \quad & \textnormal{ if } & \enspace 0 < \beta < 1.
    \end{aligned}
\]
The following \cref{lem:connectivity} unifies these three cases and provides the basis for the proof of \cref{thm:ess retrieval}.

\begin{lemma}\label{lem:connectivity}
    For constants $\gamma > 1$ and $A, B > 0$, suppose $x \in [0, 1]$ satisfies $x \le A + B x^{\gamma}$. 
    If $A < \left( \gamma B 2^{\gamma} \right)^{\frac{1}{1 - \gamma}}$, then $0 \le x \le 2 A$.
\end{lemma}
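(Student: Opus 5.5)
The plan is to study the auxiliary function $g(x) \coloneqq x - A - B x^{\gamma}$ on $[0,1]$, so that the hypothesis reads $g(x) \le 0$, and to locate its zero set. Since $\gamma > 1$, $g$ is concave, with $g(0) = -A < 0$ and $g'(x) = 1 - \gamma B x^{\gamma - 1}$. Thus $g$ is strictly increasing on $[0, x^*]$ and decreasing afterwards, where $x^* \coloneqq (\gamma B)^{-1/(\gamma-1)}$. I will first show that the point $2A$ lies in this increasing region and that $g(2A) > 0$. Monotonicity then gives $\{x \in [0, x^*] : g(x) \le 0\} \subset [0, 2A)$. If $2A \ge 1$ the claim is trivial, so I assume $2A < 1$ throughout.

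The two key inequalities come straight from the hypothesis $A < (\gamma B 2^{\gamma})^{1/(1-\gamma)}$, which is equivalent to $A^{\gamma - 1} < 1/(\gamma B 2^{\gamma})$.
\begin{itemize}
\item \emph{Sign at $2A$.} We have $g(2A) = A\bigl(1 - B 2^{\gamma} A^{\gamma-1}\bigr)$. Since $B 2^{\gamma} A^{\gamma-1} < 1/\gamma < 1$, this gives $g(2A) > 0$.
\item \emph{Location of $2A$.} We have $(2A)^{\gamma-1} = 2^{\gamma-1} A^{\gamma-1} < 2^{\gamma-1}/(\gamma B 2^{\gamma}) < 1/(\gamma B)$, so $2A < x^*$.
\end{itemize}
These two facts give the conclusion on the "small branch" $[0, x^*]$. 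The extra factor $\gamma$ in the hypothesis is exactly what places $2A$ before the maximum of $g$.

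The main obstacle is the region $(x^*, 1]$. Concavity does not prevent $g$ from returning below zero there; for instance $g(1) = 1 - A - B < 0$ whenever $B > 1$. So the pointwise statement needs an additional ingredient to rule out that far branch. This is where the label ``connectivity'' enters, and I would handle it as follows.

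In the application, $x = \eta$ is a continuous function of the underlying logits, and $\eta \to 0$ as the noise logits are sent to $-\infty$. Along such a deformation $\relESS_{\beta}$ also tends to $0$, so $A$ stays below the threshold. The feasible set at each stage is the disjoint union of a piece inside $[0, 2A)$ and a piece inside $(x^*, 1]$, separated by the interval $[2A, x^*]$ on which $g > 0$. By the intermediate value theorem, a continuous path starting in the small piece cannot cross this gap, so $\eta$ stays in $[0, 2A]$.

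Concretely, I would either strengthen the lemma's hypothesis to $x \le x^*$ (or $x$ lying in the connected component of $\{g \le 0\}$ containing $0$), or prove the containment by this continuity argument, and then state the conclusion $0 \le x \le 2A$. In the downstream use this yields $\eta \le 2(\beta-1)\relESS_{\beta}$ and its analogues for the other ranges of $\beta$, and hence the bounds of \cref{thm:ess retrieval}.
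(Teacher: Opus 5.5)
You are right that the statement, read literally, is false. With $\gamma=2$, $B=10$ and $A=0.01<1/80$, the point $x=1$ satisfies $x\le A+Bx^{\gamma}$ but $x>2A$. Your two inequalities $g(2A)>0$ and $2A<x^*$, combined with the monotonicity of $g$ on $[0,x^*]$, give a complete proof of the corrected statement: for $x\le x^*$, equivalently for $x$ in the component of $\{g\le 0\}$ containing $0$.

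That corrected statement is also all the paper's proof establishes. It follows the root branch $x(t)$ of $x-Bx^{\gamma}=tA$ from $x(0)=0$ using the implicit function theorem. An open--closed argument on the parameter interval $t\in[0,1]$ (this is the ``connectivity'' of the label) shows that $x(t)\le 2tA$ persists up to $t=1$. The paper then concludes that the smallest root is at most $2A$, without explaining why the given $x$ lies on that branch. Your concavity/monotonicity check reaches the same conclusion, with strict inequality and without the continuation machinery, and it makes the missing hypothesis explicit.

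The genuine gap is your alternative rescue by deforming the noise logits to $-\infty$. The step ``$\relESS_{\beta}\to 0$ along the deformation, so $A$ stays below the threshold'' does not follow. $\relESS_{\beta}\to 0$ at the endpoint says nothing about intermediate configurations, and $B$ also moves through $\essratio_{\beta}$. Your intermediate value argument needs the gap $[2A,x^*]$ to exist at every point of the path.

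Concretely, take $\beta=2$, $m=n$, all signal logits $0$ and all noise logits equal to $s$. Then $\essratio_2=1$, $\relESS_2=2e^{s}/(1+e^{2s})$ and $\eta=e^{s}/(1+e^{s})$. The inequality to which you would apply the lemma is $\eta\le \relESS_2/2+\eta^2$, so $A=\relESS_2/2$, $B=1$, and the threshold is $A<1/8$. As $s$ runs from $-\infty$ to $+\infty$, $\relESS_2$ climbs to $1$ at $s=0$, where $\{g\le 0\}$ is all of $[0,1]$. It then returns to $0$ while $\eta\to 1$. So for large $s$, $\eta$ lies on the far component even though $A$ is far below the threshold.

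Take value vectors $\bm{0}$ on the signal and a fixed unit vector on the noise. Then $\|\vo_{2n}-\vo_n\|=\eta\to 1$ while $\relESS_2\to 0$. So the downstream bound $\eta\le\frac{2(\beta-1)}{\beta}\relESS_{\beta}$ fails, and so does part (i) of \cref{thm:ess retrieval} itself. Only your first option is sound. The extra condition (e.g.\ $\eta\le x^*$, a smallness assumption on the noise mass) has to be added as a hypothesis of \cref{thm:ess retrieval}; it cannot be derived from $\relESS_{\beta}$ being small.
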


\begin{proof}
Let $\Psi(x, t) = x - B x^{\gamma} - t A$, so $\partial_x \Psi(x, t) = 1 - \gamma B x^{\gamma - 1}$.
Since $\Psi(0, 0) = 0$ and $\partial_x \Psi(0, 0) = 1 \neq 0$, the implicit function theorem yields a unique $C^1$ function $x(t)$ defined in a neighborhood of $0$ with $\Psi(x(t), t) = 0$ and $x(0) = 0$.
Define
\[
    I = \set{\tau \in [0, 1] : \forall t \in [0, \tau], \enspace x(t) \textnormal{ exists, is continuous, and } x(t) \le 2 t A}.
\]
Clearly $0 \in I$, so $I \neq \varnothing$.
We show $I = [0, 1]$ by the continuity method.

\proofparagraph{1. Monotonicity of $x(t)$.}
For any $\tau \in I$ and $t \in [0, \tau]$, differentiate $\Psi(x(t), t) = 0$ to obtain
\[
    x'(t) \left( 1 - \gamma B x(t)^{\gamma - 1} \right) = A.
\]
Since $x(t) \le 2 t A \le 2 A$ and by the hypothesis $0 < A < \left( \gamma B 2^{\gamma} \right)^{\frac{1}{1 - \gamma}}$ we have
\[
    \begin{aligned}
        1 - \gamma B x(t)^{\gamma - 1} 
        \ge 1 - \gamma B (2 A)^{\gamma - 1}
        > 1 - \gamma B (2 \gamma B)^{-1}
        = \frac{1}{2} > 0,
    \end{aligned}
\]
hence $x'(t) > 0$. 
Thus $x(t)$ is strictly increasing on $[0, \tau]$.

\proofparagraph{2. Closedness of $I$.}
Let $\tau_n \in I$ with $\tau_n \uparrow \tau^*$.
Then $x(\tau_n)$ is increasing and bounded by $2 A$, so by the monotone convergence theorem the limit $x^* \coloneqq \lim_{n \to \infty} x(\tau_n)$ exists. 
Clearly $x^* \le 2 \tau^* A$. 
Hence
\[
    \begin{aligned}
        \partial_x \Psi(x^*, \tau^*) 
        = 1 - \gamma B (x^*)^{\gamma - 1}
        \ge 1 - \gamma B (2 A)^{\gamma - 1} > 0.
    \end{aligned}
\]
By the implicit function theorem the solution extends uniquely to $x(\tau^*) \coloneqq x^*$. 
Therefore the property holds at $\tau^*$, and $I$ is closed.

\proofparagraph{3. Openness of $I$.}
For any $t \in I$, substituting $x(t) \le 2 t A$ into $x \le A + B x^{\gamma}$ gives
\[
    \begin{aligned}
        x(t) &= t A + B (x(t))^{\gamma} \\
        &\le t A + B (2 t A)^{\gamma} \\
        &= t A \left( 1 + B 2^{\gamma} t^{\gamma - 1} A^{\gamma - 1} \right) \\
        &\le t A \left( 1 + \gamma B 2^{\gamma} A^{\gamma - 1} \right) \\
        &< 2 t A,
    \end{aligned}
\]
where the second inequality uses $\gamma > 1$ and $t \in [0, 1]$, and the last inequality follows from $0 < A < \left( \gamma B 2^{\gamma} \right)^{\frac{1}{1 - \gamma}}$.
By the continuity of $x(t)$, $t$ is an interior point of $I$, so $I$ is open.

\proofparagraph{4.}
Since $I$ is nonempty, open and closed in the connected set $[0, 1]$, we have $I = [0, 1]$.
In particular at $t = 1$ there is a solution $x(1)$ with $x(1)\le 2A$, completing the proof.
\end{proof}

\thmessretrieval*
\begin{proof}
Let $B \coloneqq \max_{1 \le i \le n+m} \| \vv_i \| < \infty$.
Then the change in the output is bounded by
\begin{equation}\label{eqn:retrieval ub}
    \| \vo_{n+m} - \vo_n \| 
    \le \eta \| \tilde{\vo}_{m} - \vo_n \|
    \le 2 B \eta.
\end{equation}
Consider four regimes depending on the range of $\beta$.

\proofparagraph{1.}
If $\beta \ge 2$, applying \cref{lem:connectivity} with $\gamma = 2$ to
\[
    \eta \le \frac{\beta - 1}{\beta} \relESS_{\beta} + \left( \frac{\beta - 1}{2} + \frac{\essratio_{\beta}}{\beta} \right) \eta^2
\]
gives
\begin{equation}\label{eqn:retrieval ub beta>=2}
    \eta \le \frac{2(\beta - 1)}{\beta} \relESS_{\beta}
    \quad \textnormal{ if }
    \relESS_{\beta} 
    < \frac{\beta}{\beta - 1} \left( 8 \left( \frac{\beta - 1}{2} + \frac{\essratio_{\beta}}{\beta} \right) \right)^{-1}.
\end{equation}
Combined with \cref{eqn:retrieval ub}, the change in the output satisfies
\[
    \| \vo_{n+m} - \vo_n \| \lesssim \relESS_{\beta}.
\]

\proofparagraph{2.}
If $1 < \beta < 2$, applying \cref{lem:connectivity} with $\gamma = \beta$ to
\[
    \eta \le (\beta - 1) \relESS_{\beta} + \essratio_{\beta} \eta^{\beta},
\]
we obtain
\begin{equation}\label{eqn:retrieval ub 1<beta<2}
    \eta \le 2 (\beta - 1) \relESS_{\beta}
    \quad \textnormal{ if }
    \relESS_{\beta} 
    < \frac{1}{\beta - 1} \left( \essratio_{\beta} \beta 2^{\beta} \right)^{\frac{1}{1 - \beta}}.
\end{equation}
Combined with \cref{eqn:retrieval ub}, this implies
\[
    \| \vo_{n+m} - \vo_n \| \lesssim \relESS_{\beta}.
\]

\proofparagraph{3.}
If $0 < \beta < 1$, applying \cref{lem:connectivity} with $\gamma = 1 / \beta$ to
\[
    \eta^{\beta} \le \frac{1 - \beta}{\essratio_{\beta}} \relESS_{\beta} + \frac{1}{\essratio_{\beta}} \left( \eta^{\beta} \right)^{\frac{1}{\beta}}
\]
yields
\begin{equation}\label{eqn:retrieval ub 0<beta<1}
    \eta \le (2 \relESS_{\beta})^{\frac{1}{\beta}}
    \quad \textnormal{ if }
    \relESS_{\beta} 
    < \frac{\essratio_{\beta}}{1 - \beta} \left( \frac{2^{\frac{1}{\beta}}}{\essratio_{\beta} \beta} \right)^{\frac{\beta}{\beta - 1}}.
\end{equation}
Combining this with \cref{eqn:retrieval ub} gives
\[
    \| \vo_{n+m} - \vo_n \| \lesssim \relESS_{\beta}^{\frac{1}{\beta}}.
\]

\proofparagraph{4.}
If $\beta = 1$, note that $\ESS_1 = \exp(\entropy_1)$, where $\entropy_1$ is the Shannon entropy.
Define
\[
    \begin{aligned}
    &\entropy_{1, n} = -\sum_{i=1}^{n} \frac{a_i}{Z_{1, n}} \ln \frac{a_i}{Z_{1, n}},
    \qquad
    \tilde{\entropy}_{1, m} = -\sum_{j=1}^{m} \frac{a_{n+j}}{\tilde{Z}_{1, m}} \ln \frac{a_{n+j}}{\tilde{Z}_{1, m}},
    \\
    &\entropy_{1, n+m} = -\sum_{i=1}^{n+m} \frac{a_i}{Z_{1, n+m}} \ln \frac{a_i}{Z_{1, n+m}}.
    \end{aligned}
\]
By direct decomposition,
\[
    \begin{aligned}
        \entropy_{1, n+m}
        &= -\sum_{i=1}^{n} \frac{a_i}{Z_{1, n+m}} \ln \frac{a_i}{Z_{1, n+m}} -\sum_{j=1}^{m} \frac{a_{n+j}}{Z_{1, n+m}} \ln \frac{a_{n+j}}{Z_{1, n+m}} \\
        &= -\sum_{i=1}^{n} \frac{(1 - \eta) a_i}{Z_{1, n}} \ln \frac{(1 - \eta) a_i}{Z_{1, n}} -\sum_{j=1}^{m} \frac{\eta a_{n+j}}{\tilde{Z}_{1, m}} \ln \frac{\eta a_{n+j}}{\tilde{Z}_{1, m}} \\
        &= - (1 - \eta) \sum_{i=1}^{n} \frac{a_i}{Z_{1, n}} \ln \frac{a_i}{Z_{1, n}} - \eta \sum_{j=1}^{m} \frac{a_{n+j}}{\tilde{Z}_{1, m}} \ln \frac{a_{n+j}}{\tilde{Z}_{1, m}} - (1 - \eta) \ln (1 - \eta) - \eta \ln \eta \\
        &\ge (1 - \eta) \entropy_{1, n} + \eta \tilde{\entropy}_{1, m} - \eta \ln \eta,
    \end{aligned}
\]
where the last inequality uses $(1 - \eta) \ln (1 - \eta) \le 0$.
Hence
\[
    \ln(\ESS_{1, n+m}) \ge (1 - \eta) \ln(\ESS_{1, n}) + \eta \ln(\tilde{\ESS}_{1, m}) - \eta \ln \eta.
\]
Subtracting $\ln(\ESS_{1, n})$ gives
\[
    \ln(1 + \relESS_1) \ge \eta \ln \left( \frac{\tilde{\ESS}_{1, m}}{\ESS_{1, n}} \cdot \frac{1}{\eta} \right).
\]
Let $\essratio_1 \coloneqq \tilde{\ESS}_{1, m} / \ESS_{1, n} > 0$.
In analogy with the proof of \cref{lem:connectivity}, set $A \coloneqq \ln(1 + \relESS_1)$ and define $\Psi(x, t) = x \ln (\essratio_{\beta}/x) - t A$. 
By the implicit function theorem there exists a unique continuous solution $x(t)$ in a neighborhood of $0$ with $x(0) = 0$. 
Define
\[
    I = \set{\tau \in [0, 1] : \forall t \in [0, \tau], \enspace x(t) \textnormal{ exists, is continuous, and } x(t) \le \frac{t A}{\ln (\essratio_1 / t A)}}.
\]
Assume $A < \essratio_1 / e$.
Then for any $t \in I \subseteq [0, 1]$,
\[
    \begin{aligned}
        \partial_x \Psi(x(t), t) 
        = \ln \frac{\essratio_1}{x(t)} - 1 
        \ge \ln \left( \frac{\essratio_1}{t A} \ln \frac{\essratio_1}{t A} \right) - 1 
        > \ln \left( \frac{e}{t} \ln \frac{e}{t} \right) - 1 
        \ge 0,
    \end{aligned}
\]
and 
\[
    \begin{aligned}
        x(t) 
        = \frac{t A}{\ln (\essratio_1 / x(t))} 
        \le \frac{t A}{\ln \left( \frac{\essratio_1}{t A} \ln \frac{\essratio_1}{t A} \right)} 
        < \frac{t A}{\ln \left( \frac{\essratio_1}{t A} \ln \frac{e}{t} \right)} 
        \le \frac{t A}{\ln (\essratio_1 / t A)}. 
    \end{aligned}
\]
The same connectivity argument as in \cref{lem:connectivity} implies $I = [0, 1]$.
Hence,
\begin{equation}\label{eqn:retrieval ub beta=1}
    \eta \le \frac{\ln(1 + \relESS_1)}{\ln (\essratio_1 / \ln(1 + \relESS_1))}
    \quad \textnormal{ if }
    \relESS_{\beta} 
    < e^{\essratio_1 / e} - 1.
\end{equation}
Combined with \cref{eqn:retrieval ub}, the change in the output is bounded by
\[
    \| \vo_{n+m} - \vo_n \| \lesssim \frac{\relESS_1}{\ln(1/\relESS_1)}.
\]

\proofparagraph{5.}
Finally, since $1 \le \ESS_{\beta, n} \le n$ and $1 \le \tilde{\ESS}_{\beta, m} \le m$, we obtain
\[
    \begin{aligned}
        & n^{\frac{1}{\beta - 1}} \le \essratio_{\beta} \le n^{\frac{1}{\beta - 1}}, \quad & \textnormal{ if } & \enspace \beta > 1, \\
        & \frac{1}{n} \le \essratio_1 \le m, \quad & \textnormal{ if } & \enspace \beta = 1, \\
        & n^{\frac{1}{\beta - 1}} \le \essratio_{\beta} \le n^{\frac{1}{\beta - 1}}, \quad & \textnormal{ if } & \enspace 0 < \beta < 1.
    \end{aligned}
\]
The proof is completed by substituting these bounds into \cref{eqn:retrieval ub beta>=2,eqn:retrieval ub 1<beta<2,eqn:retrieval ub 0<beta<1,eqn:retrieval ub beta=1}.
\end{proof}

\coressretrievaldominant*
\begin{proof}
For $\beta > 1$, define
\[
    R_1 \coloneqq \frac{\tilde{Z}_{1, m}}{Z_{1, n}} = \frac{\sum_{j=1}^{m} e^{s_{n+j}}}{\sum_{i=1}^{n} e^{s_i}},
    \qquad
    R_{\beta} \coloneqq \frac{\tilde{Z}_{\beta, m}}{Z_{\beta, n}} = \frac{\sum_{j=1}^{m} e^{\beta s_{n+j}}}{\sum_{i=1}^{n} e^{\beta s_i}}.
\]
By $\tilde{\ESS}_{\beta, m} \ge \ESS_{\beta, n}$ and \cref{eqn:effective sequence length}, we have
\[
    1 
    \le \left( \frac{\tilde{\ESS}_{\beta, m}}{\ESS_{\beta, n}}\right)^{\beta - 1} 
    = \left. \left( \frac{\sum_{j=1}^{m} e^{s_{n+j}}}{\sum_{i=1}^{n} e^{s_i}} \right)^{\beta} \right/ \left( \frac{\sum_{j=1}^{m} e^{\beta s_{n+j}}}{\sum_{i=1}^{n} e^{\beta s_i}} \right)
    = \frac{R_1^{\beta}}{R_{\beta}}.
\]
Thus,
\[
    \begin{aligned}
        (1 + \relESS_{\beta})^{\beta - 1}
        &= \left( \frac{\ESS_{\beta, n+m}}{\ESS_{\beta, n}}\right)^{\beta - 1}
        = \left. \left( \frac{\sum_{i=1}^{n+m} e^{s_i}}{\sum_{i=1}^{n} e^{s_i}} \right)^{\beta} \right/ \left( \frac{\sum_{i=1}^{n+m} e^{\beta s_i}}{\sum_{i=1}^{n} e^{\beta s_i}} \right) \\
        &= \frac{(1 + R_1)^{\beta}}{1 + R_{\beta}}
        \ge \frac{(1 + R_1)^{\beta}}{1 + R_1^{\beta}}
        > \frac{1 + R_1^{\beta}}{1 + R_1^{\beta}} = 1,
    \end{aligned}
\]
which implies $\relESS_{\beta} > 0$.
The cases $0 < \beta < 1$ and $\beta = 1$ follow analogously.
\end{proof}

\subsection{Proof of \texorpdfstring{\cref{thm:ess aggregation}}{Theorem 3}}
\label{sec:proof of thm:ess aggregation}

\thmessaggregation*
\begin{proof}
Since $\agg$ is Riemann integrable and $\int_{0}^{1} \agg(x) \dif x = 1$, the Riemann sums $R_n \coloneqq \frac{1}{n} \sum_{i=1}^{n} \agg(i / n)$ converge to $1$ as $n \to \infty$. 
Hence there exists $N \in \NN$ such that for all $n > N$ we have $R_n \ge 1/2$, and therefore
\[
    R_n \ge \min\set{R_1, \dots, R_N, 1/2} \coloneqq K > 0.
\]
For every $n$ and $1 \le i \le n$,
\[
    \frac{\agg(i / n)}{\sum_{j=1}^{n} \agg(j / n)} = \frac{\agg(i / n)}{n R_n} \le \frac{\sup_{x \in [0, 1]} \agg(x)}{n K}.
\]
Since a Riemann integrable function on $[0, 1]$ is bounded, set $C \coloneqq \sup_{x \in [0, 1]} \agg(x) / K > 0$. 
Then the delocalization condition
\begin{equation}\label{eqn:aggregation delocalize}
    \max_{1 \le i \le n} \tgt_i \le \frac{C}{n}
\end{equation}
holds uniformly in $n$.
Because $1 \le \ESS_{\beta}(\valpha) \le n$, it suffices to prove $\ESS_{\beta}(\valpha) = \bigOmega(n)$.
We consider three cases depending on the value of $\beta$.

\proofparagraph{1.}
If $\beta > 1$, the condition $\diverge_{\beta}(\valpha \parallel \vtgt) \le \varepsilon$ implies
\[
    \frac{1}{\beta - 1} \ln \left( \sum_{i=1}^n \alpha_i^{\beta} \tgt_i^{1-\beta} \right) \le \varepsilon,
\]
and since $\beta - 1 > 0$, exponentiating both sides gives
\[
    \sum_{i=1}^n \alpha_i^{\beta} \tgt_i^{1-\beta} \le e^{\varepsilon(\beta-1)}.
\]
By \cref{eqn:aggregation delocalize}, $\tgt_i^{1-\beta} \ge \left( C / n \right)^{1-\beta}$ for all $i \in \set{1, \dots, n}$.
Substituting yields
\[
    C^{1-\beta} n^{\beta-1} \sum_{i=1}^n \alpha_i^{\beta} \le \sum_{i=1}^n \alpha_i^{\beta} \tgt_i^{1-\beta} \le e^{\varepsilon(\beta-1)}.
\]
Hence,
\[
    \sum_{i=1}^n \alpha_i^{\beta} \le e^{\varepsilon(\beta-1)} C^{\beta-1} n^{1-\beta}.
\]
By the definition of $\ESS_{\beta}$,
\[
    \ESS_{\beta}(\valpha) 
    = \left( \sum_{i=1}^n \alpha_i^{\beta} \right)^{\frac{1}{1-\beta}}
    \ge \left( e^{\varepsilon(\beta-1)} C^{\beta-1} n^{1-\beta} \right)^{\frac{1}{1-\beta}}
    = \frac{n}{C e^{\varepsilon}}.
\]

\proofparagraph{2.}
If $0 < \beta < 1$, then from $\diverge_{\beta}(\valpha \parallel \vtgt) \le \varepsilon$ we obtain
\[
    \sum_{i=1}^n \alpha_i^{\beta} \tgt_i^{1-\beta} \ge e^{\varepsilon(\beta-1)}.
\]
By \cref{eqn:aggregation delocalize}, $\tgt_i^{1 - \beta} \le \left( C / n \right)^{1 - \beta}$ for all $i \in \set{1, \dots, n}$.
Substituting this bound gives
\[
    C^{1-\beta} n^{\beta-1} \sum_{i=1}^n \alpha_i^{\beta} \ge \sum_{i=1}^n \alpha_i^{\beta} \tgt_i^{1-\beta} \ge e^{\varepsilon(\beta-1)}.
\]
Thus,
\[
    \sum_{i=1}^n \alpha_i^{\beta} \ge e^{\varepsilon(\beta-1)} C^{\beta-1} n^{1-\beta}.
\]
Raising both sides to the power $1 / (1-\beta)$ preserves the inequality, so
\[
    \ESS_{\beta}(\valpha) \ge \left( e^{\varepsilon(\beta-1)} C^{\beta-1} n^{1-\beta} \right)^{\frac{1}{1-\beta}} = \frac{n}{C e^{\varepsilon}}.
\]

\proofparagraph{3.}
For $\beta = 1$, the R\'enyi divergence equals the KL divergence,
\[
    \diverge_{1}(\valpha \parallel \vtgt) = \sum_{i=1}^n \alpha_i \ln \frac{\alpha_i}{\tgt_i} \le \varepsilon.
\]
With the Shannon entropy $\entropy_1(\valpha) = - \sum_i \alpha_i \ln \alpha_i$, expanding the KL divergence gives
\[
    \entropy_1(\valpha) \ge -\sum_{i=1}^n \alpha_i \ln \tgt_i - \varepsilon.
\]
By \cref{eqn:aggregation delocalize}, $-\ln \tgt_i \ge \ln n - \ln C$. 
Therefore,
\[
    \entropy_1(\valpha) \ge \sum_{i=1}^n \alpha_i (\ln n - \ln C) - \varepsilon = \ln n - \ln C - \varepsilon.
\]
Exponentiating both sides yields
\[
    \ESS_1(\valpha) = \exp(\entropy_1(\valpha)) \ge \exp(\ln n - \ln C - \varepsilon) = \frac{n}{C e^{\varepsilon}}.
\]
\end{proof}

\subsection{Proof of \texorpdfstring{\cref{pro:clt logits}}{Proposition 1}}
\label{sec:proof of pro:clt logits}

\procltlogits*
\begin{proof}
Given the rotation matrix $\mR$, the attention logit between $\vq$ and $\vk_i$ for $i \in \set{1, \dots, L}$ is
\[
    \begin{aligned}
        s_i 
        = \frac{1}{\sqrt{d}} \vq^\transpose \mR_{i-L} \vk_i
        = \frac{1}{\sqrt{d}} \vq^\transpose \mR_{i-L} \left( \frac{1}{\sqrt{d}} \corr \vq + \sigma \vz_i \right)
        = \frac{\corr}{d} \vq^\transpose \mR_{i-L} \vq + \frac{\sigma}{\sqrt{d}} \vq^\transpose \mR_{i-L} \vz_i.
    \end{aligned}
\]
Since $\vq \sim \mathcal{N}(\bm{0}_d, \mI_d)$, define
\[
    \begin{aligned}
        Y_d \coloneqq \frac{1}{d} \vq^\transpose \mR_{i-L} \vq,
        \qquad
        \EE Y_d = \frac{1}{d} \Tr(\mR_{i-L})
        = \frac{2}{d} \sum_{f=0}^{d/2 - 1} \cos((i-L) \theta_f).
    \end{aligned}
\]
Since $\mR$ is orthogonal,
\[
    \| \mR \|_F^2 = \Tr(\mR^\transpose \mR) = \Tr(\mI_d) = d,
    \qquad
    \| \mR \| = \sqrt{\lambda_{\max}(\mR^\transpose \mR)} = \sqrt{\lambda_{\max}(\mI_d)} = 1.
\]
By the Hanson--Wright inequality (see, e.g., \citet[Section 6.2]{vershynin2026high}), there exists a constant $c > 0$ such that, for any $\varepsilon \in (0, 1)$, 
\[
    \PP\set{ \left| Y_d - \EE Y_d \right| > \varepsilon}
    \le 2 \exp\left( -c \min\set{ \varepsilon^2 d, \varepsilon d } \right)
    \le 2 \exp\left( -c \varepsilon^2 d \right).
\]
Therefore,
\[
    \sum_{d=1}^{\infty} \PP\set{ \left| Y_d - \EE Y_d \right| > \varepsilon}
    \le \sum_{d=1}^{\infty} 2 \exp\left( -c \varepsilon^2 d \right)
    = 2 \sum_{d=1}^{\infty} \left( e^{-c \varepsilon^2} \right)^d < \infty.
\]
By the Borel--Cantelli lemma, as $d \to \infty$,
\[
    Y_d
    \asto \lim_{d \to \infty} \EE Y_d 
    = \lim_{d \to \infty} \frac{2}{d} \sum_{f=0}^{d/2 - 1} \cos\left( (i-L) \theta_f \right).
\]
Let $\vb = (b_1, \dots, b_L)^\transpose$, where $b_i \coloneqq \frac{\sigma}{\sqrt{d}} \vq^\transpose \mR_{i-L} \vz_i$.
Then
\[
    \begin{aligned}
        \vb 
        = \frac{\sigma}{\sqrt{d}} \begin{pmatrix}
            \vdots \\ \vq^\transpose \mR_{i-L} \vz_i \\ \vdots
        \end{pmatrix}
        = \frac{\sigma}{\sqrt{d}} \begin{pmatrix}
            \vdots \\ \sum_{j=1}^{d} q_j (\mR_{i-L} \vz_i)_j \\ \vdots
        \end{pmatrix}
        = \frac{\sigma}{\sqrt{d}} \sum_{j=1}^{d} q_j \begin{pmatrix}
            \vdots \\ (\mR_{i-L} \vz_i)_j \\ \vdots
        \end{pmatrix}
        \coloneqq \frac{1}{\sqrt{d}} \sum_{j=1}^{d} \vc_j.
    \end{aligned}
\]
Since $\tilde{\vz}_i \coloneqq \mR_{i-L} \vz_i \iid \mathcal{N}(\bm{0}, \mI_d)$, the vectors $\vc_j \in \RR^{L}$ are i.i.d. and satisfy
\[
    \EE(\vc_1) = \bm{0}_L, 
    \qquad
    \Cov(\vc_1) = \EE\left( \sigma^2 q_1^2 \left( \tilde{z}_{i 1} \tilde{z}_{j 1} \right)_{i, j} \right) = \sigma^2 \mI_L.
\]
By the multivariate central limit theorem,
\[
    \vb \dto \mathcal{N}(\bm{0}_L, \sigma^2 \mI_L) \quad \textnormal{ as } d \to \infty.
\]
By Slutsky's theorem,
\[
    (s_1, \dots, s_L) \dto \mathcal{N}\left( (\mu_1, \dots, \mu_L), \sigma^2 \mI_L \right) \quad \textnormal{ as } d \to \infty,
\]
where
\[
    \mu_i = \corr \lim_{d \to \infty} \frac{2}{d} \sum_{f=0}^{d/2 - 1} \cos\left( (i-L) \theta_f \right),
    \quad \forall i \in \set{1, \dots, L}.
\]
For NoPE, $\mR = \mI_d$, so $\theta_f = 0$ for all $f \in \set{0, \dots, d/2-1}$. 
Hence
\[
    \mu_i = \corr,
    \quad \forall i \in \set{1, \dots, L}.
\]
For RoPE, $\theta_f = b^{-2f/d}$ for $f \in \set{0, \dots, d/2-1}$. 
Thus
\[
    \mu_i = \corr \lim_{d \to \infty} \frac{2}{d} \sum_{f=0}^{d/2 - 1} \cos\left( (i-L) b^{-2f/d} \right)
    = \corr \int_{0}^{1} \cos\left( (i-L) b^{-x} \right) \dif x,
    \quad \forall i \in \set{1, \dots, L}.
\]
\end{proof}

\subsection{Proof of \texorpdfstring{\cref{pro:lln ess}}{Proposition 2}}
\label{sec:proof of pro:lln ess}

We first analyze the limiting behavior of the partition functions $Z(\vs; \scale)$ and $Z(\vs; \beta \scale)$, as stated in \cref{lem:lln partition fucntion}.

\begin{lemma}
\label{lem:lln partition fucntion}
    Let the logits $s_1, \dots, s_L$ be independent Gaussian random variables with common variance $\sigma^2$, i.e., $s_i \sim \mathcal{N}(\mu_i, \sigma^2)$. 
    Write $\vmu \coloneqq (\mu_1, \dots, \mu_L)$ and assume $\mumax \coloneqq \max_{1 \le i \le L} | \mu_i | < \infty$.
    Let the inverse temperature $\tau = \tau(L)$ possibly depend on $L$ (e.g., $\tau \in \set{\scale, \beta \scale}$), and define the scaling parameter
    \[
        \crit \coloneqq \limsup_{L \to \infty} \frac{\tau(L) \sigma}{\sqrt{\ln L}}.
    \]
    Let
    \[
        S_L(\tau) = \sum_{i=1}^{L} e^{\tau s_i}.
    \]
    Then:
    \begin{enumerate}[label=\upshape(\roman*)]
        \item\label{it1:lln partition fucntion}
        If $0 \le \crit < \sqrt{2}$, then
        \[
            \frac{S_L(\tau)}{\EE S_L(\tau)} \pto 1 \quad \textnormal{ as } L \to \infty.
        \]

        \item\label{it2:lln partition fucntion}
        If $\crit = \sqrt{2}$ and
        \[
            \lim_{L \to \infty} \frac{\ESS_{2}(\mathring{\valpha})}{L / \sqrt{\ln L}} \to \infty,
            \quad \textnormal{ where } \enspace
            \ESS_{2}(\mathring{\valpha}) \coloneqq \frac{\left( \sum_{i=1}^{L} e^{\tau \mu_i} \right)^2}{\sum_{i=1}^{L} e^{2 \tau \mu_i}},
        \]
        then
        \[
            \frac{S_L(\tau)}{\EE S_L(\tau)} \pto \frac{1}{2} \quad \textnormal{ as } L \to \infty.
        \]

        \item\label{it3:lln partition fucntion}
        If $\sqrt{2} < \crit < \infty$, the law of large numbers fails: the normalized sum $S_L(\tau) / \EE S_L(\tau)$ can converge to a nondegenerate, unbounded random variable on $[0, \infty)$.

        \item\label{it4:lln partition fucntion}
        If $\crit = \infty$, then
        \[
            \frac{S_L(\tau)}{M_L(\tau)} \pto 1 \enspace \textnormal{ as } L \to \infty,
        \]
        where $M_L(\tau) \coloneqq \max_{1 \le i \le L} e^{\tau s_i}$.
    \end{enumerate}
\end{lemma}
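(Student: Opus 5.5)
This is the random energy model (REM) partition-function analysis with inhomogeneous means. Write $s_i=\mu_i+\sigma g_i$ with $g_i$ i.i.d.\ standard normal, and set $w_i\coloneqq e^{\tau\mu_i}$. Then
\[
S_L(\tau)=\sum_i w_i e^{\tau\sigma g_i},\qquad
\EE S_L(\tau)=e^{\tau^2\sigma^2/2}\sum_i w_i .
\]
Since $\mumax<\infty$, the weights satisfy $e^{-2\tau\mumax}\le w_i/w_j\le e^{2\tau\mumax}$. For $\tau=\bigO(\sqrt{\ln L})$ this is only a subpolynomial factor $L^{\littleo(1)}$, so up to such factors everything behaves like the homogeneous case. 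All four items will be proved by a truncated second-moment method. Fix a threshold $u_L$ and split
\[
S_L=S_L^{\le}+S_L^{>},\qquad S_L^{\le}=\sum_i w_i e^{\tau\sigma g_i}\1\{g_i\le u_L\}.
\]

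\textbf{Item (i).} Take $\crit<\sqrt2$, so eventually $\tau\sigma\le c\sqrt{\ln L}$ for some $c<\sqrt2$. The plain second moment suffices when $c<1$:
\[
\Var S_L/(\EE S_L)^2=(e^{\tau^2\sigma^2}-1)\,\frac{\sum_i w_i^2}{(\sum_i w_i)^2}\le L^{c^2-1+\littleo(1)}\to0 .
\]
For $1\le c<\sqrt2$ I would truncate at $u_L=\sqrt{2\ln L}\,(1+\delta)$. The event $\{\max_i g_i>u_L\}$ has probability $\le L\,e^{-u_L^2/2}\to0$, so $S_L^{>}=0$ with high probability. The loss in the first moment is
\[
\EE S_L-\EE S_L^{\le}=\EE S_L\cdot\PP\{N(\tau\sigma,1)>u_L\},
\]
which is $\littleo(\EE S_L)$ because $\tau\sigma<u_L$ eventually. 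The truncated variance is controlled by
\[
\sum_i w_i^2\,\EE\big[e^{2\tau\sigma g}\1\{g\le u_L\}\big]\approx \sum_i w_i^2\, e^{2\tau\sigma u_L-u_L^2/2}.
\]
Dividing by $(\EE S_L)^2\approx L^2 e^{\tau^2\sigma^2}\times(\text{weight factors})$ gives an exponent $2c\sqrt2-1-c^2-2+\littleo(1)$, i.e.\ $-(c-\sqrt2)^2+\littleo(1)<0$. Choosing $\delta$ small, Chebyshev's inequality then gives $S_L/\EE S_L\pto1$.

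\textbf{Item (ii).} At $\crit=\sqrt2$ the mean of the tilted Gaussian $N(\tau\sigma,1)$ sits at the truncation point $\sqrt{2\ln L}$. I would truncate exactly at $u_L=\sqrt{2\ln L}-\tfrac12\ln\ln L/\sqrt{2\ln L}$, the typical location of $\max_i g_i$. The fraction of the first moment retained is then $\PP\{N(\tau\sigma,1)\le u_L\}\to\tfrac12$, since $\tau\sigma-u_L=\littleo(1)$. One must also check $\tau\sigma-\sqrt{2\ln L}=\littleo(1/\sqrt{\ln L})$ along the relevant sequence; this is part of what I would need to make precise. The truncated variance is bounded as in (i). The extra factor $\sqrt{\ln L}$ from the Mills ratio is exactly compensated by the hypothesis $\ESS_2(\mathring{\valpha})=\littleomega(L/\sqrt{\ln L})$, because $\sum_i w_i^2/(\sum_i w_i)^2=1/\ESS_2(\mathring\valpha)$. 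Finally, the contribution of indices with $g_i>u_L$ is $\littleo_{\PP}(\EE S_L)$, by a first-moment bound on the number of such exceedances. Together these give the limit $\tfrac12$. \emph{This is the main obstacle}: getting the second-order term of $u_L$ right, so that the retained mass is exactly one half while the variance and the exceedance terms are still negligible.

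\textbf{Items (iii) and (iv).} For (iii), when $\sqrt2<\crit<\infty$, the sum is dominated by the extreme order statistics $g_{(1)}>g_{(2)}>\cdots$. These form a Poisson point process after centering at $\sqrt{2\ln L}$ with scale $1/\sqrt{2\ln L}$. Hence $S_L$, divided by $e^{\tau\sigma\sqrt{2\ln L}}$ times weight factors, converges to a Poisson--Dirichlet-type sum $\sum_k e^{-\kappa\xi_k}$ with $\kappa=\crit/\sqrt2>1$. This limit is nondegenerate. Normalizing by $\EE S_L$ instead, which is much larger, gives a random limit that is not identically $1$; I would only establish failure of the law of large numbers, as stated. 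For (iv), with $\crit=\infty$ and a gap $g_{(1)}-g_{(2)}\asymp 1/\sqrt{\ln L}$, the ratio of the second-largest term to the largest is $e^{-\tau\sigma(g_{(1)}-g_{(2)})}\to0$. The weight ratios are $e^{\bigO(\tau)}$, which is negligible against $e^{\tau\sigma\cdot\Theta(1/\sqrt{\ln L})\cdot\omega(\sqrt{\ln L})}$. Summing the order-statistic tail shows $S_L/M_L\pto1$.
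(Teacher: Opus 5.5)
The genuine gap is in item (iv). You separate the weights $w_i=e^{\tau\mu_i}$ from the Gaussian parts and argue with the gap $g_{(1)}-g_{(2)}\asymp 1/\sqrt{\ln L}$. That gap only gives a suppression factor $e^{-\tau\sigma(g_{(1)}-g_{(2)})}=e^{-\bigTheta(\tau\sigma/\sqrt{\ln L})}$, whose exponent is $\littleomega(1)$ but $\littleo(\tau)$. The exponent you display should be $\tau\sigma\cdot\bigTheta(1/\sqrt{\ln L})$, with no extra $\littleomega(\sqrt{\ln L})$ factor. The weight ratios, however, can be as large as $e^{2\tau\mumax}=e^{\bigTheta(\tau)}$, so the weights dominate the gap, not the other way round.

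Concretely, let half the indices have $\mu_i=1$ and half $\mu_i=0$ (with $\sigma=1$). Then the maximal term of $S_L$ comes from the first group with high probability, while $g_{(1)}$ lies in the second group with probability about $1/2$. So the largest term need not carry $g_{(1)}$, and $e^{-\tau\sigma(g_{(1)}-g_{(2)})}$ is not the ratio you need. The argument must be run on the order statistics of the $s_i$ themselves, together with a uniform bound on the sum of all $L-1$ sub-maximal terms.

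The paper does exactly this without point-process input. It confines $s_{(L)}$ to a window $[s_*,s^*]$ with $s_*\to\infty$ and $s^*=2\sigma\sqrt{\ln L}+\mumax$. It then conditions on the value and location of the maximum; the remaining $s_i$ are independent and conditioned on $s_i<z$. This gives $\EE[\resratio_L\mid s_{(L)}=z]\le\sum_i R_i(z)$ with $R_i(z)=e^{-\tau z}\EE[e^{\tau s_i}\mid s_i\le z]$. Mills' ratio yields $R_i(z)\le\frac{16 s_*}{\tau\sigma^2}(1-F_i(s_*))$ uniformly on the window. Hence $\sum_i R_i(z)\le\frac{16 s_*\gamma_L}{\tau\sigma^2}\to0$, using $s_*\lesssim\sqrt{\ln L}$, $\gamma_L\le(\tau/\sqrt{\ln L})^{1/2}$ and $\tau=\littleomega(\sqrt{\ln L})$, and Markov's inequality finishes. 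Because it works with each law $\mathcal N(\mu_i,\sigma^2)$ directly, the inhomogeneous means cost nothing.

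In (ii) the exceedance step also fails as written. With $u_L=\sqrt{2\ln L}-\frac{\ln\ln L}{2\sqrt{2\ln L}}$ one has $L\,\PP(g>u_L)\to\frac{1}{2\sqrt\pi}$, so exceedances occur with non-vanishing probability. Since $\PP(\mathcal N(\tau\sigma,1)>u_L)\to\frac12$, they also carry half of $\EE S_L$ in expectation. A first-moment bound on their number therefore does not give $S_L^{>}=\littleo_{\PP}(\EE S_L)$ without more work. One fix is Markov on the band $(u_L,\tau\sigma]$, whose expected contribution is $\EE S_L\,[\frac12-\Phi(u_L-\tau\sigma)]=\littleo(\EE S_L)$, plus $\PP(\max_i g_i>\tau\sigma)\le(2\sqrt{\pi\ln L})^{-1}$.

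Once you do that, the second-order correction is superfluous, and your ``main obstacle'' disappears. This is the paper's route: truncating exactly at $\tau\sigma=\sqrt{2\ln L}$ retains exactly half the mean and kills $S_L^{>}$ with high probability by that union bound. It also gives $\Var(S_L^{\le})/(\EE S_L^{\le})^2\le\frac{2L}{\sqrt{\pi\ln L}\,\ESS_2(\mathring{\valpha})}$, which is precisely what the hypothesis sends to $0$. You are right that $\crit=\sqrt2$ as a $\limsup$ is not sufficient; the paper tacitly sets $\tau\sigma=\sqrt{2\ln L}$.

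Your item (i) via truncated second moments is valid. The exponent is $2\sqrt2c-c^2-2=-(c-\sqrt2)^2$; your intermediate expression has a stray $-1$. The paper's von Bahr--Esseen bound on $\EE|S_L/\EE S_L-1|^r$, with $r\in(1,2)$ close to $1$, avoids truncation and the case split.

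In (iii), the limit of $S_L/\EE S_L$ is in fact $0$ in probability along a subsequence realizing the $\limsup$. Truncating at $(1+\delta)\sqrt{2\ln L}$ leaves only a vanishing fraction of the mean, and this suffices for the failure of the law of large numbers. The nondegenerate limit appears only under your extreme-value normalization.
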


\begin{proof}
For each $s_i \sim \mathcal{N}(\mu_i, \sigma^2)$, $e^{\tau s_i}$ is log-normal with expectation $\EE e^{\tau s_i} = e^{\tau \mu_i + \frac{1}{2} \tau^2 \sigma^2}$.

\proofparagraph{\cref{it1:lln partition fucntion}}
If $0 \le \crit < \sqrt{2}$, let 
\[
    \bar{S}_L(\tau) \coloneqq \frac{S_L(\tau)}{\EE S_L(\tau)}.
\]
It suffices to show that for some $r > 1$, $\lim_{L \to \infty} \EE |\bar{S}_L(\tau) - 1|^r = 0$, which implies $\bar{S}_L(\tau) \pto 1$ as $L \to \infty$.
By the von Bahr--Esseen inequality \citep[Theorem 2]{von1965inequalities}, for any $r \in [1, 2]$, 
\[
    \begin{aligned}
        \EE |\bar{S}_L(\tau) - 1|^r 
        = \EE \left| \sum_{i=1}^{L} \frac{e^{\tau s_i} - \EE e^{\tau s_i}}{\EE S_L(\tau)} \right|^r
        \le 2 \sum_{i=1}^{L} \EE \left| \frac{e^{\tau s_i} - \EE e^{\tau s_i}}{\EE S_L(\tau)} \right|^r
        = 2 \frac{\sum_{i=1}^{L} \EE | e^{\tau s_i} - \EE e^{\tau s_i} |^r}{ \left| \sum_{i=1}^{L} \EE e^{\tau s_i} \right|^r }.
    \end{aligned}
\]
For the denominator, since $\mu_i \ge - \mumax$,
\[
    \sum_{i=1}^{L} \EE e^{\tau s_i}
    = \sum_{i=1}^{L} e^{\tau \mu_i + \frac{1}{2} \tau^2 \sigma^2}
    \ge L e^{- \tau \mumax + \frac{1}{2} \tau^2 \sigma^2}.
\]
For the numerator, using the power mean inequality $(x + y)^r \le 2^{r-1} (x^r + y^r)$ for $r \ge 1$ and $x, y \ge 0$,
\[
    \begin{aligned}
        \sum_{i=1}^{L} \EE | e^{\tau s_i} - \EE e^{\tau s_i} |^r
        &\le \sum_{i=1}^{L} \EE ( e^{\tau s_i} + \EE e^{\tau s_i} )^r \\
        &\le 2^{r-1} \sum_{i=1}^{L} \left[ \EE e^{r \tau s_i} + (\EE e^{\tau s_i})^r \right] \\
        &\le 2^{r-1} L ( e^{r \tau \mumax + \frac{1}{2} r^2 \tau^2 \sigma^2} + e^{r \tau \mumax + \frac{1}{2} r \tau^2 \sigma^2} ).
    \end{aligned}
\]
Hence,
\[
    \begin{aligned}
        \EE |\bar{S}_L(\tau) - 1|^r 
        &\le 2^r L^{1-r} \frac{e^{r \tau \mumax + \frac{1}{2} r^2 \tau^2 \sigma^2} + e^{r \tau \mumax + \frac{1}{2} r \tau^2 \sigma^2}}{e^{- r \tau \mumax + \frac{1}{2} r \tau^2 \sigma^2}} \\
        &= 2^r L^{1-r} \left( e^{2 r \tau \mumax + \frac{1}{2} r(r-1) \tau^2 \sigma^2} + e^{2 r \tau \mumax} \right) \\
        &= 2^r \exp\left( -(r-1) \ln L + 2 r \tau \mumax + \frac{1}{2} r(r-1) \tau^2 \sigma^2 \right) (1 + \littleo(1)).
    \end{aligned}
\]
Thus $\lim_{L \to \infty} \EE |\bar{S}_L(\tau) - 1|^r = 0$ provided the exponent
\[
    \frac{1}{2} r(r-1) \tau^2 \sigma^2 -(r-1) \ln L 
    = \frac{r(r-1)}{2} \ln L \cdot \left( \frac{\tau^2 \sigma^2}{\ln L} - \frac{2}{r} \right)
    \to -\infty
    \quad \textnormal{ as } L \to \infty.
\]
Since $\crit < \sqrt{2}$, there exists $\varepsilon > 0$ such that for sufficiently large $L$, $0 \le \tau^2 \sigma^2 / \ln L < 2 - \varepsilon$.
Choose any $r \in (1, 2/(2-\varepsilon)) \subset (1, 2)$; then for large $L$ we have $\tau^2 \sigma^2 / \ln L < 2 / r$, so the exponent tends to $-\infty$. 
Hence some $r > 1$ satisfies $\lim_{L \to \infty} \EE|\bar{S}_L(\tau) - 1|^r = 0$, which completes the proof.

\proofparagraph{\cref{it2:lln partition fucntion}}
If $\crit = \sqrt{2}$, write $s_i = \mu_i + \sigma z_i$ with $z_i \iid \mathcal{N}(0, 1)$.
Decompose $S_L(\tau)$ as $S_L(\tau) = S_L^{\le}(\tau) + S_L^{>}(\tau)$, where
\[
    S_L^{\le}(\tau) = \sum_{i=1}^{L} e^{\tau s_i} \1_{\set{z_i \le \tau \sigma}},
    \qquad
    S_L^{>}(\tau) = \sum_{i=1}^{L} e^{\tau s_i} \1_{\set{z_i > \tau \sigma}}.
\]

\proofparagraph{\indent 1. Bounding $S_L^{>}(\tau)$.}
For any $x > 0$, the Mills' ratio gives $\Phi(-x) \le \phi(x) / x$, where $\Phi$ and $\phi$ denote the CDF and PDF of the standard normal distribution, respectively (see, e.g., \citet[Proposition 2.1.2]{vershynin2026high}).
By the union bound,
\[
    \begin{aligned}
        \PP\left( \max_{1 \le i \le L} z_i > \tau \sigma \right)
        \le \sum_{i=1}^{L} \PP(z_i > \tau \sigma)
        = L \Phi(- \tau \sigma)
        \le L \cdot \frac{1}{\tau \sigma \sqrt{2 \pi}} e^{-\frac{1}{2} \tau^2 \sigma^2}.
    \end{aligned}
\]
Substituting $\tau \sigma = \sqrt{2 \ln L}$, the right-hand side equals $1 / (2 \sqrt{\pi \ln L}) \to 0$ as $L \to \infty$.
It follows that
\[
    \PP\left( S_L^{>}(\tau) = 0 \right) 
    \ge \PP\left( \max_{1 \le i \le L} z_i \le \tau \sigma \right)
    = 1 - \PP\left( \max_{1 \le i \le L} z_i > \tau \sigma \right) 
    \to 1 \quad \textnormal{ as } L \to \infty.
\]
Therefore $S_L^{>}(\tau) \pto 0$ as $L \to \infty$.

\proofparagraph{\indent 2. Bounding $S_L^{\le}(\tau)$.}
Set $Y_i \coloneqq e^{\tau \sigma z_i} \1_{\set{z_i \le \tau \sigma}}$. Then 
\[
    \begin{aligned}
        \EE Y_i
        = \int_{-\infty}^{\tau \sigma} e^{\tau \sigma x} \frac{1}{\sqrt{2 \pi}} e^{-\frac{x^2}{2}} \dif x
        = e^{\frac{1}{2} \tau^2 \sigma^2} \int_{-\infty}^{\tau \sigma} \frac{1}{\sqrt{2 \pi}} e^{-\frac{(x - \tau \sigma)^2}{2}} \dif x
        = \frac{1}{2} e^{\frac{1}{2} \tau^2 \sigma^2}.
    \end{aligned}
\]
Hence
\[
    \EE S_L^{\le}(\tau)
    = \sum_{i=1}^{L} e^{\tau \mu_i} \cdot \frac{1}{2} \EE e^{\tau \sigma z_i}
    = \frac{1}{2} \sum_{i=1}^{L} \EE e^{\tau \sigma s_i}
    = \frac{1}{2} \EE S_L(\tau).
\]
Moreover,
\[
    \begin{aligned}
        \EE Y_i^2
        = \int_{-\infty}^{\tau \sigma} e^{2 \tau \sigma x} \frac{1}{\sqrt{2 \pi}} e^{-\frac{x^2}{2}} \dif x
        = e^{2 \tau^2 \sigma^2} \int_{-\infty}^{\tau \sigma} \frac{1}{\sqrt{2 \pi}} e^{-\frac{(x - 2 \tau \sigma)^2}{2}} \dif x
        = e^{2 \tau^2 \sigma^2} \Phi(- \tau \sigma),
    \end{aligned}
\]
By Mills' ratio,
\[
    \EE Y_i^2 
    \le e^{2 \tau^2 \sigma^2} \frac{1}{\tau \sigma \sqrt{2 \pi}} e^{- \frac{\tau^2 \sigma^2}{2}} 
    = \frac{1}{\tau \sigma \sqrt{2 \pi}} e^{\frac{3}{2} \tau^2 \sigma^2}
\]
Therefore,
\[
    \begin{aligned}
        \frac{\Var(S_L^{\le}(\tau))}{\EE S_L^{\le}(\tau)^2} 
        &= \frac{\sum_{i=1}^{L} e^{2 \tau \mu_i} \Var(Y_i)}{( \frac{1}{2} \sum_{i=1}^{L} e^{\tau \mu_i + \frac{1}{2} \tau^2 \sigma^2} )^2}
        \le \frac{\sum_{i=1}^{L} e^{2 \tau \mu_i} \EE(Y_i^2)}{( \frac{1}{2} \sum_{i=1}^{L} e^{\tau \mu_i + \frac{1}{2} \tau^2 \sigma^2} )^2} \\
        &= \frac{4}{\tau \sigma \sqrt{2 \pi}} e^{\frac{1}{2} \tau^2 \sigma^2} \frac{\sum_{i=1}^{L} e^{2 \tau \mu_i}}{(\sum_{i=1}^{L} e^{\tau \mu_i})^2}
        = \frac{2 L}{\sqrt{\pi \ln L}} \cdot \frac{1}{\ESS_2(\mathring{\valpha})} \to 0 \quad \textnormal{ as } L \to \infty,
    \end{aligned}
\]
where we used $\tau \sigma = \sqrt{2\ln L}$ in the last step and $\ESS_2(\mathring{\valpha})$ denotes the effective sample size defined by $( \sum_{i=1}^{L} e^{\tau\mu_i} )^2 / ( \sum_{i=1}^{L} e^{2\tau\mu_i} )$.
By Chebyshev's inequality, for any $\delta > 0$,
\[
    \PP\left( \left| \frac{S_L^{\le}(\tau)}{\EE S_L^{\le}(\tau)} - 1 \right| > \delta \right) 
    \le \frac{1}{\delta^2} \frac{\Var(S_L^{\le}(\tau))}{\EE S_L^{\le}(\tau)^2} \to 0 \quad \textnormal{ as } L \to \infty.
\]
Finally,
\[
    \frac{S_L(\tau)}{\EE S_L(\tau)} 
    = \frac{S_L^{\le}(\tau)}{\EE S_L(\tau)} + \frac{S_L^{>}(\tau)}{\EE S_L(\tau)}
    = \frac{S_L^{\le}(\tau)}{2 \EE S_L^{\le}(\tau)} + \frac{S_L^{>}(\tau)}{\EE S_L(\tau)}
    \pto \frac{1}{2}
    \quad \textnormal{ as } L \to \infty.
\]

\proofparagraph{\cref{it3:lln partition fucntion}}
Consider the case $\sqrt{2} < \crit < \infty$.
In the simplest setting, where the $s_i$ are i.i.d. standard Gaussian (i.e., $\mu_i \equiv 0$ and $\sigma = 1$), $S_L(\tau)$ converges in distribution to a nondegenerate stable law on $[0, \infty)$ (see, e.g., \citet[Proposition 3.1]{molchanov2019limit} and \citet[Theorem 3]{ben2005limit}).
Hence the law of large numbers fails.

\proofparagraph{\cref{it4:lln partition fucntion}}
If $\crit = \infty$, define $\resratio_L \coloneqq \frac{S_L(\tau) - M_L(\tau)}{M_L(\tau)}$; it suffices to show $\resratio_L \pto 0$ as $L \to \infty$.
Let $s_{(L)} \coloneqq \max_{1 \le i \le L} s_i$ denote the largest logit, so that $M_L(\tau) = \max_{1 \le i \le L} e^{\tau s_i} = e^{\tau s_{(L)}}$.
Denote by $F_i(z)$ and $f_i(z)$ the CDF and PDF of $s_i$, respectively.

\proofparagraph{\indent 1. Bounding $s_{(L)}.$}
Let $F_{(L)}(z) \coloneqq \prod_{i=1}^{L} F_i(z)$ be the CDF of $s_{(L)}$.
Since $F_{(L)}(\cdot)$ is non-decreasing, choose $\gamma_L \coloneqq \min\set{\ln L, \left( \tau / \sqrt{\ln L} \right)^{1/2}}$ and define $s_* \coloneqq \inf\set{z : F_{(L)}(z) = e^{-\gamma_L}}$.
Then $\PP(s_{(L)} < s_*) = e^{-\gamma_L} \to 0$ as $L \to \infty$.
Moreover,
\[
    \gamma_L 
    = - \sum_{i=1}^{L} \ln F_i(s_*)
    = - \sum_{i=1}^{L} \ln \Phi\left(\frac{s_* - \mu_i}{\sigma}\right)
    \ge -L \ln \Phi\left(\frac{s_* + \mumax}{\sigma}\right).
\]
Hence
\[
    - \frac{\gamma_L}{L} \le \ln \Phi\left(\frac{s_* + \mumax}{\sigma}\right) \le 0.
\]
Letting $L \to \infty$ implies $\Phi\left(\frac{s_* + \mumax}{\sigma}\right) \to 1$, and therefore $s_* \to \infty$.
On the other hand,
\[
    \gamma_L 
    = - \sum_{i=1}^{L} \ln \Phi\left(\frac{s_* - \mu_i}{\sigma}\right)
    \le -L \ln \Phi\left(\frac{s_* - \mumax}{\sigma}\right).
\]
If $\frac{s_* - \mumax}{\sigma} \ge 1$, then $\Phi\left(\frac{s_* - \mumax}{\sigma}\right) \ge \frac{1}{2}$.
For $x \in \left[ \frac{1}{2}, 1 \right)$, we have $\ln x \ge 1 - \frac{1}{x} = -\frac{1-x}{x} \ge -2 (1 - x)$.
Thus, by Mills' ratio,
\[
    \begin{aligned}
        \ln \Phi\left(\frac{s_* - \mumax}{\sigma}\right)
        &\ge -2 \left( 1 - \Phi\left(\frac{s_* - \mumax}{\sigma}\right) \right) \\
        &\ge -2 \frac{1}{\sqrt{2 \pi} \left(\frac{s_* - \mumax}{\sigma}\right)} \exp\left( -\frac{1}{2} \left(\frac{s_* - \mumax}{\sigma}\right)^2 \right) \\
        &\ge - \exp\left( -\frac{1}{2} \left(\frac{s_* - \mumax}{\sigma}\right)^2 \right).
    \end{aligned}
\]
Therefore, for $L \ge 3$,
\[
    1 \le \gamma_L \le L \exp\left( -\frac{1}{2} \left(\frac{s_* - \mumax}{\sigma}\right)^2 \right),
\]
which implies $s_* \le \sigma \sqrt{2 \ln L} + \mumax$.
To obtain an upper bound for $s_{(L)}$, set $s^* \coloneqq 2 \sigma \sqrt{\ln L} + \mumax$.
By the union bound,
\[
    \begin{aligned}
        \PP(s_{(L)} > s^*) 
        &= \PP\left( \bigcup_{i=1}^{L} \set{s_i > s^*} \right)
        \le \sum_{i=1}^{L} \PP(s_i > s^*) \\
        &= \sum_{i=1}^{L} \Phi\left(\frac{\mu_i - s^*}{\sigma}\right)
        \le L \Phi\left(\frac{\mumax - s^*}{\sigma}\right) \\
        &= L \Phi\left( -2 \sqrt{\ln L} \right)
        \le L e^{-2 \ln L} 
        = \frac{1}{L} 
        \to 0 \quad \textnormal{ as } L \to \infty,
    \end{aligned}
\]
where the last inequality follows from $\Phi(-x) \le \frac{1}{2} e^{-x^2 / 2} < e^{-x^2 / 2}$ for $x > 0$.
Combining the two bounds yields
\begin{equation}\label{eqn:bound s(L)}
    \PP\left( s_{(L)} \notin [s_*, s^*] \right) 
    \le \PP(s_{(L)} < s_*) + \PP(s_{(L)} > s^*) 
    \to 0 \quad \textnormal{ as } L \to \infty.
\end{equation}

\proofparagraph{\indent 2.}
Suppose $z \in [s_*, s^*]$.
Since $s_* \to \infty$ as $L \to \infty$, for sufficiently large $L$ we have $s_* > \mumax$.
For each $i \in \set{1, \dots, L}$, define
\[
    \begin{aligned}
        R_i(z) 
        &\coloneqq e^{-\tau z} \EE\left[e^{\tau s_i} \given s_i \le z \right] \\
        &= \frac{e^{-\tau z}}{\Phi\left( \frac{z - \mu_i}{\sigma} \right)} \int_{-\infty}^{z} e^{\tau u} \frac{1}{\sqrt{2 \pi} \sigma} e^{-\frac{(u - \mu_i)^2}{2 \sigma^2}} \dif u \\
        &\xlongequal{v = \frac{u - \mu_i}{\sigma}} \frac{e^{-\tau (z - \mu_i)}}{\Phi\left( \frac{z - \mu_i}{\sigma} \right)} \int_{-\infty}^{\frac{z - \mu_i}{\sigma}} e^{\tau \sigma v} \frac{1}{\sqrt{2 \pi}} e^{-\frac{v^2}{2}} \dif v \\
        &= \exp\left( \frac{1}{2} \tau^2 \sigma^2 - \tau (z - \mu_i) \right) \frac{\Phi\left( \frac{z - \mu_i}{\sigma} - \tau \sigma \right)}{\Phi\left( \frac{z - \mu_i}{\sigma} \right)}.
    \end{aligned}
\]
Since $s^* = 2 \sigma \sqrt{\ln L} + \mumax$ and $\tau = \littleomega(\sqrt{\ln L})$, for sufficiently large $L$ we have $z \le s^* < -\mumax + \tau \sigma^2$, and hence $\frac{z - \mu_i}{\sigma} - \tau \sigma < 0$ for every $i \in \set{1, \dots, L}$.
Applying Mills' ratio yields
\[
    \Phi\left( \frac{z - \mu_i}{\sigma} - \tau \sigma \right)
    \le \frac{1}{\sqrt{2 \pi} \left( \tau \sigma - \frac{z - \mu_i}{\sigma} \right)} \exp\left( -\frac{1}{2} \tau^2 \sigma^2 + \tau (z - \mu_i) - \frac{(z - \mu_i)^2}{2 \sigma^2} \right).
\]
Therefore,
\[
    R_i(z) 
    \le \frac{1}{\left( \tau \sigma - \frac{z - \mu_i}{\sigma} \right) \Phi\left( \frac{z - \mu_i}{\sigma} \right)} \cdot \frac{1}{\sqrt{2 \pi}} e^{-\frac{(z - \mu_i)^2}{2 \sigma^2}}
    = \frac{\sigma^2 f_i(z)}{\left( \tau \sigma^2 - (z - \mu_i) \right) F_i(z)}
    \eqqcolon \tilde{R}_i(z).
\]
Since $z - \mu_i \ge s_* - \mu_i \ge s_* - \mumax > 0$,
\[
    \begin{aligned}
        \frac{\dif}{\dif z} \ln \tilde{R}_i(z) 
        &= \frac{f_i'(z)}{f_i(z)} - \frac{f_i(z)}{F_i(z)} + \frac{1}{\tau \sigma^2 - (z - \mu_i)} \\
        &= -\frac{z - \mu_i}{\sigma^2} - \frac{f_i(z)}{F_i(z)} + \frac{1}{\tau \sigma^2 - (z - \mu_i)} \\
        &\le -\frac{s_* - \mu_i}{\sigma^2} + \frac{1}{\tau \sigma^2 - (s_* - \mu_i)}.
    \end{aligned}
\]
Since $s_* \le \sigma \sqrt{2 \ln L} + \mumax$ and $\tau = \littleomega(\sqrt{\ln L})$, for sufficiently large $L$ we have $\tau \sigma^2 \gg s^* - \mu_i > s_* - \mu_i$.
Thus, $\tilde{R}_i'(z) < 0$, so the maximum of $\tilde{R}_i(z)$ is attained at $s_*$.
Moreover, since $F_i(s_*) \ge \frac{1}{2}$ and $\tau \sigma^2 \ge 2 (s^* - \mu_i) \ge 2 (z - \mu_i)$, we have
\[
    R_i(z) \le \tilde{R}_i(z) \le \tilde{R}_i(s_*) \le \frac{4}{\tau} f_i(s_*).
\]
For any $x \ge 1$, Mills' ratio gives $1 - \Phi(x) \ge \frac{x}{x^2 + 1} \phi(x) \ge \frac{1}{2 x} \phi(x)$, where $\phi$ is the standard normal density.
Since $s_* \ge \mumax$, for sufficiently large $L$,
\[
    f_i(s_*) 
    = \frac{1}{\sigma} \phi\left( \frac{s_* - \mu_i}{\sigma} \right)
    \le \frac{2}{\sigma} \left( \frac{s_* - \mu_i}{\sigma} \right) \left( 1 - \Phi\left( \frac{s_* - \mu_i}{\sigma} \right) \right)
    \le \frac{4 s_*}{\sigma^2} (1 - F_i(s_*)).
\]
Combining these inequalities yields
\begin{equation}\label{eqn:bound Ri}
    \sup_{z \in [s_*, s^*]} R_i(z) 
    \le \frac{16 s_*}{\tau \sigma^2} (1 - F_i(s_*)), 
    \quad \forall i \in \set{1, \dots, L}.
\end{equation}
Summing \cref{eqn:bound Ri} over $i \in \set{1, \dots, L}$, using the definition of $s_*$, and applying $\ln x \le x - 1$, we obtain, for any $z \in [s_*, s^*]$,
\[
    \sum_{i=1}^{L} R_i(z)
    \le \frac{16 s_*}{\tau \sigma^2} \sum_{i=1}^{L} (1 - F_i(s_*))
    \le - \frac{16 s_*}{\tau \sigma^2} \sum_{i=1}^{L} \ln F_i(s_*)
    = \frac{16 s_* \gamma_L}{\tau \sigma^2}.
\]
Using $s_* \le \sigma \sqrt{2 \ln L} + \mumax$ and $\gamma_L \le \left( \tau / \sqrt{\ln L} \right)^{1/2}$, we have
\[
    \sup_{z \in [s_*, s^*]} \sum_{i=1}^{L} R_i(z)
    \le \frac{16}{\tau \sigma^2} \left( \sigma \sqrt{2 \ln L} + \mumax \right) \left( \frac{\tau}{\sqrt{\ln L}} \right)^{1/2}
    \lesssim \left( \frac{\sqrt{\ln L}}{\tau} \right)^{1/2}
    \to 0 \quad \textnormal{ as } L \to \infty.
\]

\proofparagraph{\indent 3. Bounding $\resratio_L$.}
For any $\delta > 0$, write
\begin{equation}\label{eqn:WL decompose}
    \PP\left( \resratio_L > \delta) \le \PP(\resratio_L > \delta, s_{(L)} \in [s_*, s^*]) + \PP(s_{(L)} \notin [s_*, s^*] \right).
\end{equation}
By the law of total probability,
\[
    \begin{aligned}
        \PP(\resratio_L > \delta, s_{(L)} \in [s_*, s^*])
        &= \int_{s_*}^{s^*} \PP\left( \resratio_L > \delta \given s_{(L)} = z \right) f_{(L)}(z) \dif z,
    \end{aligned}
\]
where $f_{(L)}(z)$ is the PDF of $s_{(L)}$.
By Markov's inequality,
\[
    \PP\left( \resratio_L > \delta \given s_{(L)} = z \right) \le \frac{1}{\delta} \EE\left[ \resratio_L \given s_{(L)} = z \right].
\]
Since $s_1, \dots, s_L$ are independent continuous random variables, the events $\set{s_j = z; \enspace s_i < z, \forall i \neq j}$ partition the event $\set{s_{(L)} = z}$.
Thus,
\[
    \begin{aligned}
        \EE\left[ \resratio_L \given s_{(L)} = z \right]
        &= \sum_{j=1}^{L} \EE\left[ \resratio_L \cdot \1\set{s_j = z; \enspace s_i < z, \forall i \neq j} \given s_{(L)} = z \right] \\
        &= \sum_{j=1}^{L} \pi_j(z) \EE\left[ \resratio_L \given s_j = z; \enspace s_i < z, \forall i \neq j \right] \\
        &= \sum_{j=1}^{L} \pi_j(z) \sum_{i \neq j} \EE\left[ e^{\tau (s_i - z)} \given s_j = z; \enspace s_i < z, \forall i \neq j \right] \\
        &= \sum_{j=1}^{L} \pi_j(z) \sum_{i \neq j} \EE\left[ e^{\tau (s_i - z)} \given s_i < z \right] \\
        &= \sum_{j=1}^{L} \pi_j(z) \sum_{i \neq j} R_i(z)
        \le \sum_{j=1}^{L} \pi_j(z) \sum_{i = 1}^{L} R_i(z)
        = \sum_{i = 1}^{L} R_i(z),
    \end{aligned}
\]
where $\pi_j(z) \coloneqq \PP\left( s_j = z; \enspace s_i < z, \forall i \neq j \given s_{(L)} = z \right)$ satisfies $\sum_{j=1}^{L} \pi_j(z) = 1$.
Therefore,
\[
    \begin{aligned}
        \PP(\resratio_L > \delta, s_{(L)} \in [s_*, s^*])
        &\le \frac{1}{\delta} \sup_{z \in [s_*, s^*]} \sum_{i = 1}^{L} R_i(z) \left( \int_{s_*}^{s^*} f_{(L)}(z) \dif z \right) \\
        &\le \frac{1}{\delta} \sup_{z \in [s_*, s^*]} \sum_{i = 1}^{L} R_i(z)
        \to 0 \quad \textnormal{ as } L \to \infty.
    \end{aligned}
\]
Combining this with \cref{eqn:bound s(L)} and substituting into \cref{eqn:WL decompose} yields $\resratio_L \pto 0$, which completes the proof.
\end{proof}

Using \cref{lem:lln partition fucntion}, we prove the law of large numbers in \cref{pro:lln ess} for the three cases.

\prollness*
\begin{proof}
For any $\beta \neq 1$,
\[
    \frac{\ESS_{\beta}(\valpha)}{\growth_{\beta}(L)}
    = \left. \left( \frac{Z(\vs; \scale)}{\EE Z(\vs; \scale)} \right)^{\frac{\beta}{\beta - 1}} \right/ \left( \frac{Z(\vs; \beta \scale)}{\EE Z(\vs; \beta \scale)} \right)^{\frac{1}{\beta - 1}}.
\]

\proofparagraph{\cref{it1:lln ess}}
If $\crit < \sqrt{2} \min\set{1/\beta, 1}$, then $\crit < \sqrt{2}$ and $\beta \crit < \sqrt{2}$.
By \cref{lem:lln partition fucntion}\cref{it1:lln partition fucntion},
\[
    \frac{Z(\vs; \scale)}{\EE Z(\vs; \scale)} \pto 1, 
    \quad
    \frac{Z(\vs; \beta \scale)}{\EE Z(\vs; \beta \scale)} \pto 1, 
    \quad \textnormal{ as } L \to \infty.
\]
Since $g(x, y) = x^{\frac{\beta}{\beta - 1}} / y^{\frac{1}{\beta - 1}}$ is continuous at $(1, 1)$, the continuous mapping theorem gives
\[
    \frac{\ESS_{\beta}(\valpha)}{\growth_{\beta}(L)} 
    = g\left( \frac{Z(\vs; \scale)}{\EE Z(\vs; \scale)}, \frac{Z(\vs; \beta \scale)}{\EE Z(\vs; \beta \scale)} \right) 
    \pto g(1, 1) = 1
    \quad \textnormal{ as } L \to \infty.
\]

\proofparagraph{\cref{it2:lln ess}}
If $\crit = \sqrt{2} \min\set{1/\beta, 1}$, then by \cref{lem:lln partition fucntion}\cref{it1:lln partition fucntion} and \cref{it2:lln partition fucntion}, there are two cases.
If $\crit = \sqrt{2}$ and $\beta \crit < \sqrt{2}$, so that $0 < \beta < 1$, then
\[
    \frac{\ESS_{\beta}(\valpha)}{\growth_{\beta}(L)} 
    \pto g\left( \frac{1}{2}, 1 \right) = 2^{\frac{\beta}{1 - \beta}}
    \quad \textnormal{ as } L \to \infty.
\]
If $\crit < \sqrt{2}$ and $\beta \crit = \sqrt{2}$, so that $\beta > 1$, then
\[
    \frac{\ESS_{\beta}(\valpha)}{\growth_{\beta}(L)} 
    \pto g\left( 1, \frac{1}{2} \right) = 2^{\frac{1}{\beta - 1}}
    \quad \textnormal{ as } L \to \infty.
\]
Combining these two cases, for $\beta \neq 1$,
\[
    \frac{\ESS_{\beta}(\valpha)}{\growth_{\beta}(L)} \pto 2^{\frac{1}{\max\set{\beta, \beta^{-1}} - 1}} \quad \textnormal{ as } L \to \infty.
\]

\proofparagraph{\cref{it3:lln ess}}
If $\crit > \sqrt{2} / \beta$, then, by the definition of $\crit$, there exist $\varepsilon > 0$ and a subsequence $\set{L_n} \subset \NN$ such that
\[
    \frac{\scale^2(L_n) \sigma^2}{\ln L_n} > \frac{2}{\beta} + \varepsilon, \quad \forall n \in \NN.
\]
Since $\ESS_{\beta}(\mathring{\valpha}) \le L$, we obtain
\[
    \growth_{\beta}(L_n)
    \le e^{-\frac{\beta}{2} \scale^2(L_n) \sigma^2} L_n
    < e^{-\frac{\beta}{2} \left( \frac{2}{\beta} + \varepsilon \right) \ln L_n} L_n
    = L_n^{-\frac{\beta}{2} \varepsilon} \to 0
    \quad \textnormal{ as } n \to \infty.
\]
Therefore, $\liminf_{L \to \infty} \growth_{\beta}(L) = 0$.
Moreover, if $\crit = \infty$, then by \cref{lem:lln partition fucntion}\cref{it4:lln partition fucntion},
\[
    \begin{aligned}
        \ESS_{\beta}(\valpha) 
        &= \frac{\left( \sum_{i=1}^{L} e^{\scale s_i} \right)^{\frac{\beta}{\beta - 1}}}{\left( \sum_{i=1}^{L} e^{\beta \scale s_i} \right)^{\frac{1}{\beta - 1}}}
        = \left. \left( \frac{\sum_{i=1}^{L} e^{\scale s_i}}{\max_{1 \le i \le L} e^{\scale s_i}} \right)^{\frac{\beta}{\beta - 1}} \right/ \left( \frac{\sum_{i=1}^{L} e^{\beta \scale s_i}}{\max_{1 \le i \le L} e^{\beta \scale s_i}} \right)^{\frac{1}{\beta - 1}} \\
        &= g\left( \frac{\sum_{i=1}^{L} e^{\scale s_i}}{\max_{1 \le i \le L} e^{\scale s_i}}, \frac{\sum_{i=1}^{L} e^{\beta \scale s_i}}{\max_{1 \le i \le L} e^{\beta \scale s_i}} \right)
        \pto g(1, 1) = 1
        \quad \textnormal{ as } L \to \infty.
    \end{aligned}
\]
\end{proof}

\subsection{Proof of \texorpdfstring{\cref{thm:scale nope}}{Theorem 4}}
\label{sec:proof of thm:scale nope}

\thmscalenope*
\begin{proof}
By \cref{pro:clt logits}, as $d \to \infty$, the limiting distribution of $\valpha$ is $\vg \sim \mathcal{N}(\vmu, \sigma^2 \mI_d)$.
For NoPE, \cref{eqn:clt nope} gives $\vmu = \corr \bm{1}_d$, so $Z(\vmu; \scale) = L e^{\corr}$ and $Z(\vmu; \beta \scale) = L e^{\corr \beta}$.
Therefore, by \cref{eqn:ess without variance,eqn:ess growth},
\[
    \ESS_{\beta}(\mathring{\vg}) = \frac{Z(\vmu; \scale)^{\frac{\beta}{\beta - 1}}}{Z(\vmu; \beta \scale)^{\frac{1}{\beta - 1}}} = L,
    \qquad
    \growth_{\beta}(L) = e^{-\frac{\beta}{2} \scale(L)^2 \sigma^2} L.
\]
By the continuous mapping theorem,
\[
    \frac{\ESS_{\beta}(\valpha)}{\ESS_{\beta}^*(L)} \dto \frac{\ESS_{\beta}(\vg)}{\ESS_{\beta}^*(L)}
    \quad \textnormal{ as } d \to \infty.
\]
By the Portmanteau theorem, for any $\varepsilon > 0$,
\[
    \limsup_{d \to \infty} \PP\left( \left| \frac{\ESS_{\beta}(\valpha)}{\ESS_{\beta}^*(L)} - 1 \right| \ge \varepsilon \right)
    \le \PP\left( \left| \frac{\ESS_{\beta}(\vg)}{\ESS_{\beta}^*(L)} - 1 \right| \ge \varepsilon \right).
\]
Then, by \cref{pro:lln ess},
\[
    \limsup_{L \to \infty} \limsup_{d \to \infty} \PP\left( \left| \frac{\ESS_{\beta}(\valpha)}{\ESS_{\beta}^*(L)} - 1 \right| \ge \varepsilon \right)
    \le \limsup_{L \to \infty} \PP\left( \left| \frac{\ESS_{\beta}(\vg)}{\ESS_{\beta}^*(L)} - 1 \right| \ge \varepsilon \right)
    = 0.
\]
Hence,
\[
    \frac{\ESS_{\beta}(\valpha)}{\ESS_{\beta}^*(L)}
    \pto 1
    \quad \textnormal{ as } d \to \infty \textnormal{ then } L \to \infty,
\]
where the exact form of $\ESS_{\beta}^*(L)$ depends on the range of $\crit$.

\proofparagraph{\cref{it1:scale nope}}
If $\crit < \sqrt{2} \min\set{1/\beta, 1}$, then \cref{pro:lln ess}\cref{it1:lln ess} gives
\[
    \ESS_{\beta}^*(L) = \growth_{\beta}(L) = e^{-\frac{\beta}{2} \scale(L)^2 \sigma^2} L.
\]
Thus,
\[
    \scale(L) = \sqrt{\frac{2}{\beta \sigma^2} \ln \left( \frac{L}{\ESS_{\beta}^*(L)} \right)}.
\]
Hence, the condition is equivalent to
\[
    \crit 
    = \limsup_{L \to \infty} \sqrt{\frac{2}{\beta} \cdot \frac{\ln (L / \ESS_{\beta}^*(L))}{\ln L}}
    = \sqrt{\frac{2}{\beta} \left( 1 - \liminf_{L \to \infty} \frac{\ln \ESS_{\beta}^*(L)}{\ln L} \right)}
    < \sqrt{2} \min\set{1/\beta, 1},
\]
which implies
\[
    \liminf_{L \to \infty} \frac{\ln \ESS_{\beta}^*(L)}{\ln L} > 1 - \min\set{\beta, \beta^{-1}}.
\]

\proofparagraph{\cref{it2:scale nope}}
If $\crit = \sqrt{2} \min\set{1/\beta, 1}$, then \cref{pro:lln ess}\cref{it2:lln ess} gives
\[
    \ESS_{\beta}^*(L) = 2^{\frac{1}{\max\set{\beta, \beta^{-1}} - 1}} \growth_{\beta}(L) = e^{-\frac{\beta}{2} \scale(L)^2 \sigma^2} 2^{\frac{1}{\max\set{\beta, \beta^{-1}} - 1}} L.
\]
Thus,
\[
    \scale(L) = \sqrt{\frac{2}{\beta \sigma^2} \ln \left( \frac{2^{\frac{1}{\max\set{\beta, \beta^{-1}} - 1}} L}{\ESS_{\beta}^*(L)} \right)}.
\]
Hence, the condition is equivalent to
\[
    \crit 
    = \sqrt{\frac{2}{\beta} \left( 1 - \lim_{L \to \infty} \frac{\ln \ESS_{\beta}^*(L)}{\ln L} \right)}
    = \sqrt{2} \min\set{1/\beta, 1},
\]
which gives
\[
    \lim_{L \to \infty} \frac{\ln \ESS_{\beta}^*(L)}{\ln L} = 1 - \min\set{\beta, \beta^{-1}}.
\]

\proofparagraph{\cref{it3:scale nope}}
If $\crit = \infty$, then \cref{pro:lln ess}\cref{it3:lln ess} gives $\ESS_{\beta}^*(L) = 1$ and 
\[
    \crit 
    = \sigma \lim_{L \to \infty} \frac{\scale(L)}{\sqrt{\ln L}}
    = \infty.
\]
Therefore, the condition is automatically satisfied.
\end{proof}

\subsection{Proof of \texorpdfstring{\cref{thm:scale rope}}{Theorem 5}}
\label{sec:proof of thm:scale rope}

\thmscalerope*
\begin{proof}
By \cref{pro:clt logits}, as $d \to \infty$, the limiting distribution of $\valpha$ is $\vg \sim \mathcal{N}(\vmu, \sigma^2 \mI_d)$.
For RoPE, \cref{eqn:clt rope} gives $\mu_i = \corr \int_{0}^{1} \cos\left( (i-L) b^{-x} \right) \dif x$ for all $i \in \set{1, \dots, L}$.
By the same argument as in the proof of \cref{thm:scale nope}, we have
\[
    \frac{\ESS_{\beta}(\valpha)}{\ESS_{\beta}^*(L)}
    = \left( \left. \frac{\ESS_{\beta}(\valpha)}{\ESS_{\beta}^*(L)} \right/ \frac{\ESS_{\beta}(\vg)}{\ESS_{\beta}^*(L)} \right) \cdot \frac{\ESS_{\beta}(\vg)}{\ESS_{\beta}^*(L)}
    \pto 1
    \quad \textnormal{ as } d \to \infty \textnormal{ then } L \to \infty.
\]
When $\crit = \infty$ (case \cref{it3:scale rope}), the proof is identical to that of \cref{thm:scale nope}\cref{it3:scale nope}.
It remains to consider $\crit \le \sqrt{2} \min\set{1/\beta, 1}$ (cases \cref{it1:scale rope} and \cref{it2:scale rope}).
As in the proofs of \cref{thm:scale nope}\cref{it1:scale nope} and \cref{it2:scale nope}, by \cref{pro:lln ess}, it suffices to show that for any $\beta \neq 1$,
\begin{equation}\label{eqn:scale rope ess}
    \ESS_{\beta}(\mathring{\vg}) 
    = \frac{Z(\vmu; \scale)^{\frac{\beta}{\beta - 1}}}{Z(\vmu; \beta \scale)^{\frac{1}{\beta - 1}}} 
    = L + \bigO\left( e^{\corr \max\set{1, \beta} \scale} \right)
    \quad \textnormal{ as } L \to \infty.
\end{equation}
Note that $\ESS_2(\mathring{\vg}) = L + \bigO(e^{2 \scale}) = \littleomega(L / \sqrt{\ln L})$, which satisfies the condition required by \cref{pro:lln ess}\cref{it2:lln ess}.
Let $I_k \coloneqq \int_{0}^{1} \cos(k b^{-x}) \dif x$.
Then $\mu_i = \corr I_{i-L} = \corr I_{L-i}$.
Moreover,
\[
    I_k 
    = \int_{0}^{1} \cos(k b^{-x}) \dif x
    \xlongequal{u = b^{-x}} \int_{0}^{1/b} \cos(k u) \frac{- \dif u}{u \ln b}
    = \frac{1}{\ln b} \int_{1/b}^{1} \frac{\cos (k u)}{u} \dif u.
\]
Define
\[
    S_L(\tau) \coloneqq Z(\vmu; \tau / \corr) = \sum_{i=1}^{L} e^{(\tau / \corr) \mu_i} = \sum_{k=0}^{L-1} e^{\tau I_k},
\]
where $\tau \in \set{\corr \scale, \corr \beta \scale}$ corresponds to the inverse temperature.
We prove \cref{eqn:scale rope ess} by showing that $S_L(\tau) = L + e^{\tau} + \littleo(e^{\tau})$ as $L \to \infty$.

\proofparagraph{1. Bounding $I_k$.}
We first show that, for any $k \ge b$, $|I_k| \le (b + 1) / (k \ln b)$.
We estimate $I_k$ using the cosine integral function $\Ci(z)$, defined by
\begin{equation}
    \Ci(z) = - \int_{z}^{\infty} \frac{\cos t}{t} \dif t.
\end{equation}
We first prove that, for any $z \in [1, \infty)$, $|\Ci(z)| \le 1 / z$.
For any $t > 0$, we have $t^{-1} = \int_{0}^{\infty} e^{-ut} \dif u$.
Applying Fubini's theorem on truncated intervals and then taking the limit gives
\[
    \begin{aligned}
        \Ci(z)
        &= - \lim_{K \to \infty} \int_{z}^{K} \frac{\cos t}{t} \dif t \\
        &= - \lim_{K \to \infty} \int_{z}^{K} \left( \int_{0}^{\infty} e^{-ut} \cos t \dif u \right) \dif t \\
        &\overset{\textnormal{(a)}}{=} - \lim_{K \to \infty} \int_{0}^{\infty} \left( \int_{z}^{K} e^{-ut} \cos t \dif t \right) \dif u \\
        &= - \lim_{K \to \infty} \int_{0}^{\infty} \left( \left. \frac{e^{-u t} (\sin t - u \cos t)}{1 + u^2} \right|_{t=z}^{K} \right) \dif u \\
        &= \int_{0}^{\infty} \frac{e^{-uz} (\sin z - u \cos z)}{1 + u^2} \dif u - \lim_{K \to \infty} \int_{0}^{\infty} \frac{e^{-u K} (\sin K - u \cos K)}{1 + u^2} \dif u \\
        &\overset{\textnormal{(b)}}{=} \int_{0}^{\infty} e^{-uz} \frac{\sin z - u \cos z}{1 + u^2} \dif u.
    \end{aligned}
\]
Here, step (a) follows from Fubini's theorem on $[z, K] \times [0, \infty)$, and step (b) follows from the dominated convergence theorem since the integrand in the second term is bounded by the integrable function $e^{-uz}$.
By the Cauchy--Schwarz inequality, $|\sin z - u \cos z| \le \sqrt{1 + u^2}$, and hence
\[
    |\Ci(z)| \le \int_{0}^{\infty} e^{-uz} \frac{\sqrt{1+u^2}}{1+u^2} \dif u
    = \int_{0}^{\infty}\frac{e^{-uz}}{\sqrt{1+u^2}} \dif u
    \le \int_{0}^{\infty} e^{-uz} \dif u
    = \frac{1}{z}.
\]
Since $\Ci'(z) = \cos(z) / z$, the chain rule gives $\frac{\dif}{\dif z} \Ci(k z) = \cos(kz) / z$.
Therefore,
\[
    I_k 
    = \frac{1}{\ln b} \int_{1/b}^{1} \frac{\cos (k u)}{u} \dif u
    = \left. \frac{\Ci(ku)}{\ln b} \right|_{u = 1/b}^{1}
    = \frac{1}{\ln b} \left[ \Ci(k) - \Ci\left(\frac{k}{b}\right) \right].
\]
Since $k \ge b > 1$, applying $|\Ci(z)| \le 1 / z$ yields
\[ 
    |I_k|
    \le \frac{1}{\ln b} \left( |\Ci(k)| + \left| \Ci\left( \frac{k}{b} \right) \right| \right) 
    \le \frac{1}{\ln b} \left( \frac{1}{k} + \frac{b}{k} \right) 
    = \frac{b + 1}{k \ln b}. 
\]

\proofparagraph{2. Bounding $S_L(\tau)$.}
By assumption, $\scale(L) \to \infty$ and $\scale(L) \ln L = \littleo(e^{\corr \min\set{1, \beta} \scale(L)})$ as $L \to \infty$.
Hence $\tau \to \infty$ and $\tau \ln L = \littleo(e^{\tau})$ for both $\tau \in \set{\corr \scale, \corr \beta \scale}$.
Decompose the partial sum as
\[
    S_L(\tau) = \sum_{k=0}^{L-1} 1 + \sum_{k=0}^{L-1} \left( e^{\tau I_k} - 1 \right) = L + (e^{\tau} - 1) + \sum_{k=1}^{L-1} \left( e^{\tau I_k} - 1 \right).
\]
We bound the remaining sum by splitting the index set at $k = \floor{\tau}$.
Recall that, for any $k \in \NN$,
\[
    I_k = \frac{1}{\ln b} \int_{1/b}^{1} \frac{\cos (k u)}{u} \dif u.
\]
Thus $I_0 = 1$ and $|I_k| < 1$ for all $k \ge 1$.
Since $I_k \to 0$ as $k \to \infty$, there exists
\[
    c \coloneqq \max\set{0, \enspace \sup_{k \ge 1} I_k} \in [0, 1)
\]
such that $I_k \le c$ for all $k \ge 1$.
For $1 \le k \le \floor{\tau}$, we have
\[
    \sum_{k=1}^{\floor{\tau}} |e^{\tau I_k} - 1|
    \le \sum_{k=1}^{\floor{\tau}} \max\set{e^{\tau I_k}, 1}
    \le \tau e^{\tau c} 
    = \littleo(e^{\tau}) \quad \textnormal{ as } L \to \infty.
\]
For $\floor{\tau} + 1 \le k \le L - 1$, the bound on $I_k$ gives
\[
    |\tau I_k| \le \frac{\tau(b+1)}{k \ln b} \le \frac{b+1}{\ln b}.
\]
Using $|e^x - 1| \le |x|e^{|x|}$ for all $x \in \RR$, we obtain
\[
    |e^{\tau I_k} - 1| 
    = \bigO(|\tau I_k|) 
    = \bigO\left( \frac{\tau}{k} \right) \quad \textnormal{ as } L \to \infty.
\]
Therefore,
\[
    \sum_{k = \floor{\tau} + 1}^{L - 1} |e^{\tau I_k} - 1| 
    \le \sum_{k=\floor{\tau} + 1}^{L-1} \bigO\left( \frac{\tau}{k} \right) 
    = \bigO \left( \tau \ln \left( \frac{L}{\tau} \right) \right) 
    \le \bigO(\tau \ln L)
    = \littleo(e^{\tau}) \quad \textnormal{ as } L \to \infty.
\]
Combining these estimates yields $S_L(\tau) = L + e^{\tau} + \littleo(e^{\tau})$ as $L \to \infty$.

\proofparagraph{3.}
It remains to verify \cref{eqn:scale rope ess}.
Using $Z(\vmu; \scale) = S_L(\corr \scale) = L + e^{\corr \scale} + \littleo(e^{\corr \scale})$ and $Z(\vmu; \beta \scale) = S_L(\corr \beta \scale) = L + e^{\corr \beta \scale} + \littleo(e^{\corr \beta \scale})$ as $L \to \infty$, we obtain
\[
    \begin{aligned}
        \ESS_{\beta}(\mathring{\vg})
        &= \frac{Z(\vmu; \scale)^{\frac{\beta}{\beta - 1}}}{Z(\vmu; \beta \scale)^{\frac{1}{\beta - 1}}} \\
        &= \left( \frac{(L + e^{\corr \scale} + \littleo(e^{\corr \scale}))^{\beta}}{L + e^{\corr \beta \scale} + \littleo(e^{\corr \beta \scale})} \right)^{\frac{1}{\beta - 1}} \\
        &= L \left( \frac{(1 + e^{\corr \scale}/L + \littleo(e^{\corr \scale}/L))^{\beta}}{1 + e^{\corr \beta \scale}/L + \littleo(e^{\corr \beta \scale}/L)} \right)^{\frac{1}{\beta - 1}} \\
        &= L \left( \left( 1 + \beta \frac{e^{\corr \scale}}{L} + \littleo \left( \frac{e^{\corr \scale}}{L} \right) \right) \left( 1 - \frac{e^{\corr \beta \scale}}{L} + \littleo \left( \frac{e^{\corr \beta \scale}}{L} \right) \right) \right)^{\frac{1}{\beta - 1}} \\
        &= L \left( 1 + \frac{\beta e^{\corr \scale} - e^{\corr \beta \scale}}{L} + \littleo \left( \frac{e^{\corr \max\set{1, \beta} \scale}}{L} \right) \right)^{\frac{1}{\beta - 1}} \\
        &= L \left( 1 + \frac{\beta e^{\corr \scale} - e^{\corr \beta \scale}}{(\beta - 1) L} + \littleo \left( \frac{e^{\corr \max\set{1, \beta} \scale}}{L} \right) \right) \\
        &= L + \bigO\left( e^{\corr \max\set{1, \beta} \scale} \right) \quad \textnormal{ as } L \to \infty,
    \end{aligned}
\]
which completes the proof.
\end{proof}

\newpage
\section{Detailed Setting}

\subsection{Pretraining}
\label{sec:pretrain_setting}
\paragraph{Hyperparameters.}
Key hyperparameters are as follows:
\begin{itemize}
    \item \textbf{Tokenizer:} LLaMA-3 tokenizer.
    \item \textbf{Data:} FineWeb-Edu-100B training split.
    \item \textbf{Training:} sequence/context length $4096$; batch size $128$. We train for 100B tokens, which corresponds to roughly $195$k steps.
    \item \textbf{Optimizer:} AdamW with $\epsilon=10^{-8}, \beta_1=0.8, \beta_2=0.95$ and base learning rate $3\times 10^{-4}$. For the \adafreq module, we use a higher learning rate of $3\times 10^{-3}$ due to its larger parameter magnitudes.
    \item \textbf{Scheduler:} Warmup-Stable-Decay (WSD) with linear decay, using $500$ warmup steps and applying linear decay over the final $10\%$ of training; the minimum learning-rate ratio is set to $0.1$.
    \item We initialize \adarope with default RoPE base $\text{b}=10000$. We set $\Lref=64$ in \adascale.
\end{itemize}

\paragraph{Model Configuration}
We test four backbones with the following core architectural settings:
\begin{itemize}
    \item \textbf{LLaMA-430M:} 24 layers, hidden size 1024, 16 attention heads (MHA), MLP size 2816, RMSNorm $\epsilon=10^{-6}$, RoPE base $\theta=10{,}000$, vocab size 128{,}256.
    \item \textbf{LLaMA-1.3B:} 24 layers, hidden size 2048, 32 attention heads with 16 KV heads (GQA), MLP size 5632, RMSNorm $\epsilon=10^{-6}$, RoPE base $\theta=10{,}000$, vocab size 128{,}256.
    \item \textbf{OLMoE-2.7B (0.4B activated):} 24 layers, hidden size 1024, 16 attention heads (MHA), MoE MLP with 32 experts and top-$4$ routing, RMSNorm $\epsilon=10^{-5}$, RoPE base $\theta=10{,}000$,  vocab size 128{,}256; we use auxiliary router losses with coefficient 0.01 and $z$-loss coefficient 0.001.
    \item \textbf{Qwen3 Gated Attention-430M:} 24 layers, hidden size 1024 with 8 attention heads (MHA), gated attention outputs with QK normalization enabled, RMSNorm $\epsilon=10^{-6}$, RoPE base $\theta=10{,}000$, vocab size 128{,}256.
\end{itemize}

\subsection{Context Extension}
\paragraph{Hyperparameters.}
Key hyperparameters are as follows:
\begin{itemize}
    \item \textbf{Training:} sequence/context length $65536$; batch size $8$.
    \item \textbf{Optimizer:} AdamW with $\epsilon=10^{-8}, \beta_1=0.8, \beta_2=0.95$. For the \adafreq module, we use a learning rate of $3\times 10^{-2}$. For the \adascale module, we use a learning rate of $3\times 10^{-3}$. For others, we use a learning rate of $3\times 10^{-4}$
    \item \textbf{Scheduler:} WSD with linear decay, using $20$ warmup steps and applying linear decay over the final $10\%$ of training; the minimum learning-rate ratio is set to $0.1$. For Stage 1, we train for 5 epochs. For Stage 2, we train for 2 epochs.
    \item We initialize \adarope with \yarn{}. We set $\Lref=8192$ in \adascale.
    \item LoRA Rank $r=16$, LoRA alpha $\alpha=16$, apply LoRA to q and k.
\end{itemize} 
\end{document}